\crefname{assumption}{Assumption}{Assumptions}
\Crefname{assumption}{Assumption}{Assumptions}
\begin{document}

\doparttoc 
\faketableofcontents 

\title{Score-based Causal Representation Learning: \\ Linear and General Transformations}

\author{\name Burak Var{\i}c{\i}\thanks{Equal contribution} \thanks{The work was completed when B.V. was at Rensselaer Polytechnic Institute.}  \email bvarici@andrew.cmu.edu \\
      \addr Machine Learning Department
      \\ Carnegie Mellon University,  Pittsburgh, PA 15213, USA
      \AND
      \name Emre Acart\"{u}rk\footnotemark[1] \email acarte@rpi.edu \\
      \addr Electrical, Computer, and Systems Engineering
      \\ Rensselaer Polytechnic Institute,  Troy, NY 12180, USA
      \AND
      \name Karthikeyan Shanmugam \email karthikeyanvs@google.com \\
      \addr Google DeepMind India \\
        Bengaluru 560043, India
      \AND
      \name Abhishek Kumar\thanks{Contributions of A.K. were done when the author was at Google Brain.} \email abhishek.mlwork@gmail.com \\
      \addr Amazon AGI, USA
      \AND
      \name Ali Tajer \email tajer@ecse.rpi.edu \\
      \addr Electrical, Computer, and Systems Engineering
      \\ Rensselaer Polytechnic Institute, Troy, NY 12180, USA
      }

\editor{Yoshua Bengio}
 
\maketitle

\begin{abstract}
  
    This paper addresses intervention-based causal representation learning (CRL) under a general nonparametric latent causal model and an unknown transformation that maps the latent variables to the observed variables. Linear and general transformations are investigated. The paper addresses both the \emph{identifiability} and \emph{achievability} aspects. Identifiability refers to determining algorithm-agnostic conditions that ensure the recovery of the true latent causal variables and the underlying latent causal graph. Achievability refers to the algorithmic aspects and addresses designing algorithms that achieve identifiability guarantees. By drawing novel connections between \emph{score functions} (i.e., the gradients of the logarithm of density functions) and CRL, this paper designs a \emph{score-based class of algorithms} that ensures both identifiability and achievability. First, the paper focuses on \emph{linear} transformations and shows that one stochastic hard intervention per node suffices to guarantee identifiability. It also provides partial identifiability guarantees for soft interventions, including identifiability up to mixing with parents for general causal models and perfect recovery of the latent graph for sufficiently nonlinear causal models. Secondly, it focuses on \emph{general} transformations and demonstrates that two stochastic hard interventions per node are sufficient for identifiability. This is achieved by defining a differentiable loss function whose global optima ensure identifiability for general CRL.
    Notably, one does \emph{not} need to know which pair of interventional environments has the same node intervened. Finally, the theoretical results are empirically validated via experiments on structured synthetic data and image data. \looseness=-1
\end{abstract}
    
\begin{keywords}
causal representation learning, causality, interventions
\end{keywords}

\section{Overview}\label{sec:introduction} 

Causal representation learning (CRL) aims to form a causal understanding of the world by learning appropriate representations that support causal interventions, reasoning, and planning \citep{scholkopf2021toward}. Specifically, CRL considers a data-generating process in which high-level latent causally-related variables are mapped to low-level, generally high-dimensional observed data through an \emph{unknown} transformation. Formally, consider a causal Bayesian network~\citep{pearl2009causality} encoded by a directed acyclic graph (DAG) $\mcG$ with $n$ nodes and generating \emph{causal} random variables $\bZ \triangleq  [Z_1,\dots,Z_n]^{\top}$. These random variables are transformed by an \emph{unknown} function $g: \R^n \to \R^d$ to generate the $d$-dimensional \emph{observed} random variables $\bX \triangleq [X_1,\dots,X_d]^{\top}$ according to:
\begin{equation}
    \bX = g(\bZ) \  . \label{eq:data-generation-process-intro}
\end{equation}
CRL is the process of using the observed data $\bX$ and recovering (i)~\textbf{the causal structure $\mcG$} and (ii)~\textbf{the latent causal variables $\bZ$}. Achieving these implicitly involves another objective of recovering the unknown transformation $g$ as well. Addressing CRL consists of two central questions:
\begin{itemize}[leftmargin=1em]
    \item {\bf Identifiability,} which refers to determining the necessary and sufficient conditions under which $\mcG$ and $\bZ$ can be recovered. The scope of identifiability (e.g., perfect or partial) critically depends on the extent of information available about the data, the underlying causal structure, and the transformation. The nature of the identifiability results can be algorithm-agnostic and non-constructive without specifying how to recover $\mcG$ and $\bZ$. In particular, this is the case when considering CRL under a general transform $g$ without parametric assumptions. Furthermore, the study of identifiability also investigates necessary conditions, e.g., which type of data is required for identifiability, regardless of the algorithmic approach.
    
    \item {\bf Achievability}, which complements identifiability and pertains to designing algorithms that can recover $\mcG$ and $\bZ$ while maintaining identifiability guarantees. Achievability hinges on forming reliable estimates for the transformation $g$.
\end{itemize}

\paragraph{CRL from interventions.} Identifiability is known to be impossible without additional supervision or sufficient statistical diversity among the samples of the observed data $\bX$. As shown in~\citep{hyvarinen1999nonlinear,locatello2019challenging}, this is the case even for the simpler settings in which the latent variables are statistically independent (i.e., graph $\mcG$ has no edges). On the other hand, using data generated under interventions in addition to the observational data generated by the underlying latent model creates useful statistical diversity. Specifically, an intervention on a set of causal variables alters the causal mechanisms that generate those variables. Note that these causal mechanisms capture the effect of parents on the child variable. Such interventions, even when imposing sparse changes in the statistical models, create variations in the observed data sufficient for learning latent causal representations. This has led to CRL via intervention as an important class of CRL problems, which in its general form has remained an open problem~\citep{scholkopf2021toward}. This paper addresses this open problem by drawing a novel connection to score functions, i.e., the gradients of the logarithm of density functions. 

We note that we use the interventions as a weak form of supervision via having access to \emph{only} the pair of distributions before and after an intervention. Such supervision can be quite flexible and bodes well for practical applications in genomics~\citep{tejada2025causal} and robotics~\citep{lee2021causal}. For instance, genomics experiments often involve interventions, which can be modeled as deterministic or stochastic interventions, depending on the experimental mechanism. In robotics, let us consider the causal variables to be the joint angles of a robotic arm. A stochastic intervention corresponds to setting the joint angle to take values within a suitable support. We note that the interventions here can be soft since the feasible support for the joint may still depend on the other joint angles after the intervention. In this setting, the observations are simply images of the entire arm captured by a camera positioned at a specific location.

\paragraph{Contributions.} This paper provides both identifiability and achievability results for CRL from interventions under general latent causal models and general transformations. We establish these results by uncovering hitherto unknown connections between \emph{score functions} (i.e., the gradients of the logarithm of density functions) and CRL. We leverage these connections to design CRL algorithms that serve as constructive proofs for the identifiability and achievability results. We do not make any parametric assumption on the latent causal model, i.e., the relationships among elements of $\bZ$ take any arbitrary form. For the transformation $g$, we consider a diffeomorphism (i.e., bijective such that both $g$ and $g^{-1}$ are continuously differentiable) onto its image. Our results are categorized into two main groups based on the form of $g$: (i)~\textbf{linear} transformation, and (ii)~\textbf{general} (nonparametric) transformation. We consider both stochastic hard and soft interventions, and our contributions are summarized below. \looseness=-1

\paragraph{Linear transformations.} We first focus on linear transformations and investigate various extents of results (perfect and partial) given \emph{one} intervention per node. In this setting, we consider both hard and soft interventions.
    \begin{itemize}
        \item[\yes] On \textbf{identifiability from hard interventions}, we show that \emph{one} hard intervention per node suffices to guarantee perfect identifiability (\Cref{th:linear-hard}). 

        \item[\yes] On \textbf{identifiability from soft interventions}, we show that \emph{one} soft intervention per node suffices to guarantee identifiability up to ancestors -- transitive closure of the latent DAG is recovered and latent variables are recovered up to a linear function of their ancestors (\Cref{th:linear-soft}). 

        \item[\yes] While establishing these results, we also establish partial identifiability of a latent variable given an intervention on \emph{single} node (\Cref{thm:linear-partial-identifiability}).

        \item[\yes] We further tighten the previous results and show that when the latent causal model is sufficiently nonlinear, perfect DAG recovery becomes possible using soft interventions. Furthermore, we recover a latent representation that is Markov with respect to the latent DAG, preserving the true conditional independence relationships (\Cref{th:linear-soft-full-rank}). 
        
        \item[\yes] On \textbf{achievability}, we design an algorithm referred to as {\bf L}inear {\bf S}core-based {\bf Ca}usal {\bf L}atent {\bf E}stimation via {\bf I}nterventions (LSCALE-I), which achieves the identifiability guarantees under both soft and hard interventions. LSCALE-I first forms an encoder estimate using correlation matrices of the observed score differences and then uses the learned encoder to estimate the transitive closure of the latent DAG. Then, in the case of hard interventions, it refines these outputs without requiring any conditional independence test.   
    \end{itemize}   

\paragraph{General transformations.} In this setting, we do not have any restriction on the transformation from the latent space to the observed space. In this general setting, our contributions are as follows.
    \begin{itemize}
        \item[\yes] On \textbf{identifiability}, we show that observational data and \emph{two} distinct hard interventions per node suffice to guarantee perfect identifiability (\Cref{th:general-uncoupled}). This result generalizes the recent results in the literature in two ways. First, we do not require the commonly adopted \emph{faithfulness} assumption on latent causal models. Secondly, we assume the learner does not know which pair of environments intervene on the same node. While proving these results, we also establish identifiability of a single variable given two hard interventions \emph{only} for the said variable (\Cref{thm:general-partial-identifiability}).
    
        \item[\yes] More importantly, on \textbf{achievability}, we design the first provably correct algorithm that recovers $\mcG$ and $\bZ$ perfectly. This algorithm is referred to as {\bf G}eneralized {\bf S}core-based {\bf Ca}usal {\bf L}atent {\bf E}stimation via {\bf I}nterventions (GSCALE-I). We note that GSCALE-I requires only the score functions of observed variables as its inputs and computes those of the latent variables by leveraging the Jacobian of the decoders. 
        
        \item[\yes] We also establish new results that shed light on the role of observational data. Specifically, when two interventional environments per node are given in pairs, observational data is only needed for recovering the latent DAG (\Cref{th:general-coupled}). Furthermore, we show that observational data can be dispensed with under a weak faithfulness condition on the latent causal model (\Cref{th:general-coupled-no-observational}). Remarkably, our results are tight in terms of the identifiability objectives, which we will discuss in \Cref{sec:objectives}.

        \item[\yes] Finally, we show that extrapolation to unseen combinations of single-node interventions can be achieved \emph{without performing CRL}, just by using score functions of the \emph{observed} space (\Cref{sec:intervention-extrapolation}).
    \end{itemize}
Before describing our novel methodology, we recall the general approach to solving CRL. Recovering the latent causal variables hinges on finding the inverse of $g$ based on the observed data~$\bX$, which facilitates recovering $\bZ$ via $\bZ = g^{-1}(\bX)$. In other words, referring to $(g^{-1},g)$ as the \emph{true} encoder-decoder pair, we search for an encoder that takes observed variables back to the true latent space. A valid encoder should necessarily have an associated decoder to ensure a perfect reconstruction of observed variables from estimated latent variables. However, infinitely many valid encoder-decoder pairs satisfy the reconstruction property. The purpose of using interventions is to inject variations into the observed data, which can help us distinguish among these solutions.

\paragraph{Score-based methodology.} We start by showing that an intervention on a latent node induces changes only in the score function's coordinates corresponding to the intervened node and its parents. This is because the intervention changes only the causal mechanism (i.e., conditional distribution) of the intervened node, which is a function of the latent node and its parents. Furthermore, when we consider two hard interventions on the same node, such changes will be limited only to the intervened node (parents intact). This implies that score changes of the latent variables $\bZ$ are generally sparse across different environments. Furthermore, these score changes contain all the information about the latent causal structure. Motivated by these key properties, we formalize a score-based CRL framework, based on which we design provably correct distinct algorithms, LSCALE-I and GSCALE-I, for linear and general transformations, respectively, presented in \Cref{sec:linear} and \Cref{sec:general}. We briefly describe the key technique in GSCALE-I for general transformation via two interventions. LSCALE-I involves exploiting the linearity of the transformation to achieve identifiability using only one intervention per node. \looseness=-1

\paragraph{Algorithm sketch for general transformations.}  
Consider two interventional environments in which the same node is intervened on. As described in the preceding paragraph, we show that the score functions of the latent variables under these two environments differ only at the coordinate of the intervened node. 
Subsequently, the key idea of the score-based framework is that tracing these sparse changes in the score functions of the latent variables can guide finding reliable estimates for the inverse of the transformation $g$, which in turn facilitates estimating $\bZ$. In particular, we will seek encoders that result in sparse score variations among the estimated latent variables, thereby matching the properties of the true latent variables. To this end, we consider a pair of interventional environments for each node and find an encoder that minimizes the variations in the score function across all pairs. We show that the encoder obtained via this procedure perfectly recovers $\mcG$ and $\bZ$. An important process in this methodology is projecting the score changes in observed data to the latent space so that we do not need to estimate latent scores for each possible encoder. We show that this can be done by multiplying the observed score difference by the Jacobian of the decoder associated with the encoder. Therefore, recovering $Z_i$ is facilitated by solving the following problem:
\begin{equation}\label{eq:intro-OPT-X-verbal}
\mbox{For $i$-th int. env. pair :} \;\; 
    \min
    \Big\| \E\Big[\big | \text{Jac. of decoder} \times \big(\text{score difference of $\bX$} \big) \big\|\Big] - \be_i \Big \|^2 \ ,
\end{equation}
in which $\be_i$ denotes $i$-th standard basis vector, reflecting the sparsity property. Score differences of observed variables are computed across two environments with the same intervened node.  When we have two interventions for \emph{each} latent variable, we solve this problem for $i\in[n]$ simultaneously to recover complete $\bZ$, and subsequently, the graph $\mcG$. Finally, the minimization is performed over the set of valid encoder-decoder pairs that ensure perfect reconstruction of $\bX$.

\begin{figure}[t]
    \centering
    \includegraphics[width=\linewidth]{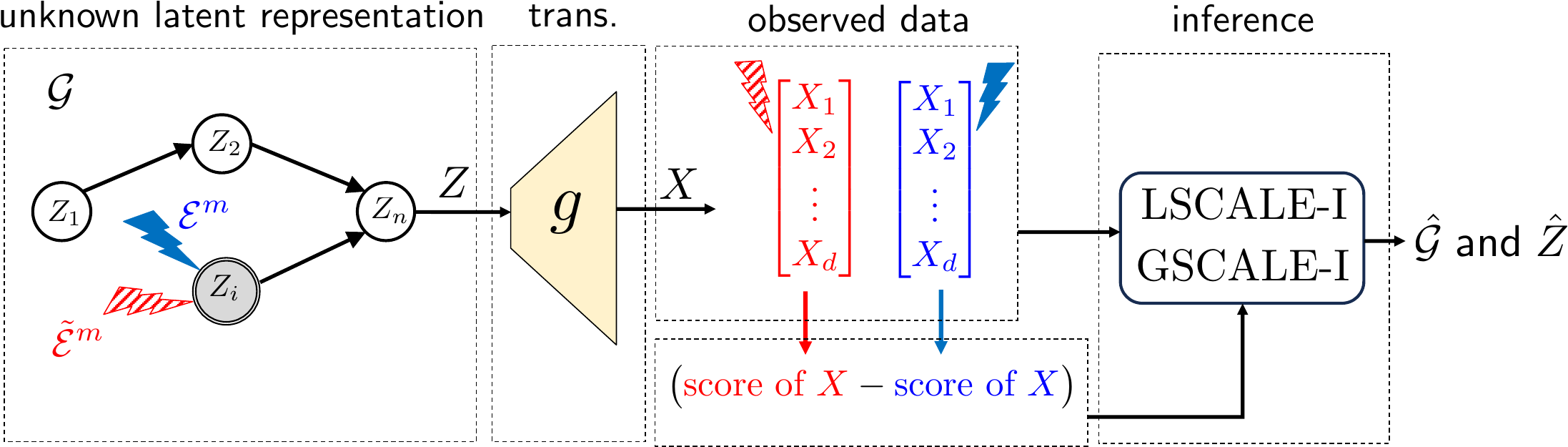}
    \caption{\textbf{An overview of LSCALE-I and GSCALE-I algorithms}. Corresponding to each latent variable $Z_i$, there are two interventional mechanisms, denoted by red and blue. For each pair of environments, the score functions of observed variables are fed into the GSCALE-I algorithm. Then, GSCALE-I uses this input to compute \emph{latent score differences} for a given encoder and returns the encoder that minimizes these latent score differences. This encoder is used to compute estimates $\hat \mcG$ and $\hat \bZ$.}
    \label{fig:algorithm-sketch}
\end{figure}

\paragraph{Organization.} The rest of the paper is organized as follows. \Cref{sec:related-work} provides an overview of the literature with the main focus on CRL via interventions. \Cref{sec:problem} provides the preliminaries for formulating the problem and specifies the notations and definitions used throughout the paper. \Cref{sec:analysis-score} establishes the properties of score functions under interventions and presents the key lemmas and their implications, which will be used in our CRL framework. We present our CRL algorithms and results for linear transformations in \Cref{sec:linear}, and general transformations in \Cref{sec:general}. Proofs of the building blocks of the results are presented in the main body of the paper, and the rest of the proofs are provided in the appendices. In \Cref{sec:experiments}, we empirically assess the performance of the proposed CRL algorithms for recovering the latent causal variables and the latent causal graph on both structured synthetic data, biological data, and image data. \Cref{sec:conclusion} concludes the paper with a discussion of the results and the future directions.

\section{Related Work}\label{sec:related-work}

In this paper, we study CRL in different interventional environments where the interventions act on the latent space. We first provide an overview of the literature that investigates CRL from interventional data, with the main results of the most closely related work summarized in \Cref{tab:related-comp-parametric} and \Cref{tab:related-comp-general}. Then, we discuss the other relevant lines of work.

\paragraph{Interventional causal representation learning.}
The majority of the rapidly growing literature on CRL from interventions focuses on \emph{parametric} settings, i.e., a parametric form is assumed for the latent model, the transformation, or both. Among the related studies, that in~\citep{ahuja2023interventional} mainly considers polynomial transformations without restrictions on the latent causal model and shows identifiability under deterministic \emph{do} interventions. It also establishes identifiability under general transformations, albeit requiring a combinatorial number of \emph{do} interventions. Finally, it shows identifiability under soft interventions with independent support assumptions on latent variables.
The study in~\citep{squires2023linear} considers a linear latent model with a linear mapping to observations and proves identifiability under hard interventions. It also shows the impossibility of perfect identifiability under soft interventions and proves identifiability up to ancestors. The study in~\citep{buchholz2023learning} focuses on linear Gaussian latent models and extends the results in \citep{squires2023linear} to prove identifiability for general transformations. \citet{zhang2023identifiability} consider polynomial transformations under nonlinearity assumptions on latent models and prove identifiability up to ancestors under soft interventions. If the latent graph is restricted to polytrees, they further prove perfect identifiability. Identifying the non-intervened variables from the intervened variables by using single-node and multi-node soft interventions is studied in \citep{ahuja2023multi}. It also considers a new setting in which the latent DAG can change across data points and relies on the support invariance of non-intervened variables to identify them from the rest. The study in \citep{bing2024identifying} considers a nonlinear latent model under linear transformation and uses multi-target \emph{do} interventions to prove identifiability under certain sparsity assumptions. Linear transformation and linear non-Gaussian latent models are studied in \citep{jin2024learning}, establishing identifiability up to surrounding parents using soft interventions. It also establishes sufficient conditions for multi-target soft interventions to ensure identifiability. The study in \citep{saengkyongam2023identifying} takes a different approach and considers the task of intervention extrapolation. In this formulation, interventions are applied to exogenous action variables (e.g., instrumental variables) that linearly affect the latent variables.

\begin{table*}[t]
    \centering
    \caption{\small {\bf Parametric Settings.} Comparison of the results to prior studies in \emph{parametric} settings. Only the main results of the most closely related studies are listed, and standard shared assumptions are omitted. Formal definitions of identifiability measures are provided in \Cref{sec:problem}.}
    \scalebox{0.8}{
    \begin{tabular}{ccccccc}
        \toprule
        Work & Transform & Latent & Int. Data  & Identifiability \\
        & & Model & (one env. per node) & result  \\
        \hline 
        \citet{squires2023linear} & Linear & Linear & Hard & perfect  \\
         & Linear & Linear & Soft & up to ancestors \\
            \midrule 
        \citet{ahuja2023interventional} & Polynomial  & General & \emph{do} & perfect \\
        & Polynomial & Bounded RV & Soft & perfect \\
        \hline
        \citet{buchholz2023learning} & General & Lin. Gaussian & Hard & perfect \\ 
                                     & General & Lin. Gaussian & Soft & up to ancestors \\ \bottomrule
        \citet{zhang2023identifiability} & Polynomial & nonlinear & Soft & up to ancestors  \\ 
        & Polynomial & nonlinear (polytree) & Soft & perfect  \\ \bottomrule
        \textbf{\Cref{th:linear-hard}} & Linear & General & Hard & perfect \\    
        \textbf{\Cref{th:linear-soft}} & Linear & General & Soft  & up to ancestors \\    
        \textbf{\Cref{th:linear-soft-full-rank}} & Linear & nonlinear & Soft & perfect DAG and \\
        & & & & mixing w. surrounding  \\ \bottomrule
    \end{tabular}}
    \label{tab:related-comp-parametric}
\end{table*}

On the fully \emph{nonparametric} setting, \citet{vonkugelgen2023twohard} provide the most closely related identifiability results to ours. Specifically, they show that two \emph{coupled} hard interventions per node suffice for identifiability under the faithfulness assumption on latent causal models. Our results have two significant differences: 1) We address achievability via a provably correct algorithm whereas \citet{vonkugelgen2023twohard} focus mainly on identifiability (e.g., no algorithm for recovery of the latent variables), 2) we dispense with the restrictive assumptions on identifiability results, namely, we do not require to know which two environments share the same intervention target (hence, \emph{uncoupled} interventions), and do not require faithfulness on the latent models. While the importance of this relaxation is not fully apparent for single-node interventions, being able to work without knowing intervention targets is critical for investigating CRL under more realistic cases of multi-node interventions. Among the other studies on the nonparametric setting, \citet{jin2024learning} provide analogous results to \citep{vonkugelgen2023twohard} by considering two coupled soft interventions and identifying latent variables up to mixing with surrounding variables. \citet{jiang2023learning} consider identifying the latent DAG without recovering latent variables, where it is shown that a restricted class of DAGs can be recovered. 
Finally, some studies employ stronger supervision signals, such as annotations of the ground-truth causal variables~\citep{shen2022weakly}, or knowledge of the causal graph to recover the latent variables under hard interventions~\citep{liang2023causal}. 

\paragraph{Multi-view causal representation learning.} A commonly used form of weak supervision in CRL involves multi-view data, non-i.i.d. samples (or views) are observed, typically generated by the same or closely related realizations of underlying latent variables. Multi-view scenarios can involve various data settings. Earlier studies have mostly considered paired data, in which pre- and post-intervention observations are generated from the same set of latent variables \citep{locatello2020weakly,vonkugelgen2021self,brehmer2022weakly}. The more generalized and relaxed formulation typically involves partial observability, where multiple views are concurrently generated by an overlapping subset of latent variables, such as observing different camera angles of the same scene~\citep{sturma2023unpaired}. \citet{yao2023multi} generalizes the multi-view approach via a unified framework, which also allows partial observability with nonlinear transforms. We also note that \citet{yao2025unifying} presents an even more general unifying view to interpret different CRL approaches, including interventional and multi-view, as special ways of aligning the representations to known data symmetries via leveraging the invariance principle.

\paragraph{Temporal causal representation learning.} Temporal CRL is particularly motivated by applications in domains where time-series data is readily available, such as robotics and control systems, which has led to a growing interest in this area. Some studies in this category assume that the latent causal variables are mutually independent, meaning there are no instantaneous causal effects between variables at the same time step \citep{lippe2022intervention,yao2022,lachapelle2024nonparametric}. However, more recent studies have relaxed this assumption, allowing instantaneous causal effects when sufficient diversity is present in the observed or interventional data \citep{lippe2022icitris,lippe2023biscuit,li2024identification}. \looseness=-1

\paragraph{Other approaches.} In some other approaches to CRL, \citet{zhang2024causal} consider general mixing functions and general SCMs under sparsity constraints; 
\citet{liu2024identifiable} leverage nonlinear ICA principles to work with nonlinear mixing functions and polynomial latent SCMs; \citet{morioka2024causal} consider disjoint groups of observational variables, with general mixing functions and pairwise latent causal relationships; \citet{welch2024identifiability} use score functions for linear mixing functions and nonlinear Gaussian SCMs to derive coarser identifiability results; and \citet{li2024disentangled} study ``domains'' and interventions in a combined way for non-Markovian causal systems when the graph is known.

\paragraph{Identifiable representation learning.}
As a special case of CRL, where the latent variables are independent, there is extensive literature on identifying latent representations. Some representative approaches include leveraging knowledge of the mechanisms that govern the system's evolution~\citep {ahuja2021properties} and using weak supervision with auxiliary information~\citep{shu2019weakly}. Nonlinear independent component analysis (ICA) also uses side information, in the form of structured time series, to exploit temporal information~\citep{hyvarinen2017nonlinear,halva2020hidden} or knowledge of auxiliary variables that make latent variables conditionally independent~\citep{pmlr-v108-khemakhem20a,khemakhem2020ice,hyvarinen2019nonlinear}. \citet{morioka2024causal} imposes additional constraints on observational mixing and the causal model to prove identifiability. \citet{kivva2022identifiability} studies the identifiability of deep generative models without auxiliary information. \looseness=-1

\begin{table*}[t]
    \centering
    \caption{\small {\bf General (Nonparametric) Settings.} Comparison of the results to prior studies in the \emph{general} setting. Formal definitions of identifiability measures are provided in \Cref{sec:problem}. The shared assumptions (interventional discrepancy) and additional assumptions (faithfulness) are discussed in \Cref{sec:general}.}
    \scalebox{0.75}{
    \begin{tabular}{cccccccc}
        \toprule
        Work & Transform and & Obs. & Int. Data  & Faithfulness & Identifiability & Provable \\
        & Latent Model  & Data & (env. per node) & & result & algorithm \\
        \hline 
        \citet{vonkugelgen2023twohard} & General  & No & 2 coupled hard & Yes & perfect & \no \\ \bottomrule
        \citet{jin2024learning} & General & No & 2 coupled soft & No & perfect DAG and & \no \\
        & & & & & mix w. surrounding \\ \bottomrule
     \textbf{\Cref{th:general-coupled}}    & General & Yes & 2 coupled hard & No & perfect  & \yes \\ 
     \textbf{\Cref{th:general-coupled-no-observational}}   & General  & No & 2 coupled hard & Yes & perfect & \yes \\ 
     \textbf{\Cref{th:general-uncoupled}} & General & Yes & 2 uncoupled hard & No & perfect & \yes \\
     \bottomrule
    \end{tabular}}
    \label{tab:related-comp-general}
\end{table*}

\paragraph{Score functions for causal discovery within observed variables.} Score matching has recently gained attention in the causal discovery of observed variables. \citet{rolland2022score} use score matching to recover nonlinear additive Gaussian noise models. \citet{montagna2023scalable} focus on the same setting, recover the full graph from Jacobian scores, and dispense with the computationally expensive pruning stage of the algorithm in \citep{rolland2022score}. \citet{montagna2023assumption} empirically demonstrate the robustness of score matching approaches against the assumption violations in causal discovery. \citet{zhu2023sample} establish bounds on the error rate of score matching-based causal discovery methods. All of these studies are limited to observed causal variables, whereas in our case, we have a causal model in the latent space.

\section{Preliminaries and Definitions}\label{sec:problem}

\paragraph{Notations.}
For a vector $\ba\in\R^{m}$, the $i$-the entry is denoted by $a_i$. Random vectors are denoted by bold upper-case letters and their realizations are denoted by bold lower-case letters, e.g., $\bX$ and $\bx$. Matrices are denoted by bold upper-case letters, e.g., $\bA$, where $\bA_i$ denotes the $i$-th row of $\bA$ and $\bA_{i,j}$ denotes the entry at row $i$ and column $j$. For matrices $\bA$ and $\bB$ with the same shapes, $\bA \preccurlyeq \bB$ denotes component-wise inequality. We denote the indicator function by $\mathds{1}$, and for a matrix $\bA\in\R^{m\times n}$, we use the convention that $\mathds{1}\{\bA\}\in\{0,1\}^{m\times n}$, where the entries are specified by $[\mathds{1}\{\bA\}]_{i,j}=\mathds{1}\{\bA_{i,j}\neq 0\}$. For a positive integer $n\in\N$, we define $[n] \triangleq \{1,\dots,n\}$.  The permutation matrix associated with any permutation $\pi$ of $[n]$ is denoted by $\bP_{\pi}$, i.e., $[\pi_1 \;\;  \pi_2 \; \dots   \; \pi_n]^{\top}=\bP_{\pi}\cdot[1 \;\; 2 \; \dots \;  n]^{\top}$. The $n$-dimensional identity matrix is denoted by $\bI_{n \times n}$, and the Hadamard product is denoted by $\odot$. We use $\image(f)$ to denote the image of the function $f$. We let $\be_i$ denote the $i$-th standard basis vector of $\R^n$.
Given a function $f: \R^s \to \R^r$ that has first-order partial derivatives on $\R^s$, we denote the Jacobian of $f$ at $\bz\in\R^s$ by $J_{f}(\bz) \in \R^{r \times s}$ with entries $[J_{f}(\bz)]_{i,j} = \partial f(\bz)_i / \partial \bz_j$.

\subsection{Latent Causal Structure}

Consider latent causal random variables $\bZ \triangleq  [Z_1,\dots,Z_n]^{\top}$. An \emph{unknown} transformation $g: \R^n \to \R^d$ generates the observable random variables $\bX \triangleq [X_1,\dots,X_d]^{\top}$ from the latent variables $\bZ$ according to:
\begin{equation}
    \bX = g(\bZ) \  . \label{eq:data-generation-process}
\end{equation}
We assume that $d \geq n$, and the transformation $g$ is continuously differentiable and a diffeomorphism onto its image (otherwise, identifiability is ill-posed). We denote the image of $g$ by $\mcX \triangleq  \image(g)\subseteq \R^d$. The probability density functions (pdfs) of $\bZ$ and $\bX$ are denoted by $p$ and $p_{\bX}$, respectively. We assume that $p$ is absolutely continuous with respect to the $n$-dimensional Lebesgue measure. Subsequently, $p_{\bX}$, which is defined on the image manifold $\image(g)$, is absolutely continuous with respect to the $n$-dimensional Hausdorff measure rather than $d$-dimensional Lebesgue measure\footnote{For details of where this has been used, see \Cref{sec:score-diff-lm-proof}.}. The distribution of latent variables $\bZ$ factorizes with respect to a DAG that consists of $n$ nodes and is denoted by $\mcG$. Node $i\in[n]$ of $\mcG$ represents $Z_i$ and $p$ factorizes according to:
\begin{equation}
    p(\bz) = \prod_{i=1}^{n} p_i(z_i \med \bz_{\Pa(i)}) \ , \label{eq:pz_factorized}
\end{equation}
where $\Pa(i)$ denotes the set of parents of node $i$ and $p_i(z_i \med \bz_{\Pa(i)})$ is the conditional pdf of $z_i$ given the variables of its parents. We use $\Ch(i)$, $\An(i)$, and $\De(i)$ to denote the children, ancestors, and descendants of node $i$, respectively. Accordingly, for each node $i\in[n]$ we also define
\begin{align}\label{eq:graph-notations}
    \overline{\Pa}(i) \triangleq \Pa(i) \cup \{i\} , \;\; \overline{\Ch}(i)\triangleq \Ch(i) \cup \{i\} , \;\; \overline{\An}(i)\triangleq \An(i) \cup \{i\} , \; \mbox{and} \; \overline{\De}(i)\triangleq \De(i) \cup \{i\}\ .
\end{align}
We denote the transitive closure and transitive reduction of $\mcG$ by $\mcG_{\rm tc}$ and $\mcG_{\rm tr}$, respectively\footnote{Transitive closure of a DAG $\mcG$, denoted by $\mcG_{\rm tr}$, is a DAG with parents denoted by $\Pa_{\rm tr}(i)=\An(i)$ for each node $i$. The transitive reduction of $\mcG$ is the DAG with the fewest edges that preserves the same reachability relation as~$\mcG$.}. The parental relationships in these graphs are denoted by $\Pa_{\rm tc}$ and $\Pa_{\rm tr}$, and other graphical relationships are denoted similarly. Based on the modularity property, a change in the causal mechanism of node~$i$ does not affect those of the other nodes. We also assume that all conditional pdfs $\{p_i(z_i \mid \bz_{\Pa(i)}) : i \in[n]\}$ are continuously differentiable with respect to all variables and $p(\bz) \neq 0$ for all $\bz \in \R^n$. 
We consider the general structural causal models (SCMs) based on which
for each $i\in[n]$,
\begin{align}\label{eq:general-SCM}
    Z_i = f_{i}(\bZ_{\Pa(i)},N_{i}) \ ,
\end{align}
where $\{f_{i} : i \in [n]\}$ are general functions that capture the dependence of node $i$ on its parents and $\{N_{i} : i \in [n]\}$ account for the exogenous noise terms that we assume to have pdfs with full support. We specialize some of the results to additive noise SCMs, in which \eqref{eq:general-SCM} becomes
\begin{align}\label{eq:additive-SCM}
    Z_i = f_{i}(\bZ_{\Pa(i)}) + N_{i}\ .
\end{align}
Next, we provide several definitions that we will use frequently throughout the paper to formalize the framework and analyze it.

\begin{definition}[Valid Causal Order]\label{def:valid-causal-order}
    We refer to a permutation $(\pi_1,\dots,\pi_n)$ of $[n]$ as a \emph{valid causal order}\footnote{It is also called \emph{topological ordering} or \emph{topological sort} in the literature.} if $\pi_i \in \Pa(\pi_j)$ indicates that $i < j$. 
\end{definition}
In this paper, without loss of generality, we assume that $(1,\dots, n)$ is a valid causal order. 
We also define a graphical notion that will be useful for presenting our results and analysis on CRL under a linear transformation.
\begin{definition}[Surrounded Node] \label{def:surrounded}
    Node $i\in[n]$ in DAG $\mcG$ is said to be \emph{surrounded} if there exists another node $j\in[n]$ such that $\overline{\Ch}(i) \subseteq \Ch(j)$. We denote the set of nodes that surround $i\in[n]$ by $\sur(i)$, and the set of all nodes that are surrounded by $\mcS$, i.e.,
    \begin{align}
        \sur(i) &\triangleq \{j\in[n] \, : \, j\neq i\;\;, \;\; \overline{\Ch}(i) \subseteq \Ch(j) \}\ , \quad \mbox{and} \quad 
            \mcS \triangleq \{i\in [n] \, : \; \sur(i)\neq \emptyset\}\ . \label{eq:def-surrounded-set}
    \end{align}
\end{definition}
The intuition behind surrounded nodes is that, the effect of $i$ on its children $\Ch(i)$ can be dominated by the effect of its surrounding node $j$.\footnote{Surrounded node concept is first defined by \citep{varici2023score}, and later adopted by \citet{jin2024learning} when considering soft interventions.} Specifically, any effect of $i$ on a node $k \in \Ch(i)$ can also be interpreted as the effect of node $j$ since $k \in \Ch(j)$ as well. This effect causes ambiguities in the recovery of $Z_i$ in the case of soft interventions.

\begin{figure}[t]
    \centering
    \begin{subfigure}[t]{0.48\textwidth}
        \centering
        \includegraphics[height=1in]{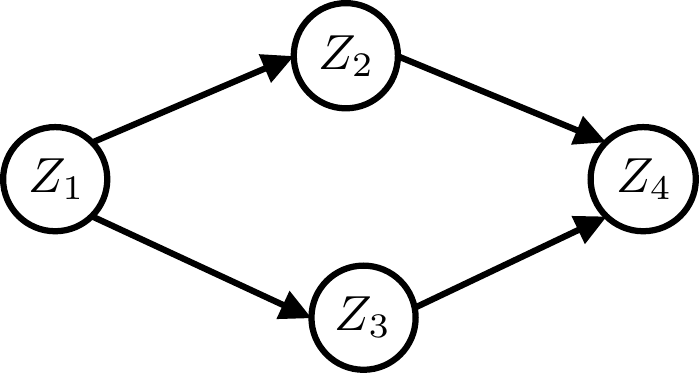}
        \label{fig:sample-graph-1}
    \end{subfigure}
    \begin{subfigure}[t]{0.48\textwidth}
        \centering
        \includegraphics[height=1in]{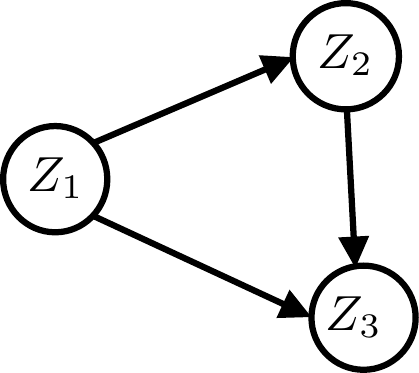}
        \label{fig:sample-graph-2}
    \end{subfigure}   
    \caption{Sample latent DAGs $\mcG$. \textbf{(Left)} The valid causal orders are $(1,2,3,4)$ and $(1,3,2,4)$, and only surrounded node is $4$ with $\sur(4)=\{2,3\}$. \textbf{(Right)} The only valid causal order is $(1,2,3)$, and the surrounded nodes are $\mcS=\{2,3\}$ with $\sur(2)=\{1\}, \sur(3)=\{1,2\}$.}
    \label{fig:sample}
\end{figure}

\subsection{Score Functions}\label{sec:score-definitions}
The \emph{score function} associated with a pdf is defined as the gradient of its logarithm. The score function associated with $p$ is denoted by 
\begin{equation}
    \bss(\bz) \triangleq \nabla_{\bz} \log p(\bz) \ . \label{eq:sz-def}
\end{equation}
Noting the connection $\bX = g(\bZ)$, the density of $\bX$ under $\mcE^0$, denoted by $p_{\bX}$, is supported on an $n$-dimensional manifold $\mcX$ \emph{embedded} in $\R^{d}$. Hence, specifying the score function of $\bX$ requires notions from differential geometry. For this purpose, we denote the tangent space of the manifold $\mcX$ at point $\bx \in \mcX$ by $T_{\bx} \mcX$. Tangent vectors $\bv \in T_x \mcX$ are equivalence classes of continuously differentiable curves $\gamma \colon (-1,1) \to \mcX \subseteq \R^d$ with $\gamma(0) = \bx$ and $\gamma'(0) = \bv$. Furthermore, given a function $f \colon \mcX \to \R$, denote its directional derivative at point $\bx \in \mcX$ along a tangent vector $\bv \in T_{\bx} \mcX$ by $D_{\bv} f(\bx)$, which is defined as
\begin{equation}
    D_{\bv} f(\bx) \triangleq \left. \frac{\dd}{\dd t} (f \circ \gamma)(t) \right|_{t = 0} \ ,
\end{equation}
for any curve $\gamma$ in equivalence class $v$. The differential of $f$ at point $\bx \in \mcX$, denoted by $\dd f_{\bx}$, is the linear operator mapping tangent vector $\bv \in T_{\bx} \mcX$ to $D_{\bv} f_{\bx}$ \citep[p.~57]{simon2014introduction}, i.e.,
\begin{equation}
    \dd f_{x} \colon T_{\bx} \mcX \ni v \mapsto D_{\bv} f(\bx) \in \R \ . \label{eq:differential-def}
\end{equation}
Let $\bB \in \R^{d \times n}$ be a matrix for which the columns of $\bB$ form an orthonormal basis for $T_{\bx} \mcX$. Denote the directional derivative of $f$ along the $i$-th column of $\bB$ by $D_i f$ for all $i \in [n]$. Then, the differential operator can be expressed by the vector
\begin{equation}
    D f_{\bx} \triangleq \bB \cdot \big[ D_1 f_{\bx} \, \ldots \, D_n f_{\bx} \big]^{\top} \in \R^{d} \ , \label{eq:Df-defn}
\end{equation}
such that
\begin{equation}
    \dd f_{\bx}(\bv) = \bv^{\top} \cdot D f_{\bx} \ , \qquad \forall \bx \in \mcX \ , \quad \forall \bv \in T_{\bx} \mcX \ .
\end{equation}
Note that the differential operator $\dd f$ is a generalization of the gradient. Hence, we can generalize the definition of the score function using the differential operator by setting $f$ to the logarithm of pdf. Therefore, the score function of $\bX$ under $\mcE^0$ is specified as follows:
\begin{equation}
    \bss_{\bX}(\bx) \triangleq D \log p_{\bX}(\bx) \ , \qquad \forall \bx \in \mcX \ . \label{eq:sx-def}
\end{equation}

\subsection{Intervention Mechanisms}\label{sec:intervention-mechanisms}

We consider two types of interventions. A \emph{soft} intervention on node $i$ (also referred to as \emph{imperfect} intervention in literature), changes the conditional distribution $p_i(z_i \med \bz_{\Pa(i)})$ to a distinct conditional distribution, which we denote by $q_i(z_i \med \bz_{\Pa(i)})$. A soft intervention does not necessarily remove the functional dependence of an intervened node on its parents but rather alters it to a different mechanism. A stochastic \emph{hard} intervention on node $i$ (also referred to as \emph{perfect} intervention) is stricter than a soft intervention and removes the edges incident on $i$. A hard intervention on node changes $p_i(z_i \med \bz_{\Pa(i)})$ to $q_i(z_i)$ that emphasizes the lack of dependence of $z_i$ on $\bz_{\Pa(i)}$. Finally, we note that in some settings, we assume two hard interventions per node, in which case the two hard interventional mechanisms for node $i$ are denoted by two distinct pdfs $q_i(z_i)$ and $\tilde q_i(z_i)$.

\paragraph{Interventional environments.}
We consider atomic \emph{interventional} environments in which each environment one node is intervened in, as is customary in the closely related CRL literature~\citep{squires2023linear,ahuja2023interventional,buchholz2023learning}. In some settings (linear transformation), we will have \emph{one} interventional environment per node and denote the interventional environments by $\mcE\triangleq \{\mcE^1,\dots,\mcE^n\}$, where we call $\mcE$ the \emph{atomic} environment set. We denote the node intervened in environment $\mcE^m$ by $I^m \in [n]$. For other settings (general transformation), we will have  \emph{two} interventional environments per node and denote the second atomic environment set by $\tilde\mcE=\{\tilde\mcE^1,\dots,\tilde\mcE^n\}$. Similarly, we denote the intervened node in $\tilde \mcE^m$ by $\tilde I^m$ for each $m\in[n]$. We assume that node-environment pairs are \emph{unspecified}, i.e.,  the ordered intervention sets  $\mcI\triangleq (I^1,\dots,I^n)$ and $\tilde \mcI \triangleq (\tilde I^1,\dots,\tilde I^n)$ are two \emph{unknown} permutations of $[n]$. We also adopt the convention that $\mcE^{0}$ is the \emph{observational} environment and $I^{0} \triangleq  \emptyset$. Next, we define the notion of coupling between the environment sets $\mcE$ and $\tilde \mcE$.

\begin{definition}[Coupled/Uncoupled Environments]
    The two environment sets $\mcE$ and $\tilde\mcE$ are said to be \textbf{coupled} if for the unknown permutations $\mcI$ and $\tilde \mcI$ we know that $\mcI=\tilde\mcI$, i.e., the same node is intervened in environments $\mcE^i$ and $\tilde \mcE^i$. The two environment sets are said to be \textbf{uncoupled} if $\tilde\mcI$ is an unknown permutation of $\mcI$.
\end{definition}
Next, we define $p^m$ as the pdf of $\bZ$ in environment~$\mcE^m$. Hence, under soft and hard intervention for each $m \in [n]$, $p^m$ can be factorized as follows. 
\begin{align}
    \mbox{soft int. in } \mcE^m: & \quad p^m(\bz) = q_\ell(z_{\ell} \med\bz_{\Pa(\ell)}) \prod_{i \neq \ell} p_i(z_i \med \bz_{\Pa(i)}) \ , && \;\; \mbox{where} \quad \ell=I^m \ ,\label{eq:pz_m_factorized_soft} \\
    \mbox{hard int. in } \mcE^m: & \quad p^m(\bz) = q_{\ell}(z_{\ell}) \prod_{i \neq \ell} p_i(z_i \med \bz_{\Pa(i)})\ , && \;\; \mbox{where} \quad  \ell=I^m \ . \label{eq:pz_m_factorized_hard} 
\end{align}
Similarly, we define $\tilde p^m$ as the pdf of $\bZ$ in $\tilde \mcE^m$,  which can be factorized similarly to \eqref{eq:pz_m_factorized_hard} with $\tilde q_{\ell}$ replaced with $q_{\ell}$. Hence, the score functions associated with $p^m$ and $\tilde p^m$ are specified as follows.
\begin{equation}
    \bss^{m}(\bz) \triangleq \nabla_{\bz} \log p^{m}(\bz) \ , \quad \mbox{and} \quad \tilde \bss^{m}(\bz) \triangleq \nabla_{\bz} \log \tilde p^{m}(\bz) \ . \label{eq:sz-defs}
\end{equation}
We denote the score functions of the observed variables $\bX$ under $\mcE^m$ and $\tilde \mcE^m$ by $\bss^{m}_{\bX}$ and $\tilde \bss^{m}_{\bX}$, respectively. Note that the score functions change across different environments, which is induced by the changes in the distribution of $\bZ$. 
Specifically, following \eqref{eq:pz_factorized} and \eqref{eq:pz_m_factorized_soft}, the latent scores $\bss(\bz)$ and $\bss^{m}(\bz)$ are decomposed as 
\begin{align}
    \bss(\bz) &= \nabla_{\bz} \log p_{\ell}(z_{\ell} \med\bz_{\Pa(\ell)}) + \sum_{i \neq \ell} \nabla_{\bz} \log p_i(z_i \med \bz_{\Pa(i)}) \ , \label{eq:s_z_decompose_obs} \\
    \mbox{and} \quad \bss^{m}(\bz) &= \nabla_{\bz} \log q_{\ell}(z_{\ell} \med\bz_{\Pa(\ell)}) + \sum_{i \neq \ell} \nabla_{\bz} \log p_i(z_i \med \bz_{\Pa(i)}) \ . \label{eq:s_z_decompose_int}
\end{align}
where $\ell = I^m$. Hence, $\bss(\bz)$ and $\bss^{m}(\bz)$ differ in only the causal mechanism of the intervened node in environment $\mcE^m$. In \Cref{sec:analysis-score}, we investigate these discrepancies between $\bss$ and $\bss^{m}$ (or $\tilde \bss^{m}$) and characterize the relationship between the scores in the observational and interventional environments.

\subsection{Identifiability and Achievability Objectives}\label{sec:objectives}
The objective of CRL is to use observations $\bX$ generated by the observational and interventional environments and estimate the true latent variables $\bZ$ and causal relations among them captured by~$\mcG$. The first objective is \emph{identifiability}, which pertains to determining algorithm-agnostic sufficient conditions under which $\bZ$ and $\mcG$ can be recovered uniquely up to a permutation and element-wise transform, which is the strongest form of recovery in CRL from interventions as shown in~\citep{vonkugelgen2023twohard}. The second objective is \emph{achievability}, which refers to designing algorithms that are amenable to practical implementation and generate provably correct estimates for $\bZ$ and $\mcG$, foreseen by the identifiability guarantees. In this subsection, we provide the definitions needed for formalizing these objectives. 

We denote a generic estimator of $\bZ$ given $\bX$ by $\hat \bZ(\bX):\R^d\to\R^n$. We also consider a generic estimate of $\mcG$ denoted by $\hat\mcG$. To assess the fidelity of the estimates $\hat \bZ(\bX)$ and $\hat\mcG$ with respect to the ground truth $\bZ$ and $\mcG$, we provide the following identifiability measures, which will be achieved when we have a complete set of atomic interventions.
\newpage
\begin{definition}[Latent Graph Identifiability]\label{def:latent-graph-identifiability} For identifiability of the latent graph, we define:
    \begin{enumerate}[leftmargin=2em]
        \item \textbf{Perfect DAG recovery:} DAG recovery is said to be perfect if $\hat \mcG$ is isomorphic to $\mcG$.    
        \item \textbf{\em Transitive closure recovery:} DAG recovery is said to maintain transitive closure if $\hat \mcG$ and $\mcG$ have the same ancestral relationships, i.e., ${\hat \mcG_{\rm tr}}$ is isomorphic to $\mcG_{\rm tr}$.
    \end{enumerate}
\end{definition}
\begin{definition}[Latent Variable Identifiability]\label{def:latent-variable-identifiability} For identifiability of \emph{all} latent variables, we define:
    \begin{enumerate}[leftmargin=2em]
        \item \textbf{Componentwise latent recovery:} The estimator $\hat \bZ(\bX)$ satisfies componentwise latent recovery if $\hat \bZ(\bX)$ is a componentwise diffeomorphism of a permutation of $\bZ$, i.e., there exists a permutation~$\pi$ of~$[n]$ and a set diffeomorphisms $\{\phi_i: \R \to \R : i\in[n]\}$ such that we have 
        \begin{align}\label{eq:perfect-latent-recovery}
            \hat \bZ(\bX) = (\pi \circ \phi) (\bZ) \ , \qquad \forall \bz\in\R^n\ ,
        \end{align}
        where $\phi(\bZ)\triangleq (\phi_1(Z_1),\dots,\phi_n(Z_n))$.
        \item \textbf{Scaling consistency:}  The estimator $\hat \bZ(\bX)$ satisfies scaling consistency if there exists a permutation $\pi$ of $[n]$ and a constant \emph{diagonal} matrix $\bC_{\rm scale} \in\R^{n\times n}$ such that
        \begin{equation}\label{eq:scaling_cons}
            \hat \bZ(\bX) = \bP_{\pi} \cdot \bC_{\rm scale} \cdot \bZ \ ,\qquad \forall \bz\in\R^n\ .
        \end{equation}
        \item \textbf{Consistency up to mixing with parents:} The estimator $\hat \bZ(\bX)$ satisfies consistency up to mixing with parents if there exists a permutation $\pi$ of $[n]$ and a \emph{constant} matrix $\bC_{\rm pa} \in\R^{n\times n}$ such that
        \begin{equation}\label{eq:mixing_cons_ancestors}
            \hat \bZ(\bX) = \bP_{\pi} \cdot \bC_{\rm pa} \cdot \bZ \ , \qquad \forall \bz\in\R^n\ ,
        \end{equation}
        where $\bC_{\rm pa}$ has nonzero diagonal entries and for all $j\notin\Paplus(i)$, $[\bC_{\rm pa}]_{i,j}= 0$. Equivalently, $\hat Z_{\pi_i}$ is a linear function of $\{Z_j : j \in \Paplus(i)\}$ for all $i\in[n]$.
        \item \textbf{Consistency up to mixing with surrounding parents:} The estimator $\hat \bZ(\bX)$ satisfies consistency up to mixing with surrounding parents if there exists a permutation $\pi$ of $[n]$ and a \emph{constant} matrix $\bC_{\rm sur} \in\R^{n\times n}$ such that
        \begin{equation}\label{eq:mixing_cons_surrounding}
            \hat \bZ(\bX) = \bP_{\pi} \cdot \bC_{\rm sur} \cdot \bZ \ , \qquad \forall \bz\in\R^n\ ,
        \end{equation}
        where $\bC_{\rm sur}$ has nonzero diagonal entries and for all $j \notin \overline{\sur}(i)$, $[\bC_{\rm sur}]_{i,j}= 0$. Equivalently, $\hat Z_{\pi_i}$ is a linear function of $\{Z_j : j \in \overline{\sur}(i)\}$ for all $i\in[n]$.
    \end{enumerate}
\end{definition}

We note that scaling consistency is a special case of component-wise latent recovery in which the diffeomorphism is restricted to scaling. 
Next, we provide \emph{node-level partial identifiability} definitions which measure the recovery level of a \emph{single} latent variable. These will be useful to assess the identifiability guarantees of the algorithms under an incomplete set of interventions.

\begin{definition}[Node-level Partial Identifiability] For identifiability of a \emph{single} latent variable, we define:
    \begin{enumerate}[leftmargin=2em]
        \item \textbf{Partial latent recovery:}
        The estimator $\hat \bZ(\bX)$ satisfies partial latent recovery \textbf{for node~$i$} if 
        \begin{equation}
            [\hat \bZ(\bX)]_i = \phi_k(Z_k) \ ,
        \end{equation}
        for some $k \in [n]$ where $\phi_k : \R \to \R$ is a diffeomorphism.
        \item \textbf{Partial consistency up to mixing with parents:} The estimator $\hat \bZ(\bX)$ satisfies partial consistency up to mixing with parents \textbf{for node $i$} if
        \begin{equation}
            [\hat \bZ(\bX)]_i = \sum_{j \in \Paplus(k)} c_k \cdot Z_k \ ,
        \end{equation}
        for some $k \in [n]$ and constants $\{c_j : j \in \Paplus(k)\}$.
    \end{enumerate}
\end{definition}

\paragraph{Tightness of the identifiability definitions.} For general transformations, \citet[Proposition 3.1]{vonkugelgen2023twohard} show that identifiability up to component-wise diffeomorphisms is the best possible result under interventions, without additional assumptions. For linear transformations, \citet[Proposition 2]{squires2023linear} and \citet[Remark 2]{buchholz2023learning} show that under hard interventions, identifiability up to scaling consistency and permutation ambiguity is the best achievable outcome without further information on $\bZ$. Furthermore, for soft interventions, transitive closure recovery is the best latent graph identifiability result without additional assumptions on the causal model ~\citep[Appendix J]{squires2023linear}. Finally, \citet[Theorem 3]{jin2024learning} show that, under some non-degeneracy assumptions, consistency up to mixing with surrounding parents is the optimal result for soft interventions. Therefore, given a complete set of atomic interventions, the identifiability definitions presented in this section represent the ultimate objectives when using interventions.

\subsection{Algorithm-related Definitions}

For formalizing the achievability results and designing the associated algorithms, generating the estimates $\hat \bZ(\bX)$ and $\hat\mcG$ is facilitated by estimating the inverse of $g$ based on the observed data $\bX$. Specifically, an estimate of $g^{-1}$, where $g^{-1}$ denotes the inverse of $g$, facilitates recovering $\bZ$ via $\bZ = g^{-1}(\bX)$ . Throughout the rest of this paper, we refer to $g^{-1}$ as the \emph{true encoder}. To formalize the procedures of estimating $g^{-1}$, we define $\mcH$ as the set of possible valid encoders, i.e., candidates for $g^{-1}$. A function $h\in\mcH$ can be such a candidate if it is invertible, that is, there exists an associated decoder $h^{-1}$ such that $(h^{-1} \circ h)(\bX) = \bX$. Hence, the set of valid encoders is specified by
\begin{equation}\label{eq:mcH}
    \mcH \triangleq \{ h: \mcX \to \R^n \;\colon\; \exists h^{-1}: \R^n \to \R^d \;\;  \mbox{such that }\;\; (h^{-1} \circ h)(\bx) = \bx \ , \;\; \forall \bx \in \mcX\} \ . 
\end{equation}
Next, corresponding to any pair of observation $\bX$ and valid encoder $h\in\mcH$, we define $\hat \bZ(\bX;h)$ as an \emph{auxiliary} estimate of $\bZ$ generated by applying the valid encoder $h$ on $\bX$, i.e., 
\begin{equation}\label{eq:encoder}
    \hat \bZ(\bX;h) \triangleq h(\bX) = (h \circ g) (\bZ) \ , \qquad \forall \; h\in\mcH \ .
\end{equation}
The estimate $\hat \bZ(\bX;h)$ inherits its randomness from $\bX$, and its statistical model is governed by that of~$\bZ$ under the chosen $h$. To emphasize the dependence on $h$, we denote the score functions associated with the pdfs of $\hat \bZ(\bX;h)$ under environments $\mcE^{0}$, $\mcE^m$, and $\tilde\mcE^m$, respectively, by 
\begin{equation}
 \bss_{\hat \bZ}(\cdot ;h)\ , \quad    \bss_{\hat \bZ}^{m}(\cdot ;h), \quad \mbox{and} \quad \tilde \bss_{\hat \bZ}^{m}(\cdot;h)\ .
\end{equation}
We will be addressing both general and linear transformations $g$. In the linear transformation setting, the true linear transformation $g$ is denoted by the matrix $\bG \in \R^{d \times n}$. Accordingly, we denote a valid linear encoder by $\bH \in \R^{n \times d}$. For a given valid encoder $\bH$, the associated valid decoder is given by its Moore-Penrose inverse, i.e., $\bH^{\dag} \triangleq \bH^{\top} \cdot (\bH \cdot \bH^{\top})^{-1}$. 
\section{Properties of Score Functions under Interventions}\label{sec:analysis-score}

Score functions and their variations across different interventional environments play pivotal roles in our approach to identifying latent representations. In this section, we present the key properties of the score functions that will be leveraged in \Cref{sec:linear} and \Cref{sec:general} to construct identifiability results, along with the corresponding algorithms.

We first investigate score variations across pairs of environments, such as the observational environment and an interventional one (under both soft and hard atomic interventions) or two interventional environments, either coupled or uncoupled. 
The key insight is that an intervention causes changes in only certain coordinates of the score function, as indicated by the sparse changes in the decompositions of the score functions in \eqref{eq:s_z_decompose_obs} and \eqref{eq:s_z_decompose_int}. These sparse changes further reflect the graph structure in the score differences. For instance, for a single-node intervention $I^m = \{i\}$ in $m$-th environment, $j$-th coordinate of the score difference $\bss - \bss^{m}$ becomes
\begin{equation}\label{eq:def-score-diff-z-int-obs-single}
    \big[\bss(\bz) - \bss^{m}(\bz)\big]_j = \frac{\partial}{\partial z_j} \log p_{i}(z_i \mid \bz_{\Pa(i))}) - \frac{\partial}{\partial z_j} \log q_{i}(z_i \mid \bz_{\Pa(i)}) \ ,
\end{equation}
which is zero if $j \notin \Paplus(i)$. The following lemma formalizes this property for all relevant cases.

\begin{lemma}[Score Changes under Interventions]\label{lm:parent_change_comprehensive}
    Consider the observational environment $\mcE^0$ and an interventional environment $\mcE^m$ with unknown intervention target $I^m$. Then, for any causal model and intervention type,  $\E\Big[\big|\bss(\bZ) - \bss^{m}(\bZ)\big|_i\Big] \neq 0$ implies that $i \in \overline{\Pa}(I^m)$. For the reverse direction, we have the following results specified for each relevant case.
    \begin{enumerate}[label=(\roman*),leftmargin=2em, itemsep=-.05 in]
        \item \textbf{\em Hard interventions:} If the intervention in $\mcE^m$ (or $\tilde \mcE^m$) is hard, then score functions $\bss$ and $\bss^{m}$ (or $\tilde \bss^{m}$) differ in their $i$-th coordinate if and only if node $i$ or one of its children is intervened in $\mcE^m$ (or in $\tilde \mcE^m$). \vspace{-\baselineskip}
        \begin{align}
            \label{eq:scorechanges_iff2} \E\Big[\big|\bss(\bZ) - \bss^{m}(\bZ)\big|_i\Big] \neq 0  &\quad \iff \quad i \in \overline{\Pa}(I^m)\ ,  \\ 
            \label{eq:scorechanges_iff3} \mbox{and} \quad  \E\Big[\big|\bss(\bZ) - \tilde \bss^{m}(\bZ)\big|_i\Big] \neq 0 & \quad \iff \quad i \in \overline{\Pa}(\tilde I^m)\ . 
        \end{align}
        
        \item \textbf{\em Soft interventions:} If the intervention in $\mcE^m$ is soft and the latent causal model is an additive noise model, then score functions $\bss$ and $\bss^{m}$ differ in their $i$-th coordinate if and only if node $i$ or one of its children is intervened in $\mcE^m$.
        \begin{equation}\label{eq:scorechanges_iff1}
            \E\Big[\big|\bss(\bZ) - \bss^{m}(\bZ)\big|_i\Big] \neq 0  \quad \iff \quad i \in \overline{\Pa}(I^m)\ . 
        \end{equation}

        \item \textbf{\em Coupled environments $I^m = \tilde I^m$:} In the coupled environment setting, $\bss^{m}$ and $\tilde \bss^{m}$ differ in their $i$-th coordinate if and only if $i$ is intervened. 
         \begin{equation}\label{eq:scorechanges_iff4}
            \E\Big[\big|\bss^{m}(\bZ) - \tilde \bss^{m}(\bZ)\big|_i\Big] \neq 0 \quad \iff \quad i = I^m  \ . 
         \end{equation}

         \item \textbf{\em Uncoupled environments $I^m \neq \tilde I^m$:} Consider two interventional environments $\mcE^m$ and $\tilde \mcE^m$ with different intervention targets $I^m \neq \tilde I^m$, and consider additive noise models specified in \eqref{eq:additive-SCM}. Given that $p(\bZ)$ is twice differentiable, the score functions $\bss^{m}$ and $\tilde \bss^{m}$ differ in their $i$-th coordinate if and only if node $i$ or one of its children is intervened.
         \begin{equation}
               \E\Big[\big|\bss^{m}(\bZ) - \tilde \bss^{m}(\bZ)\big|_i\Big] \neq 0 \quad \iff \quad i \in \overline{\Pa}(I^m,\tilde I^m)  \ .  
         \end{equation}
    \end{enumerate}
\end{lemma}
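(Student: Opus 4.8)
The plan is to prove the forward (``$\neq 0 \Rightarrow i\in\overline{\Pa}$'') implication uniformly and then establish each reverse (``$\iff$'') direction by a case analysis on the coordinate. A reduction used everywhere: since $p$ and each $p^m,\tilde p^m$ have full support on $\R^n$ and all conditional pdfs are continuously differentiable, every coordinate of a score difference is a continuous function on $\R^n$; hence $\E\big[\,|\bss(\bZ)-\bss^m(\bZ)|_i\,\big]=0$ is equivalent to that coordinate being identically zero, and likewise for the pairs $(\bss,\tilde\bss^m)$ and $(\bss^m,\tilde\bss^m)$. So it suffices to reason about identical vanishing. For the forward direction, plug the factorizations \eqref{eq:s_z_decompose_obs}--\eqref{eq:s_z_decompose_int} into the difference: with $\ell=I^m$, $\bss-\bss^m=\nabla_{\bz}\big[\log p_\ell(z_\ell\mid\bz_{\Pa(\ell)})-\log q_\ell(z_\ell\mid\bz_{\Pa(\ell)})\big]$ (or $\log q_\ell(z_\ell)$ for a hard intervention), a function of $(z_\ell,\bz_{\Pa(\ell)})$ only, so its gradient vanishes outside $\overline{\Pa}(I^m)$. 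The same bookkeeping handles the other pairings once the shared conditional factors cancel: $\bss^m-\tilde\bss^m=\nabla_{\bz}[\log q_\ell(z_\ell)-\log\tilde q_\ell(z_\ell)]$ in the coupled case (support inside $\{I^m\}$), and $\bss^m-\tilde\bss^m=\nabla_{\bz}[A-B]$ in the uncoupled case, where $A=\log q_{I^m}(z_{I^m})-\log p_{I^m}(z_{I^m}\mid\bz_{\Pa(I^m)})$ and $B$ is the analogue for $\tilde I^m$ (support inside $\overline{\Pa}(I^m)\cup\overline{\Pa}(\tilde I^m)$).

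For the reverse direction I would first dispatch every \emph{diagonal} coordinate, i.e.\ $i$ equal to an intervened node, by a normalization argument. If $\partial_{z_i}\big[\log(\text{pre-intervention conditional})-\log(\text{post-intervention conditional})\big]\equiv 0$, then the two conditional pdfs of $z_i$ are proportional with a factor depending only on $\bz_{\Pa(i)}$; integrating over $z_i$ and using that both integrate to $1$ forces the factor to be $1$, so the two mechanisms coincide. This contradicts, respectively, the definition of a soft intervention; that $q_i\neq\tilde q_i$ for two distinct hard interventions; and, for $\mcE^0$ versus a hard intervention, the fact that $q_i(z_i)$ cannot equal $p_i(z_i\mid\bz_{\Pa(i)})$ --- either because the intervention is a genuine change when $i$ is a source node, or because $q_i$ is parent-free while $p_i$ depends non-trivially on its parents in $\mcG$.

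Next, the \emph{parent} coordinates $i\in\Pa(\ell)$ with $\ell$ the intervened node. Under a hard intervention the post-intervention mechanism $q_\ell(z_\ell)$ does not depend on $z_i$, so the coordinate equals $\partial_{z_i}\log p_\ell(z_\ell\mid\bz_{\Pa(\ell)})$, which cannot be identically zero since $i$ is a genuine parent of $\ell$; this closes (i), the coupled case (iii) (for which only the diagonal coordinate is relevant), and the parent coordinates of the $\mcE^0$-versus-hard comparison. Under a soft intervention both $p_\ell$ and $q_\ell$ can depend on $z_i$, so cancellation must be excluded, and here the additive-noise structure is essential: writing $\partial_{z_i}\log p_\ell(z_\ell\mid\bz_{\Pa(\ell)})=\partial_{z_i}f_\ell(\bz_{\Pa(\ell)})\,(\log p_{N_\ell})'\big(z_\ell-f_\ell(\bz_{\Pa(\ell)})\big)$ and the analogous expression for the post-intervention mechanism, identical vanishing along $z_\ell$ at a point where $\partial_{z_i}f_\ell\neq 0$ forces a pointwise identity on the noise log-density derivatives that makes the observational and interventional conditionals agree, a contradiction. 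This yields (ii).

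Finally, for the uncoupled case (iv) I would telescope through the observational score, $\bss^m-\tilde\bss^m=(\bss^m-\bss)-(\tilde\bss^m-\bss)$. By the already-established part (i), the first term vanishes exactly off $\overline{\Pa}(I^m)$ and the second exactly off $\overline{\Pa}(\tilde I^m)$, so on the symmetric difference of these two sets only one term is active and non-vanishing is immediate. The main obstacle is the overlap $\overline{\Pa}(I^m)\cap\overline{\Pa}(\tilde I^m)$, where the two active terms could conceivably cancel. Here I would use acyclicity to assume without loss of generality that $I^m$ is not an ancestor of $\tilde I^m$, which makes $B$ (hence $\nabla B$) independent of $z_{I^m}$ and also forces every $i$ in the overlap to lie in $\Pa(I^m)$. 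For such an $i$, identical vanishing of $\partial_{z_i}[A-B]$ gives $\partial_{z_i}A=\partial_{z_i}B$; differentiating once more in $z_{I^m}$ (legitimate since $p$ is twice differentiable) kills the right-hand side, so $\partial_{z_{I^m}}\partial_{z_i}A\equiv 0$, and substituting the additive-noise form this reads $\partial_{z_i}f_{I^m}(\bz_{\Pa(I^m)})\,(\log p_{N_{I^m}})''\big(z_{I^m}-f_{I^m}(\bz_{\Pa(I^m)})\big)\equiv 0$; since $\partial_{z_i}f_{I^m}\not\equiv 0$ this forces $(\log p_{N_{I^m}})''\equiv 0$, i.e.\ an affine noise log-density, contradicting integrability of $p_{N_{I^m}}$. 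Combining the diagonal, symmetric-difference, and overlap parts gives the claimed equivalence. I expect the technically heaviest steps to be the soft-intervention parent coordinate in (ii) and the overlap analysis in (iv); both amount to ruling out accidental cancellations, which is exactly what the additive-noise hypothesis (and, in (iv), the twice-differentiability hypothesis) is there to preclude.
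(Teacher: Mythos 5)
Your overall architecture is the paper's: the full-support/continuity reduction to identical vanishing, the cancellation of the shared conditional factors for the forward direction, the normalization argument for the intervened coordinate, the genuine-parent argument for parent coordinates under hard interventions, and the mixed-second-derivative/non-integrable-score contradiction in the uncoupled case are all exactly the steps the paper takes (your telescoping of (iv) through the observational score and invoking part (i) on the symmetric difference is a mild, legitimate reorganization of the paper's direct four-term case analysis).

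The genuine gap is the parent coordinate under a \emph{soft} intervention in case (ii). Writing $r_p$ and $r_q$ for the noise score functions $(\log p_{N_\ell})'$ and $(\log q_{N_\ell})'$, identical vanishing of the $i$-th coordinate reads $\frac{\partial f_\ell}{\partial z_i}(\varphi)\, r_p\big(z_\ell - f_\ell(\varphi)\big) = \frac{\partial \bar f_\ell}{\partial z_i}(\varphi)\, r_q\big(z_\ell - \bar f_\ell(\varphi)\big)$. At a fixed parent configuration $\varphi$ this is only the one-variable identity $a\, r_p(u) = b\, r_q(u+\delta)$ for all $u$, with $a,b,\delta$ constants, and that identity does \emph{not} force the two conditionals to agree: for instance $r_p(u) = \lambda\, r_q(u+\delta)$ is satisfied by two Gaussian noise densities with different variances and shifted means, which are perfectly valid and distinct. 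So your claimed contradiction ("makes the observational and interventional conditionals agree") does not follow from the pointwise identity. The paper's argument (the additive-noise case of its interventional-regularity lemma) has to sweep $\varphi$ over an open set, divide, and differentiate in the noise variable to obtain $r_p'/r_p(n) = r_q'/r_q\big(n+\delta(\varphi)\big)$ jointly in $(n,\varphi)$, and then split into two branches: if $\delta = f_\ell - \bar f_\ell$ is constant, normalization does give $p_\ell = q_\ell$ (your branch); if $\delta$ is non-constant, fixing $n$ and letting $\delta(\varphi)$ range over an interval forces $r_q'/r_q$ to be constant on an interval, hence --- using analyticity of the noise pdf, an assumption you never invoke --- $r_q$ is a pure exponential and the corresponding density $\propto \exp\big((k_1/\alpha)e^{\alpha u}\big)$ is not integrable. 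Your sketch omits this second branch entirely, and it is the technically essential one; as written, the step would fail.
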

\proof See \Cref{sec:proof-parent-change}.

\Cref{lm:parent_change_comprehensive} provides the necessary and sufficient conditions for the invariance of the coordinates of the score functions of latent variables. 
The core idea of the score-based framework is that tracing these sparse changes in the score functions of the latent variables guides finding reliable estimates for the inverse of the transformation $g$, which in turn facilitates estimating $\bZ$. Intuitively, we will look for the encoders $h \in \mcH$ such that the variations between the score estimates $\bss_{\hat \bZ}(\hat \bz;h)$, $\bss_{\hat \bZ}^{m}(\hat \bz;h)$, and $\tilde \bss_{\hat \bZ}^{m}(\hat \bz;h)$ will be similar to the true score variations given by \Cref{lm:parent_change_comprehensive}.
However, the scores of the latent variables are not directly accessible. To circumvent this, we need to understand the connection between score functions of $\bX$ and $\bZ$. In the following lemma, we leverage the change of variables formula for injective mappings and establish this relationship for \emph{any} injective mapping $f$ from latent to observed space. \looseness=-1

\begin{lemma}[Score Difference Transformation]\label{lm:score-difference-transform-general}
    Consider random vectors $\bY_1,\bY_2\in\R^r$ and $\bW_1$, $\bW_2 \in \R^s$ that are related through
    \begin{equation}
        \bY_1=f(\bW_1) \ , \quad \mbox{and} \qquad \bY_2=f(\bW_2)\ ,
    \end{equation}
    such that $r \geq s$, probability measures of $\bW_1,\bW_2$ are absolutely continuous with respect to the $\bss$-dimensional Lebesgue measure, and $f: \R^s \to \R^r$ is an injective and continuously differentiable function. The difference of the score functions of $\bY_1$ and $\bY_2$, and that of $\bW_1$ and $\bW_2$ are related as
    \begin{equation}\label{eq:score-observed-to-latent}
        s_{\bW_1}(\bw)-s_{\bW_2}(\bw) = \big[J_f(\bw)\big]^{\top} \cdot \big[s_{\bY_1}(\by) - s_{\bY_2}(\by)\big] \ , \quad \mbox{where} \;\; \by=f(\bw) \ ,
    \end{equation}
    where $J_{f}(\bw)$ denotes the Jacobian of $f$ at point $w$.
   Furthermore, for the reverse direction, we have
    \begin{equation}\label{eq:score-latent-to-observed}
        s_{\bY_1}(\by) - s_{\bY_2}(\by) = \Big[\big[J_f(\bw)\big]^{\dag}\Big]^{\top} \cdot \big[s_{\bW_1}(\bw)-s_{\bW_2}(\bw)\big] \ , \quad \mbox{where} \;\; \by=f(\bw) \ .
    \end{equation}
\end{lemma}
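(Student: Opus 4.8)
The plan is to start from the change-of-variables formula for injective maps and differentiate its logarithm. Since $f:\mathbb{R}^s\to\mathbb{R}^r$ is injective and continuously differentiable with $r\geq s$, the pushforward density of $\bW_1$ under $f$ lives on the $s$-dimensional submanifold $\image(f)\subseteq\mathbb{R}^r$ and, with respect to the $s$-dimensional Hausdorff measure on that image, satisfies
\begin{equation}
    p_{\bY_1}(f(\bw)) \;=\; p_{\bW_1}(\bw)\cdot \big(\det[J_f(\bw)^{\top}J_f(\bw)]\big)^{-1/2}\ ,
\end{equation}
and likewise for $\bW_2,\bY_2$ with the \emph{same} Jacobian factor, because the two densities are pushed forward by the same map $f$. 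This matching of the volume-distortion term is the crucial point: it will cancel when we take the difference of log-densities. Taking $\log$ of both sides and subtracting the two identities gives
\begin{equation}\label{eq:plan-log-diff}
    \log p_{\bW_1}(\bw) - \log p_{\bW_2}(\bw) \;=\; \log p_{\bY_1}(f(\bw)) - \log p_{\bY_2}(f(\bw))\ ,
\end{equation}
valid for all $\bw$ in the (open) support, since the $\tfrac12\log\det$ terms are identical and cancel.

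Next I would take the gradient in $\bw$ of \eqref{eq:plan-log-diff}. The left side is exactly $s_{\bW_1}(\bw)-s_{\bW_2}(\bw)$ by the definition in \eqref{eq:sz-def}. For the right side, the subtlety is that $\log p_{\bY_1}$ and $\log p_{\bY_2}$ are functions on the manifold $\image(f)$, so their ``gradient'' is the differential/score operator $D\log p_{\bY_i}$ defined via \eqref{eq:Df-defn}, and the score \emph{difference} $s_{\bY_1}(\by)-s_{\bY_2}(\by)$ is a well-defined vector in the tangent space $T_{\by}\image(f)=\operatorname{col}(J_f(\bw))$ (indeed, while each individual manifold score is only defined up to the ambient normal component, the difference is intrinsic). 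Applying the chain rule to $\bw\mapsto \log p_{\bY_i}(f(\bw))$ — where $\log p_{\bY_i}\circ f$ is now an ordinary function on $\mathbb{R}^s$ — yields $\nabla_{\bw}\big[\log p_{\bY_i}\circ f\big](\bw) = J_f(\bw)^{\top}\cdot D\log p_{\bY_i}(f(\bw))$, because the ambient differential restricted along the columns of $J_f(\bw)$ is computed by the transpose Jacobian acting on the manifold differential. Subtracting the two chain-rule identities produces exactly \eqref{eq:score-observed-to-latent}.

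For the reverse direction \eqref{eq:score-latent-to-observed}, I would left-multiply \eqref{eq:score-observed-to-latent} by $\big[J_f(\bw)^{\dag}\big]^{\top}=J_f(\bw)\big(J_f(\bw)^{\top}J_f(\bw)\big)^{-1}$. Using $\big[J_f(\bw)^{\dag}\big]^{\top}J_f(\bw)^{\top} = J_f(\bw)\big(J_f(\bw)^{\top}J_f(\bw)\big)^{-1}J_f(\bw)^{\top}$, which is the orthogonal projector $\Pi_{T_{\by}\image(f)}$ onto the column space of $J_f(\bw)$ (well-defined since $J_f$ has full column rank $s$ by injectivity plus the continuous-differentiability/regularity built into the setup), the right side becomes $\Pi_{T_{\by}\image(f)}\big[s_{\bY_1}(\by)-s_{\bY_2}(\by)\big]$. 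Since the manifold-score difference already lies in $T_{\by}\image(f)$, the projector acts as the identity on it, giving \eqref{eq:score-latent-to-observed}.

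The main obstacle is the differential-geometric bookkeeping for the $\bY$-side: making precise that the score difference on the embedded manifold is a genuine tangent vector (so the pseudoinverse identity is legitimate), that the change-of-variables Jacobian factor is $\big(\det[J_f^{\top}J_f]\big)^{-1/2}$ and is common to both densities, and that the chain rule $\nabla_{\bw}(\phi\circ f) = J_f^{\top}\,D\phi\circ f$ holds in the manifold sense used in \eqref{eq:Df-defn}. Everything else — the cancellation of the log-determinant terms and the linear algebra with the Moore–Penrose inverse — is routine once the full-column-rank property of $J_f$ is invoked.
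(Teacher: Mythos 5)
Your proposal is correct and follows essentially the same route as the paper's proof: the area formula giving the common volume-distortion factor $\big(\det[J_f^{\top}J_f]\big)^{-1/2}$, the chain rule $\nabla_{\bw}(\phi\circ f)=J_f^{\top}\,D\phi$ relating ambient and manifold differentials, cancellation of the log-determinant term upon subtraction, and the observation that $\big[J_f^{\dag}\big]^{\top}J_f^{\top}$ is the orthogonal projector onto $T_{\by}\image(f)$, which acts as the identity on the score difference. The only cosmetic difference is that you subtract the log-densities before differentiating while the paper differentiates each and then subtracts; this is immaterial.
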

\proof See \Cref{sec:score-diff-lm-proof}.\\
For CRL under linear transformations in \Cref{sec:linear}, we use the following corollary of \Cref{lm:score-difference-transform-general}.
\begin{corollary}\label{corollary:score-difference-transform-linear}
    In \Cref{lm:score-difference-transform-general}, if $f$ is a linear transform, that is $\bY = \bF \cdot \bW$ for a full-rank matrix $\bF \in \R^{r \times s}$, then the score functions of $\bY$ and $\bW$ are related through $\bss_{\bW}(\bw) = \bF^{\top} \cdot \bss_{\bY}(\by)$ and $\bss_{\bY}(\by) = \big[\bF^{\dag}\big]^{\top} \cdot \bss_{\bW}(\bw)$, where $\by= \bF \cdot \bw$. 
\end{corollary}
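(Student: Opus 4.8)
The plan is to specialize the change-of-variables argument underlying \Cref{lm:score-difference-transform-general} to a linear map, exploiting the fact that linearity makes the Jacobian constant, so that not only the score \emph{differences} but the scores themselves transform cleanly. First I would observe that for $f(\bw)=\bF\bw$ with $\bF\in\R^{r\times s}$ of full column rank, one has $J_f(\bw)=\bF$ for every $\bw$, and the image $\mcX=\image(f)$ is a fixed $s$-dimensional linear subspace of $\R^r$ (the column space of $\bF$), whose tangent space at every point is again the column space of $\bF$. The area formula for the injective linear map $\bF$ then gives $p_{\bW}(\bw)=\sqrt{\det(\bF^{\top}\bF)}\;p_{\bY}(\bF\bw)$, where the prefactor $\sqrt{\det(\bF^{\top}\bF)}$ is a strictly positive constant that does not depend on $\bw$. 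Taking logarithms, $\log p_{\bW}(\bw)=\log p_{\bY}(\bF\bw)+\mathrm{const}$.

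Next I would differentiate this identity with respect to $\bw$. Since $\log p_{\bY}$ is defined on the subspace $\mcX$ and its gradient is meant in the sense of the differential operator $D$ from \Cref{sec:score-definitions}, the chain rule for the differential — the same one used in the proof of \Cref{lm:score-difference-transform-general} — yields $\nabla_{\bw}\bigl(\log p_{\bY}\circ f\bigr)(\bw)=[J_f(\bw)]^{\top}\,D\log p_{\bY}(\bF\bw)=\bF^{\top}\bss_{\bY}(\by)$ with $\by=\bF\bw$. Combining with the log-density identity gives $\bss_{\bW}(\bw)=\bF^{\top}\bss_{\bY}(\by)$, the first claimed relation; subtracting two such identities for $\bW_1$ and $\bW_2$ recovers \eqref{eq:score-observed-to-latent} in the linear case, confirming consistency with the lemma.

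For the reverse direction I would use that the score $\bss_{\bY}(\by)$ lies in the tangent space $T_{\by}\mcX$, which here equals the column space of $\bF$; hence $\bss_{\bY}(\by)=\bF\bu$ for some $\bu\in\R^s$. Substituting into $\bss_{\bW}(\bw)=\bF^{\top}\bss_{\bY}(\by)=\bF^{\top}\bF\,\bu$ and inverting $\bF^{\top}\bF$ (invertible since $\bF$ has full column rank) gives $\bu=(\bF^{\top}\bF)^{-1}\bss_{\bW}(\bw)$, so $\bss_{\bY}(\by)=\bF(\bF^{\top}\bF)^{-1}\bss_{\bW}(\bw)$. Finally I would identify $\bF(\bF^{\top}\bF)^{-1}=\bigl[(\bF^{\top}\bF)^{-1}\bF^{\top}\bigr]^{\top}=[\bF^{\dag}]^{\top}$, using that for a full-column-rank $\bF$ the Moore--Penrose inverse is $\bF^{\dag}=(\bF^{\top}\bF)^{-1}\bF^{\top}$, which yields $\bss_{\bY}(\by)=[\bF^{\dag}]^{\top}\bss_{\bW}(\bw)$.

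The density transformation and the chain rule are routine and inherited from the proof of \Cref{lm:score-difference-transform-general}, so the only point needing mild care is the case $r>s$: there $p_{\bY}$ is supported on a lower-dimensional subspace and $\bss_{\bY}$ must be read through the differential operator and shown to lie in the column space of $\bF$. Since $\mcX$ is a genuine linear subspace with constant tangent space, this reduces to the elementary identities $\bF^{\dag}\bF=\bI_{s\times s}$ and $[\bF^{\dag}]^{\top}=\bF(\bF^{\top}\bF)^{-1}$, so I do not anticipate a genuine obstacle here.
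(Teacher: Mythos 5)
Your proposal is correct and follows essentially the same route as the paper: the paper's proof simply notes that with a constant Jacobian $J_f(\bw)=\bF$ the $\nabla_{\bw}\log\left|\det(J_f^{\top}J_f)\right|^{1/2}$ term in the proof of \Cref{lm:score-difference-transform-general} vanishes, giving $\bss_{\bW}(\bw)=\bF^{\top}\bss_{\bY}(\by)$, and obtains the reverse direction exactly as you do, from the fact that $\bss_{\bY}(\by)$ lies in the column space of $\bF$ so that $\bF\bF^{\dag}$ acts as the identity on it. Your write-up merely makes explicit (via the area formula and the identity $\bF^{\dag}=(\bF^{\top}\bF)^{-1}\bF^{\top}$) the steps the paper leaves implicit.
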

For CRL under general transformations in \Cref{sec:general}, we customize \Cref{lm:score-difference-transform-general} as follows. Consider a candidate encoder $h \in \mcH$. Recall that  $\hat \bZ(\bX;h) = h(\bX) = (h \circ g)(\bZ)$. Then, by letting $f = (h \circ g)$, \Cref{lm:score-difference-transform-general} gives 
\begin{align}
    \mbox{between }\mcE^{0} \mbox{ and } \mcE^m&: \quad \bss_{\hat \bZ}(\hat \bz;h) - \bss_{\hat \bZ}^{m}(\hat \bz;h) = J_{f}^{-\top}(\bz) \cdot \big[\bss(\bz) - \bss^{m}(\bz)\big] \ ,  \label{eq:score-difference-z-zhat-1} \\ 
    \mbox{between }\mcE^{0} \mbox{ and } \tilde \mcE^m&: \quad \bss_{\hat \bZ}(\hat \bz;h) - \tilde \bss_{\hat \bZ}^{m}(\hat \bz;h) = J_{f}^{-\top}(\bz) \cdot \big[\bss(\bz) - \tilde \bss^{m}(\bz)\big] \ , \label{eq:score-difference-z-zhat-2}  \\
    \mbox{between }\mcE^m \mbox{ and } \tilde \mcE^m&: \quad \bss_{\hat \bZ}^{m}(\hat \bz;h) - \tilde \bss_{\hat \bZ}^{m}(\hat \bz;h) = J_{f}^{-\top}(\bz) \cdot \big[\bss^{m}(\bz) - \tilde \bss^{m}(\bz)\big] \ .  \label{eq:score-difference-z-zhat} 
\end{align}

\section{CRL under Linear Transformations}\label{sec:linear}

In this section, we consider CRL under linear transformation, in which the general transformation model in~\eqref{eq:data-generation-process} becomes:
\begin{equation}\label{eq:data-generation-process-linear}
    \bX = \bG \cdot \bZ \ ,
\end{equation}
where $\bG \in \R^{d \times n}$ is an \emph{unknown} full-rank matrix mapping the latent variables to the observed ones. We will present steps for leveraging the properties of the score functions presented in \Cref{sec:analysis-score} to design an algorithm that identifies the true encoder and recovers the true causal representations. The algorithm is referred to as {\bf L}inear {\bf S}core-based {\bf Ca}usal {\bf L}atent {\bf E}stimation via {\bf I}nterventions (LSCALE-I). The theoretical guarantees associated with the algorithm steps will also be presented, serving as constructive proof of identifiability.

LSCALE-I consists of three stages outlined in \Cref{alg:linear}. In the first stage, we will learn an encoder $\hat \bH$, row by row, where each row is an estimate of a row of the true encoder $\bGinv$. This encoder estimate will recover each $Z_i$ up to mixing with $\bZ_{\Pa(i)}$. In the second stage, we will use the estimated latent variables $\hat \bH$ to obtain the transitive closure of the latent graph $\mcG$. Finally, as an optional third stage for \emph{hard} interventions, we will leverage the independence statements implied by the hard interventions to refine our encoder estimate. We will demonstrate that this refinement yields scaling consistency and perfect DAG recovery. We will also discuss the relevance and distinctions of our results vis-à-vis the results in the existing literature.

\paragraph{Statistical diversity.} 
Similarly to the existing identifiability results, it is necessary to have some regularity conditions~\footnote{Some examples include \emph{generic interventions} in~\citep{squires2023linear}, \emph{no pure shift interventions} condition in~\citep{buchholz2023learning}, and the \emph{genericity condition} in~\citep{vonkugelgen2023twohard}.} on the probability distributions of observational and interventional environments. We adopt the following assumption for the case of linear transformations.
\begin{assumption}\label{assumption:rank-two}
For every possible pair $(i,k)$ where $i\in[n], k \in \Pa(i)$, the following term cannot be a constant function in $\bz$,
\begin{equation}\label{eq:ratio-SN}
     \frac{\partial}{\partial z_k} \left(\log\frac{p_i(z_i \mid \bz_{\Pa(i)})}{q_i(z_i \mid \bz_{\Pa(i)})} \right)\left[\frac{\partial}{\partial z_i} \log\frac{p_i(z_i \mid \bz_{\Pa(i)})}{q_i(z_i \mid \bz_{\Pa(i)})}\right]^{-1}\ .
\end{equation}
\end{assumption}
Essentially, $\frac{\partial}{\partial z_k} \big(\log\frac{p_i(z_i \mid \bz_{\Pa(i)})}{q_i(z_i \mid \bz_{\Pa(i)})} \big)$ captures the effect of intervening on node $i$ on the \emph{score} associated with node $k$. Therefore, \Cref{assumption:rank-two} ensures that an intervention sufficiently differentiates the target variable and its parents in the score function. We note that this is a very mild assumption and holds for a wide range of commonly used models (including additive noise models under hard interventions) and is discussed in more detail in \Cref{sec:discuss-assumptions}.

Before we explain the details of LSCALE-I and establish identifiability results in the following subsections, we first present the rationale and derive the partial identifiability result, which will serve as the building block. In the rest of the paper, for brevity, we denote the score difference functions for each $m\in[n]$ by
\begin{equation}\label{eq:def-score-diff-z-int-obs}
\bsd_{\bZ}^{m}(\bz) \triangleq \bss^{m}(\bz) - \bss(\bz)   \quad \mbox{and} \quad \bsd_{\bX}^{m}(\bx) \triangleq \bss_{\bX}^{m}(\bx) - \bss_{\bX}(\bx) \ .
\end{equation}

\subsection{Rationale for LSCALE-I and Partial Identifiability}\label{sec:linear-rationale}

To construct an algorithm that achieves identifiability under linear transformations, we first analyze the score differences for the linear transformation. Note that for linear transformation $\bX = \bG \cdot \bZ$, Jacobian $J_{\bG}(\bz)$ is independent of $\bz$ and equal to matrix $\bG$. Also, we have $\bz = \bGinv \cdot \bx$ for all $\bx \in \colspace(\bG)$. Then, using \Cref{corollary:score-difference-transform-linear} we have 
\begin{align}\label{eq:sx-from-sz-linear}
    \bsd_{\bX}^{m}(\bx) = [\bGinv]^{\top} \cdot \bsd_{\bZ}^m(\bz) \ .  
\end{align}
Let us denote the correlation matrices of $\bsd_{\bX}^{m}(\bx)$ and $\bsd_{\bZ}^{m}(\bz)$, respectively, by
\begin{equation}
    \bR_{\bX}^{m} \triangleq \E \big[\bsd_{\bX}^m(\bx) \cdot [\bsd_{\bX}^m(\bx)]^{\top} \big] \  \quad \mbox{and} \quad
    \bR_{\bZ}^{m} \triangleq \E \big[\bsd_{\bZ}^m(\bz) \cdot [\bsd_{\bZ}^m(\bz)]^{\top} \big]  \ . \label{eq:rx-rz-defn}
\end{equation}
Note that \eqref{eq:sx-from-sz-linear} implies that  $\bR_{\bX}^{m} = [\bGinv]^{\top} \cdot \bR_{\bZ}^{m} \cdot \bGinv$. The next lemma specifies that the structure of the column space of correlation matrix $\bR_{\bX}^{m}$ is heavily constrained by the true encoder~$\bGinv$, the graph $\mcG$, and the intervention target $I^m$ in $m$-th environment.
\begin{lemma}
\label{lm:dx-colspace}
    For any interventional environment $\mcE^m$, $\colspace(\bR_{\bX}^{m}) \subseteq \spann\big\{[\bGinv_i]^{\top} \,:\, i \in \Paplus(I^m)\big\}$.
\end{lemma}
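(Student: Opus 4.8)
The plan is to combine the score-difference transformation for linear maps with the sparsity of the latent score difference, and then push this containment through the expectation defining $\bR_{\bX}^m$. First I would recall from \eqref{eq:sx-from-sz-linear} that $\bsd_{\bX}^m(\bx) = [\bGinv]^{\top} \cdot \bsd_{\bZ}^m(\bz)$ with $\bz = \bGinv \cdot \bx$. The crucial observation is that $[\bGinv]^{\top} \be_i$ is precisely the transpose of the $i$-th row of $\bGinv$, i.e.\ $[\bGinv_i]^{\top}$, so writing $\bsd_{\bZ}^m(\bz)$ in the standard basis gives $\bsd_{\bX}^m(\bx) = \sum_{i=1}^n [\bsd_{\bZ}^m(\bz)]_i \, [\bGinv_i]^{\top}$.

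Next I would invoke the pointwise sparsity of the latent score difference. Let $\ell = I^m$. By the explicit coordinate formula in \eqref{eq:def-score-diff-z-int-obs-single} (equivalently, the forward direction of \Cref{lm:parent_change_comprehensive}), $[\bsd_{\bZ}^m(\bz)]_j = \tfrac{\partial}{\partial z_j}\log q_\ell(z_\ell \mid \bz_{\Pa(\ell)}) - \tfrac{\partial}{\partial z_j}\log p_\ell(z_\ell \mid \bz_{\Pa(\ell)})$ vanishes identically whenever $j \notin \Paplus(\ell)$, since both conditional log-densities depend only on $z_\ell$ and $\bz_{\Pa(\ell)}$. Hence, for \emph{every} $\bz$, the vector $\bsd_{\bZ}^m(\bz)$ is supported on $\Paplus(I^m)$, and consequently
\begin{equation*}
    \bsd_{\bX}^m(\bx) \;=\; \sum_{i \in \Paplus(I^m)} [\bsd_{\bZ}^m(\bz)]_i \, [\bGinv_i]^{\top} \;\in\; V \;\triangleq\; \spann\big\{[\bGinv_i]^{\top} : i \in \Paplus(I^m)\big\}\ , \qquad \forall \bx \in \colspace(\bG)\ .
\end{equation*}

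Finally I would transfer this to the correlation matrix. For an arbitrary $\bv \in \R^d$ we have $\bR_{\bX}^m \bv = \E\big[\big([\bsd_{\bX}^m(\bx)]^{\top}\bv\big)\, \bsd_{\bX}^m(\bx)\big]$, which is the expectation of scalar multiples of vectors all lying in the fixed finite-dimensional (hence closed) linear subspace $V$; therefore $\bR_{\bX}^m \bv \in V$. Since $\bv$ was arbitrary, $\colspace(\bR_{\bX}^m) \subseteq V$, which is the claim. I do not anticipate a genuine obstacle here: the only point requiring a moment's care is the passage from the ``in expectation'' statement of \Cref{lm:parent_change_comprehensive} to the pointwise support claim, which is why I would lean directly on the explicit formula \eqref{eq:def-score-diff-z-int-obs-single}; the measure-theoretic step that an expectation of elements of $V$ stays in $V$ is immediate in finite dimensions.
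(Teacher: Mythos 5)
Your proposal is correct and follows essentially the same route as the paper: the pointwise sparsity of $\bsd_{\bZ}^{m}$ from \eqref{eq:def-score-diff-z-int-obs-single}, the linear transformation identity \eqref{eq:sx-from-sz-linear} to place every $\bsd_{\bX}^{m}(\bx)$ in $\spann\{[\bGinv_i]^{\top} : i \in \Paplus(I^m)\}$, and then passing to the correlation matrix. Your final step is in fact marginally cleaner than the paper's, which asserts $\colspace(\bR_{\bX}^{m}) = \spann\{\bsd_{\bX}^{m}(\bx) : \bx \in \colspace(\bG)\}$ ``by definition,'' whereas you only need (and directly prove) the containment via the expectation-of-vectors-in-a-closed-subspace argument.
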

\begin{proof}
    Using the sparse score changes property in \eqref{eq:def-score-diff-z-int-obs-single}, we know that $\big[\bsd_{\bZ}^{m}(\bz)\big]_{i} = 0$ for all $i \notin \Paplus(I^m)$. Then, using \eqref{eq:sx-from-sz-linear}, for any $\bx = \bG \cdot \bz$ we have
    \begin{equation}\label{eq:dx-colspace-spell}
        \bsd_{\bX}^{m}(\bx) = \sum_{i \in \Paplus(I^m)} [\bGinv_i]^{\top} \cdot \big[\bsd_{\bZ}^{m}(\bz)\big]_{i}  \; \in \;  \spann\big\{ \; [\bGinv_i]^{\top}  :  i \in \Paplus(I^m) \; \big\} \ . 
    \end{equation}
    Since this holds for any $\bx \in \colspace(\bG)$, we have 
    \begin{equation}
        \spann \big\{\, \bsd_{\bX}^{m}(\bx) \,:\, \bx \in \colspace(\bG) \,\big\} \subseteq \spann\big\{\,[\bGinv_i]^{\top} \,:\, i \in \Paplus(I^m)\,\big\} \ .
    \end{equation}
    By definition of $\bR_{\bX}^{m}$, we have $\colspace(\bR_{\bX}^{m}) = \spann \{\bsd_{\bX}^{m}(\bx) : \bx \in \colspace(\bG)\}$ which concludes the proof.     
\end{proof}
\Cref{lm:dx-colspace} crucially implies that we can achieve partial identifiability of a single latent variable given that we have an environment in which the said variable is intervened.

\begin{theorem}[Linear -- Node-level partial identifiability]\label{thm:linear-partial-identifiability}
  A single-node soft interventional environment $\mcE^m$ with intervention target $\ell = I^m$ is sufficient to recover $Z_\ell$ up to mixing with its parents.
\end{theorem}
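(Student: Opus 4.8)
The plan is to extract a single row of the true encoder $\bGinv$ from the observed score-difference correlation matrix $\bR_{\bX}^{m}$, and then argue that this row, applied to $\bX$, recovers $Z_\ell$ up to a linear mixing with its parents. By \Cref{lm:dx-colspace}, $\colspace(\bR_{\bX}^{m})$ is contained in the span of $\{[\bGinv_i]^{\top} : i \in \Paplus(\ell)\}$. First I would argue the reverse inclusion, i.e. that $\colspace(\bR_{\bX}^{m})$ is in fact \emph{equal} to $\spann\{[\bGinv_i]^{\top} : i \in \overline{\Pa}(\ell)\}$, or at least that it contains $[\bGinv_\ell]^{\top}$ in a way that can be isolated. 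The decomposition \eqref{eq:dx-colspace-spell} shows $\bsd_{\bX}^{m}(\bx) = \sum_{i \in \overline{\Pa}(\ell)} [\bGinv_i]^{\top}\,[\bsd_{\bZ}^{m}(\bz)]_i$, so the column space is spanned by the coordinate functions $\bz \mapsto [\bsd_{\bZ}^{m}(\bz)]_i$ for $i \in \overline{\Pa}(\ell)$; these are linearly independent as functions precisely when the change-of-mechanism at node $\ell$ genuinely moves the score in each of those coordinates, which is where \Cref{assumption:rank-two} enters — it guarantees that the ratios of partial derivatives in \eqref{eq:ratio-SN} are non-constant, so no coordinate $[\bsd_{\bZ}^{m}]_i$ for $i \in \Pa(\ell)$ is a fixed scalar multiple of $[\bsd_{\bZ}^{m}]_\ell$.

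Next, I would pin down \emph{which} vector in $\colspace(\bR_{\bX}^{m})$ corresponds to $[\bGinv_\ell]^{\top}$. The key structural fact is that the coordinate $[\bsd_{\bZ}^{m}(\bz)]_\ell = \frac{\partial}{\partial z_\ell}\log(q_\ell/p_\ell)$ is the ``source'' term: it depends on $z_\ell$ nontrivially, whereas the coordinates $[\bsd_{\bZ}^{m}(\bz)]_k$ for $k \in \Pa(\ell)$ are of the form $\frac{\partial}{\partial z_k}\log(q_\ell/p_\ell)$. So I would pick a row vector $\hat\bh \in \R^{1\times d}$ in the row space characterized by requiring $\hat\bh$ to lie in $\colspace(\bR_{\bX}^{m})^{\top}$ and to annihilate $[\bGinv_i]^{\top}$ for all $i \in \Pa(\ell)$ — equivalently, $\hat\bh$ is (up to scale) the unique direction in the column space of $\bR_{\bX}^{m}$ that, when pulled back through $\bG$, concentrates on coordinate $\ell$. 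Concretely, $\hat\bh \cdot \bG$ must be a nonzero multiple of $\be_\ell^{\top}$ plus possibly a combination supported on $\overline{\Pa}(\ell)$; after imposing the orthogonality-to-parents condition one gets $\hat\bh\cdot\bG = c\,\be_\ell^{\top} + \sum_{k\in\Pa(\ell)} c_k \be_k^{\top}$ with $c \neq 0$. Then $[\hat\bZ]_i \triangleq \hat\bh \cdot \bX = \hat\bh\cdot\bG\cdot\bZ = c\,Z_\ell + \sum_{k\in\Pa(\ell)} c_k Z_k$, which is exactly the form required by partial consistency up to mixing with parents for node $\ell$.

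The main obstacle I expect is the \textbf{non-degeneracy argument}: showing that $\colspace(\bR_{\bX}^{m})$ has dimension exactly $|\overline{\Pa}(\ell)|$ (so that the parent directions can be ``removed'' unambiguously) and that the resulting $\hat\bh$ indeed has nonzero $Z_\ell$-component rather than collapsing onto a pure-parent combination. This amounts to verifying that the Gram matrix $\bR_{\bZ}^{m}$, restricted to the index set $\overline{\Pa}(\ell)$, is nonsingular. The off-diagonal structure and the ``source vs. parent'' asymmetry of the coordinates $[\bsd_{\bZ}^{m}(\bz)]_i$ have to be combined with \Cref{assumption:rank-two} to rule out a functional linear dependence among these coordinates; I would phrase this as: if $\sum_{i\in\overline{\Pa}(\ell)} \alpha_i [\bsd_{\bZ}^{m}(\bz)]_i \equiv 0$, then dividing through by $[\bsd_{\bZ}^{m}(\bz)]_\ell$ (which is generically nonzero) and differentiating in $z_\ell$ forces the parent ratios in \eqref{eq:ratio-SN} to be constants, contradicting the assumption; hence all $\alpha_i = 0$. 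Once full rank is in hand, the rest is linear algebra. A minor technical point to handle carefully is that the scalar $c$ multiplying $Z_\ell$ is nonzero — this again follows because otherwise $[\bGinv_\ell]^{\top} \notin \colspace(\bR_{\bX}^{m})$, contradicting the span equality established in the first step.
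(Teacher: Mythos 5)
Your overall strategy---work inside $\colspace(\bR_{\bX}^{m})$, pull a vector back through $\bG$, and read off a linear combination of $Z_\ell$ and its parents---is the right one, and your final display $\hat\bh\cdot\bG\cdot\bZ = c\,Z_\ell + \sum_{k\in\Pa(\ell)} c_k Z_k$ is exactly the target. However, the route you take to get there contains a step that fails in general. You propose to show that $\colspace(\bR_{\bX}^{m})$ has dimension exactly $|\Paplus(\ell)|$ (equivalently, that $\bR_{\bZ}^{m}$ restricted to $\Paplus(\ell)$ is nonsingular), so that the parent directions $[\bGinv_k]^{\top}$ can be identified and annihilated, and you claim this follows from \Cref{assumption:rank-two}. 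It does not: \Cref{assumption:rank-two} only rules out a \emph{pairwise} proportionality between $[\bsd_{\bZ}^{m}]_\ell$ and each $[\bsd_{\bZ}^{m}]_k$, not a linear dependence among all of the coordinates jointly. Indeed, the paper points out that for linear Gaussian latent models $\rank(\bR_{\bZ}^{m})\le 2$, so whenever $|\Pa(\ell)|\ge 2$ the Gram matrix you want to be nonsingular is necessarily singular, yet the theorem must still hold there (full rank is precisely the content of the separate, stronger \Cref{assumption:full-rank} used only in \Cref{th:linear-soft-full-rank}). Your closing argument that $c\neq 0$ ``because otherwise $[\bGinv_\ell]^{\top}\notin\colspace(\bR_{\bX}^{m})$'' leans on this same span equality and therefore also does not go through. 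Note also that the theorem is stated without \Cref{assumption:rank-two}, so importing it weakens the result.

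The repair is to drop the isolation/annihilation step entirely---it is both unachievable in the rank-deficient case and unnecessary, since mixing with parents is permitted by the conclusion. The paper simply takes $\bh=\bR_{\bX}^{m}\cdot\by$ for a generic $\by$ (uniform on the sphere) and writes $\bh=\sum_{i\in\Paplus(\ell)} c_i\,[\bGinv_i]^{\top}$ with $c_i=[\bR_{\bZ}^{m}]_i\cdot\bGinv\cdot\by$; the only thing that must be verified is $c_\ell\neq 0$, which holds with probability $1$ because the single row $[\bR_{\bZ}^{m}]_\ell$ is nonzero (the intervention genuinely changes the mechanism of node $\ell$, so $[\bsd_{\bZ}^{m}]_\ell\not\equiv 0$) and the rows of $\bGinv$ are linearly independent. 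No rank condition, no orthogonality to parents, and no appeal to \Cref{assumption:rank-two} is needed.
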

\begin{proof}
    Let us pick $\by$ uniformly at random from unit sphere $\mathbb{S}^{d-1} \subset \R^{d}$ and let $\bh = \bR_{\bX}^{m} \cdot \by$. Note that $\big[\bsd_{\bZ}^{m}(\bz)\big]_{i} = 0$ for all $i \notin \Paplus(\ell)$ implies that $\big[\bR_{\bZ}^{m}\big]_i = \boldsymbol{0}$ for all $i \notin \Paplus(\ell)$. Then, using $\bR_{\bX}^{m} = [\bGinv]^{\top} \cdot \bR_{\bZ}^{m} \cdot \bGinv$, we have
    \begin{align}
        \bh = \bR_{\bX}^{m} \cdot \by &= [\bGinv]^{\top} \cdot \bR_{\bZ}^{m} \cdot \bGinv \cdot \by \\
        &= \sum_{i \in \Paplus(\ell)}  \big[\bR_{\bZ}^{m}\big]_i \cdot \bGinv \cdot \by  \cdot [\bGinv_i]^{\top} 
    \end{align}
    Denote $c_i = \big[\bR_{\bZ}^{m}\big]_i \cdot \bGinv \cdot \by$ for each $i \in \Paplus(\ell)$. Then, using $\bGinv_i \cdot \bG = \be_i$, $i$-th standard basis vector, we have $[\bh^{\top} \cdot \bG]_{i}=c_i$ for $i \in \Paplus(\ell)$. As such, we can use $\bh$ to obtain
    \begin{equation}\label{eq:partial-node-recovery}
        \bh^{\top} \cdot \bX = \bh^{\top} \cdot \bG \cdot \bZ = c_\ell \cdot Z_{\ell} + \sum_{i \in \Pa(\ell)} c_i \cdot Z_i \ . 
    \end{equation}
    Finally, note that $\big[\bR_{\bZ}^{m}\big]_\ell$ is not a zero vector since $[\bsd_{\bZ}^{m}]_{\ell} \neq 0$. Therefore, since rows of $\bGinv$ are linearly independent, we know that $\big[\bR_{\bZ}^{m}\big]_i \cdot \bGinv$ is a nonzero vector. Thus, $\by \in \mathbb{S}^{d-1}$ ensures that $c_{\ell}$ is nonzero with probability 1. Then, $\bh^{\top} \cdot \bX$ is an estimate of $Z_{\ell}$ that satisfies consistency up to mixing with parents.        
\end{proof}

\subsection{Identifiability via Soft Interventions}\label{sec:linear-soft}

After obtaining the partial identifiability result from a single interventional environment, our goal is to construct an algorithm that uses the complete set of interventional environments $\mcE = \{\mcE^1,\dots,\mcE^n\}$ with targets $\cup_{m \in [n]} I^m = [n]$ to form estimates of the true encoder $\bGinv$ and the latent graph $\mcG$.

\paragraph{Encoder estimation.}
In Stage~L1 of \Cref{alg:linear}, we form an encoder estimate $\hat \bH$ via using the column spaces of $\{\bR_{\bX}^{m} : m \in [n]\}$. Specifically, for each $m \in [n]$ we select a vector randomly from $\colspace(\bR_{\bX}^{m})$ and assign its transpose to $m$-th row of $\hat \bH$.
\begin{lemma}[Linear -- Encoder via Soft Interventions]\label{lm:linear-encoder-up-to-parents}
Under soft interventions, output $\hat \bH$ of \Cref{alg:linear} achieves identifiability up to mixing with parents. Specifically, $\hat \bZ(\bX;\hat \bH) = \bP_{\mcI} \cdot \bC_{\rm pa} \cdot \bZ$ such that diagonal entries of $\bC_{\rm pa}$ are nonzero and $\big[\bC_{\rm pa}\big]_{i,j}=0$ for all $j \notin \Paplus(i)$.
\end{lemma}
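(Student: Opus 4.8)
The plan is to build on Theorem~\ref{thm:linear-partial-identifiability} and Lemma~\ref{lm:dx-colspace}, upgrading the per-node partial identifiability to a simultaneous statement over all $n$ interventional environments, and then to argue that the resulting matrix $\hat\bH$ is invertible so that it yields a valid global recovery up to mixing with parents. First I would recall from Lemma~\ref{lm:dx-colspace} that $\colspace(\bR_{\bX}^{m}) \subseteq \spann\{[\bGinv_i]^\top : i \in \Paplus(I^m)\}$, and (as in the proof of Theorem~\ref{thm:linear-partial-identifiability}) that, under \Cref{assumption:rank-two}, a randomly drawn vector $\hat\bH_m^\top \in \colspace(\bR_{\bX}^m)$ has a nonzero component along $[\bGinv_{I^m}]^\top$ with probability $1$. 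Writing $\ell = I^m$, this gives $\hat\bH_m = \sum_{i \in \Paplus(\ell)} c_i^{(m)} [\bGinv_i]$ with $c_\ell^{(m)} \neq 0$. Stacking the rows, $\hat\bH = \bM \cdot \bGinv$ for some matrix $\bM \in \R^{n\times n}$ whose $(m,i)$ entry is $c_i^{(m)}$, which is zero whenever $i \notin \Paplus(I^m)$ and nonzero when $i = I^m$.

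Next I would reorder rows by the intervention permutation: define $\bC_{\rm pa} \triangleq \bP_{\mcI}^{-1}\cdot\bM$, so that row $\pi$ of $\bC_{\rm pa}$ corresponds to the environment with $I^m = \pi$. Then $[\bC_{\rm pa}]_{\pi,\pi} = c_\pi^{(m)} \neq 0$ (nonzero diagonal) and $[\bC_{\rm pa}]_{\pi,i} = 0$ for $i \notin \Paplus(\pi)$ (mixing only with parents). The key remaining point is that $\bC_{\rm pa}$ — equivalently $\bM$ — is invertible. Here I would use the DAG structure: since $(1,\dots,n)$ is a valid causal order and $j \in \Pa(i) \Rightarrow j < i$, after permuting rows so that row $i$ is the environment intervening on node $i$, the matrix $\bC_{\rm pa}$ becomes lower triangular (entries $[\bC_{\rm pa}]_{i,j}$ vanish unless $j \in \Paplus(i)$, hence unless $j \le i$) with nonzero diagonal, and is therefore invertible. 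Consequently $\hat\bH = \bP_{\mcI}\cdot\bC_{\rm pa}\cdot\bGinv$ is full rank, hence a valid encoder ($\hat\bH \in \mcH$), and $\hat\bZ(\bX;\hat\bH) = \hat\bH\cdot\bG\cdot\bZ = \bP_{\mcI}\cdot\bC_{\rm pa}\cdot\bGinv\cdot\bG\cdot\bZ = \bP_{\mcI}\cdot\bC_{\rm pa}\cdot\bZ$, which is exactly consistency up to mixing with parents.

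The main obstacle I anticipate is the invertibility argument: establishing that the coefficients $c_i^{(m)}$ along parent directions cannot conspire to make $\bM$ singular. The triangularity observation resolves this cleanly once one argues the diagonal entries $c_{I^m}^{(m)}$ are almost surely nonzero — which is where \Cref{assumption:rank-two} is essential, since it guarantees $[\bR_{\bZ}^m]_{I^m}$ is not identically zero (so the random draw genuinely sees the $[\bGinv_{I^m}]^\top$ direction), paralleling the argument already given in the proof of Theorem~\ref{thm:linear-partial-identifiability}. A secondary subtlety is that the random draws across the $n$ environments must \emph{jointly} land in ``good'' position; since each is a probability-$1$ event and there are finitely many, a union bound gives the joint probability-$1$ conclusion. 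I would also remark that because $\mcI$ is unknown, the permutation $\bP_{\mcI}$ appears as an unavoidable (and harmless) ambiguity, consistent with Definition~\ref{def:latent-variable-identifiability}.
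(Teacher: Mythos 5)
Your proof is correct and follows essentially the same route as the paper: apply \Cref{thm:linear-partial-identifiability} in each environment, stack the resulting rows, and read off the permutation and the parent-supported mixing matrix $\bC_{\rm pa}$; your added lower-triangularity/invertibility observation is exactly what the paper uses one lemma later to work with $\bC_{\rm pa}^{-1}$. One small correction: the nonvanishing of the diagonal entries $c_{I^m}^{(m)}$ does not come from \Cref{assumption:rank-two} --- it follows from $[\bsd_{\bZ}^{m}]_{I^m}\neq 0$, i.e.\ from the intervention genuinely changing the mechanism of the intervened node (\Cref{lm:parent_change_comprehensive}); accordingly, neither \Cref{thm:linear-partial-identifiability} nor this lemma assumes \Cref{assumption:rank-two}, which is only needed for the graph-recovery step in \Cref{lm:linear-graph-transitive-closure}.
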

\begin{proof}
    Following \Cref{thm:linear-partial-identifiability} for all $m \in [n]$ immediately implies the desired result. Specifically, according to \eqref{eq:partial-node-recovery}, $[\hat \bH_{m}]^{\top} \in \colspace(\bR_{\bX}^{m})$ satisfies
    \begin{equation}
        [\hat \bZ(\bX; \hat \bH)]_m = \hat \bH_{m} \cdot \bX = c_{\ell} \cdot Z_{\ell} + \sum_{i \in \Pa(\ell)} c_i \cdot Z_i \ , \qquad \mbox{where} \; \ell = I^m \ ,
    \end{equation}
    for some constants $\{c_i : i \in \Paplus(\ell)\}$ where $c_{\ell} \neq 0$ with probability 1. Then, combining all $n$ identities, we have 
    \begin{equation}
        \hat \bZ(\bX; \hat \bH) = \bP_{\mcI} \cdot \bC_{\rm pa} \cdot \bZ \ ,
    \end{equation}
    where $\bP_{\mcI}$ is the permutation matrix of the intervention order $(I^1,\dots,I^n)$, and $\bC_{\rm pa}$ has nonzero diagonal entries and satisfies $\big[\bC_{\rm pa}\big]_{i,j} = 0$ for all $j \notin \Paplus(i)$.
\end{proof}

\paragraph{Latent graph estimation.}
Next, we use the encoder estimate $\hat \bH$ to recover the transitive closure of the latent graph $\mcG$. Motivated by \Cref{lm:parent_change_comprehensive}(ii), we form $\hat \mcG$ with parent sets 
\begin{equation}\label{eq:linear-soft-parent-set}
   \hat \Pa(m) \triangleq \Big\{i \neq m: \E\Big[\big| \bss_{\hat \bZ}(\hat \bZ; \hat \bH) - \bss_{\hat \bZ}^{m}(\hat \bZ; \hat \bH) \big|_i \Big] \neq 0 \Big\} \ .
\end{equation}
Using the result that $\hat \bZ(\bX; \hat \bH)$ recovers $\bZ$ up to mixing with parents, we show that the transitive closures of the estimate $\hat \mcG$ and the true graph $\mcG$ are the same under a graph isomorphism.

\begin{algorithm}[t]
    \caption{{\bf L}inear {\bf S}core-based {\bf Ca}usal {\bf L}atent {\bf E}stimation via {\bf I}nterventions (LSCALE-I)}
    \label{alg:linear}
    \begin{algorithmic}[1]
    \State \textbf{Input:} $\bR_{\bX}^{m}$ for all $m \in [n]$ \Comment{compute using \eqref{eq:rx-rz-defn}}
    \Statex \hrulefill
    \State \textbf{Stage~L1: Encoder estimation}
    \State{$\hat \bH \gets \bzero_{n \times d}$}
    \For{$m \in (1, \dots, n)$}
        \State Select any $\bh \in \colspace(\bR_{\bX}^{m})$ 
        \State $\hat \bH_{m} \gets \bh^{\top}$
    \EndFor
    \Statex \hrulefill
    \State \textbf{Stage~L2: Latent graph estimation}
    \State Construct latent DAG estimate $\hat \mcG$ with parent sets
    \begin{equation*}
        \hat \Pa(m) \gets \Big\{i \neq m: \E\Big[\big| \bsd_{\hat \bZ}^{m}(\hat \bZ; \hat \bH)\big|_i \Big] \neq 0 \Big\} \ , \quad \forall m \in [n]
    \end{equation*}
    \Statex \hrulefill
    \If{the interventions are hard}
    \State \textbf{Stage~L3: Unmixing procedure}
        \State $\pi \gets$ a topological order of $\hat \mcG$
        \For{$m\in(\pi_1,\dots,\pi_n)$} \Comment{refine rows of $\hat \bH$ sequentially}
            \State $\hat \bZ \gets \hat \bZ^{m}(\bX;\hat \bH)$
            \State $\bu \gets \Cov(\hat Z_m, \hat \bZ_{\hat \Pa(m)}) \cdot \big[\Cov(\hat \bZ_{\hat \Pa(m)})\big]^{-1} $
            \State $\hat \bH_{m} \gets \hat \bH_{m} - \bu \cdot \hat \bH_{\hat \Pa(m)}$
        \EndFor
    \State Refine $\hat \mcG$ according to 
    \begin{equation*}
        \hat \Pa(m) \gets \Big\{i \neq m: \E\Big[\big| \bsd_{\hat \bZ}^{m}(\hat \bZ; \hat \bH)\big|_i \Big] \neq 0 \Big\} \ , \quad \forall m \in [n]
    \end{equation*}
    \EndIf
    \Statex \hrulefill
    \State \textbf{Return} $\hat \mcG$, $\hat \bH$, and $\hat \bZ$
    \end{algorithmic}
\end{algorithm}%

\begin{lemma}[Linear -- Graph via Soft Interventions]\label{lm:linear-graph-transitive-closure}
    Under \Cref{assumption:rank-two} and soft interventions, output $\hat \mcG$ of \Cref{alg:linear} recovers the transitive closure of $\mcG$.
\end{lemma}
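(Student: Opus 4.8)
The plan is to push the known sparsity pattern of the latent score differences through the linear encoder delivered by Stage~L1 and match the resulting pattern to the ancestral relations of $\mcG$. \textbf{Step 1 (reduce to a linear change of variables on scores).} By \Cref{lm:linear-encoder-up-to-parents} we may write $\hat \bZ(\bX;\hat \bH)=\bM\bZ$ with $\bM\triangleq\bP_{\mcI}\bC_{\rm pa}$ invertible, where, in the assumed valid causal order $(1,\dots,n)$, $\bC_{\rm pa}$ is lower triangular with nonzero diagonal and $[\bC_{\rm pa}]_{i,j}=0$ whenever $j\notin\Paplus(i)$. Since $\hat \bH$ and $\bG$ are the same in every environment, so is $\bM$, and applying \Cref{corollary:score-difference-transform-linear} to the scores in $\mcE^m$ and in $\mcE^0$ and subtracting gives
\[
 \bsd_{\hat\bZ}^{m}(\hat\bz;\hat \bH)=\bM^{-\top}\,\bsd_{\bZ}^{m}(\bz)\ ,\qquad \hat\bz=\bM\bz\ ,\qquad \forall m\in[n]\ .
\]
Writing $\ell\triangleq I^m$ and $r_\ell(\bz)\triangleq\log q_\ell(z_\ell\mid \bz_{\Pa(\ell)})-\log p_\ell(z_\ell\mid \bz_{\Pa(\ell)})$, identity \eqref{eq:def-score-diff-z-int-obs-single} says $[\bsd_{\bZ}^{m}(\bz)]_j=\partial_{z_j}r_\ell(\bz)$ for $j\in\Paplus(\ell)$ and $0$ otherwise; combining this with $[\bM^{-1}]_{j,i}=[\bC_{\rm pa}^{-1}]_{j,\,I^i}$ yields
\[
 \big[\bsd_{\hat\bZ}^{m}(\hat\bz;\hat \bH)\big]_i=\sum_{j\in\Paplus(\ell)}[\bC_{\rm pa}^{-1}]_{j,\,I^i}\;\partial_{z_j}r_\ell(\bz)\ .
\]
Because $\bZ$ has full support and the scores are continuous, $i\in\hat\Pa(m)$ holds exactly when this function is not identically zero.

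\textbf{Step 2 (sandwich the recovered parent set).} As $\bC_{\rm pa}$ is lower triangular with the $\Paplus$-support pattern, $\bC_{\rm pa}^{-1}$ is lower triangular with support inside the reachability closure: $[\bC_{\rm pa}^{-1}]_{j,k}\neq 0\Rightarrow k\in\overline{\An}(j)$. Hence a nonzero summand above forces $I^i\in\overline{\An}(j)\subseteq\overline{\An}(\ell)$ for some $j\in\Paplus(\ell)$, and, $\mcI$ being a permutation with $i\neq m$, $I^i\in\An(I^m)$; so every edge of $\hat\mcG$ maps under $i\mapsto I^i$ to an ancestral relation of $\mcG$. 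For the reverse direction it suffices to recover every edge of the transitive reduction $\mcG_{\rm tr}$. Let $I^i\to I^m$ be such an edge; then $I^i\in\Pa(\ell)$ and, because the edge is non-redundant, $I^i$ is \emph{not} an ancestor of any other parent of $\ell$. Consequently the only $j\in\Paplus(\ell)$ with $[\bC_{\rm pa}^{-1}]_{j,I^i}\neq 0$ are $j=I^i$ (carrying the nonzero diagonal coefficient $[\bC_{\rm pa}^{-1}]_{I^i,I^i}$) and possibly $j=\ell$, so $[\bsd_{\hat\bZ}^{m}]_i$ is a linear combination, with coefficients independent of $\bz$, of $\partial_{z_{I^i}}r_\ell$ and $\partial_{z_\ell}r_\ell$ in which $\partial_{z_{I^i}}r_\ell$ has nonzero coefficient. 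By \Cref{assumption:rank-two} applied to the pair $(\ell,I^i)$, $\partial_{z_{I^i}}r_\ell$ is not a constant multiple of $\partial_{z_\ell}r_\ell$ (in particular not identically zero), so the combination is not identically zero and $i\in\hat\Pa(m)$.

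\textbf{Step 3 (conclude).} Relabel node $m$ of $\hat\mcG$ by $I^m$. By Step~2 the relabeled graph contains every edge of $\mcG_{\rm tr}$, and each of its edges is an ancestral relation of $\mcG$; since $\mcG_{\rm tr}$, $\mcG$, and $\mcG_{\rm tc}$ all encode the same reachability and reachability is transitive, the relabeled $\hat\mcG$ has exactly the ancestral relations of $\mcG$. Thus $\hat\mcG_{\rm tr}$ is isomorphic to $\mcG_{\rm tr}$, i.e., transitive-closure recovery in the sense of \Cref{def:latent-graph-identifiability}.

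\textbf{Main obstacle.} The crux is the reverse bound. Aiming at \emph{all} true parent edges would fail: if a parent $k$ of $\ell$ is also an ancestor of another parent of $\ell$, then the coordinate $[\bsd_{\hat\bZ}^{m}]_i$ with $I^i=k$ mixes three or more of the functions $\partial_{z_j}r_\ell$ with coefficients fixed by $\bC_{\rm pa}^{-1}$, and \Cref{assumption:rank-two} controls only the pairwise ratios against $\partial_{z_\ell}r_\ell$, so cancellation cannot be excluded. The fix is to target only the \emph{transitive-reduction} edges, for which the mixing collapses to two terms so that \Cref{assumption:rank-two} alone forbids cancellation, and to observe that recovering $\mcG_{\rm tr}$ together with the ancestral upper bound already determines $\mcG_{\rm tc}$. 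A minor supporting ingredient is the precise support pattern $[\bC_{\rm pa}^{-1}]_{j,k}\neq 0\Rightarrow k\in\overline{\An}(j)$ and the graph-theoretic fact that a transitive-reduction edge $k\to\ell$ admits no parent $j\neq k$ of $\ell$ with $k\in\An(j)$.
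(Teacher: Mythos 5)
Your proposal is correct and follows essentially the same route as the paper's proof: you push the latent score-difference sparsity through $\bC_{\rm pa}^{-\top}$, use the ancestral support pattern of $\bC_{\rm pa}^{-1}$ (the paper's \Cref{lm:binary-matrices}) to get $\hat\Pa(m)\subseteq\An(I^m)$, and then verify only the transitive-reduction edges, where the sum collapses to the two terms indexed by $I^i$ and $\ell$ so that \Cref{assumption:rank-two} rules out cancellation. Your closing remark about why one must target $\mcG_{\rm tr}$ rather than all parent edges matches the role this restriction plays in the paper's argument.
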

\begin{proof}
    Let $\hat \bZ$ denote $\hat \bZ(\bX; \hat \bH) = \bP_{\mcI} \cdot \bC_{\rm pa} \cdot \bZ$. Recall that $\big[\bC_{\rm pa}\big]_{i,j}=0$ for all $j \notin \Paplus(i)$, and since nodes $(1, \dots, n)$ are topologically ordered, $\bC_{\rm pa}$ is a lower triangular matrix. By using standard linear algebra for the expansion of an inverse of a lower triangular matrix, we can show that $\big[\bC_{\rm pa}^{-1}\big]_{i,j}=0$ for all $j \notin \Anplus(i)$ (see \Cref{lm:binary-matrices} in \Cref{sec:auxiliary-lemmas} for the detailed steps). 
    Then, using \Cref{corollary:score-difference-transform-linear} for the scores of $\bZ$, we have 
    \begin{align}\label{eq:dzhat-upto-parents}
        \bsd_{\hat \bZ}^{m}(\hat \bZ) &= \bP_{\mcI} \cdot [\bC_{\rm pa}^{-1}]^{\top} \cdot \bsd_{\bZ}^{m}(\bZ) \ . 
    \end{align}
    Next, taking the transpose of $\bC_{\rm pa}^{-1}$, we have $\big[\bC_{\rm pa}^{-\top}\big]_{i,j}=0$ for all $j \notin \Deplus(i)$. Combining this observation with the fact that $[\bsd_{\bZ}^{m}]_j=0$ for all $j \notin \Paplus(I^m)$, we obtain
    \begin{equation}
     \big[\bsd_{\hat \bZ}^{m}(\hat \bZ)\big]_k =    \big[\bC_{\rm pa}^{-\top}\big]_{I^k} \cdot \bsd_{\bZ}^{m}(\bZ) =  \sum_{j \; \in \; \Deplus(I^k) \; \cap \; \Paplus(I^m)} \big[\bC_{\rm pa}^{-\top}\big]_{I^k,j} \cdot [\bsd_{\bZ}^{m}]_j 
    \end{equation}
    Subsequently, $\E\big[|\bsd_{\hat \bZ}^{m}(\hat \bZ)|_k\big] \neq 0$ implies that $\Deplus(I^i) \cap \Paplus(I^m) \neq \emptyset$, and $I^k \notin \Anplus(I^m)$. Therefore, the estimated parent set in \eqref{eq:linear-soft-parent-set} satisfies $\hat \Pa(m) \subseteq \Anplus(I^m)$, and the transitive closure of $\mcG$ is a supergraph of $\hat \mcG$ under relabeling of the nodes with permutation $\mcI$. Next, we show that $\hat \mcG$ contains all the edges in the transitive reduction of $\mcG$. Let $I^k \rightarrow I^m$ be an edge in the transitive reduction of $\mcG$, i.e., there is no other directed path between nodes $I^k$ and $I^m$ in $\mcG$. This implies that $\Deplus(I^k) \cap \Paplus(I^m) = \{I^k, I^m\}$, and 
    \eqref{eq:dzhat-upto-parents} becomes 
    \begin{equation}
        \big[\bsd_{\hat \bZ}^{m}(\hat \bZ)\big]_{k} =  \big[\bC_{\rm pa}^{-\top}\big]_{I^k,I^k} \cdot [\bsd_{\bZ}^{m}]_{I^k} + \big[\bC_{\rm pa}^{-\top}\big]_{I^k,I^m} \cdot [\bsd_{\bZ}^{m}]_{I^m} \ .
    \end{equation}
    \Cref{assumption:rank-two} ensures that $\big[\bsd_{\hat \bZ}^{m}(\hat \bZ)\big]_{k}$ is not constantly zero, which means that $\hat \mcG$ contains $k \rightarrow m$ edge, preserving $I^k \rightarrow I^m$ in $\mcG$. Therefore, $\hat \mcG$ preserves all the edges in the transitive reduction of $\mcG$, and taking its transitive closure gives $\mcG_{\rm tc}$, under relabeling of the nodes with permutation $\mcI$.    
\end{proof}

\begin{theorem}[Linear -- Soft Interventions for General SCMs]\label{th:linear-soft}
Under \Cref{assumption:rank-two} for linear transformations, using observational data and interventional data from one \emph{soft} intervention per node suffice to identify (i) the transitive closure of the latent DAG $\mcG$ and (ii) the latent variables $\bZ$ with consistency up to mixing with parents. Specifically, \Cref{alg:linear} achieves these identifiability guarantees.
\end{theorem}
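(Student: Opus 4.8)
The plan is to assemble the statement directly from the two lemmas just established for \Cref{alg:linear} under soft interventions, namely \Cref{lm:linear-encoder-up-to-parents} and \Cref{lm:linear-graph-transitive-closure}. First I would observe that, since the interventions are soft, the optional Stage~L3 of \Cref{alg:linear} is not executed, so the algorithm returns exactly the encoder $\hat \bH$ produced in Stage~L1 and the DAG estimate $\hat \mcG$ produced in Stage~L2. Everything then reduces to quoting these two lemmas and matching their conclusions against the identifiability objectives in \Cref{def:latent-variable-identifiability,def:latent-graph-identifiability}.

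For claim~(ii), I would invoke \Cref{lm:linear-encoder-up-to-parents}, which gives $\hat \bZ(\bX;\hat \bH) = \bP_{\mcI}\cdot \bC_{\rm pa}\cdot \bZ$ with $\bC_{\rm pa}$ having nonzero diagonal entries and $[\bC_{\rm pa}]_{i,j}=0$ for all $j\notin\Paplus(i)$. By the definition in \Cref{def:latent-variable-identifiability}, this is precisely consistency up to mixing with parents, with the relevant permutation being the unknown intervention order $\mcI$. The substantive work behind this is \Cref{thm:linear-partial-identifiability} applied to each environment $\mcE^m$, $m\in[n]$, which in turn rests on the column-space bound of \Cref{lm:dx-colspace} and the sparse score-change property \eqref{eq:def-score-diff-z-int-obs-single}, itself a special case of \Cref{lm:parent_change_comprehensive}.

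For claim~(i), I would invoke \Cref{lm:linear-graph-transitive-closure}: under \Cref{assumption:rank-two} and soft interventions, the parent sets $\hat \Pa(m)$ read off from the supports of the latent score differences $\bsd_{\hat \bZ}^{m}(\hat \bZ;\hat \bH)$ yield a $\hat \mcG$ whose transitive closure agrees with that of $\mcG$ after the relabeling by $\mcI$, i.e., $\hat \mcG$ and $\mcG$ carry the same ancestral relations in the sense of \Cref{def:latent-graph-identifiability}. This is the step that consumes \Cref{assumption:rank-two}: it ensures every edge of the transitive reduction of $\mcG$ induces a non-vanishing coordinate in the estimated latent score differences (so no ancestral relation is dropped), while \Cref{lm:dx-colspace} guarantees no spurious ancestral relation is introduced.

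There is no genuine additional argument beyond citing these two lemmas; the only bookkeeping is to note that the permutation $\mcI$ is common to both conclusions --- both refer to the same run of \Cref{alg:linear} --- and that Stage~L3 is inactive in the soft regime. Consequently the only real ``obstacle'' was already discharged upstream: proving \Cref{lm:linear-encoder-up-to-parents} through the partial-identifiability construction of \Cref{thm:linear-partial-identifiability}, and proving \Cref{lm:linear-graph-transitive-closure} through the transitive-reduction edge-preservation argument under \Cref{assumption:rank-two}.
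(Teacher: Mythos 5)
Your proposal is correct and matches the paper's own proof, which is simply the observation that \Cref{lm:linear-encoder-up-to-parents} delivers consistency up to mixing with parents and \Cref{lm:linear-graph-transitive-closure} delivers transitive closure recovery, both for the same run of \Cref{alg:linear}. The additional bookkeeping you note (Stage~L3 being inactive for soft interventions and the common permutation $\mcI$) is consistent with the paper and adds nothing that changes the argument.
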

\begin{proof}
    Combining \Cref{lm:linear-encoder-up-to-parents}
 and \Cref{lm:linear-graph-transitive-closure} gives the statements of the theorem.
\end{proof}

We note that the existing literature on CRL with linear transformations and one soft intervention requires the latent causal model to be either linear Gaussian~\citep{squires2023linear,buchholz2023learning} or satisfy nonlinearity conditions~\citep{zhang2023identifiability}\footnote{The ``linear interventional faithfulness'' \citep[Assumption 2]{zhang2023identifiability} implies nonlinearity, which we elaborate in \Cref{appendix:zhang2023-nonlinearity}.}. In contrast, \Cref{th:linear-soft} achieves identifiability guarantees for soft interventions without imposing any restrictions (distributional or structural) on the latent causal model. Furthermore, recovering $\bZ$ up to mixing with \emph{parents} improves upon the existing guarantees of recovering up to mixing with \emph{ancestors}~\citep{squires2023linear,buchholz2023learning}. Finally, we note that $\mcG$ cannot be identified beyond its transitive closure when using soft interventions without making additional assumptions~\citep[Appendix~J]{squires2023linear}. Therefore, the graph identifiability result in \Cref{th:linear-soft} is tight.

\subsection{Identifiability via Hard Interventions}\label{sec:linear-unmixing}

Next, we investigate hard interventions (i.e., perfect interventions) for general latent causal models. Hard interventions are special cases of soft interventions, in which the intervened node becomes functionally independent of its parents. Consequently, the identifiability guarantees for hard interventions are usually stronger, as we show in this section. Our analysis builds on leveraging the following property, which is exclusive to hard interventions.
\begin{proposition}\label{fact:hard-indep}
    For the environment $\mcE^{m}$ in which node $\ell = I^m$ is hard intervened, we have
    \begin{align}\label{eq:hard-indep}
        Z_{\ell}^{m} \ci Z_j^{m}\ , \quad \forall j \in \nd(\ell) \ ,
    \end{align}
    where $\nd(\ell)$ is the set of non-descendants of $\ell$ in $\mcG$.
\end{proposition}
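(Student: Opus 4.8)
The plan is to show that the post-intervention density $p^m$ factorizes according to the \emph{mutilated} DAG $\mcG^m$ obtained from $\mcG$ by deleting every edge into $\ell$, and then to read off the claimed independence by marginalizing out the descendants of $\ell$. Since a hard intervention only removes edges \emph{into} $\ell$, the out-edges of $\ell$ are intact, so $\De(\ell)$ (hence $\nd(\ell)$) is the same in $\mcG$ and $\mcG^m$; this lets me work entirely with the original graph's descendant/non-descendant sets.

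First I would record two purely graph-theoretic facts about $\mcG$. (a) If $i \in \nd(\ell)$, then $\Pa(i) \subseteq \nd(\ell)$: a parent $j \in \De(\ell)$ would give a directed path from $\ell$ through $j$ to $i$, forcing $i \in \De(\ell)$; and $\ell \in \Pa(i)$ would put $i \in \Ch(\ell) \subseteq \De(\ell)$; both contradict $i \in \nd(\ell)$. (b) Writing $A \triangleq \nd(\ell)$ and $D \triangleq \De(\ell)$, so that $[n] = \{\ell\} \cup A \cup D$ is a disjoint union, fact (a) says the factor $q_\ell(z_\ell)$ depends only on $z_\ell$ and each factor $p_i(z_i \mid \bz_{\Pa(i)})$ with $i \in A$ depends only on coordinates indexed by $A$.

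Next, starting from the factorization $p^m(\bz) = q_\ell(z_\ell)\prod_{i \neq \ell} p_i(z_i \mid \bz_{\Pa(i)})$ in \eqref{eq:pz_m_factorized_hard}, I would marginalize out $\bz_D$. Integrating the descendant factors $\prod_{i \in D} p_i(z_i \mid \bz_{\Pa(i)})$ over $\bz_D$ in reverse topological order collapses to $1$: the topologically last remaining variable appears in exactly one conditional, which integrates to $1$, and the step iterates. This leaves $p^m(z_\ell, \bz_A) = q_\ell(z_\ell)\prod_{i \in A} p_i(z_i \mid \bz_{\Pa(i)})$. The same reverse-topological-order argument applied to the $A$-factors shows that marginalizing further gives $p^m(z_\ell) = q_\ell(z_\ell)$ and $p^m(\bz_A) = \prod_{i \in A} p_i(z_i \mid \bz_{\Pa(i)})$. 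Hence $p^m(z_\ell, \bz_A) = p^m(z_\ell)\, p^m(\bz_A)$, which is exactly $Z_\ell^m \ci \bZ_{\nd(\ell)}^m$, and in particular $Z_\ell^m \ci Z_j^m$ for every $j \in \nd(\ell)$.

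The only mild subtlety — and the closest thing to an obstacle — is making the ``integrate out in reverse topological order yields $1$'' step rigorous: it relies on the conditionals being genuine probability densities and on the fact that, within a set whose complement is downward closed, the topologically last node is nobody's parent in that set, so one can peel variables off one at a time. Alternatively, one may simply cite the standard fact that a hard (surgical) intervention makes $\ell$ a source in $\mcG^m$ and invoke the local Markov property, which gives the same conclusion but is less self-contained; I would include the short direct argument above for completeness.
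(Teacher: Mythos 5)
Your proof is correct and rests on the same idea as the paper's, which simply observes that after a hard intervention $\ell$ has no parents in the mutilated graph, so the local Markov property gives independence from its non-descendants; your marginalization-in-reverse-topological-order argument is just the standard proof of that Markov statement written out explicitly. As a minor bonus, your version is self-contained and actually establishes the stronger joint independence $Z_\ell^m \ci \bZ_{\nd(\ell)}^m$, from which the pairwise claim follows.
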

The Markov property readily implies this property, that is, each variable in a DAG is independent of its non-descendants, given its parents. When node $\ell$ is hard-intervened, it has no parents, and the statement follows directly. Motivated by this property, we want to ensure that the estimated latent variables conform to \Cref{fact:hard-indep}. To this end, we aim to update rows of the encoder estimate $\hat \bH$ to remove the effects of $\bZ_{\Pa(i)}$ from the estimate of $Z_i$. We achieve this objective as follows. 

Note that \Cref{fact:hard-indep} provides us with random variable pairs that are supposed to be independent, and the covariance of two independent random variables is necessarily zero. In Stage~L3 of \Cref{alg:linear}, for each row $\hat \bH_m$, we consider $\hat \bZ(\bX; \hat \bH)$ in the corresponding environment in which $\hat Z_m$ is intervened, and compute an unmixing vector $\bu$ via a linear minimum mean square error (MMSE) estimator,
\begin{equation}
    \bu = \Cov(\hat Z_m, \hat \bZ_{\hat \Pa(m)}) \cdot \big[\Cov(\hat \bZ_{\hat \Pa(m)}) \big]^{-1} \ .
\end{equation}
Then, we update the corresponding row of the encoder as $\hat \bH_m \gets \hat \bH_m - \bu \cdot \hat \bH_{\hat \Pa(m)}$. Finally, for the perfect DAG recovery, we compute the latent score differences under the final encoder estimate and reconstruct the graph $\hat \mcG$ with parent sets 
\begin{equation}\label{eq:linear-hard-parent-set}
   \hat \Pa(m) \triangleq \Big\{ i \neq m :  \E\Big[\big| \bsd_{\hat \bZ}^{m}(\hat \bZ; \hat \bH)\big|_i \Big] \neq 0 \Big\} \ .
\end{equation}

\begin{theorem}[Linear -- Hard Interventions]\label{th:linear-hard}
    Under \Cref{assumption:rank-two} for linear transformations, using observational data and interventional data from one \emph{hard} intervention per node suffices to identify (i) the latent DAG $\mcG$ perfectly and (ii) the latent variables $\bZ$ with scaling consistency. Specifically, \Cref{alg:linear} achieves these identifiability guarantees.
\end{theorem}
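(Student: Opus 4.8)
The plan is to build on the soft-intervention analysis and then exploit the extra independence structure that hard interventions provide, via the unmixing loop in Stage~L3 of \Cref{alg:linear}. Since a hard intervention is in particular a soft intervention, \Cref{lm:linear-encoder-up-to-parents} and \Cref{lm:linear-graph-transitive-closure} apply directly: after Stages~L1--L2 the encoder estimate satisfies $\hat\bZ(\bX;\hat\bH)=\bP_{\mcI}\cdot\bC_{\rm pa}\cdot\bZ$ with $\bC_{\rm pa}$ lower triangular in the true causal order, with nonzero diagonal and $[\bC_{\rm pa}]_{i,j}=0$ for $j\notin\Paplus(i)$, and $\hat\mcG$ has the same ancestral relations as $\mcG$ after relabeling nodes by $\mcI$. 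Along the way I would also verify that, for hard interventions, the Stage~L2 estimate recovers not merely the transitive closure but the full DAG, so that $\hat\Pa(m)$ contains (the relabeled copy of) every true parent of $I^m$; this uses \Cref{assumption:rank-two} together with the observation that the hard-intervention marginal $q_\ell$ contributes a term $\partial_{z_\ell}\log q_\ell(z_\ell)$ to the $\ell$-th coordinate of $\bsd_{\bZ}^m$ that has no counterpart in the parental coordinates and hence cannot be cancelled (equivalently, one may regress against the already-purified ancestor estimates $\hat\bZ_{\hat\An(m)}$, whose index set is provably correct).

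The core step is the induction underlying Stage~L3. A topological order $\pi$ of $\hat\mcG$ is simultaneously a valid causal order of $\mcG$ after relabeling by $\mcI$, so parents are processed before their children. The inductive claim is that once row $m$ has been processed, $\hat\bH_m\cdot\bX=\alpha_m\,Z_{I^m}$ for some nonzero scalar $\alpha_m$. Assuming this for all predecessors of $m$ in $\pi$, write the pre-refinement estimate (via \Cref{thm:linear-partial-identifiability}) as $\hat Z_m=c_{I^m}Z_{I^m}+\sum_{i\in\Pa(I^m)}c_iZ_i$ with $c_{I^m}\neq 0$; since each $i\in\Pa(I^m)$ corresponds to an already-purified coordinate $j\in\hat\Pa(m)$ with $\hat Z_j=\alpha_j Z_{i}$, the contamination $\sum_{i}c_iZ_i$ lies in the span of $\{\hat Z_j: j\in\hat\Pa(m)\}$. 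Now run the linear MMSE regression of $\hat Z_m$ on $\hat\bZ_{\hat\Pa(m)}$ in the environment $\mcE^m$ in which $I^m$ is hard intervened. By \Cref{fact:hard-indep}, $Z_{I^m}^m$ is independent of every non-descendant of $I^m$, hence uncorrelated with $\hat\bZ_{\hat\Pa(m)}^m$, so the clean component $c_{I^m}Z_{I^m}^m$ contributes nothing to the regression, while the contamination already lies in the span of the regressors and is therefore reproduced exactly (using nonsingularity of $\Cov(\hat\bZ_{\hat\Pa(m)}^m)$). Thus the regression coefficient $\bu$ satisfies $\bu\cdot\hat\bZ_{\hat\Pa(m)}=\sum_i c_iZ_i$ \emph{as random variables}, so the updated row gives $\hat\bH_m\cdot\bX=c_{I^m}Z_{I^m}$ in \emph{every} environment (the coefficients are constants and $\bG$ is fixed across environments), completing the induction.

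Collecting the refined rows yields $\hat\bZ(\bX;\hat\bH)=\bP_{\mcI}\cdot\bC_{\rm scale}\cdot\bZ$ with $\bC_{\rm scale}$ diagonal and invertible, which is exactly scaling consistency. For the latent graph, \Cref{corollary:score-difference-transform-linear} applied to this diagonal map gives $\bsd_{\hat\bZ}^m(\hat\bZ)=\bP_{\mcI}\cdot\bC_{\rm scale}^{-1}\cdot\bsd_{\bZ}^m(\bZ)$, so $\E[\,|\bsd_{\hat\bZ}^m(\hat\bZ)|_k\,]\neq 0$ iff $\E[\,|\bsd_{\bZ}^m(\bZ)|_{I^k}\,]\neq 0$, which by \Cref{lm:parent_change_comprehensive}(i) holds exactly when $I^k\in\Paplus(I^m)$. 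Hence the parent sets recomputed in \eqref{eq:linear-hard-parent-set} recover $\Pa(I^m)$ for every $m$, i.e., $\hat\mcG$ is isomorphic to $\mcG$, giving perfect DAG recovery. Combining the two conclusions proves \Cref{th:linear-hard}.

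I expect the main obstacle to be the second paragraph: arguing rigorously that a purely \emph{second-order} (linear MMSE) unmixing step exactly cancels a \emph{nonparametric} mixing with parents. The two delicate points are (i) that the contamination is genuinely a linear combination of the already-purified parent estimates --- which is why the induction must run in topological order and why $\hat\Pa(m)$ must provably cover all true parents --- and (ii) that ``uncorrelated'' is strong enough here, which rests specifically on the full stochastic independence granted by hard interventions through \Cref{fact:hard-indep}; soft interventions do not provide this, and indeed this is exactly why the hard-intervention guarantee is strictly stronger than the soft-intervention guarantee of \Cref{th:linear-soft}. A more routine obstacle is checking the non-degeneracy conditions: that $\Cov(\hat\bZ_{\hat\Pa(m)}^m)$ is invertible and that the random vector choices in Stage~L1 make the diagonal of $\bC_{\rm pa}$ (and hence each $\alpha_m$) nonzero almost surely.
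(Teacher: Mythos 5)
Your proposal is correct and follows essentially the same route as the paper's proof: Stages L1--L2 are inherited from the soft-intervention analysis, Stage L3 is handled by induction along a topological order using the independence granted by hard interventions (\Cref{fact:hard-indep}) to show the LMMSE coefficient exactly cancels the parental contamination, and perfect DAG recovery then follows from the score-difference transform under the resulting diagonal mixing together with \Cref{lm:parent_change_comprehensive}. Your flagged concern that $\hat\Pa(m)$ must cover all true parents for the contamination to lie in the span of the regressors is a legitimate subtlety that the paper's own proof passes over quickly, so your proposed extra verification is a reasonable (if not strictly novel) addition rather than a departure in method.
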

\proof See \Cref{proof:linear-hard}.

Similar to the restrictions in the existing results for soft interventions, the identifiability results for hard interventions in the existing literature restrict the latent causal model, e.g., to linear causal models~\citep{squires2023linear,buchholz2023learning}. In contrast, \Cref{th:linear-hard} does not impose any restriction on the latent causal model and shows that one stochastic hard intervention per node is sufficient for the identifiability of general latent causal models. 

Finally, we comment on the differences between the identifiability results for soft and hard interventions under missing intervention targets and discuss the identifiability of latent subgraphs under special cases.
\begin{remark}\label{remark:non-exhaustive-interventions}
    Let $\mcJ \subset [n]$ denote a non-complete set of intervention targets, i.e., $\mcJ \neq [n]$. We note the immediate implications of our results as follows.
    \begin{enumerate}[leftmargin=2em]
        \item \textbf{Partial identifiability of latent variables.} \Cref{thm:linear-partial-identifiability} immediately implies that for all $i \in \mcJ$ we can recover $Z_i$ up to mixing with parents. However, the unmixing procedure for hard interventions resolves the mixing with parents sequentially. Specifically, to identify $Z_i$ up to scaling, we use the fact that the parents of $Z_i$ are already identified up to scaling. Therefore, even if we have a hard intervention on $Z_i$, we cannot identify $Z_i$ up to scaling consistency when its parents are not identified up to scaling.
        \item \textbf{Ancestrally closed set of interventions.} Let $\mcG_{\mcJ}$ denote the induced subgraph of $\mcG$ over the set of nodes $\mcJ$. If $\mcJ$ is not ancestrally closed, i.e., $\Pa(\mcJ) \not\subseteq \mcJ$, then the marginal distribution of $Z_{\mcJ}$ is not a causally sufficient model. Conversely, if $\mcJ$ is an ancestrally closed set, then the marginal of $Z_{\mcJ}$ comes from a causally sufficient model with graph $\mcG_{\mcJ}$. Subsequently, using Stage~L2 of Algorithm~1 for score differences of $\hat \bZ_{\mcJ}$, we obtain the transitive closure of $\mcG_{\mcJ}$. Finally, performing the unmixing procedure over the ancestrally closed set of \emph{hard} interventions, Stage~L3 of Algorithm~1 ensures the perfect recovery of $\mcG_{\mcJ}$.
    \end{enumerate}
\end{remark}

\subsection{Identifiability up to Surrounding Parents}\label{sec:linear-full-rank}

For linear transformations, we finally investigate the conditions under which soft interventions are guaranteed to achieve identifiability results stronger than transitive closure and mixing up to parents. In particular, we specify one condition on the rank of the score function differences $\{\bsd_{\bZ}^{m}: m \in [n]\}$, which is formalized next.
\begin{assumption}[Full-rank Score Difference]\label{assumption:full-rank}
    For all interventional environments $\mcE^m \in \mcE$ we have
    \begin{equation}
        \rank(\bR_{\bZ}^{m}) = |\Paplus(I^m)| \ .
    \end{equation}
\end{assumption}
For insight into this assumption, it can be readily verified that for linear Gaussian latent models $\rank(\bR_{\bZ}^{m}) \leq 2$ and on the other hand, for sufficiently nonlinear causal models, $\rank(\bR_{\bZ}^{m})$ is $\Paplus(I^m)$. This assumption is stronger than \Cref{assumption:rank-two} since it implies that the effects of an intervention on \emph{all} parents of the target variable are different. We will provide more discussions on this assumption in \Cref{sec:discuss-assumptions}. Under this condition, \Cref{alg:linear-full-rank} identifies the latent graph and the latent variables with the following key intuitions.

\paragraph{Latent graph estimation.}
Recall that $\bR_{\bX}^{m} = [\bGinv]^{\top} \cdot \bR_{\bZ}^{m} \cdot \bGinv$. Since $\bGinv$ has full row-rank $n$, \Cref{assumption:full-rank} implies that $\rank(\bR_{\bX}^{m}) = |\Paplus(I^m)|$ for all $m\in[n]$, which further implies that \Cref{lm:dx-colspace} becomes an equality as well, i.e.,
\begin{equation}
    \colspace(\bR_{\bX}^{m}) = \spann\big\{[\bGinv_i]^{\top} \,:\, i \in \Paplus(I^m)\big\} \ .
\end{equation}
Then, for any $t, k \in [n]$, we have
\begin{align}
    \colspace(\bR_{\bX}^{t}) \cap \colspace(\bR_{\bX}^{k}) &= \spann\big\{[\bGinv_i]^{\top} \,:\, i \in \Paplus(I^t) \cap \Paplus(I^k) \big\} \ , \label{eq:dx-col-intersection} \\
    \dim\big( \colspace(\bR_{\bX}^{t}) \cap \colspace(\bR_{\bX}^{k})\big) &= |\Paplus(I^t) \cap \Paplus(I^k)| \\
    &= |\Pa(I^t) \cap \Pa(I^k)| + \mathds{1}\big\{\{I^t \in \Pa(I^k)\} \lor \{I^k \in \Pa(I^t) \} \big\}
    \ . \label{eq:dx-col-intersection-dim}
\end{align}
In \Cref{alg:linear-full-rank}, we first obtain a topological order of the true DAG using the graph estimate from \Cref{alg:linear} and initialize the new estimate $\hat \mcG$ with the empty graph. Consider a pair $(t,k)$ such that $I^t \in \Pa(I^k)$. If $\hat \Pa(k)$ does not contain node $t$ in $\hat \mcG$, then by \eqref{eq:dx-col-intersection-dim} we have $ \dim\big( \colspace(\bR_{\bX}^{t}) \cap \colspace(\bR_{\bX}^{k})\big) > |\hat \Pa(t) \cap \hat \Pa(k)|$. We use this observation to sequentially identify the parents of each node, starting from the root node(s) and gradually advancing to the leaf node(s). In the end, resulting $\hat \mcG$ equals $\mcG$ under permutation $\mcI$. 

\begin{algorithm}[t]
    \caption{LSCALE-I for sufficiently nonlinear latent causal models}
    \label{alg:linear-full-rank}
    \begin{algorithmic}[1]
    \State \textbf{Input:} $\bR_{\bX}^{m}$ for all $m \in [n]$, topological order $\pi$ from \Cref{alg:linear} 
    \State \textbf{\underline{Latent graph estimation}}
    \State Initialize $\hat \mcG$ with empty graph
    \For{$k \in (\pi_1,\dots,\pi_n)$}
        \For{$t \in (\pi_1,\dots,\pi_{k-1})$}
            \If{$\dim\big(\colspace(\bR_{\bX}^{t}) \cap \colspace(\bR_{\bX}^{k})\big) > |\hat \Pa(t) \cap \hat \Pa(k)|$}
                \State Update $\hat \Pa(k) \gets \hat \Pa(k) \cup \{t\}$
            \EndIf
        \EndFor
    \EndFor
    \State \textbf{\underline{Encoder estimation}}
    \State{Initialize $\hat \bH \gets \bzero_{n \times d}$}
    \For{$m \in (1, \dots, n)$}
        \State Select any $\bh \in \!\!\! \textstyle\bigcap\limits_{k \, \in \, \hat \Ch(m) \, \cup \, m} \!\!\! \colspace(\bR_{\bX}^{k})$ 
        \State $\hat \bH_{m} \gets \bh^{\top}$
    \EndFor
    \State \textbf{Return} $\hat \mcG$, $\hat \bH$, and $\hat \bZ$
    \end{algorithmic}
\end{algorithm}%

\paragraph{Encoder estimation.} In \Cref{alg:linear-full-rank}, we refine the encoder estimation step of \Cref{alg:linear} as follows. Extending \eqref{eq:dx-col-intersection} and using the graph recovery result, we have 
\begin{align}
    \textstyle\bigcap\limits_{k \, \in \, \hat \Ch(m) \, \cup \, m } \colspace(\bR_{\bX}^{k})&= \spann\big\{[\bGinv_i]^{\top} \,:\, i \in \textstyle\bigcap\limits_{k \, : \, I^k \in  \Chplus(I^m)} \Paplus(I^k) \big\} \ . \label{eq:dx-col-intersection-full-rank}
\end{align}
Now, consider $I^t \notin \sur(I^m)$. By the definition of surrounding parents, there exists $I^j \in \Chplus(I^m)$ such that $I^j \notin \Ch(I^t)$. Then, $I^j$ is not in the intersection of the parent sets in the right-hand-side of \eqref{eq:dx-col-intersection-full-rank}, which means $[\bGinv_j]^{\top}$ is not in the column space of the left-hand-side of \eqref{eq:dx-col-intersection-full-rank}. Therefore, by choosing $\hat \bH_m$ from this column space, we obtain for all $m \in [n]$,
\begin{align}
    [\hat \bZ(\bX; \hat \bH)]_m = \hat \bH_{m} \cdot \bX &= c_{\ell} \cdot Z_{\ell} + \sum_{i \in \sur(\ell)} c_i \cdot Z_i \ , \qquad \mbox{where} \; \ell = I^m \ , \\
    \hat \bZ(\bX; \hat \bH) &= \bP_{\mcI} \cdot \bC_{\rm pa} \cdot \bZ \ ,
\end{align}
where $\bP_{\mcI}$ is the permutation matrix of the intervention order $(I^1,\dots,I^n)$, $\bC_{\rm sur}$ has nonzero diagonal entries and satisfies $\big[\bC_{\rm sur}\big]_{i,j} = 0$ for all $j \notin \overline{\sur}(i)$. \endproof

\begin{theorem}[Linear -- Soft Interventions for Nonlinear SCMs]\label{th:linear-soft-full-rank}
Under \Cref{assumption:full-rank} for linear transformations, using observational data and interventional data from one \emph{soft} intervention per node suffice to identify (i) the latent DAG $\mcG$ perfectly, and (ii) the latent variables $\bZ$ with consistency up to mixing with surrounding parents, and recovered latent variables satisfy Markov property with respect to $\mcG$. Specifically, \Cref{alg:linear-full-rank} achieves these identifiability guarantees.
\end{theorem}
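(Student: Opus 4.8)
The plan is to build on the column‑space analysis already laid out for \Cref{alg:linear-full-rank} and to add the one ingredient the informal discussion omits, the Markov property of the recovered representation. Everything hinges on a single upgrade: since $\bR_{\bX}^m = [\bGinv]^{\top}\bR_{\bZ}^m\bGinv$ and $\bGinv$ has full row rank, \Cref{assumption:full-rank} forces $\rank(\bR_{\bX}^m)=|\Paplus(I^m)|$, so the inclusion in \Cref{lm:dx-colspace} becomes the equality $\colspace(\bR_{\bX}^m)=\spann\{[\bGinv_i]^{\top}:i\in\Paplus(I^m)\}$. I would relabel the latent nodes so that $\mcI$ is the identity (legitimate because \Cref{assumption:full-rank} implies \Cref{assumption:rank-two}, so \Cref{alg:linear} already yields the transitive closure and thus a valid causal order $\pi$). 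The proof then has three parts: perfect DAG recovery, encoder recovery up to mixing with surrounding parents, and the Markov property.

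For DAG recovery I would make the induction in \Cref{alg:linear-full-rank} precise. Linear independence of the rows of $\bGinv$ turns \eqref{eq:dx-col-intersection-dim} into $\dim\!\big(\colspace(\bR_{\bX}^t)\cap\colspace(\bR_{\bX}^k)\big)=|\Pa(t)\cap\Pa(k)|+\mathds{1}\{t\in\Pa(k)\}$ whenever $t$ precedes $k$ in $\pi$ (the $\mathds{1}\{k\in\Pa(t)\}$ term drops since $k$ cannot be an ancestor of $t$). I would then show by induction along $\pi$ that $\hat\Pa(k)=\Pa(k)$ after $k$ is processed: when the inner loop tests an earlier node $t$, the set $\hat\Pa(t)$ is already correct, and every element of $\Pa(t)\cap\Pa(k)$ — being an ancestor of $t$, hence preceding $t$ in $\pi$ — has already been inserted into $\hat\Pa(k)$; so at that moment $|\hat\Pa(t)\cap\hat\Pa(k)|=|\Pa(t)\cap\Pa(k)|$, and the test fires iff $t\in\Pa(k)$. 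Hence $\hat\mcG\cong\mcG$.

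For the encoder, with the graph recovered, \eqref{eq:dx-col-intersection-full-rank} reads $\bigcap_{k\in\overline{\Ch}(m)}\colspace(\bR_{\bX}^k)=\spann\{[\bGinv_i]^{\top}:i\in\bigcap_{k\in\overline{\Ch}(m)}\Paplus(k)\}$, and I would identify that index set with $\overline{\sur}(m)$: $m$ is always in it; $i\in\sur(m)$ gives $\overline{\Ch}(m)\subseteq\Ch(i)$, hence $i$ is a parent of every $k\in\overline{\Ch}(m)$; conversely an index $i\neq m$ in the set must be a parent of $m$ (take $k=m$) and, by acyclicity, distinct from each $k\in\overline{\Ch}(m)$, hence a parent of all of them, i.e.\ $i\in\sur(m)$. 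Choosing $\hat\bH_m$ generically from this subspace makes its $[\bGinv_m]^{\top}$-coefficient nonzero with probability one (that generator is linearly independent of the rest), exactly as in the proof of \Cref{thm:linear-partial-identifiability}; so $[\hat\bZ(\bX;\hat\bH)]_m=c_{mm}Z_m+\sum_{i\in\sur(m)}c_{mi}Z_i$ with $c_{mm}\neq0$, i.e.\ $\hat\bZ(\bX;\hat\bH)=\bP_{\mcI}\bC_{\rm sur}\bZ$ with $\bC_{\rm sur}$ supported on $\overline{\sur}(\cdot)$, nonzero diagonal, and (since $\sur(m)\subseteq\Pa(m)$) lower triangular and thus invertible.

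The step I expect to be the main obstacle is the Markov claim, which the column‑space bookkeeping does not touch. The plan is to exhibit an SCM for $\hat\bZ$ with graph $\mcG$ and the \emph{same} exogenous noises $N_m$. The key combinatorial fact is that $k\in\Pa(m)$ and $j\in\sur(k)$ imply $j\in\Pa(m)$, because $m\in\overline{\Ch}(k)\subseteq\Ch(j)$. Iterating this along $\pi$, inverting the lower‑triangular $\bC_{\rm sur}$ expresses each $Z_k$ with $k\in\Pa(m)$ as a linear function of $\{\hat Z_j:j\in\Pa(m)\}$ \emph{alone} — not of all ancestors — so substituting into $\hat Z_m=c_{mm}f_m(\bZ_{\Pa(m)},N_m)+\sum_{j\in\sur(m)}c_{mj}Z_j$ (using $\sur(m)\subseteq\Pa(m)$ once more) writes $\hat Z_m=\hat f_m(\hat\bZ_{\Pa(m)},N_m)$; since the $N_m$ are mutually independent, $\hat\bZ$ factorizes over $\mcG\cong\hat\mcG$. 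The delicate points are the induction that keeps the inverse of $\bC_{\rm sur}$ confined to $\{\hat Z_j:j\in\Pa(m)\}$, and, throughout parts (ii)--(iii), the routine measure‑zero argument turning ``pick any vector'' into ``pick a generic vector whose diagonal coefficient is nonzero almost surely.''
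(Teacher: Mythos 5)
Your proposal is correct and follows essentially the same route as the paper: the same double induction along the topological order for DAG recovery via the dimension formula, the same identification of $\bigcap_{k\in\overline{\Ch}(m)}\colspace(\bR_{\bX}^{k})$ with $\spann\{[\bGinv_i]^{\top}: i\in\overline{\sur}(m)\}$ for the encoder, and the same Markov argument based on $\bC_{\rm sur}^{-1}$ remaining supported on the surrounding relation together with the closure fact that $k\in\Pa(m)$, $j\in\sur(k)$ imply $j\in\Pa(m)$. The only cosmetic difference is that the paper packages the support claim for $\bC_{\rm sur}^{-1}$ as a separate linear-algebra lemma (via the Neumann-series expansion of a lower-triangular inverse), whereas you fold it into the induction along $\pi$.
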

\begin{proof}
    Proof of the consistency of latent variables up to mixing with surrounding parents is given above in complete detail. For the proof of perfect graph recovery and $\hat \bZ$ satisfies Markov property with respect to $\hat \mcG$, see \Cref{proof:linear-soft-full-rank}.     
\end{proof}

\Cref{th:linear-soft-full-rank} has two important implications. First, the latent DAG can be identified using \emph{only} soft interventions under mild nonlinearity assumptions on the latent causal model. To our knowledge, this is the first result in the literature for fully recovering latent DAG with soft interventions without restricting the graphical structure, e.g., \citet{zhang2023identifiability} require \emph{linear faithfulness} assumption to achieve similar results, which is only shown to hold for nonlinear latent models with polytree structure. Secondly, the estimated latent variables reveal the true conditional independence relationships since they satisfy the Markov property with respect to the estimated latent graph, which is isomorphic to the latent graph. Recalling that the motivation of CRL is to learn useful representations that preserve causal relationships, our result shows that this can be achieved without perfect identifiability for a large class of models. Furthermore, \citet[Theorem 3]{jin2024learning} establish that under some nondegeneracy assumptions, mixing consistency up to surrounding parents is the best possible result when using single-node soft interventions. Hence, the results in \Cref{th:linear-soft-full-rank} are tight for the considered setting. \looseness=-1 

\subsection{Discussion on Assumptions~\ref{assumption:rank-two} and \ref{assumption:full-rank}}\label{sec:discuss-assumptions}

In this subsection, we elaborate on \Cref{assumption:rank-two} and \Cref{assumption:full-rank}, which are relevant to \Cref{th:linear-soft} and \Cref{th:linear-soft-full-rank}, respectively. \Cref{assumption:rank-two} essentially states that score changes in the coordinates of the intervened node and a parent of the intervened node are linearly independent. This property holds for (but is not limited to) the widely adopted additive noise models specified in \eqref{eq:additive-SCM} when we apply hard interventions.
\begin{lemma}[Proof in \Cref{sec:discuss-assumption-rank-two}]\label{lm:rank-two-additive-hard}
    \Cref{assumption:rank-two} is satisfied for additive noise models under hard interventions.
\end{lemma}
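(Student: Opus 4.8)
The plan is to substitute the additive-noise and hard-intervention structure into the ratio appearing in \Cref{assumption:rank-two}, assume it is a constant function of $\bz$, and contradict the fact that the noise density $p_{N_i}$ and the interventional density $q_i$ are genuine probability densities on $\R$. Write $\rho_i \triangleq \log p_{N_i}$ and $\sigma_i \triangleq \log q_i$. Because $Z_i = f_i(\bZ_{\Pa(i)}) + N_i$, we have $\log p_i(z_i\mid\bz_{\Pa(i)}) = \rho_i\big(z_i - f_i(\bz_{\Pa(i)})\big)$, and because a hard intervention makes $q_i$ independent of $\bz_{\Pa(i)}$,
\begin{equation*}
    \frac{\partial}{\partial z_k}\log\frac{p_i}{q_i} = -\rho_i'\big(z_i - f_i(\bz_{\Pa(i)})\big)\,\frac{\partial f_i}{\partial z_k}(\bz_{\Pa(i)}) \ , \qquad
    \frac{\partial}{\partial z_i}\log\frac{p_i}{q_i} = \rho_i'\big(z_i - f_i(\bz_{\Pa(i)})\big) - \sigma_i'(z_i) \ .
\end{equation*}
Abbreviating $a \triangleq \partial f_i/\partial z_k$ and suppressing arguments, if the ratio of these two expressions is a constant $c$ then cross-multiplying yields the scalar identity
\begin{equation*}
    \big(a(\bz_{\Pa(i)}) + c\big)\,\rho_i'\big(z_i - f_i(\bz_{\Pa(i)})\big) = c\,\sigma_i'(z_i) \ , \qquad \forall\, \bz \ . \tag{$\star$}
\end{equation*}
(One first notes the denominator is not identically zero: otherwise $\rho_i'(z_i - f_i(\bz_{\Pa(i)})) \equiv \sigma_i'(z_i)$, and, since $k\in\Pa(i)$ makes $f_i$ genuinely depend on $z_k$, varying $z_k$ would force $\rho_i'$ constant --- impossible for an integrable density.)

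Next I would dispose of the degenerate cases. If $c = 0$, then $(\star)$ gives $\rho_i'(z_i - f_i(\bz_{\Pa(i)}))\,a(\bz_{\Pa(i)}) \equiv 0$; picking parent values where $a \neq 0$ (possible as $k\in\Pa(i)$) and letting $z_i$ range over $\R$ forces $\rho_i' \equiv 0$, so $p_{N_i}$ would be a positive constant, not integrable. If $c \neq 0$ but $a(\bz'_{\Pa(i)}) = -c$ for some parent configuration $\bz'_{\Pa(i)}$, then $(\star)$ forces $c\,\sigma_i'(z_i) = 0$ for all $z_i$, so $q_i$ would be a positive constant, again not integrable.

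The remaining case $c \neq 0$ with $a \neq -c$ everywhere is the crux. Fix all parents other than $z_k$ at a point where $a \neq 0$; near it $z_k \mapsto f \triangleq f_i(\bz_{\Pa(i)})$ is a $C^1$-diffeomorphism onto an open interval, so $a$ becomes a continuous function $A(f)$ and $(\star)$ reads $\big(A(f) + c\big)\,\rho_i'(z_i - f) = c\,\sigma_i'(z_i)$ for $f$ in that interval and all $z_i \in \R$. Comparing two values $f_0, f_1$ of $f$ cancels $\sigma_i'$ and gives the shift relation $\rho_i'(w) = \mu(s)\,\rho_i'(w+s)$ for all $w$, where $s = f_0 - f_1$ ranges over an interval around $0$ and $\mu(s) = \big(A(f_0 - s) + c\big)/\big(A(f_0) + c\big)$ is continuous and strictly positive (it equals $\mu(s/2)^2$ and is nonzero). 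Composing shifts shows $\mu(s+s') = \mu(s)\,\mu(s')$, hence $\mu(s) = e^{\beta s}$; substituting back gives $\rho_i'(w) = C e^{-\beta w}$ first on a neighbourhood of any point where $\rho_i' \neq 0$, and then on all of $\R$ by a connectedness argument using the shift relation. But for every choice of $C$ and $\beta$ this makes $p_{N_i} = e^{\rho_i}$ fail to decay on at least one tail (it tends to a positive constant when $\beta \neq 0$ and grows exponentially when $\beta = 0$), contradicting $\int_{\R} p_{N_i} = 1$. Since every case is impossible, the ratio cannot be constant in $\bz$, which is exactly \Cref{assumption:rank-two}. The one genuine obstacle is this last case --- passing from the single scalar identity $(\star)$ to the conclusion that $\rho_i'$ is forced to be an exponential; I handle it by exploiting the freedom in $z_k$ together with the Cauchy multiplicative equation for $\mu$, so that only continuity of the densities is needed (no second derivatives).
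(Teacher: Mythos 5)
Your proof is correct and lands on the same final contradiction as the paper's argument in \Cref{sec:discuss-assumption-rank-two} --- a noise score function forced into exponential form, hence a density that cannot integrate to one --- but the route through the functional equation is genuinely different. The paper also reduces to the identity $r_q(n_i+f_i)/r_p(n_i)=1+c\,\partial f_i/\partial z_k$, then splits on whether $\partial f_i/\partial z_k$ is constant and, in the main case, differentiates in $n_i$ to obtain $r_p'/r_p$ equal to $r_q'/r_q$ at a shifted argument; sweeping the shift over an interval makes $r_q$ exponential there, and \emph{analyticity} of the noise scores is then invoked to propagate the exponential form to all of $\R$. You instead eliminate $\sigma_i'$ by comparing two slices $f_0,f_1$, obtaining the shift relation $\rho_i'(w)=\mu(s)\,\rho_i'(w+s)$ for \emph{every} $w\in\R$ and $s$ in an interval about $0$, and solve it with the multiplicative Cauchy equation; since $e^{\beta w}\rho_i'(w)$ is then continuous with arbitrarily small periods, the global exponential form follows without second derivatives of the log-densities and without analyticity. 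Your argument therefore proves the lemma under strictly weaker regularity than the paper needs (its companion result, \Cref{lm:pq-parent-dependence-new}, assumes analytic noise pdfs), and your case split ($c=0$; $a=-c$ at some parent configuration; the generic case) is also tidier than the paper's Case~1, which asserts somewhat loosely that a constant right-hand side forces $f_i$ itself to be constant. The only spots worth spelling out in a full write-up are the overlapping-translates step behind ``the denominator is not identically zero'' (local constancy of $\rho_i'$ on each translate $z_i-I$ plus overlap gives global constancy) and the explicit non-vanishing of $\rho_i'$ used to cancel it when deriving $\mu(s+s')=\mu(s)\,\mu(s')$; both are routine and do not affect correctness.
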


When soft interventions are applied, the transitive closure of $\mcG$ cannot be identified without making assumptions about the effect of the interventions. Specifically, for linear latent causal models, \citet{buchholz2023learning} prove impossibility results for pure shift interventions and \citet{squires2023linear} show that a \emph{genericity condition} is necessary for identifying the transitive closure of the latent DAG. Therefore, \Cref{assumption:rank-two} can be interpreted as the counterpart of the commonly adopted assumptions in the literature on soft interventions adapted to the setting of general latent causal models. Finally, the next example demonstrates the working of \Cref{assumption:rank-two} on a linear Gaussian latent model.
\begin{example}\label{example:gaussian-assumption-rank-two}
    Consider a linear Gaussian latent model with $\bZ \sim \mcN(0, \Sigma)$. Score function of $\bZ$ is given by $\bss(\bz) = -\Sigma^{-1} \cdot \bz$. Let $Z_i = \bw \cdot \bZ_{\Pa(i)} + N_i$ where $N_i \sim \mcN(0,\sigma_i^2)$ for the node $i$. Consider an intervention on node $i$ on environment $\mcE^m$ such that $Z^m_i = \bar \bw \cdot \bZ^m_{\Pa(i)} + \bar N_i$, where $Z_m \sim \mcN(0, \bar \Sigma)$, and $\bar N_i \sim \mcN(0, \bar \sigma_i^2)$, which yields $\bsd_{\bZ}^{m}(\bz) = - (\Sigma^{-1} - \bar \Sigma^{-1}) \cdot \bz$. Then, for a node $k \in \Pa(i)$, we obtain 
    \begin{align}
        [\bsd_{\bZ}^{m}(\bz)]_i 
        &= \left(\frac{1}{\sigma_i^2} - \frac{1}{\bar \sigma_i^2}\right) z_i - \left(\frac{\bw}{\sigma_i^2} - \frac{\bar \bw}{\bar \sigma_i^2}\right) \bz_{\Pa(i)} \ , \label{eq:example1-1} \\
        [\bsd_{\bZ}^{m}(\bz)]_k 
        &= - \left(\frac{w_k}{\sigma_i^2} - \frac{\bar w_k}{\bar \sigma_i^2}\right) z_i + \left(w_k \frac{\bw}{\sigma_i^2} - \bar w_k \frac{\bar \bw}{\bar \sigma_i^2}\right) \bz_{\Pa(i)} \ , \label{eq:example1-2}          
    \end{align}
    where $w_k$ and $\bar w_k$ correspond to weights of the parent $Z_k$ in observational and interventional models, respectively. Note that, for \Cref{assumption:rank-two} to be violated, there must exists a constant $\kappa \in \R$ such that 
    \begin{equation}
        \kappa \cdot \big[\bsd_{\bZ}^{m}(\bz)\big]_i = \big[\bsd_{\bZ}^{m}(\bz)\big]_k  \ , \quad \forall \bz \in \R^n \ .
    \end{equation}
    However, using \eqref{eq:example1-1} and \eqref{eq:example1-2}, this is possible if and only if $w_k = \bar w_k$. Therefore, if the weight of the node $k \in \Pa(i)$ changes, \Cref{assumption:rank-two} is satisfied for the node pair $(i,k)$.
\end{example}

Given the known result that perfect identifiability is impossible for linear Gaussian models given soft interventions~\citep{squires2023linear}, the purpose of \Cref{assumption:full-rank} is to get more insight into the extent of identifiability guarantees under soft interventions. Intuitively, the mentioned impossibility results for linear Gaussian models are due to the rank deficiency of score differences for linear models -- specifically, we know that $\rank(\bR_{\bZ}^{m}) \leq 2$ for linear Gaussian models. In contrast, for sufficiently nonlinear causal models, $\rank(\bR_{\bZ}^{m})$ can be as high as $\Paplus(I^m)$. \Cref{assumption:full-rank} ensures that this upper bound is satisfied with equality for all nodes. This condition holds for the class of sufficiently nonlinear models, such as quadratic causal models. In particular, we show that this condition holds for the two-layer neural networks (NNs) as a function class that can effectively approximate any continuous function. This result is formalized in the next lemma. 
\begin{lemma}\label{prop:nn}
    Consider the additive model in \eqref{eq:additive-SCM} where $f_i$ is a two-layer NN with sigmoid activation function, and weight matrices $\bW^i$ and $\bar \bW^i$ for observational and interventional mechanisms, respectively. If $\max\{\rank(\bW^i)\, , \, \rank(\bar \bW^i)\} = |\Pa(i)|$ for all $i \in [n]$, then \Cref{assumption:full-rank} holds.
\end{lemma}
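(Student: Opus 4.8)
Here is how I would prove \Cref{prop:nn}. The plan is to compute the latent score difference $\bsd_{\bZ}^{m}$ explicitly for the additive neural-network mechanism, exploit the coordinate-sparsity established in \eqref{eq:def-score-diff-z-int-obs-single}, and thereby reduce \Cref{assumption:full-rank} to the statement that the Jacobian of a two-layer sigmoid network, together with the constant direction, spans $|\Pa(\ell)|$ extra directions. Fix $\mcE^m$ with target $\ell = I^m$ and write the observational and interventional mechanisms as $p_\ell(z_\ell\med\bz_{\Pa(\ell)}) = p_{N_\ell}(z_\ell - f_\ell(\bz_{\Pa(\ell)}))$ and $q_\ell(z_\ell\med\bz_{\Pa(\ell)}) = \bar p_{N_\ell}(z_\ell - \bar f_\ell(\bz_{\Pa(\ell)}))$, with $f_\ell,\bar f_\ell$ two-layer sigmoid nets having first-layer weights $\bW^\ell,\bar\bW^\ell$, and set $\rho_\ell \triangleq (\log p_{N_\ell})'$, $\bar\rho_\ell \triangleq (\log \bar p_{N_\ell})'$. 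Differentiating \eqref{eq:def-score-diff-z-int-obs-single}, the coordinates of $\bsd_{\bZ}^{m}(\bz)$ — which vanish outside $\Paplus(\ell)$ — read, ordering coordinates as $(\ell,\Pa(\ell))$,
\[
  \bsd_{\bZ}^{m}(\bz)\big|_{\Paplus(\ell)} = \bar\rho_\ell\!\big(z_\ell - \bar f_\ell(\bz_{\Pa(\ell)})\big)\,\bu_{\bar f}(\bz_{\Pa(\ell)}) - \rho_\ell\!\big(z_\ell - f_\ell(\bz_{\Pa(\ell)})\big)\,\bu_{f}(\bz_{\Pa(\ell)})\ ,
\]
where $\bu_{f}(\bz_{\Pa(\ell)}) \triangleq \big[\,1\,,\; -\big(\nabla f_\ell(\bz_{\Pa(\ell)})\big)^{\top}\big]^{\top}$ and likewise $\bu_{\bar f}$. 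Since $\rank(\bR_{\bZ}^{m}) = \dim\spann\{\bsd_{\bZ}^{m}(\bz):\bz\}$, and this span lies in the $|\Paplus(\ell)|$-dimensional coordinate subspace, it suffices to show the span fills it.

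Suppose it does not: then some nonzero $\ba = (a_\ell,\ba_{\Pa})$ annihilates every $\bsd_{\bZ}^{m}(\bz)$, i.e. $(\ba^{\top}\bu_{\bar f}(\bz_{\Pa(\ell)}))\,\bar\rho_\ell(z_\ell - \bar f_\ell) = (\ba^{\top}\bu_{f}(\bz_{\Pa(\ell)}))\,\rho_\ell(z_\ell - f_\ell)$ for all $\bz$. Freezing $\bz_{\Pa(\ell)}$ and varying $z_\ell$, I claim the maps $z_\ell\mapsto\bar\rho_\ell(z_\ell-\bar f_\ell)$ and $z_\ell\mapsto\rho_\ell(z_\ell-f_\ell)$ are linearly independent for a dense set of $\bz_{\Pa(\ell)}$: proportionality would give $\bar\rho_\ell(\,\cdot + \delta(\bz_{\Pa(\ell)})) \propto \rho_\ell$ with shift $\delta = f_\ell - \bar f_\ell$, and since $f_\ell-\bar f_\ell$ is non-constant (the networks genuinely differ) while $\rho_\ell,\bar\rho_\ell$ do not depend on $\bz_{\Pa(\ell)}$, comparing two parent values forces $\bar\rho_\ell(\,\cdot + r) \propto \bar\rho_\ell$ for some $r\neq 0$, i.e. exponential growth or decay of $\bar\rho_\ell$, making $\bar p_{N_\ell}$ non-integrable — impossible for a density. (The degenerate case $f_\ell\equiv\bar f_\ell + \text{const}$, where $\bu_f=\bu_{\bar f}$, is handled directly: the scalar prefactor $\bar\rho_\ell(z_\ell-\bar f_\ell)-\rho_\ell(z_\ell-f_\ell)$ is not identically zero, and the rest of the argument below applies to that single vector field.) Consequently $\ba^{\top}\bu_{f}(\bz_{\Pa(\ell)}) = \ba^{\top}\bu_{\bar f}(\bz_{\Pa(\ell)}) = 0$ on a dense set, hence everywhere by smoothness of $f_\ell,\bar f_\ell$.

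By the hypothesis, either $\rank(\bW^\ell)=|\Pa(\ell)|$ or $\rank(\bar\bW^\ell)=|\Pa(\ell)|$; take the former (the other case is identical with $\bu_{\bar f}$ and $\bar\bW^\ell$). Then $\ba^{\top}\bu_{f}\equiv 0$ says $\ba_{\Pa}^{\top}\nabla f_\ell(\bz_{\Pa(\ell)})\equiv a_\ell$. Writing $f_\ell(\bz)=\sum_t v_t\,\sigma(\bw_t^{\top}\bz+b_t)$ with $\bw_t$ the rows of $\bW^\ell$ (dropping zero-weight units), this becomes $\sum_t v_t(\bw_t^{\top}\ba_{\Pa})\,\sigma'(\bw_t^{\top}\bz+b_t)\equiv a_\ell$; sending $\bz$ to infinity along a direction non-orthogonal to every $\bw_t$ drives all $\sigma'$ terms to $0$, forcing $a_\ell=0$. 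The remaining identity $\sum_t v_t(\bw_t^{\top}\ba_{\Pa})\,\sigma'(\bw_t^{\top}\bz+b_t)\equiv 0$ together with the linear independence of $\{\bz\mapsto\sigma'(\bw_t^{\top}\bz+b_t)\}$ over distinct hidden units (a standard property of the real-analytic sigmoid) gives $\bw_t^{\top}\ba_{\Pa}=0$ for every active unit, so $\ba_{\Pa}$ is orthogonal to the row space of $\bW^\ell$, which is all of $\R^{|\Pa(\ell)|}$; hence $\ba_{\Pa}=0$ and $\ba=0$, a contradiction. Therefore $\rank(\bR_{\bZ}^{m})=|\Paplus(I^m)|$ for every $m\in[n]$, i.e. \Cref{assumption:full-rank} holds.

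The routine pieces are the differentiation of the log-densities and the linear-independence fact for sigmoid hidden-unit derivatives. The crux is the middle step — cleanly disentangling the observational and interventional noise profiles $\rho_\ell(z_\ell-f_\ell)$ and $\bar\rho_\ell(z_\ell-\bar f_\ell)$ so that the genuine non-degeneracy of the intervention (the networks really differ) and the requirement that $N_\ell,\bar N_\ell$ have honest full-support densities can both be invoked to rule out the exponential-tail pathology that would otherwise let the two profiles be proportional.
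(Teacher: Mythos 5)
Your argument is correct in substance and shares its skeleton with the paper's proof, but the two halves are executed differently enough to be worth comparing. The paper first isolates a general equivalence (\Cref{lm:condition-assumption}): for additive noise models, \Cref{assumption:full-rank} holds iff the only simultaneous solutions of $c_i=\bc^{\top}\nabla_{\bz} f_i(\varphi)$ and $c_i=\bc^{\top}\nabla_{\bz}\bar f_i(\varphi)$ for all $\varphi$ lie outside $\mcC_i$. Your ``disentangling'' step is exactly the content of that lemma, inlined: assuming a nonzero annihilator $\ba$ of all score differences, showing that proportionality of the shifted noise scores over a range of shifts $\delta=f_\ell-\bar f_\ell$ would force an exponential score and hence a non-normalizable noise density, and concluding that both coefficient functions $\ba^{\top}\bu_f$ and $\ba^{\top}\bu_{\bar f}$ vanish identically. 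Where you genuinely diverge is the network step. The paper refutes $[\sigma'(\bW\varphi)]^{\top}\bd\equiv\mathrm{const}$ for nonzero $\bd$ self-containedly, via the Taylor coefficients of $\sigma'$ and a Vandermonde determinant along a generic ray $\alpha\varphi^{*}$ (\Cref{lm:aux-nn}), after checking that a ray avoiding finitely many bad hyperplanes exists. You instead send $\bz\to\infty$ along a direction non-orthogonal to every $\bw_t$ to kill the constant $a_\ell$, and then invoke linear independence of the ridge functions $\sigma'(\bw_t^{\top}\bz+b_t)$. Your route is shorter and leans on a standard fact; the paper's is longer but proves that fact from scratch.

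Two places need tightening. First, linear independence of $\{\sigma'(\bw_t^{\top}\bz+b_t)\}_t$ holds only for an irreducible parameterization: $\sigma'$ is even, so units with $(\bw_s,b_s)=-(\bw_t,b_t)$ produce the \emph{same} ridge function, their coefficients $v_s(\bw_s^{\top}\ba_{\Pa})+v_t(\bw_t^{\top}\ba_{\Pa})$ merge, and you cannot conclude $\bw_t^{\top}\ba_{\Pa}=0$ unit by unit. You must first merge duplicate and sign-flipped pairs (the paper does exactly this normalization, establishing $\bW_s\pm\bW_t\neq\boldsymbol{0}$ before applying its independence lemma), and then argue that the reduced weight matrix still has full row rank. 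Second, ``comparing two parent values forces $\bar\rho_\ell(\cdot+r)\propto\bar\rho_\ell$, i.e.\ exponential'' is too quick: a single functional equation $\bar\rho(v+r)=k\,\bar\rho(v)$ does not by itself make $\bar\rho$ exponential. You need the interval of shifts supplied by the non-constant analytic $\delta$; differentiating the proportionality in the shift variable then gives $\rho_\ell'/\rho_\ell$ constant, which is the paper's argument in the proof of \Cref{lm:condition-assumption}. Neither issue is fatal, but both deserve an explicit sentence.
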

We discuss the nonlinearity and the proof of \Cref{prop:nn} in \Cref{appendix:analyze-assumption-full-rank}.

\section{CRL under General Transformations}\label{sec:general}

In this setting, we consider general transformations without any parametric assumption for the transformation $g$. In the previous section, we exploited the transformation's linearity and recovered the true encoder's parameters sequentially using \emph{one} intervention per node. For general transformations (parametric or nonparametric), however, we cannot use the same parametric approach and rely on the properties of linear transforms. To rectify these and design the general CRL algorithm, we use more information in the form of \emph{two} interventions per node. Specifically, we will present the steps of leveraging the score functions under two interventions and design an algorithm that identifies the true encoder and recovers the true causal representations. The algorithm is referred to as {\bf G}eneral {\bf S}core-based {\bf Ca}usal {\bf L}atent {\bf E}stimation via {\bf I}nterventions (GSCALE-I), which will be summarized in \Cref{alg:general}. We also present additional results and discuss the role of various inputs and the distinction of our results compared to the existing literature.

\paragraph{Inputs.} The inputs of GSCALE-I are the observed data from the observational environment, the data from two interventional environments per node, whether environments are coupled/uncoupled, and a set of valid encoders $\mcH$. Two sets of interventional environments are denoted by $\mcE = \{\mcE^1,\dots,\mcE^n\}$ and $\tilde\mcE = \{\tilde \mcE^1,\dots, \tilde \mcE^n\}$, as defined in \Cref{sec:intervention-mechanisms}. For these inputs, we compute the score functions $\bss_{\bX}$, $\{\bss_{\bX}^1,\dots,\bss_{\bX}^n\}$ and $\{\tilde \bss_{\bX}^1, \dots, \tilde \bss_{\bX}^n\}$. Note that we will use \Cref{lm:score-difference-transform-general} again to ensure access to the latent score differences by using these observed score functions.

\paragraph{Statistical diversity.}
For being more informative than a single intervention mechanism, we assume that the two intervention mechanisms per node are sufficiently distinct. This is formalized by defining \emph{interventional discrepancy}~\citep{liang2023causal} among the causal mechanisms of a latent variable. 
\begin{definition}[Interventional Discrepancy]\label{assumption:int-discrepancy}
Two intervention mechanisms with pdfs $p,q:\R \to \R$ are said to satisfy \emph{interventional discrepancy} if 
    \begin{equation}\label{eq:int-discrepancy}
        \frac{\partial}{\partial u} \frac{p(u)}{q(u)} \neq 0 \ , \quad {\forall u\in \R \setminus \mcT} \ , 
    \end{equation}
    where $\mcT$ is a null set (i.e., has a zero Lebesgue measure). 
\end{definition}
This condition ensures that the two distributions are sufficiently distinct, formally expressed as the partial derivative of their ratio with respect to the intervened variable is nonzero almost everywhere. For instance, two univariate, distinct Gaussians trivially satisfy this discrepancy condition. As shown by~\citet{liang2023causal}, even when the latent graph $\mcG$ is known, for identifiability via one intervention per node, it is necessary to have interventional discrepancy between observational distribution $p_i$ and interventional distribution $q_i$, for all $\bz_{\Pa(i)}\in\R^{|\Pa(i)|}$. \looseness=-1

\subsection{Rationale of GSCALE-I and Partial Identifiability}\label{sec:general-rationale}

Similarly to LSCALE-I, analyzing GSCALE-I involves leveraging score differences. However, in contrast to linear transformations, estimated and true latent score differences are not always related by a constant matrix for general transformations. To circumvent this issue, we rely on \Cref{lm:parent_change_comprehensive}(iii), i.e., the score difference of \emph{coupled} interventional environments is one-sparse. As such, the key idea is to find an encoder $h \in \mcH$ that adheres to this sparsity structure, and we will show that such an encoder ensures component-wise identifiability of the latent variables.
To formalize these, corresponding to each valid encoder $h \in \mcH$, we define the score change matrix $\bD_{\rm t}(h)$ with entries:
\begin{align}
    [\bD_{\rm t}(h)]_{i,m} &\triangleq  \E\Big[\, \big|\bss_{\hat \bZ}^{m}(\hat \bZ;h) - \tilde \bss_{\hat \bZ}^{m}(\hat \bZ;h)\big|_i \Big] \ , \label{eq:Dh-entry} 
\end{align}
where expectations are under the measures of latent score functions induced by the probability measure of observational data. We also denote the \emph{true} score change matrix under true encoder $g^{-1}$ by $\bD_{\rm t} \triangleq \bD_{\rm t}(g^{-1})$ with entries
\begin{equation}\label{eq:true-score-change-matrix}
    [\bD_{\rm t}]_{i,m} \triangleq \E\Big[\big|\bss^{m}(\bZ)- \tilde \bss^{m}(\bZ)\big|_i\Big] \ , \quad \forall i,m \in [n] \ .
\end{equation} 
For clarity in the exposition of the ideas, we consider coupled environments here, i.e., $I^m = \tilde I^m$ for all $m\in[n]$. Then, using \eqref{eq:scorechanges_iff4} in \Cref{lm:parent_change_comprehensive} we have 
\begin{equation}
    [\bD_{\rm t}]_{i,m} \neq 0 \;\; \iff \;\; i = I^m \ .
\end{equation}
This implies that $\mathds{1}\{\bD_{\rm t}\}$ is a permutation matrix, $\mathds{1}\{\bD_{\rm t}\} = \bP_{\mcI}^{\top}$ and
\begin{equation}\label{eq:column-Dt-coupled}
    \mathds{1}\{[{\bD_{\rm t}}]_{:,m}\} = \be_{\ell} \ , \quad \mbox{where} \; \ell = I^m \ ,
\end{equation}
and $\be_{\ell}$ denotes the $\ell$-th standard basis vector.
Subsequently, the key idea for identifying the encoder is that the changes between score estimates $\bss_{\hat \bZ}^{m}(\hat \bz;h)$ and $\tilde \bss_{\hat \bZ}^{m}(\hat \bz;h)$ should match the sparsity structure for those under the true encoder $g^{-1}$. In the next result, we leverage the sparsity structure in \eqref{eq:column-Dt-coupled} for two interventions on the same node to identify an intervened variable, formalized as follows.

\begin{theorem}[General -- Node-level Identifiability]\label{thm:general-partial-identifiability}
Two hard interventions (with interventional discrepancy) on the same target node $I^m=\tilde I^m=\ell$ suffice to recover $Z_\ell$ up to a diffeomorphism. In particular, the following encoder
\begin{equation}\label{eq:gscalei-partial-optimization}
    \hat h_m = \arg\min\limits_{h \in \mcH} \; \big\| [\bD_{\rm t}(h)]_{:,m} - \be_m \big\|^2 
\end{equation}
satisfies $\big[\hat \bZ(\bX; \hat h_m)\big]_m = \phi_{\ell}(Z_\ell)$ for a diffeomorphism $\phi_{\ell}: \R \to \R$.
\end{theorem}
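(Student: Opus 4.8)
The plan is to first show that the optimal value of the objective in \eqref{eq:gscalei-partial-optimization} is zero, and then to characterize \emph{every} minimizer, showing each one recovers $Z_\ell$ up to a diffeomorphism. Throughout I will use the explicit form of the one-sparse score change from \Cref{lm:parent_change_comprehensive}(iii): since both interventions in $\mcE^m$ and $\tilde\mcE^m$ are hard on $\ell$, the factorization \eqref{eq:pz_m_factorized_hard} gives $\bss^{m}(\bz)-\tilde\bss^{m}(\bz)=\delta(z_\ell)\,\be_\ell$ where $\delta(z_\ell)\triangleq \tfrac{\partial}{\partial z_\ell}\log\tfrac{q_\ell(z_\ell)}{\tilde q_\ell(z_\ell)}$, and interventional discrepancy (\Cref{assumption:int-discrepancy}) ensures $\delta$ is nonzero outside a Lebesgue-null set.

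\emph{Step 1 (the optimal value is zero).} I would exhibit a feasible encoder attaining objective value $0$: take $h_\star$ to be $g^{-1}$ composed with a permutation matrix $\bP_\pi$ sending coordinate $\ell$ to $m$ and a positive scaling $a$ of the $m$-th coordinate, so $h_\star\in\mcH$ and its $m$-th output is $a\,Z_\ell$. Applying \Cref{corollary:score-difference-transform-linear} to the linear map $h_\star\circ g$, the score difference of $\hat\bZ(\bX;h_\star)$ between $\mcE^m$ and $\tilde\mcE^m$ equals $\delta(z_\ell)$ times a vector supported only on coordinate $m$, with $m$-th entry $a^{-1}\delta(z_\ell)$. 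Choosing $a=\E\big[|\delta(Z_\ell)|\big]$, which is finite and positive by interventional discrepancy and the smoothness of the densities, gives $[\bD_{\rm t}(h_\star)]_{:,m}=\be_m$. Hence the minimum is $0$, and any minimizer $\hat h_m$ is a valid encoder satisfying $[\bD_{\rm t}(\hat h_m)]_{:,m}=\be_m$, i.e.\ its $m$-th column has value $1$ in row $m$ and $0$ in every other row.

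\emph{Step 2 (from the minimizer to a Jacobian identity).} Fix a minimizer $h\triangleq\hat h_m$, put $f\triangleq h\circ g$ so that $\hat\bZ=f(\bZ)$, and use \eqref{eq:score-difference-z-zhat}:
\begin{equation*}
\bss_{\hat\bZ}^{m}(\hat\bz;h)-\tilde\bss_{\hat\bZ}^{m}(\hat\bz;h)=J_f^{-\top}(\bz)\cdot\big[\bss^{m}(\bz)-\tilde\bss^{m}(\bz)\big]=\delta(z_\ell)\,\nabla_{\hat\bz}\big[f^{-1}(\hat\bz)\big]_\ell \ ,
\end{equation*}
since the $\ell$-th column of $J_f^{-\top}(\bz)$ is the $\ell$-th row of $J_{f^{-1}}(\hat\bz)=J_f(\bz)^{-1}$, namely the gradient in $\hat\bz$ of the $\ell$-th coordinate of $f^{-1}$. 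Now I would read off the minimizer constraints coordinate by coordinate. For $i\neq m$, $[\bD_{\rm t}(h)]_{i,m}=0$ means $\E\big[|\delta(Z_\ell)|\cdot|\partial_{\hat z_i}[f^{-1}(\hat\bZ)]_\ell|\big]=0$; because $\bZ$ has a density with full support and $\delta\neq0$ a.e., $\delta(Z_\ell)\neq0$ almost surely, so $\partial_{\hat z_i}[f^{-1}]_\ell$ vanishes $\hat\bZ$-almost surely, and being continuous on the full support it vanishes identically. Thus $[f^{-1}(\hat\bz)]_\ell$ depends on $\hat\bz$ only through $\hat z_m$: write $[f^{-1}(\hat\bz)]_\ell=\psi(\hat z_m)$. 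The $\ell$-th row of $J_{f^{-1}}$ is then $\psi'(\hat z_m)\,\be_m^{\top}$, and since $J_{f^{-1}}$ is invertible everywhere ($f$ being a diffeomorphism), $\psi'$ never vanishes, so $\psi$ is a $C^1$ diffeomorphism onto its image. Consequently $Z_\ell=\psi(\hat Z_m)$, i.e.\ $[\hat\bZ(\bX;\hat h_m)]_m=\psi^{-1}(Z_\ell)=\phi_\ell(Z_\ell)$ with $\phi_\ell\triangleq\psi^{-1}$ a diffeomorphism, which is the claim.

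\emph{Main obstacle.} The delicate step is the passage in Step 2 from the zero-expectation identity to the \emph{pointwise} (hence identical) vanishing of $\partial_{\hat z_i}[f^{-1}]_\ell$: this requires combining the full-support hypothesis on $p$ with the almost-everywhere nonvanishing of $\delta$ supplied by interventional discrepancy, and propagating full support through the change of variables to $\hat\bZ$. A secondary technical point is justifying that $f=h\circ g$ is a genuine diffeomorphism so that $J_f^{-1}=J_{f^{-1}}$ and the gradient identity are legitimate, which is where the structure of $\mcH$ and the hypotheses of \Cref{lm:score-difference-transform-general} enter; the bookkeeping in Step 1 with permutation and scaling matrices is routine.
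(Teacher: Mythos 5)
Your proposal is correct and follows essentially the same route as the paper's proof: exhibit a scaled/permuted copy of $g^{-1}$ to show the optimum is zero (the paper phrases this as $h^* = \bD_{\rm t}^{\top} \cdot g^{-1}$ together with the identity $\bD_{\rm t}(\bA \cdot h) = \bA^{-\top} \cdot \bD_{\rm t}(h)$, which is the same construction), then use the one-sparse score change, interventional discrepancy, and continuity of $J_f^{-1}$ to force all off-$(\ell,m)$ entries of the relevant Jacobian row to vanish identically, and conclude via invertibility that $Z_\ell$ is a diffeomorphic function of $\hat Z_m$. The "delicate step" you flag (expectation zero $\Rightarrow$ a.e. zero $\Rightarrow$ identically zero) is handled in the paper exactly as you propose.
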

We will show shortly that given a complete set of coupled interventions, \Cref{thm:general-partial-identifiability} will readily imply the identifiability of all variables (in a similar spirit to going from partial to complete identifiability in \Cref{sec:linear}). Furthermore, the results for uncoupled interventions will also leverage this key result. Hence, we provide the detailed proof of \Cref{thm:general-partial-identifiability} as follows.

\begin{proof}
    We start by providing a direct result of \Cref{corollary:score-difference-transform-linear}, which will be used in multiple instances in this section.
    Consider an encoder $h\in\mcH$ and an invertible matrix $\bA \in {\sf GL}(n, \R)$. Then, using \Cref{corollary:score-difference-transform-linear}, score functions of $\hat \bZ(\bX;h)$ and $\hat \bZ(\bX; \bA \cdot h)$ are related by
    \begin{equation}\label{eq:score-multiply-encoder}
        \bss_{\hat \bZ}(\hat \bZ; \bA \cdot h) = \bA^{-\top} \cdot \bss_{\hat \bZ}(\hat \bZ; h) \ .
    \end{equation}
    Recall that score change matrices are defined via the absolute value of the score differences. Therefore, \eqref{eq:score-multiply-encoder} implies that for a positively scaled permutation matrix $\bA$ we have
    \begin{equation}\label{eq:Dt-multiply-encoder}
        \bD_{\rm t}(\bA \cdot h) = \bA^{-\top} \cdot \bD_{\rm t}(h) \ , \quad \forall h \in \mcH \ .
    \end{equation}
    Denote the interventional environments by $\mcE^m$ and $\tilde \mcE^m$ with $I^m = \tilde I^m = \ell$. We show that the minimum value of \eqref{eq:gscalei-partial-optimization} is zero as follows. Let $h^* = \bD_{\rm t}^{\top} \cdot g^{-1}$. Recall that $\mathds{1}\{\bD_{\rm t}\} = \bP_{\mcI}^{\top}$, and by definition, $\bD_{\rm t}$ has only nonzero entries. Thus, using \eqref{eq:Dt-multiply-encoder}, we have $\bD_{\rm t}(\rm h^*) = \bD_{\rm t}^{-1} \cdot \bD_{\rm t} = \bI_{n \times n}$, which makes the objective in \eqref{eq:gscalei-partial-optimization} zero. Next, we show that \emph{any} solution of \eqref{eq:gscalei-partial-optimization} recovers $\hat Z_{\ell}$ up to a diffeomorphism. 
    
    Consider encoder $\hat h_m \in \mcH$ that satisfies $[\bD_{\rm t}(\hat h_m)]_{:,m} = \be_m$. Let $\hat \bZ$ denote $\hat \bZ(\bX; \hat h_m)$ and $f \triangleq \hat h_m \circ g$, so we have $\hat \bZ = f(\bZ)$ and $\bZ = f^{-1} (\hat \bZ)$. Using score difference transformation property in \eqref{eq:score-difference-z-zhat} and one-sparse property of $[\bss^{m}(\bZ) - \tilde \bss^{m}(\bZ)]$ via \Cref{lm:parent_change_comprehensive}(iii), we have
    \begin{align}
        [\bD_{\rm t}(\hat h_m)]_{i,m} &= \E \Big[\big| \bss_{\hat \bZ}^{m}(\hat \bZ) - \tilde \bss_{\hat \bZ}^{m}(\hat \bZ) \big|_i \Big] \\
        &= \E \Big[ \big| \big[J_{f}^{-\top}(\bZ)\big]_i \cdot \big[\bss^{m}(\bZ) - \tilde \bss^{m}(\bZ)\big] \big| \Big] \\ 
        &= \E \Big[ \big[J_{f}^{-1}(\bZ)\big]_{\ell,i} \cdot \big[\bss^{m}(\bZ) - \tilde \bss^{m}(\bZ)\big]_{\ell} \Big] \ .
    \end{align}
    By definition of $\hat h_m$, $[\bD_{\rm t}(\hat h_m)]_{i,m} = 0$ for all $i \neq m$. Also, interventional discrepancy between $q_\ell(z_l)$ and $\tilde q_\ell(z_l)$ implies that $\big[\bss^{m}(\bz) - \tilde \bss^{m}(\bz)\big]_{\ell} \neq 0$ except for a null set. Hence, if $\big[J_{f}^{-1}(\bz)\big]_{\ell,i}$ is nonzero over a nonzero-measure set within $\R^n$, then $[\bD_{\rm t}(\hat h_m)]_{i,m}$ would not be zero. Therefore, we have $\big[J_{f}^{-1}(\bz)\big]_{\ell,i} = 0$ except for a null set. Since $J_{f}^{-1}$ is a continuous function, this implies that 
    \begin{equation}\label{eq:jacobian-partial-id}
        \big[J_{f}^{-1}(\bz)\big]_{\ell,i}  = 0 \ , \quad \forall i \in [n] \setminus m, \; \forall \bz \in \R^n,    
    \end{equation}
    Furthermore, since $J_{f}^{-1}(\bz)$ must be invertible, we have $[J_{f}^{-1}(\bz)]_{\ell,m} \neq 0$ for all $\bz \in \R^n$. Consider $Z_\ell = [f^{-1}(\hat \bZ)]_{\ell}$. Since $J_{f}^{-1}(\bz) = J_{f^{-1}}(\hat \bz)$, \eqref{eq:jacobian-partial-id} implies that $Z_{\ell} = \psi(\hat Z_{m})$ for some diffeomorphism $\psi : \R \to \R$. Thus, $\hat Z_m = \psi^{-1}(\hat Z_{\ell})$ where $\ell=I^m$, which concludes the proof.    
\end{proof}

\subsection{Complete Identifiability under Coupled Interventions}\label{sec:general-coupled}

After achieving the node-level latent variable identifiability from one pair of coupled interventions, we extend the objective function in \Cref{thm:general-partial-identifiability} for a \emph{complete} set of coupled interventions.

\paragraph{Encoder estimation.} In Stage~G1 of \Cref{alg:general}, we combine the objectives in \eqref{eq:gscalei-partial-optimization} for all $m\in[n]$ and form our encoder estimate by solving the following optimization problem 
\begin{equation}\label{eq:general-OPT1-procedure}
    \min\limits_{h \in \mcH} \big\|\bD_{\rm t}(h) - \bI_{n \times n}\big\|_{\rm F}^{2} \ .
\end{equation}
Recall that the valid encoders set $\mcH$ consists of invertible encoders over the input space $\mcX$. Hence, \eqref{eq:general-OPT1-procedure} is readily equivalent to solving the following problem for any $\lambda \in \R_+$:
\begin{equation}\label{eq:general-OPT1}
    \min\limits_{h} \big\|\bD_{\rm t}(h) - \bI_{n \times n}\big\|_{\rm F}^{2} \; + \; \lambda \; \E\Big[ \big\| h^{-1}(h(\bX)) - \bX \big\|_2^2\Big] \ .
\end{equation}
\begin{lemma}\label{lm:general-encoder-coupled}
    Given a complete set of coupled interventions that satisfy interventional discrepancy, output $\hat h$ of \Cref{alg:general} satisfies component-wise latent recovery. 
\end{lemma}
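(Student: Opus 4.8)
The plan is to certify that the optimal value of the Stage~G1 objective \eqref{eq:general-OPT1-procedure} is zero and attained inside $\mcH$, then decompose the Frobenius objective columnwise and invoke \Cref{thm:general-partial-identifiability} once per column. First I would exhibit an explicit minimizer. Since the interventions are coupled and form a complete atomic set, \Cref{lm:parent_change_comprehensive}(iii) gives $\mathds{1}\{\bD_{\rm t}\} = \bP_{\mcI}^{\top}$; as every entry of $\bD_{\rm t}$ is an expectation of an absolute value, $\bD_{\rm t}$ is a positively scaled permutation matrix, hence invertible, and its support entries are strictly positive because interventional discrepancy guarantees $[\bss^{m}(\bZ) - \tilde\bss^{m}(\bZ)]_{I^m}$ is not almost everywhere zero. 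Set $h^{\star} \triangleq \bD_{\rm t}^{\top} \cdot g^{-1}$. Left-multiplying the true encoder $g^{-1} \in \mcH$ by the invertible matrix $\bD_{\rm t}^{\top}$ keeps it in $\mcH$ as defined in \eqref{eq:mcH} (its decoder is $g \circ (\bD_{\rm t}^{\top})^{-1}$), so $h^{\star} \in \mcH$ makes the reconstruction penalty in \eqref{eq:general-OPT1} vanish. Applying \eqref{eq:Dt-multiply-encoder} with the positively scaled permutation matrix $\bA = \bD_{\rm t}^{\top}$ yields $\bD_{\rm t}(h^{\star}) = \bD_{\rm t}^{-1} \cdot \bD_{\rm t} = \bI_{n\times n}$, so the objectives in \eqref{eq:general-OPT1-procedure} and \eqref{eq:general-OPT1} equal zero at $h^{\star}$; hence their minimum value is zero.

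Next I would analyze an arbitrary global minimizer $\hat h$ returned by \Cref{alg:general}. Because $\|\bD_{\rm t}(h) - \bI_{n\times n}\|_{\rm F}^{2} = \sum_{m \in [n]} \|[\bD_{\rm t}(h)]_{:,m} - \be_m\|^{2}$, attaining value zero forces $[\bD_{\rm t}(\hat h)]_{:,m} = \be_m$ for every $m \in [n]$; equivalently, $\hat h$ is simultaneously a minimizer of each single-column problem \eqref{eq:gscalei-partial-optimization}. Invoking \Cref{thm:general-partial-identifiability} for each $m$ (with $\ell = I^m$) then supplies a diffeomorphism $\phi_m : \R \to \R$ such that $[\hat\bZ(\bX;\hat h)]_m = \phi_m(Z_{I^m})$. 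Finally I would stack these $n$ coordinate identities: since $\mcI = (I^1,\dots,I^n)$ is a permutation of $[n]$, relabeling $\phi'_j \triangleq \phi_{\sigma(j)}$ where $\sigma$ is the inverse permutation of $\mcI$ gives $\hat\bZ(\bX;\hat h) = \bP_{\mcI} \cdot (\phi'_1(Z_1),\dots,\phi'_n(Z_n))$, which is a permutation composed with a componentwise diffeomorphism of $\bZ$ — exactly component-wise latent recovery in the sense of \Cref{def:latent-variable-identifiability}.

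The substantive work has already been done inside \Cref{thm:general-partial-identifiability}: there the identity $[\bD_{\rm t}(\hat h)]_{:,m} = \be_m$ is combined with the one-sparsity of $\bss^{m} - \tilde\bss^{m}$ from \Cref{lm:parent_change_comprehensive}(iii), with interventional discrepancy, and with continuity of the inverse Jacobian of $\hat h \circ g$ to pin the relevant row of $J_{\hat h \circ g}^{-1}$ to a single coordinate. Consequently the present lemma is essentially a packaging argument, and I do not expect a genuine obstacle. If I had to name the delicate points, they are (i) verifying $h^{\star} \in \mcH$, which relies crucially on $\mcH$ in \eqref{eq:mcH} only demanding invertibility of the encoder so that closure under left multiplication by $\bD_{\rm t}^{\top}$ holds, and (ii) confirming that the columnwise split of the Frobenius norm correctly transports the per-column conclusion of \Cref{thm:general-partial-identifiability} to all $n$ coordinates of $\hat\bZ$ at once, i.e. that the same minimizer $\hat h$ serves every column.
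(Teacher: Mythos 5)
Your proposal is correct and follows essentially the same route as the paper's proof: exhibit $h^{\star}=\bD_{\rm t}^{\top}\cdot g^{-1}$ as a zero-value feasible point, observe that any global minimizer must therefore satisfy $[\bD_{\rm t}(\hat h)]_{:,m}=\be_m$ for every $m$, and invoke \Cref{thm:general-partial-identifiability} columnwise to assemble the permutation-plus-componentwise-diffeomorphism conclusion. Your write-up is in fact slightly more careful than the paper's (e.g., the explicit transpose in $h^{\star}$ and the verification that $\bD_{\rm t}$ is a positively scaled permutation matrix), but there is no substantive difference.
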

\begin{proof}
    Following \Cref{thm:general-partial-identifiability} for all $m\in[n]$ immediately implies the desired result. First, recall that $h^* = \bD_{\rm t} \cdot g^{-1}$ satisfies $\bD_{\rm t}(h^*)=\bI_{n \times n}$ and belongs to $\mcH$. Therefore, $h^*$ makes the objective \eqref{eq:general-OPT1} zero. Next, consider a solution $\hat h$ that makes \eqref{eq:general-OPT1} zero. Since $\hat h$ makes the reconstruction loss term zero, i.e., it has a valid inverse $\hat h^{-1}$, $\hat h$ belongs to the set of valid encoders $\mcH$. Then, for all $m\in[n]$, we have $[\bD_{\rm t}(\hat h)]_{:,m} = \be_m$. By \Cref{thm:general-partial-identifiability}, this implies 
    \begin{equation}
        \hat \bZ(\bX; \hat h) = \bP_{\mcI} \cdot \phi(\bZ) \ ,
    \end{equation} 
    where $\phi(\bZ) = (\phi_1(Z_1),\dots,\phi_n(Z_n))$ is a componentwise diffeomorphism, which concludes the proof.
\end{proof}

\paragraph{Latent graph estimation.}
Next, we use the encoder estimate $\hat h$ for recovering the latent graph $\mcG$. In Stage~G2 of \Cref{alg:general}, we use the score differences between observational and interventional environments and form $\hat \mcG$ with parent sets 
\begin{equation}\label{eq:general-coupled-parent-set}
    \hat \Pa(m) \triangleq \Big\{i \neq m: \E\Big[\big| \bss_{\hat \bZ}(\hat \bZ; \hat h) - \bss_{\hat \bZ}^{m}(\hat \bZ; \hat h) \big|_i \Big] \neq 0 \Big\} \ , \quad \forall m \in [n] \ .
\end{equation}
\begin{lemma}\label{lm:general-graph-identifiability}
    Given a complete set of coupled interventions that satisfy interventional discrepancy, \Cref{alg:general} output $\hat \mcG$ and true latent DAG $\mcG$ are related through a graph isomorphism.
\end{lemma}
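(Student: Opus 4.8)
The plan is to read \Cref{lm:general-graph-identifiability} off directly from \Cref{lm:general-encoder-coupled} together with the score-difference transformation of \Cref{lm:score-difference-transform-general}. \Cref{lm:general-encoder-coupled} already guarantees that the encoder $\hat h$ returned by \Cref{alg:general} satisfies $\hat\bZ(\bX;\hat h) = \bP_{\mcI}\cdot\phi(\bZ)$ for some component-wise diffeomorphism $\phi = (\phi_1,\dots,\phi_n)$. Setting $f \triangleq \hat h\circ g$, this says $f(\bz) = \bP_{\mcI}\,\phi(\bz)$, i.e.\ $f(\bz)_m = \phi_{I^m}(z_{I^m})$, so the Jacobian is $J_f(\bz) = \bP_{\mcI}\cdot\operatorname{diag}\!\big(\phi_1'(z_1),\dots,\phi_n'(z_n)\big)$. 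Crucially, each $\phi_i$ being a diffeomorphism of $\R$ forces $\phi_i'$ to be nowhere zero, so $J_f(\bz)$ is invertible at \emph{every} $\bz$, with $J_f^{-\top}(\bz) = \bP_{\mcI}\cdot\operatorname{diag}\!\big(1/\phi_1'(z_1),\dots,1/\phi_n'(z_n)\big)$; in particular its $k$-th row $\big[J_f^{-\top}(\bz)\big]_k$ equals $\tfrac{1}{\phi_{I^k}'(z_{I^k})}\,\be_{I^k}^{\top}$.

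Next I would substitute this into the score-difference identity \eqref{eq:score-difference-z-zhat-1}: for $\hat\bz = f(\bz)$,
\[
    \big[\bss_{\hat\bZ}(\hat\bz;\hat h) - \bss_{\hat\bZ}^{m}(\hat\bz;\hat h)\big]_k \;=\; \big[J_f^{-\top}(\bz)\big]_k\cdot\big[\bss(\bz)-\bss^{m}(\bz)\big] \;=\; \frac{1}{\phi_{I^k}'(z_{I^k})}\,\big[\bss(\bz)-\bss^{m}(\bz)\big]_{I^k}\ .
\]
Since $\phi_{I^k}'$ never vanishes, taking absolute values and expectations shows $\E\big[\big|\bss_{\hat\bZ}(\hat\bZ;\hat h) - \bss_{\hat\bZ}^{m}(\hat\bZ;\hat h)\big|_k\big]\neq 0$ if and only if the map $\bz\mapsto [\bss(\bz)-\bss^{m}(\bz)]_{I^k}$ is not almost-everywhere zero, which by \Cref{lm:parent_change_comprehensive}(i) (the hard-intervention case; the interventional-discrepancy hypothesis is what makes the relevant coordinates genuinely change) holds exactly when $I^k\in\overline{\Pa}(I^m)$.

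I would then finish with index bookkeeping: because $\mcI$ is a permutation, $k\neq m$ is equivalent to $I^k\neq I^m$, so the display above reads $k\in\hat\Pa(m) \iff I^k\in\overline{\Pa}(I^m)\setminus\{I^m\} = \Pa(I^m)$ for every $m,k\in[n]$. Hence relabeling the nodes of $\hat\mcG$ by $m\mapsto I^m$ carries its edge set bijectively onto that of $\mcG$, i.e.\ $\hat\mcG$ and $\mcG$ are isomorphic via the permutation $\mcI$, which is the claim.

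The only non-routine step is pinning down the diagonal-times-permutation structure of $J_f$ and, in particular, its invertibility \emph{on all of $\R^n$} — this is where component-wise diffeomorphism (rather than mere bijectivity of $\hat h$) is used, and it is what upgrades the almost-everywhere statement into the clean coordinate-wise equivalence above. A secondary point needing care is invoking interventional discrepancy in exactly the right place: it rules out the degenerate possibility that a hard intervention on $I^m$ leaves the score coordinate at a true parent of $I^m$ (or at $I^m$ itself) unchanged, which would otherwise cause $\hat\mcG$ to drop an edge.
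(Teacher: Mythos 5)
Your proof is correct and follows essentially the same route as the paper: the paper likewise sets $f=\hat h\circ g$, invokes the one-sparse row structure of $J_f^{-1}$ (recorded as \eqref{eq:jacobian-partial-id} in the proof of \Cref{thm:general-partial-identifiability}, which is exactly the permutation-times-diagonal Jacobian you re-derive from \Cref{lm:general-encoder-coupled}), and then applies the score-difference transform together with \Cref{lm:parent_change_comprehensive}(i) to get $k\in\hat\Pa(m)\iff I^k\in\Pa(I^m)$. The only cosmetic difference is that you rebuild the Jacobian structure from the component-wise-diffeomorphism statement rather than citing it directly; your closing remark attributing the nonvanishing of the parent-coordinate score change to interventional discrepancy is slightly misplaced (for hard interventions that iff comes from interventional regularity in \Cref{lm:parent_change_comprehensive}(i); discrepancy is what powers the encoder lemma), but this does not affect the argument.
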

\begin{proof}
    Denote $f = \hat h \circ g$, so we have $\hat \bZ = f(\bZ)$. Using the score transform between $\bZ$ and $\hat \bZ$ and the structure of $J_f^{-1}$ given in \eqref{eq:jacobian-partial-id}, we have 
    \begin{equation}
        \big| \bss_{\hat \bZ}(\hat \bZ; \hat h) - \bss_{\hat \bZ}^{m}(\hat \bZ; \hat h) \big|_i = \Big| \big[J_f^{-1}(\bZ)\big]_{i, I^i} \cdot \big[\bss(\bZ) - \bss^{m}(\bZ) \big]_i \Big| 
    \end{equation} 
    Since $[J_f^{-1}(\bZ)]_{i, I^i} \neq 0$ for all $\bz \in \R^n$, using \Cref{lm:parent_change_comprehensive}(i), we find
    \begin{equation}
        i \in \hat \Pa(m) \;\; \iff \;\; \E\Big[\big[\bss(\bZ) - \bss^{m}(\bZ) \big]_{I^i} \Big] \neq 0  \;\; \iff \;\; I^i \in \Pa(I^m)  ,
    \end{equation}
    which concludes the proof that $\mcG$ and $\hat \mcG$ are related through a graph isomorphism.
\end{proof}

\begin{algorithm}[t]
    \caption{{\bf G}eneralized {\bf S}core-based {\bf Ca}usal {\bf L}atent {\bf E}stimation via {\bf I}nterventions (GSCALE-I)}
    \label{alg:general}
    \begin{algorithmic}[1]
    \State \textbf{Input: } $\mcH$, samples of $\bX$ from environment $\mcE^0$ and coupled environments $\{(\mcE^m,\tilde \mcE^m) : m \in [n]\}$
    \State Compute score functions: $\bss_{\bX}, \bss_{\bX}^{m}$, and $\tilde \bss_{\bX}^{m}$ for all $m\in[n]$.
    \vspace{-0.10in}
    \Statex \hrulefill
    \State \textbf{Stage~G1: Encoder estimation}:  
        \[ \hat h \gets \arg\min\limits_{h} \big\|\bD_{\rm t}(h) - \bI_{n \times n}\big\|_{\rm F}^{2} \, + \, \lambda \, \E\Big[ \big\| h^{-1}(h(\bX)) - \bX \big\|_2^2\Big] \] 
    \State Latent variable estimates: $\hat \bZ = \hat h(\bX)$.
    \vspace{-0.10in}
    \Statex \hrulefill
    \State \textbf{Stage~G2: Latent graph estimation} 
    \State Construct latent DAG estimate $\hat \mcG$ with parent sets
    \[ \hat{\Pa}(m) \triangleq \Big\{i \neq m: \E\Big[\big|\bss_{\hat \bZ}^{m}(\hat \bZ) - \tilde \bss_{\hat \bZ}^{m}(\hat \bZ)\big|_i\Big] \neq 0 \Big\} \ , \quad \forall m \in [n] \ .  \]
    \State \Return $\hat \mcG$, $\hat h$, and $\hat \bZ$
    \end{algorithmic}
\end{algorithm}%

\begin{theorem}[General -- Coupled Environments]\label{th:general-coupled}
    Using observational data and interventional data from two \emph{coupled} hard environments for which the pair $(q_i, \tilde q_i)$ satisfies interventional discrepancy for all $i \in [n]$, suffice to identify (i) the latent DAG $\mcG$ perfectly and (ii) the latent variables $\bZ$ up to componentwise diffeomorphisms specified in \Cref{def:latent-variable-identifiability}. Specifically, \Cref{alg:general} achieves these identifiability guarantees.
\end{theorem}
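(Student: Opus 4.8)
The plan is to obtain \Cref{th:general-coupled} simply by assembling the two structural lemmas about the output of \Cref{alg:general} under coupled hard interventions: \Cref{lm:general-encoder-coupled} for part~(ii) (the latent variables) and \Cref{lm:general-graph-identifiability} for part~(i) (the latent DAG), exactly in the spirit of how \Cref{th:linear-soft} is assembled from its two preceding lemmas. The substantive content sits one level down: the hard-intervention, shared-target case of \Cref{lm:parent_change_comprehensive}(iii) makes the coupled score difference $\bss^{m}-\tilde\bss^{m}$ \emph{one-sparse} — supported only on the intervened coordinate $I^m=\tilde I^m$ — and the change-of-variables identity \eqref{eq:score-difference-z-zhat} then lets this sparsity be pushed through to any candidate encoder. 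So my job in the proof of the theorem itself is just to verify that the global optima of the Stage~G1 objective coincide with the identifiable solutions, and that Stage~G2 reads off the graph correctly from such a solution.

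For part~(ii) I would argue in two steps. First, the minimum value of the Stage~G1 objective in \eqref{eq:general-OPT1} is zero: a scaled-permutation correction $h^{*}$ of the true encoder $g^{-1}$ lies in $\mcH$ and, via \eqref{eq:Dt-multiply-encoder}, satisfies $\bD_{\rm t}(h^{*})=\bI_{n\times n}$ and has zero reconstruction loss. Here I would be careful to note why $\bD_{\rm t}$ is invertible: its support pattern is $\mathds{1}\{\bD_{\rm t}\}=\bP_{\mcI}^{\top}$, and interventional discrepancy forces $[\bss^{m}(\bZ)-\tilde\bss^{m}(\bZ)]_{I^m}\neq 0$ off a null set, so the nonzero entries of $\bD_{\rm t}$, being expected absolute values, are strictly positive. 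Second, the reconstruction penalty forces any zero-loss minimizer $\hat h$ into $\mcH$; for such $\hat h$ we have $[\bD_{\rm t}(\hat h)]_{:,m}=\be_m$ for every $m$, so applying \Cref{thm:general-partial-identifiability} columnwise gives $\hat Z_m=\phi_{I^m}(Z_{I^m})$ for diffeomorphisms $\phi_{I^m}$, and stacking these yields $\hat\bZ(\bX;\hat h)=\bP_{\mcI}\cdot\phi(\bZ)$, i.e. componentwise latent recovery in the sense of \Cref{def:latent-variable-identifiability}. This is exactly \Cref{lm:general-encoder-coupled}.

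For part~(i), set $f\triangleq\hat h\circ g$ so $\hat\bZ=f(\bZ)$. The componentwise structure just established means $J_{f}(\bz)$ is a row permutation of a diagonal matrix with nowhere-vanishing diagonal, hence $J_{f}^{-1}(\bz)$ has exactly one nonzero entry per row, located according to $\bP_{\mcI}$ — this is the structure already recorded in \eqref{eq:jacobian-partial-id}. Feeding this into the score-difference transform \eqref{eq:score-difference-z-zhat-1} between $\mcE^{0}$ and $\mcE^{m}$ collapses the $i$-th coordinate of the estimated score difference to a nowhere-zero multiple of $[\bss(\bZ)-\bss^{m}(\bZ)]_{I^i}$, so the Stage~G2 parent set \eqref{eq:general-coupled-parent-set} obeys $i\in\hat\Pa(m)\iff I^i\in\Pa(I^m)$ by the hard-intervention equivalence \eqref{eq:scorechanges_iff2} in \Cref{lm:parent_change_comprehensive}(i). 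That is precisely a graph isomorphism (through $\mcI$) between $\hat\mcG$ and $\mcG$, i.e. perfect DAG recovery. Combining the two parts gives the theorem, and the appeal to \eqref{eq:score-difference-z-zhat-1} is where observational data enters (as anticipated, it is needed only for the graph).

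The main obstacle is really upstream, not in this theorem: the genuine work is in \Cref{thm:general-partial-identifiability}, where a Jacobian entry that vanishes only off a null set is upgraded to one that vanishes identically via continuity, and in proving \Cref{lm:parent_change_comprehensive}(iii). At the level of \Cref{th:general-coupled} the one point demanding care is the \emph{sequential} coupling of the two stages — Stage~G2's correctness relies on $f=\hat h\circ g$ having a permuted-diagonal Jacobian, which is exactly what part~(ii) supplies; if Stage~G1 only controlled $\bZ$ up to a coarser mixing, the graph argument would break. So I would make sure the full componentwise guarantee (not merely, say, recovery of each intervened coordinate) is invoked before running the graph recovery, and I would explicitly exhibit the oracle $h^{*}$ so that ``\Cref{alg:general} achieves these guarantees'' is not vacuous.
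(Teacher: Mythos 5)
Your proposal is correct and follows essentially the same route as the paper: the theorem is obtained by combining \Cref{lm:general-encoder-coupled} (whose proof exhibits the oracle $h^{*}=\bD_{\rm t}^{\top}\cdot g^{-1}$ and applies \Cref{thm:general-partial-identifiability} columnwise) with \Cref{lm:general-graph-identifiability} (whose proof uses the permuted-diagonal structure of $J_{f}^{-1}$ from \eqref{eq:jacobian-partial-id} together with \Cref{lm:parent_change_comprehensive}(i)). The details you supply for both parts, including the role of interventional discrepancy in making $\bD_{\rm t}$ invertible and the observation that observational data enters only through the graph-recovery stage, match the paper's arguments.
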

\begin{proof}
    Combining \Cref{lm:general-encoder-coupled} and \Cref{lm:general-graph-identifiability} gives the theorem statements.
\end{proof}

We emphasize that a key contribution of \Cref{th:general-coupled} beyond the identifiability result is that it provides a well-defined objective function, a minimizer of which provably recovers the latent variables and the graph up to identifiability guarantees. To our knowledge, this is the first such result for interventional CRL with general transformations.
In comparison, \citet{vonkugelgen2023twohard} also use two hard interventions per node to prove identifiability, albeit without providing a provably correct algorithm. Furthermore, the result in~\citep{vonkugelgen2023twohard} requires that the estimated latent distribution is faithful to the associated candidate graph for all $h \in \mcH$. Although a faithfulness assumption does not compromise the identifiability result, it is a strong requirement to verify and poses challenges to devise recovery algorithms. In contrast, we only require observational data, which is generally accessible in practice.

Next, we shed light on the role of observational data. \Cref{lm:general-encoder-coupled} requires only interventional data for identifying the encoder, whereas \Cref{lm:general-graph-identifiability} uses observational data. We further tighten this result by showing that for DAG recovery, the observational data becomes unnecessary when we have additive noise models and a weak faithfulness condition holds.

\begin{theorem}[No Observational Data]\label{th:general-coupled-no-observational}
Using interventional data from two \emph{coupled} hard environments for which the pair $(q_i,\tilde q_i)$ satisfies interventional discrepancy for all $i \in [n]$, suffices to identify the latent variables $\bZ$ up to component-wise diffeomorphisms. If the latent causal model has additive noise, $p(\bZ)$ is twice differentiable and satisfies the adjacency-faithfulness with respect to $\mcG$, then the latent DAG $\mcG$ is also identifiable. 
\end{theorem}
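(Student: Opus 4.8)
The statement has two parts — identifiability of the latent variables and of the latent DAG — and only the latter is affected by discarding the observational environment, so I would prove them separately. For the latent variables, the key observation is that \emph{no observational data enters Stage~G1 of \Cref{alg:general}}: the objective there depends on the samples only through (i) the score-change matrix $\bD_{\rm t}(h)$, whose entries in \eqref{eq:Dh-entry} are built from the score differences $\bss_{\hat\bZ}^{m}(\cdot;h)-\tilde\bss_{\hat\bZ}^{m}(\cdot;h)$ between the two \emph{coupled} interventional environments, and (ii) the reconstruction term $\E\big[\|h^{-1}(h(\bX))-\bX\|_2^2\big]$, which can be evaluated under any interventional distribution. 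Hence the arguments of \Cref{thm:general-partial-identifiability} and \Cref{lm:general-encoder-coupled} go through verbatim, and the minimizer $\hat h$ satisfies $\hat\bZ(\bX;\hat h)=\bP_{\mcI}\,\phi(\bZ)$ for a componentwise diffeomorphism $\phi$ — i.e., componentwise latent recovery holds using only interventional data with interventional discrepancy, as claimed.

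For the latent DAG, I would replace Stage~G2 (which used observational-vs-interventional score differences) by a procedure driven by \emph{pairs of distinct} interventional environments. Write $f\triangleq \hat h\circ g$. From $\hat\bZ(\bX;\hat h)=\bP_{\mcI}\,\phi(\bZ)$ (\Cref{lm:general-encoder-coupled}), the Jacobian $J_f(\bz)$ is a generalized permutation matrix, so $[J_f^{-1}(\bz)]_{a,b}\neq 0\iff a=I^{b}$ and this entry is nonzero everywhere. Taking $\mcE^{m}$ and $\tilde\mcE^{k}$ with $m\neq k$ — hence distinct targets $I^{m}\neq I^{k}$, since $\mcI$ is a permutation of $[n]$ and every node is intervened — \Cref{lm:score-difference-transform-general} gives, exactly as in \eqref{eq:score-difference-z-zhat}, $\bss_{\hat\bZ}^{m}(\hat\bZ;\hat h)-\tilde\bss_{\hat\bZ}^{k}(\hat\bZ;\hat h)=J_f^{-\top}(\bZ)\big[\bss^{m}(\bZ)-\tilde\bss^{k}(\bZ)\big]$, whose $i$-th coordinate is $[J_f^{-1}(\bZ)]_{I^{i},i}\cdot[\bss^{m}(\bZ)-\tilde\bss^{k}(\bZ)]_{I^{i}}$. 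Because the latent model has additive noise with twice-differentiable $p(\bZ)$ and $I^{m}\neq I^{k}$, the uncoupled case \Cref{lm:parent_change_comprehensive}(iv) applies and yields $\E\big[\big|\bss^{m}(\bZ)-\tilde\bss^{k}(\bZ)\big|_{I^{i}}\big]\neq 0\iff I^{i}\in\Paplus(I^{m})\cup\Paplus(I^{k})$ — this is exactly where the additive-noise and twice-differentiability hypotheses are consumed. Hence, from interventional data alone, we recover (up to the unavoidable relabeling by $\mcI$) the set $\mathcal{U}_{a,b}\triangleq\Paplus(a)\cup\Paplus(b)$ for every pair of nodes $a\neq b$; moreover each single-node hard intervention turns its target into a source of the post-intervention graph (\Cref{fact:hard-indep}), so the marginal dependences in that environment recover the ancestral relations of $\mcG$ under adjacency-faithfulness.

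It then remains to reconstruct $\mcG$ from the family $\{\mathcal{U}_{a,b}\}$. Intersecting, $\bigcap_{b\neq c}\mathcal{U}_{c,b}=\Paplus(c)\cup\big(\bigcap_{b\neq c}\Paplus(b)\big)$, so each $\Paplus(c)$ is obtained exactly except for the possible spurious inclusion of a node $j\notin\Paplus(c)$ that nonetheless lies in $\Paplus(b)$ for every $b\neq c$; for $n\geq 3$ such a $j$ is associated with a unique $c$, so at most one edge $j\to c$ remains in doubt. This residual ambiguity is resolved using adjacency-faithfulness: $j\to c$ is an edge iff $Z_j$ and $Z_c$ fail to be independent given \emph{every} conditioning set, whereas by the Markov property a separating set exists (inside the already-recovered ancestral relations, or by exhaustive search) whenever the edge is absent; these conditional-independence tests are carried out in the environment intervening on $j$, in which the edge $j\to c$ — being outgoing from $j$ — survives the intervention. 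The resulting $\hat\mcG$ then coincides with $\mcG$ up to the relabeling $\mcI$, which together with componentwise latent recovery establishes the theorem. I expect this last step to be the crux: one must verify that ``hub'' nodes are the only obstruction to reading parent sets off the pairwise unions, that the residual edge test is sound, and — most delicately — that each post-intervention distribution inherits adjacency-faithfulness from the hypothesis on $\mcG$, so that the dependence statements used in environment $\mcE^{m}$ detect precisely the edges that survive that intervention; by contrast the score-difference bookkeeping via \Cref{lm:parent_change_comprehensive}(iv) and \eqref{eq:score-difference-z-zhat} is routine.
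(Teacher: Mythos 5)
Your proposal follows the same route as the paper's proof in all essential respects: the latent-variable part is identical (Stage~G1 never touches observational data, so \Cref{lm:general-encoder-coupled} applies verbatim), and for the graph you apply \Cref{lm:parent_change_comprehensive}(iv) to cross-pairs of interventional environments to obtain the pairwise unions $\Paplus(a,b)=\Paplus(a)\cup\Paplus(b)$, then disentangle the individual parent sets by intersection and finish with conditional-independence tests under adjacency-faithfulness --- exactly the paper's plan. The only real difference is how the combinatorial disentangling is organized: the paper case-splits on the number of identified root nodes (via $\mcB_i=\bigcap_{j\neq i}\Paplus(i,j)$ and $\mcB=\{i:|\mcB_i|=1\}$, with cases $|\mcB|\ge 3$, $=2$, $\le 1$), whereas you characterize the spurious elements of $\bigcap_{b\neq c}\mathcal{U}_{c,b}=\Paplus(c)\cup\bigl(\bigcap_{b\neq c}\Paplus(b)\bigr)$ directly as ``hub'' nodes. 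Both versions share the same subtlety, which you flag honestly and the paper leaves implicit: adjacency-faithfulness is invoked for the \emph{interventional} distributions although the hypothesis is stated for $p(\bZ)$.

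One concrete slip: the claim that ``at most one edge $j\to c$ remains in doubt'' is false. Uniqueness of $c$ given $j$ is correct, but two mutually doubtful pairs can coexist. Take $n=3$ with the collider $1\to 3\leftarrow 2$: then $\bigcap_{b\neq 2}\Paplus(b)=\Paplus(1)\cap\Paplus(3)=\{1\}$ while $1\notin\Paplus(2)$, so $1$ is spurious for $2$, and symmetrically $2$ is spurious for $1$; both $1\to2$ and $2\to1$ are in doubt. (An acyclicity argument shows this is the worst case: at most two doubtful edges, necessarily of the mutual form $(j,c)$ and $(c,j)$.) This does not break your argument --- the CI-test resolution applies to each doubtful edge separately, and in the example both tests correctly return ``no edge'' --- but the counting statement needs this correction. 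The remaining steps (transferring \Cref{lm:parent_change_comprehensive}(iv) to $\hat\bZ$ through the generalized-permutation Jacobian, and testing a candidate edge $j\to c$ in the environment that intervenes on $j$, where outgoing edges of $j$ survive) are sound and match the paper.
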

\textit{Proof sketch:} The recovery of latent variables follow from \Cref{lm:general-encoder-coupled}. For the recovery of latent DAG, we leverage \Cref{lm:parent_change_comprehensive}(iv) under the additive noise assumption. Specifically, using the score difference between environments $\mcE^i$ and $\tilde \mcE^j$, we obtain $\overline{\Pa}(i,j)$ for all $i, j \in [n], i\neq j$. Finally, we perform at most $n$ conditional independence tests to identify all parent sets $\{\Pa(i) : i \in [n]\}$. See \Cref{proof:general-coupled-no-observational} for the complete proof.

\subsection{Complete Identifiability under Uncoupled Interventions}\label{sec:general-uncoupled}

In this setting, we consider the case where the interventional environments corresponding to the same nodes are not specified in pairs. That is, not only is it unknown which node is intervened in an environment, but the learner also does not know which two environments intervene on the same node. Hence, additionally, we need to determine the correct \emph{coupling} between the interventional environment sets $\mcE$ and $\tilde \mcE$ as well. However, this is not a straightforward objective since we cannot readily determine the intervention target in an environment. To address this issue, we solve the problems of finding the correct coupling and optimizing the estimated score variations together. Let $\sigma$ denote the permutation that takes $\tilde \mcI = (\tilde I^1,\dots,\tilde I^n)$ to $\mcI = (I^1,\dots,I^n)$, i.e., $\sigma(\tilde I^m) = I^m$ for all $m\in[n]$. We will modify the objective function in \eqref{eq:general-OPT1} to solve for permutation $\pi$ as an estimate of $\sigma$, in addition to solving for encoder $h$. To this end, we first modify the score change matrix in \eqref{eq:Dh-entry} and define $\bD_{\rm t}(h;\pi)$ as 
\begin{equation}
    [\bD_{\rm t}(h;\pi)]_{i,m} \triangleq  \E\Big[\, \big|\bss_{\hat \bZ}^{m}(\hat \bZ;h) - \tilde \bss_{\hat \bZ}^{\pi_m}(\hat \bZ;h)\big|_i \Big] \ , \label{eq:Dh-pi-entry} 
\end{equation}
Unlike the case of coupled interventions, we will also leverage the observational data to identify the encoder. As such, we also define the score change matrices for $\mcE$ and $\tilde \mcE$ with respect to $\mcE$ with entries 
\begin{align}
    [\bD(h)]_{i,m} &\triangleq \E\Big[\big|\bss_{\hat \bZ}(\hat \bZ;h) - \bss_{\hat \bZ}^{m}(\hat \bZ;h)\big|_i \Big] \ , \label{eq:Dh-obs-entry} \\
    [\tilde \bD(h)]_{i,m} &\triangleq \E\Big[\big|\bss_{\hat \bZ}(\hat \bZ;h) - \tilde \bss_{\hat \bZ}^{m}(\hat \bZ;h)\big|_i \Big] \ . \label{eq:Dh-obs-tilde-entry}
\end{align}

\paragraph{Encoder and coupling estimation.} We modify the objective in \eqref{eq:general-OPT1} and solve the following constrained optimization problem for any $\lambda \in \R_+$:
\begin{equation}\label{eq:general-OPT2}
    \begin{aligned}
        \min\limits_{h, \pi} \big\|\bD_{\rm t}(h;\pi) &- \bI_{n \times n}\big\|_{\rm F}^{2} \; + \; \lambda \; \E\Big[ \big\| h^{-1}(h(\bX)) - \bX \big\|_2^2\Big] \\
        \mbox{such that} \quad & \mathds{1}\{\bD(h)\}=   \mathds{1}\{\tilde \bD(h)\} \cdot \bP_{\pi}^{\top} \\
            & \mathds{1}\{\bD(h)\} \odot \mathds{1}\{[\bD(h)]^{\top}\} = \bI_{n \times n} \ .
    \end{aligned}
\end{equation}
Note that for the true encoder $g^{-1}$, both $\bD(g^{-1})$ and $\tilde \bD(g^{-1})$ capture the graph structure via \Cref{lm:parent_change_comprehensive}(i), albeit under different permutations. Hence, this equivalence of graph structures for the learned encoder is ensured by the first constraint of $\mathds{1}\{\bD(h)\}= \bP_{\pi} \cdot \mathds{1}\{\tilde \bD(h)\}$. Also, the acyclicity of the graphs implied by the score differences is ensured by the second constraint, that is $\bD(h)$ does not contain 2-cycles. To prove that solving this problem leads to the identifiability of the latent variables $\bZ$, we first show that there exists a global minimizer of this problem.

\begin{lemma}[Existence]\label{lm:uncoupled-existence}
    Encoder $h^* = \big[\bD_{\rm t}(g^{-1}; \sigma)\big]^{\top} \cdot g^{-1}$ and permutation $\pi^*= \sigma$ minimize the objective in \eqref{eq:general-OPT2}.
\end{lemma}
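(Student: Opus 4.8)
The objective in \eqref{eq:general-OPT2} is a sum of two non-negative terms, so it suffices to exhibit a feasible pair $(h^\star,\pi^\star)$ that drives both terms to zero. The plan is to take $h^\star = [\bD_{\rm t}(g^{-1};\sigma)]^{\top}\cdot g^{-1}$ and $\pi^\star=\sigma$, write $\bM \triangleq \bD_{\rm t}(g^{-1};\sigma)$ for brevity, and proceed in three steps: (i) show $\bM$ is a positively-scaled permutation matrix, so $h^\star\in\mcH$ and the reconstruction term vanishes; (ii) show $\bD_{\rm t}(h^\star;\sigma)=\bI_{n\times n}$, so the first term vanishes; (iii) verify the two feasibility constraints.

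For step (i), I would first note that $\pi^\star=\sigma$ couples the environments, i.e., $\mcE^m$ and $\tilde\mcE^{\sigma_m}$ intervene on the same latent node (namely $I^m$). Then \Cref{lm:parent_change_comprehensive}(iii) applied to this coupled pair gives that $\bss^{m}(\bZ)-\tilde\bss^{\sigma_m}(\bZ)$ is supported only on coordinate $I^m$, and interventional discrepancy forces that coordinate to be nonzero almost everywhere. Hence $\mathds{1}\{\bM\}=\bP_{\mcI}^{\top}$, and every nonzero entry of $\bM$ is strictly positive, being the expectation of an absolute value. Thus $\bM$ is an invertible, positively-scaled permutation matrix, $h^\star=\bM^{\top}g^{-1}$ is an invertible linear map composed with $g^{-1}$, hence lies in $\mcH$ with decoder $(h^\star)^{-1}=g\circ\bM^{-\top}$, and the term $\lambda\,\E[\|(h^\star)^{-1}(h^\star(\bX))-\bX\|_2^2]$ is zero. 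For step (ii), I would extend \eqref{eq:Dt-multiply-encoder} to the $\pi$-dependent score-change matrix: for any positively-scaled permutation matrix $\bA$, applying \eqref{eq:score-multiply-encoder} to each of $\bss^{m}_{\hat\bZ}$ and $\tilde\bss^{\pi_m}_{\hat\bZ}$ and using $|\bA^{-\top}\bv|=\bA^{-\top}|\bv|$ entrywise (valid since $\bA^{-\top}\succcurlyeq\bzero$) yields $\bD_{\rm t}(\bA\cdot h;\pi)=\bA^{-\top}\cdot\bD_{\rm t}(h;\pi)$. Taking $h=g^{-1}$, $\bA=\bM^{\top}$, $\pi=\sigma$ gives $\bD_{\rm t}(h^\star;\sigma)=\bM^{-1}\bD_{\rm t}(g^{-1};\sigma)=\bM^{-1}\bM=\bI_{n\times n}$, so $\|\bD_{\rm t}(h^\star;\sigma)-\bI_{n\times n}\|_{\rm F}^2=0$ and the total objective at $(h^\star,\sigma)$ equals its lower bound $0$.

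For step (iii), since $\hat\bZ(\bX;h^\star)=(h^\star\circ g)(\bZ)=\bM^{\top}\bZ$ with constant Jacobian $\bM^{\top}$, \Cref{corollary:score-difference-transform-linear} gives $\bss_{\hat\bZ}(\hat\bz;h^\star)-\bss^{m}_{\hat\bZ}(\hat\bz;h^\star)=\bM^{-1}(\bss(\bz)-\bss^{m}(\bz))$, and row $i$ of $\bM^{-1}$ is supported on column $I^i$; by \Cref{lm:parent_change_comprehensive}(i) for hard interventions, $[\bD(h^\star)]_{i,m}\neq0 \iff I^i\in\overline{\Pa}(I^m)$. Its diagonal is all-ones, and a $2$-cycle $i\neq m$ would force $I^i\in\Pa(I^m)$ and $I^m\in\Pa(I^i)$, contradicting acyclicity of $\mcG$; hence $\mathds{1}\{\bD(h^\star)\}\odot\mathds{1}\{[\bD(h^\star)]^{\top}\}=\bI_{n\times n}$, the second constraint. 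The identical computation for $\tilde\mcE^{m}$ gives $[\tilde\bD(h^\star)]_{i,m}\neq0 \iff I^i\in\overline{\Pa}(\tilde I^{m})$, and since the coupling permutation satisfies $\tilde I^{\sigma_m}=I^m$ we get $[\mathds{1}\{\tilde\bD(h^\star)\}\cdot\bP_{\sigma}^{\top}]_{i,m}=\mathds{1}\{I^i\in\overline{\Pa}(\tilde I^{\sigma_m})\}=\mathds{1}\{I^i\in\overline{\Pa}(I^m)\}=[\mathds{1}\{\bD(h^\star)\}]_{i,m}$, which is the first constraint. Thus $(h^\star,\sigma)$ is feasible and optimal.

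I expect the main obstacle to be the permutation bookkeeping: confirming that $\pi^\star=\sigma$, as the permutation relating $\tilde\mcI$ and $\mcI$, is \emph{simultaneously} the alignment under which \Cref{lm:parent_change_comprehensive}(iii) applies coordinatewise in step (i) and the permutation that makes $\mathds{1}\{\bD(h^\star)\}=\mathds{1}\{\tilde\bD(h^\star)\}\cdot\bP_{\sigma}^{\top}$ in step (iii). The remaining care points are minor: the identity $|\bA^{-\top}\bv|=\bA^{-\top}|\bv|$ must be invoked only for nonnegative monomial matrices, and all expectations in $\bD_{\rm t}$, $\bD$, $\tilde\bD$ must be understood under the single fixed observational measure so that $\bD_{\rm t}(\bA\cdot h;\pi)=\bA^{-\top}\bD_{\rm t}(h;\pi)$ is an equality of matrices, not merely of sparsity patterns. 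Everything else is a direct assembly of \Cref{lm:parent_change_comprehensive}, \Cref{corollary:score-difference-transform-linear}, and \eqref{eq:score-multiply-encoder}.
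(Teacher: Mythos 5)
Your proposal is correct and follows essentially the same route as the paper's proof: establish that $\bD_{\rm t}(g^{-1};\sigma)$ is a scaled permutation matrix via the coupled-environment sparsity of \Cref{lm:parent_change_comprehensive}(iii), use the left-multiplication identity \eqref{eq:score-multiply-encoder} to get $\bD_{\rm t}(h^*;\sigma)=\bI_{n\times n}$ and a vanishing reconstruction term, and verify the two constraints from the sparsity patterns of $\bD$ and $\tilde\bD$ together with acyclicity. The only cosmetic difference is that you verify the constraints directly for $h^*$ via \Cref{lm:parent_change_comprehensive}(i), whereas the paper records them first for $g^{-1}$ and transports them through the scaled permutation; the two are equivalent.
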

\textit{Proof sketch:} See \Cref{proof:uncoupled-existence} for the complete proof. The main intuition is that $\pi^* = \sigma$ makes the problem more similar to the case of coupled interventions. The proof then follows by using the results of the coupled interventions to show that $\bD_{\rm t}(g^{-1}; \sigma)$ is a scaled permutation matrix, using \eqref{eq:score-multiply-encoder} to derive the relationship between the score differences under $\hat h^*$ and the true encoder $g^{-1}$, and using the sparsity structure of $\bD(g^{-1})$ to show that the constraints are satisfied. 

Next, we show that the objective in \eqref{eq:general-OPT2} can attain its minimum value zero \emph{only} for the correct coupling.
\begin{lemma}[Feasibility]\label{lm:uncoupled-feasibility}
  The optimization problem in \eqref{eq:general-OPT2} attains its minimum value zero only for the correct coupling $\pi = \sigma$.   
\end{lemma}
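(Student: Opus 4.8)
The plan is to argue by contradiction: I will assume the objective in~\eqref{eq:general-OPT2} attains the value $0$ at some pair $(\hat h,\pi)$ with $\pi\neq\sigma$ and derive a violation of the second (acyclicity) constraint. Since the penalized reconstruction term vanishes, $\hat h$ is a valid encoder, so $f\triangleq\hat h\circ g$ is a diffeomorphism onto its image whose Jacobian $J_{f}(\bz)$ is continuous and invertible for every $\bz$; moreover $\bD_{\rm t}(\hat h;\pi)=\bI_{n\times n}$ and both constraints hold. The first step is to transport $\bD_{\rm t}(\hat h;\pi)=\bI_{n\times n}$ into the latent space. Using the score-difference transform \eqref{eq:score-difference-z-zhat} applied to the environment pair $(\mcE^m,\tilde\mcE^{\pi_m})$, namely $\bss^{m}_{\hat\bZ}(\hat\bz;\hat h)-\tilde\bss^{\pi_m}_{\hat\bZ}(\hat\bz;\hat h)=J_{f}^{-\top}(\bz)\,[\bss^{m}(\bz)-\tilde\bss^{\pi_m}(\bz)]$, and writing out the condition $[\bD_{\rm t}(\hat h;\pi)]_{i,m}=0$ for $i\neq m$ exactly as in the proof of \Cref{thm:general-partial-identifiability} (continuity of the Jacobian and full support of $p$), one gets $\sum_{k}[J_{f}^{-1}(\bz)]_{k,i}\,[\bss^{m}(\bz)-\tilde\bss^{\pi_m}(\bz)]_{k}=0$ for all $i\neq m$ and all $\bz$. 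Equivalently, $\bss^{m}(\bz)-\tilde\bss^{\pi_m}(\bz)=c_{m}(\bz)\,[J_{f}(\bz)]_{m,\cdot}^{\top}$ for a continuous scalar function $c_{m}$ which, by $[\bD_{\rm t}(\hat h;\pi)]_{m,m}=1$, is not identically zero.

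The second step uses this identity together with \Cref{lm:parent_change_comprehensive} to read off the support pattern of $J_{f}$. For positions $m$ with $I^{m}=\tilde I^{\pi_m}$, part~(iii) says the latent score difference is supported on $\{I^{m}\}$, so the $m$-th row of $J_{f}(\bz)$ is supported on the single column $I^{m}$; consequently the row $I^{m}$ of $J_{f}^{-1}(\bz)$ is supported on the single column $m$, and the submatrix of $J_{f}^{-1}$ on the complementary rows/columns is invertible. For positions $m$ with $I^{m}\neq\tilde I^{\pi_m}$, part~(iv) — invoking that the latent model is additive-noise and $p(\bZ)$ is twice differentiable — says the latent score difference has support exactly $\overline{\Pa}(I^{m},\tilde I^{\pi_m})\supseteq\{I^{m},\tilde I^{\pi_m}\}$, so the $m$-th row of $J_{f}(\bz)$ has at least two entries that are not identically zero. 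Since $\pi\neq\sigma$, the permutation $m\mapsto m'$ determined by $I^{m'}=\tilde I^{\pi_m}$ has a non-empty set $W$ of non-fixed points, which decomposes into cycles of length at least $2$; pick such a cycle inside $W$ (for a length-$2$ cycle $m_{1}\leftrightarrow m_{2}$ one has the convenient equality $\overline{\Pa}(I^{m_{1}},\tilde I^{\pi_{m_{1}}})=\overline{\Pa}(I^{m_{2}},\tilde I^{\pi_{m_{2}}})\supseteq\{I^{m_{1}},I^{m_{2}}\}$).

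The third step derives the contradiction with $\mathds{1}\{\bD(\hat h)\}\odot\mathds{1}\{[\bD(\hat h)]^{\top}\}=\bI_{n\times n}$, i.e.\ that the directed graph with adjacency $\mathds{1}\{\bD(\hat h)\}$ has no $2$-cycle. Applying the same transform to \eqref{eq:Dh-obs-entry}, $[\bD(\hat h)]_{i,m}\neq0$ iff $\sum_{k}[J_{f}^{-1}(\bz)]_{k,i}\,[\bss(\bz)-\bss^{m}(\bz)]_{k}\not\equiv0$, where $\bss-\bss^{m}$ is supported on $\overline{\Pa}(I^{m})$ with its $I^{m}$-coordinate not identically zero (\Cref{lm:parent_change_comprehensive}(i)). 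Because rows $m_{1}$ and $m_{2}$ of $J_{f}$ both pick up contributions from $Z_{I^{m_{1}}}$ and $Z_{I^{m_{2}}}$, inverting and tracking these contributions through the displayed characterization of $\bD(\hat h)$ yields $[\bD(\hat h)]_{m_{1},m_{2}}\neq0$ and $[\bD(\hat h)]_{m_{2},m_{1}}\neq0$, a $2$-cycle; the first constraint $\mathds{1}\{\bD(\hat h)\}=\mathds{1}\{\tilde\bD(\hat h)\}\bP_{\pi}^{\top}$ can be used to reduce longer cycles to this case, since it forces $\bD(\hat h)$ and $\tilde\bD(\hat h)$ to carry the same mixing pattern. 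This contradicts the second constraint, so no feasible point with $\pi\neq\sigma$ achieves objective value $0$; together with \Cref{lm:uncoupled-existence}, the minimum is zero, it is attained, and only at $\pi=\sigma$.

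\textbf{Main obstacle.} The genuinely delicate part is Step~3: converting ``the $m$-th row of $J_{f}$ has $\ge 2$ non-trivial entries at a mismatched position'' into a concrete $2$-cycle in $\mathds{1}\{\bD(\hat h)\}$. This requires propagating the support structure through the matrix inverse $J_{f}^{-1}$ — the sparse rows coming from the matched positions help, but the block over $W$ is genuinely dense — and ruling out accidental cancellations inside the expectations defining $\bD(\hat h)$, for which one must lean on the full support of $p(\bz)$, the joint continuity of all scores and Jacobian entries, and the \emph{exact} (not merely ``contained-in'') support description in \Cref{lm:parent_change_comprehensive}(iv). The cycle structure of the coupling mismatch is precisely what guarantees that a true $2$-cycle — rather than a longer pattern that the acyclicity constraint could tolerate — is forced to appear.
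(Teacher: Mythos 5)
Your outline — contradiction, transporting $\bD_{\rm t}(\hat h;\pi)=\bI_{n\times n}$ into the latent space, reading off the one-sparse rows/columns of $J_f^{\pm1}$ at matched positions, and then forcing a $2$-cycle in $\mathds{1}\{\bD(\hat h)\}$ — agrees with the paper's strategy, and your Step 1 plus the matched-position half of Step 2 are essentially what the paper does. However, there are two genuine gaps. First, your analysis of mismatched positions invokes \Cref{lm:parent_change_comprehensive}(iv), which is only proved for additive noise models with twice-differentiable $p(\bZ)$; \Cref{th:general-uncoupled} assumes neither, only interventional discrepancy for each pair in $(p_i,q_i,\tilde q_i)$. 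The paper never analyzes $\bss^{m}-\tilde\bss^{\pi_m}$ at a mismatched $m$; it routes the whole argument through the observational-referenced matrices $\bD(h^*)$ and $\tilde\bD(h^*)$, whose entries involve $\bss-\bss^{m}$ and $\bss-\tilde\bss^{m}$ and are controlled by \Cref{lm:parent_change_comprehensive}(i), valid for hard interventions on arbitrary SCMs. As written, your route proves a weaker statement than the lemma requires.

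Second — and this is exactly the step you flag as the ``main obstacle'' — your Step 3 is not actually carried out, and the combinatorial device you propose (cycles of the mismatch permutation, with longer cycles ``reduced'' to $2$-cycles via the first constraint) is neither justified nor the mechanism the paper uses. The paper's key move is to select from the mismatched set $\mcA=\{I^m: I^m\neq\tilde I^{\pi_m}\}$ a node $I^a$ that is a non-descendant in $\mcG$ of every other element of $\mcA$. This gives $\Paplus(I^a)\cap\mcA=\{I^a\}$, so every parent of $I^a$ is a \emph{matched} node whose column of $J_f^{-1}$ is one-sparse by your own Step 2; that is precisely what collapses each sum $\sum_{I^j\in\Paplus(I^a)}[J_f^{-1}(\bZ)]_{I^j,\cdot}\cdot[\,\cdots]_{I^j}$ to a single term and eliminates the cancellation problem you describe. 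Four short computations then show $\E\big[|[J_f^{-1}(\bZ)]_{I^a,a}|\big]\neq0$ and $\E\big[|[J_f^{-1}(\bZ)]_{I^a,b}|\big]\neq0$ (where $I^a=\tilde I^{\pi_b}$ for some $I^b\in\mcA$), hence $[\tilde\bD(h^*)]_{a,b}\neq0$ and $[\bD(h^*)]_{b,a}\neq0$; the first constraint converts the former into $[\bD(h^*)]_{a,b}\neq0$, producing the forbidden $2$-cycle. Without this causal-order selection, the dense block of $J_f^{-1}$ over the mismatched coordinates genuinely obstructs the support-propagation you need, so the proposal does not close as written.
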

\noindent\textit{Proof sketch:} See \Cref{proof:uncoupled-feasibility} for the complete proof. 
The main intuition is that the constraints make it impossible to achieve $\bD_{\rm t}(h;\pi)=\bI_{n \times n}$ for an incorrect coupling $\pi \neq \sigma$. We prove it by contradiction. We assume that $h^*$ is a solution, hence, $\bD_{\rm t}(h^*) = \bI_{n \times n}$, and $\mathds{1}\{\bD(h)\}=\mathds{1}\{\tilde \bD(h)\}$. Then, by scrutinizing the \emph{eldest} node in $\mcG$ with mismatched interventional environments, we show that $\bD(h^*) \odot [\bD(h^*)]^{\top}$ cannot be a diagonal matrix, which contradicts the premise that $h^*$ is a feasible solution of \eqref{eq:general-OPT1}.

Finally, we denote the minimizer encoder of \eqref{eq:general-OPT2} by $\hat h$ and form the graph estimate similarly to the coupled intervention case via \eqref{eq:general-coupled-parent-set}. Combining these results, we present our strongest result for general transformations. 

\begin{theorem}[General -- Uncoupled Environments]\label{th:general-uncoupled}
    Using observational data and interventional data from two \emph{uncoupled} hard environments for which each pair in $(p_i, q_i, \tilde q_i)$ satisfies interventional discrepancy for all $i \in [n]$, suffice to identify (i) the latent DAG $\mcG$ perfectly and (ii) the latent variables $\bZ$ up to component-wise diffeomorphisms.
\end{theorem}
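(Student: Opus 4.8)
The plan is to assemble the theorem from the two structural lemmas already in place, \Cref{lm:uncoupled-existence} and \Cref{lm:uncoupled-feasibility}, and then re-run the coupled-case arguments of \Cref{thm:general-partial-identifiability} and \Cref{lm:general-graph-identifiability}. By \Cref{lm:uncoupled-existence} the constrained program \eqref{eq:general-OPT2} is feasible with optimal value zero, attained at $(h^*,\pi^*)=([\bD_{\rm t}(g^{-1};\sigma)]^\top g^{-1},\,\sigma)$; hence any minimizer $(\hat h,\hat\pi)$ also achieves value zero. The vanishing reconstruction penalty forces $\hat h\in\mcH$, and \Cref{lm:uncoupled-feasibility} forces $\hat\pi=\sigma$, i.e., the estimated coupling is the true one. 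Consequently $\bD_{\rm t}(\hat h;\sigma)=\bI_{n\times n}$, so $[\bD_{\rm t}(\hat h;\sigma)]_{:,m}=\be_m$ for every $m\in[n]$. Note that, in contrast to the coupled setting, the observational score-change matrices $\bD(h)$ and $\tilde\bD(h)$ enter through the constraints of \eqref{eq:general-OPT2}, which is precisely why observational data is needed here.

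Next I would deduce latent-variable identifiability exactly as in \Cref{thm:general-partial-identifiability}. Fix $m$, let $\ell=I^m$ be the common intervention target of the now-correctly-paired environments $\mcE^m$ and $\tilde\mcE^{\hat\pi_m}$, and set $f\triangleq\hat h\circ g$ so that $\hat\bZ=f(\bZ)$. Combining the one-sparsity of $\bss^m(\bZ)-\tilde\bss^{\hat\pi_m}(\bZ)$ from \Cref{lm:parent_change_comprehensive}(iii), the score-difference transformation \eqref{eq:score-difference-z-zhat}, and the interventional discrepancy between $q_\ell$ and $\tilde q_\ell$, the identity $[\bD_{\rm t}(\hat h;\sigma)]_{:,m}=\be_m$ yields $[J_f^{-1}(\bz)]_{\ell,i}=0$ for all $i\neq m$ and all $\bz\in\R^n$, while invertibility of $J_f^{-1}(\bz)$ gives $[J_f^{-1}(\bz)]_{\ell,m}\neq 0$. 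Since $m\mapsto I^m$ is a permutation of $[n]$, the map $f$ is therefore componentwise up to a permutation, i.e., $\hat\bZ(\bX;\hat h)=\bP_{\mcI}\cdot\phi(\bZ)$ for a componentwise diffeomorphism $\phi=(\phi_1,\dots,\phi_n)$. This establishes claim (ii).

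For claim (i) I would build $\hat\mcG$ with parent sets $\hat\Pa(m)\triangleq\{i\neq m:\E[|\bss_{\hat\bZ}(\hat\bZ;\hat h)-\bss_{\hat\bZ}^m(\hat\bZ;\hat h)|_i]\neq 0\}$ as in \eqref{eq:general-coupled-parent-set}. Using the structure of $J_f^{-1}$ from \eqref{eq:jacobian-partial-id} and the fact that the entries $[J_f^{-1}(\bz)]_{i,I^i}$ are nowhere zero, the $i$-th coordinate of the observational-to-interventional score difference of $\hat\bZ$ equals a nonvanishing multiple of $[\bss(\bZ)-\bss^m(\bZ)]_{I^i}$; invoking \Cref{lm:parent_change_comprehensive}(i) then gives $i\in\hat\Pa(m)\iff I^i\in\Pa(I^m)$, so $\hat\mcG$ is isomorphic to $\mcG$ under the relabeling $\mcI$. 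This is the argument of \Cref{lm:general-graph-identifiability}, now valid because the minimizer $\hat h$ has the required Jacobian structure.

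The main obstacle is entirely inside \Cref{lm:uncoupled-feasibility}: excluding a spurious pair $(h,\pi)$ with $\pi\neq\sigma$ that still drives \eqref{eq:general-OPT2} to zero. The difficulty is that matching the columns of $\bD_{\rm t}(h;\pi)$ to standard basis vectors no longer determines $h$ once $\pi$ is free, so one must exploit the two side constraints, namely $\mathds{1}\{\bD(h)\}=\mathds{1}\{\tilde\bD(h)\}\cdot\bP_\pi^\top$ (the two candidate graphs agree up to $\bP_\pi$) and $\mathds{1}\{\bD(h)\}\odot\mathds{1}\{[\bD(h)]^\top\}=\bI_{n\times n}$ (no $2$-cycles). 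The argument is by contradiction: assuming such a pair exists, one locates the topologically eldest node whose two interventional environments are mismatched under $\pi$ and shows the Hadamard product $\mathds{1}\{\bD(h)\}\odot\mathds{1}\{[\bD(h)]^\top\}$ acquires an off-diagonal entry, contradicting acyclicity. Everything downstream — the two paragraphs above — is a routine re-run of the coupled-case reasoning, the only care being the bookkeeping between the environment permutation $\pi$ and the node permutation $\sigma$.
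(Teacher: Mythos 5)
Your proposal is correct and follows essentially the same route as the paper: establish existence of a zero-value minimizer via \Cref{lm:uncoupled-existence}, force $\hat\pi=\sigma$ via \Cref{lm:uncoupled-feasibility}, and then reduce to the coupled-environment results (\Cref{lm:general-encoder-coupled}/\Cref{thm:general-partial-identifiability} for the latent variables and \Cref{lm:general-graph-identifiability} for the graph). The only difference is that you explicitly re-derive the coupled-case Jacobian structure where the paper simply cites \Cref{th:general-coupled}, which is a matter of exposition rather than substance.
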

\begin{proof}
\Cref{lm:uncoupled-feasibility} shows that minimizers of the objective in \eqref{eq:general-OPT2} necessarily have $\pi = \sigma$. Under this correct coupling, any minimizer of the constrained optimization problem in \eqref{eq:general-OPT2} that attains the minimum value zero is also a minimizer of the unconstrained optimization problem in \eqref{eq:general-OPT1}. Finally, \Cref{lm:general-encoder-coupled} and \Cref{lm:uncoupled-existence} show that such a shared minimizer exists, that is $h^* = [\bD_{\rm t}(g^{-1})]^{\top} \cdot g^{-1}$. Hence, by \Cref{th:general-coupled}, minimizer $\hat h$ of \eqref{eq:general-OPT2} satisfies component-wise latent recovery. Similarly, using $\hat h$, proof of the perfect graph recovery follows from \Cref{lm:general-graph-identifiability}.
\end{proof}

\Cref{th:general-uncoupled} shows that using observational data enables us to resolve any mismatch between the uncoupled environment sets and shows identifiability in the setting of uncoupled environments. This generalizes the identifiability result of \citet{vonkugelgen2023twohard}, which requires coupled environments. 

\paragraph{Comparison to \Cref{th:general-coupled}.}
We note that \Cref{th:general-uncoupled} requires slightly stronger interventional discrepancy conditions than \Cref{th:general-coupled}. In particular, when environments are coupled, we only need $(q_i,\tilde q_i)$ to satisfy the interventional discrepancy. On the other hand, to find the correct coupling while performing CRL, \Cref{th:general-uncoupled} requires each of the pairs $\{(q_i,\tilde q_i),(p_i,q_i), (p_i,\tilde q_i)\}$ to satisfy interventional discrepancy.

\begin{remark}\label{remark:achievability}
    For the nonparametric identifiability results, having an oracle that solves the functional optimization problems \eqref{eq:general-OPT1}~and~\eqref{eq:general-OPT2} is sufficient. Solving these two problems in their most general form requires calculus of variations. These two problems, however, for any desired parameterized family of functions $\mcH$ (e.g., linear, polynomial, and neural networks), reduce to parametric optimization problems.
\end{remark}

\subsection{Intervention Extrapolation without CRL}\label{sec:intervention-extrapolation}

So far, we have studied learning the latent causal representations using the data from single-node interventional environments. A related research problem is extrapolating to unseen combinations of interventions. Namely, given a set of interventions $\mcI = \{I^1,\dots,I^k\}$, it is desired to emulate the data from an unseen combination of these interventions, e.g., accessing multi-node interventional data using only the single-node interventional data. This is especially important in domains where interventions can be costly or not viable, e.g., not all combinations of different drugs can be clinically tested. Causal representation learning literature has also taken an interest in this problem. Specifically, \citet[Theorem 3]{zhang2023identifiability} show for polynomial transformations that given interventions $\mcI = \{I^1,\dots,I^k\}$, one can sample from any intervention $I \subseteq \mcI$ \emph{after} learning the latent representations. We argue that extrapolation to unseen combinations of interventions can be achieved on the observed space \emph{without performing CRL} for the general transformations.

Consider two single-node interventional environments, $\mcE^1$ and $\mcE^2$. Without loss of generality, suppose that $I^1 = \{1\}$ and $I^2 = \{2\}$. Also consider the observational environment $\mcE^0$ with $I^0 = \emptyset$ and the \emph{unseen} double-node interventional environment $\mcE^{m}$ with $I^{m} = \{1,2\}$. First, using the score function decompositions in \eqref{eq:s_z_decompose_obs} and \eqref{eq:s_z_decompose_int}, we have
\begin{align}
    \bss(\bz) &= \nabla_{\bz} \log p_1(z_1 \mid \bz_{\Pa(1)}) + \nabla_{\bz} \log p_2(z_2 \mid \bz_{\Pa(2)}) + \sum_{k=3}^{n} \nabla_{\bz}\log p_k(z_k \mid \bz_{\Pa(k)}) \ , \\
    \bss^1(\bz) &= \nabla_{\bz} \log q_1(z_1 \mid \bz_{\Pa(1)}) + \nabla_{\bz} \log p_2(z_2 \mid \bz_{\Pa(2)}) + \sum_{k=3}^{n} \nabla_{\bz}\log p_k(z_k \mid \bz_{\Pa(k)}) \ , \\
    \bss^2(\bz) &= \nabla_{\bz} \log p_1(z_1 \mid \bz_{\Pa(1)}) + \nabla_{\bz}\log q_2(z_2 \mid \bz_{\Pa(2)}) + \sum_{k=3}^{n} \nabla_{\bz}\log p_k(z_k \mid \bz_{\Pa(k)}) \ , \\ 
    \bss^m(\bz) &= \nabla_{\bz} \log q_1(z_1 \mid \bz_{\Pa(1)}) + \nabla_{\bz} \log q_2(z_2 \mid \bz_{\Pa(2)}) + \sum_{k=3}^{n} \nabla_{\bz} \log p_k(z_k \mid \bz_{\Pa(k)}) \ .
\end{align}
Then, we have
\begin{align}\label{eq:dsz-extrapolation}
    \big(\bss^1(\bz) - \bss(\bz)\big) + \big(\bss^2(\bz) - \bss(\bz)\big) = \bss^m(\bz) - \bss(\bz) \ .
\end{align}
Next, recall that \Cref{lm:score-difference-transform-general} gives us the relationship for going from the latent score differences to observed score differences. Applying it to observed $\bX$ in environment pairs $(\mcE^1,\mcE^0)$, $(\mcE^2,\mcE^0)$, and $(\mcE^m,\mcE^0)$, and using \eqref{eq:dsz-extrapolation} we obtain
\begin{align}
    \big(\bss^{1}_{\bX}(\bx) - \bss_{\bX}(\bx)\big) + \big(\bss^{2}_{\bX}(\bx) - \bss_{\bX}(\bx)\big)  &= \Big[[J_g(\bz)]^\dag\Big]^{\top} \cdot \Big[ \big(\bss^1(\bz) - \bss(\bz)\big) + \big(\bss^2(\bz) - \bss(\bz)\big) \Big] \\ 
    &= \Big[[J_g(\bz)]^\dag\Big]^{\top} \big( \bss^m(\bz) - \bss(\bz) \big) \\ 
    \overset{\eqref{eq:score-latent-to-observed}}&{=} \bss^m_{\bX}(\bx) - \bss_{\bX}(\bx) \ . \label{eq:dsx-extrapolation}
\end{align}
This means that, given score functions of observed data from environments $\mcE^0$, $\mcE^1$, and $\mcE^2$, we can obtain the score function of the unseen interventional environment with $I^m = \{I^1, I^2\}$, and subsequently generate data from this synthetic environment (using techniques like Langevin sampling \citep{welling2011bayesian} that uses score function as the drift vector field). Note that this process does not require learning the latent causal representations and can be applied to obtain the observed score function $\bss_{\bX}^m$ for any intervention $I^m \subset \mcI$.

Finally, we note that concurrent work by \citet{jain2024automated} makes a similar observation for the extrapolation of two single-node interventions without performing CRL. In particular, \citep{jain2024automated} focuses on detecting pairwise interactions between two biological perturbations (i.e., interventions), investigates the conditions under which two perturbations are \emph{separable}, e.g., distinct single-node interventions, and designs statistical tests for the separability of two perturbations. If the two perturbations are confirmed to be separable, our results indicate that we can use their score functions to generate synthetic environments that emulate combined perturbation effects.

\section{Empirical Studies}\label{sec:experiments}

In this section, we provide empirical assessments of our theoretical guarantees. Specifically, we empirically evaluate the performance of the LSCALE-I (\Cref{sec:experiments-linear}) and GSCALE-I (\Cref{sec:experiments-general}) algorithms for recovering the latent causal variables $\bZ$ and the latent DAG $\mcG$ on synthetic data. In \Cref{sec:experiments:comparisons}, we compare the performance of LSCALE-I to that of the existing algorithms in the closely related literature on both synthetic and biological data.
Next, we also apply GSCALE-I on image data to demonstrate the potential of our approach in realistic high-dimensional datasets (\Cref{sec:experiments-image}). Finally, we note that any desired score estimator can be modularly incorporated into our algorithms. In \Cref{sec:experiments-sensitivity}, we assess our performance's sensitivity to the estimators' quality.~\footnote{The codebase for the algorithms and simulations is available at: \\ \url{https://github.com/acarturk-e/score-based-crl}.} Additional results and further implementation details are deferred to \Cref{appendix:experiments}.

\paragraph{Evaluation metrics.} The objectives are recovering the graph $\mcG$ and the latent variables $\bZ$. We use the following metrics to evaluate the accuracy of LSCALE-I and GSCALE-I in recovering these (depending on the specifics of the transformations and interventions, we will also have more specific metrics). For each metric, we will report the mean and standard error over multiple runs.
\begin{itemize}[leftmargin=2em]
    \item {\bf Structural Hamming distance:} For assessing the recovery of the latent DAG, we report structural Hamming distance (SHD) between the estimate $\hat \mcG$ and true DAG $\mcG$. This captures the number of edge operations (add, delete, flip) needed to transform $\hat \mcG$ to $\mcG$. \looseness=-1
    
    \item {\bf Mean correlation coefficient:} For the recovery of the latent variables, we use the mean correlation coefficient (MCC), which was introduced in~\citep{khemakhem2020ice} and commonly used as a standard metric in CRL. Specifically, MCC measures the linear correlation between the estimated and ground-truth latent variables. Since the recovery of latent variables is up to permutations, it is reported for the best matching permutation between the components of $\hat \bZ$ and $\bZ$, i.e.,
    \begin{equation}\label{eq:MCC}
        {\rm MCC}(\bZ,\hat \bZ) \triangleq \max_{\pi} \frac{1}{n}\sum_{i \in [n]} {\rm corr}(Z_i, \hat Z_{\pi(i)}) \ .
    \end{equation}
\end{itemize}

\paragraph{Score functions.}
LSCALE-I and GSCALE-I algorithms, in their first steps, compute estimates of the score differences in the observational environment. The designs of our algorithms are agnostic to how this is performed, i.e., any reliable method for estimating the score differences can be adopted and incorporated into our algorithms in a modular way. In our experiments, we adopt two score estimators necessary for describing the different aspects of our results. 
\begin{itemize}[leftmargin=2em]
    \item {\bf Perfect score oracle for identifiability:} Identifiability, by definition, refers to the possibility of recovering the causal graph and latent variables under all idealized assumptions for the data. Assessing the identifiability guarantees formalized in Theorems~\ref{th:linear-soft}--\ref{th:general-uncoupled} requires using \emph{perfect} estimates for the score differences. Hence, we adopt a perfect score oracle for evaluating identifiability. Specifically, we use a perfect score oracle that computes the score difference inputs in LSCALE-I and GSCALE-I by leveraging \Cref{lm:score-difference-transform-general} and using the ground truth score functions $\bss,\bss^m$ and $\tilde \bss^m$ (see \Cref{appendix:experiments-linear} for details).

    \item {\bf Data-driven score estimates:} For evaluating the accuracy of our algorithms in practice, we need real score estimates, which are inevitably noisy.
    For this purpose, when the pdf $p_\bX$ has a parametric form and the score function $\bss_{\bX}$ has a closed-form expression, then we can estimate the parameters to form an estimated score function. For instance, when $\bX$ follows a linear Gaussian distribution, then we have $\bss_{\bX}(\bx)= - \Theta \cdot \bx$ in which $\Theta$ is the precision matrix of $\bX$ and can be estimated from samples of $\bX$. In other cases in which $\bss_{\bX}$ does not have a known closed-form, we adopt nonparametric score estimators. In particular, we use sliced score matching with variance reduction (SSM-VR) for score estimation due to its efficiency and accuracy for downstream tasks~\citep{song2020sliced}. We also introduce a classification-based score difference estimation method, inspired by \citet[Section~2.1]{gutmann2012noise}. The key observation is that given two distributions $p$ and $q$, the optimal minimum cross-entropy classifier for distinguishing the samples from two distributions is the log density ratio function $\log p/q$. The difference between the score functions of these distributions, $\nabla \log p - \nabla \log q = \nabla \log p/q$ is exactly the gradient of the learned function, which enables us to estimate score differences using a classifier directly.
\end{itemize}

\subsection{LSCALE-I Algorithm for Linear Transformations}\label{sec:experiments-linear}

\paragraph{Data generation.}
To generate $\mcG$, we use Erd{\H o}s-R{\' e}nyi model with density $0.5$ and $n \in \{5,8\}$ nodes, which is generally the size of the latent graphs considered in CRL literature. We consider the observed dimension $d=100$ and generate 100 latent graphs. For the causal mechanisms, we adopt both linear and nonlinear models:
\begin{enumerate}[leftmargin=*]
    \item \textbf{Linear causal model:} We adopt the linear Gaussian model with 
    \begin{equation}
    Z_i = \bA_i \cdot \bZ + N_i \ , \qquad \forall i \in [n] \ , \label{eq:linear-SEM--experiments}
    \end{equation}
    where $\bA_i \in \R^{1 \times n}$ are the rows of the weight matrix $\bA$ in which $\bA_{i,j}\neq 0$ if and only $j \in \Pa(i)$. The nonzero edge weights are sampled from ${\rm Unif}(\pm[0.5, 1.5])$, and the noise terms are zero-mean Gaussian variables with variances $\sigma_{i}^2$ sampled from ${\rm Unif}([0.5, 1.5])$. For node $i$, a hard intervention is given by $Z_i = \bar {N}_i$ where $\bar N_i \sim \mcN(0,\frac{\sigma_i^2}{4})$, and a soft intervention is given by $Z_i =\frac{\bar \bA_{i}}{2} \cdot \bZ + N_{i}$.

    \item \textbf{Quadratic causal model:} We adopt an additive noise model with
    \begin{align}
    Z_i =\sqrt{\bZ_{\Pa(i)}^{\top} \cdot \bQ_{i} \cdot \bZ_{\Pa(i)}} + N_{i}\ , \label{eq:quadratic-model--general-experiments}
    \end{align}
    where $\{\bQ_{i}:i\in[n]\}$ are positive-definite matrices, and the noise terms are zero-mean Gaussian variables with variances $\sigma_{i}^2$ sampled randomly from ${\rm Unif}([0.5,1.5])$. For node $i$, a hard intervention is given by $Z_i = \bar {N}_i$ where $\bar N_i \sim \mcN(0,5 \cdot \sigma_i^2)$ to ensure a non-negligible change. A soft intervention is given by $$Z_i =\frac{1}{2}\sqrt{\bZ_{\Pa(i)}^{\top} \cdot \bQ_{i} \cdot \bZ_{\Pa(i)}} + \bar N_{i}\ .$$

    \item \textbf{Multi-layer perceptron (MLP) causal model}: We adopt an additive noise model with $Z_i = f_i(\bZ_{\Pa(i)}) + N_i$ where $f_i$ is parameterized as a randomly initialized two-layer MLP, with hidden dimension 32 and ReLU activation function. For node $i$, a hard intervention is given by $Z_i = \bar N_i$ where $\mcN(0,5 \cdot \sigma_i^2)$, and a soft intervention is given by $Z_i =\frac{1}{2} f_i(\bZ_{\Pa(i)}) + \bar N_{i}$.
\end{enumerate}
For each graph, we sample $n_{\rm s}$ independent and identically distributed (i.i.d.) samples of $\bZ$ from each environment. We consider $n_{\rm s} \in \{5000, 10000, 50000 \}$ to investigate the effect of the number of samples on the performance of LSCALE-I. The observed variables $\bX$ are generated according to $\bX = \bG \cdot \bZ$, in which $\bG \in \R^{d \times n}$ is a randomly sampled full-rank matrix.

\subsubsection{Hard Interventions}\label{sec:experiments-linear-hard}

\Cref{th:linear-hard} ensures scaling consistency and perfect DAG recovery under hard interventions for linear transformations. As such, we assess the recovery of the latent DAG by the SHD between the estimate $\hat \mcG$ and true graph $\mcG$. For latent variable recovery, we report MCC between $\hat \bZ$ and $\bZ$. 
Note that $\hat \bZ = (\bH \cdot \bG) \cdot \bZ $ implies that the effective transform recovery can also be measured by the closeness of $\bH \cdot \bG$ to the identity matrix. Therefore, in addition to MCC, we also report the normalized \emph{effective transform error}, defined as
\begin{equation}\label{eq:norm_scale_error}
    \ell_{\rm scale} \triangleq \| \bH \cdot \bG - \bI_{n \times n} \|_2 \ .
\end{equation}
\Cref{tab:lscalei-linear-hard} shows the performance of the LSCALE-I algorithm using perfect scores and noisy scores under hard interventions on linear causal models. The first observation is that we achieve excellent performance in latent variable recovery, as demonstrated by a perfect MCC and nearly zero effective transform error $\ell_{\rm scale}$, even when using noisy score estimates. For graph recovery, SHD between the estimated and true latent graphs reduces to less than 0.1 even when using noisy scores given enough samples, e.g., $n_s=50000$ in \Cref{tab:lscalei-linear-hard}. 
Another key observation is that the results remain consistent while the dimension of observed variables increases from $d=25$ to $d=200$, as shown in \Cref{tab:lscalei-linear-hard-n5-varyd}. This confirms our analysis that the performance of LSCALE-I is agnostic to the dimension of the observations. Hence, we suffice by using $d=100$ for the rest of the experiments.

\begin{table}[t]
    \centering
        \footnotesize
        \caption{LSCALE-I for a linear causal model with \textbf{one hard} intervention per node (varying~$n_s$)}
        \label{tab:lscalei-linear-hard}
        \begin{tabular}{ccc|ccc|ccc}
            \toprule
              & & & \multicolumn{3}{c|}{\bf perfect scores} & \multicolumn{3}{c}{\bf noisy scores} \\
             $n$ & $d$ & $n_{\rm s}$ & MCC & $ \ell_{\rm scale}$  & ${\rm SHD}(\mcG,\hat \mcG) $ & MCC & $ \ell_{\rm scale}$ & ${\rm SHD}(\mcG,\hat \mcG) $ \\
             \midrule
            $5$ & $100$ & $5000$ & $1.00 \pm 0.00$ & $0.02 \pm 0.00$ & $0.17 \pm 0.04$ & $1.00 \pm 0.00$ & $0.06 \pm 0.01$ & $0.11 \pm 0.03$  \\
            $5$ & $100$ & $10000$ & $1.00 \pm 0.00$ & $0.01 \pm 0.00$ & $0.09 \pm 0.03$ & $1.00 \pm 0.00$ & $0.04 \pm 0.00$ & $0.09 \pm 0.04$  \\
            $5$ & $100$ & $50000$ & $1.00 \pm 0.00$ & $0.01 \pm 0.00$ & $0.03 \pm 0.01$ & $1.00 \pm 0.00$ & $0.03 \pm 0.00$ & $0.07 \pm 0.03$  \\
            \midrule
            $8$ & $100$ & $5000$ & $1.00 \pm 0.00$ & $0.03 \pm 0.00$ & $0.03 \pm 0.02$ & $1.00 \pm 0.00$ & $0.08 \pm 0.00$ & $0.46 \pm 0.08$  \\
            $8$ & $100$ & $10000$ & $1.00 \pm 0.00$ & $0.02 \pm 0.00$ & $0.06 \pm 0.02$ & $1.00 \pm 0.00$ & $0.06 \pm 0.00$ & $0.18 \pm 0.04$  \\
            $8$ & $100$ & $50000$ & $1.00 \pm 0.00$ & $0.01 \pm 0.00$ & $0.02 \pm 0.01$ & $1.00 \pm 0.00$ & $0.03 \pm 0.00$ & $0.04 \pm 0.02$  \\
            \bottomrule
        \end{tabular}
\end{table}
\begin{table}[t]
    \centering
        \footnotesize
        \caption{LSCALE-I for a linear causal model with \textbf{one hard} intervention per node (varying~$d$)}
        \label{tab:lscalei-linear-hard-n5-varyd}
        \begin{tabular}{ccc|ccc|ccc}
            \toprule
              & & & \multicolumn{3}{c|}{\bf perfect scores} & \multicolumn{3}{c}{\bf noisy scores} \\
             $n$ & $d$ & $n_{\rm s}$ & MCC & $ \ell_{\rm scale}$  & ${\rm SHD}(\mcG,\hat \mcG) $ & MCC & $ \ell_{\rm scale}$ & ${\rm SHD}(\mcG,\hat \mcG) $ \\
             \midrule
            $5$ & $25$ & $10000$ & $1.00 \pm 0.00$ & $0.01 \pm 0.00$ & $0.28 \pm 0.05$ & $1.00 \pm 0.00$ & $0.03 \pm 0.00$ & $0.03 \pm 0.02$  \\
            $5$ & $50$ & $10000$ & $1.00 \pm 0.00$ & $0.01 \pm 0.00$ & $0.23 \pm 0.04$ & $1.00 \pm 0.00$ & $0.03 \pm 0.00$ & $0.04 \pm 0.02$  \\
            $5$ & $100$ & $10000$ & $1.00 \pm 0.00$ & $0.01 \pm 0.00$ & $0.06 \pm 0.02$ & $1.00 \pm 0.00$ & $0.04 \pm 0.00$ & $0.09 \pm 0.03$  \\
            $5$ & $200$ & $10000$ & $1.00 \pm 0.00$ & $0.01 \pm 0.00$ & $0.07 \pm 0.03$ & $1.00 \pm 0.00$ & $0.05 \pm 0.00$ & $0.16 \pm 0.04$  \\
            \bottomrule
        \end{tabular}
\end{table}

\begin{table}[t]
    \centering
        \small
        \caption{LSCALE-I for a quadratic causal model with  \textbf{one hard} intervention per node ($n_{\rm s}=50000$). }
        \label{tab:lscalei-quadratic-hard-n5}
        \begin{tabular}{cc|ccc|ccc}
            \toprule
              & &  \multicolumn{3}{c|}{\bf perfect scores} & \multicolumn{3}{c}{\bf noisy scores} \\
             $n$ & $d$ & MCC & $\ell_{\rm scale}$ & ${\rm SHD}(\mcG,\hat \mcG) $ & MCC &  $\ell_{\rm scale}$ & ${\rm SHD}(\mcG,\hat \mcG) $ \\
            \midrule
             $5$ & $100$ & $1.00 \pm 0.00$ & $0.01 \pm 0.00$ & $0.03 \pm 0.02$ & $0.93 \pm 0.01$ & $0.69 \pm 0.02$ & $2.62 \pm 0.20$  \\
             \bottomrule
        \end{tabular}
\end{table}

\paragraph{Causal models with more complex score functions.}
Next, we test the performance of LSCALE-I on quadratic and MLP causal models specified earlier.  A difference in this setting compared to linear causal models is that we estimate the score functions of the observed variables using SSM-VR~\citep{song2020sliced} as $p_\bX$ is not amenable to parameter estimation. 
For quadratic causal models, \Cref{tab:lscalei-quadratic-hard-n5} shows that using perfect scores, LSCALE-I performs nearly perfectly. Under noisy scores, an MCC of 0.93 indicates a strong performance for the latent variable recovery. The graph recovery performance suffers more from noisy score estimates, yet it remains reasonable with an approximate SHD of $2.62$ (where the expected number of true edges is $5$). \Cref{tab:lscalei-mlp-hard-n5} shows a similar trend, where perfect scores ensure perfect performance, whereas noisy score estimates lead to a degradation in graph recovery performance.

\paragraph{Effect of noisy score estimates on graph recovery under noisy scores.}
\Cref{tab:lscalei-quadratic-hard-n5} and \Cref{tab:lscalei-mlp-hard-n5} show that graph recovery performance suffers more than the latent variable recovery under noisy score estimates. This discrepancy is due to the difficulty of applying \eqref{eq:linear-hard-parent-set} under noisy scores. Specifically, we observe that the nonzero entries in true score differences (entries of $\bsd^m$) can be small, especially for MLP causal models. As such, under noisy score estimates, it becomes more difficult to successfully threshold the empirical score difference quantities to identify the edges of the latent graph. In \Cref{sec:experiments-sensitivity}, we provide further empirical evaluations on the effect of score estimation quality on the performance of LSCALE-I.

\begin{table}[t]
    \centering
        \small
        \caption{LSCALE-I for an MLP causal model with  \textbf{one hard} intervention per node ($n_{\rm s}=~50000$). }
        \label{tab:lscalei-mlp-hard-n5}
        \begin{tabular}{cc|ccc|ccc}
            \toprule
              & &  \multicolumn{3}{c|}{\bf perfect scores} & \multicolumn{3}{c}{\bf noisy scores} \\
             $n$ & $d$ & MCC & $\ell_{\rm scale}$ & ${\rm SHD}(\mcG,\hat \mcG) $ & MCC &  $\ell_{\rm scale}$ & ${\rm SHD}(\mcG,\hat \mcG) $ \\
            \midrule
             $5$ & $100$ & $1.00 \pm 0.00$ & $0.03 \pm 0.00$ & $0.01 \pm 0.01$ & $0.94 \pm 0.01$ & $0.62 \pm 0.02$ & $4.27 \pm 0.20 $  \\
             \bottomrule
        \end{tabular}
\end{table}

\subsubsection{Soft Interventions}\label{sec:experiments-linear-soft}
To evaluate soft interventions, we first consider linear causal models. In this setting, \Cref{th:linear-soft} ensures the recovery of the latent variables and the latent DAG up to ancestors when using soft interventions. So, we report the SHD between the transitive closure of the estimate $\hat \mcG_{\rm tc}$ and that of the true graph $\mcG_{\rm tc}$.
Recall that recovery of latent variables for generic soft interventions is guaranteed up to mixing with parents. To measure the accuracy of $\hat \bZ = \bH \cdot \bG \cdot \bZ$ with respect to this guarantee, we define $\bL_{\rm pa}$ with entries $[\bL_{\rm pa}]_{i,j} = \mathds{1}\{j \in \Paplus(i)\}$, and define $\ell_{\rm pa}$ by
\begin{equation}\label{eq:norm_parents_error}
    \ell_{\rm pa} \triangleq \| \bH \cdot \bG \odot  (\boldsymbol{1}_{n \times n} - \bL_{\rm pa}) \|_2 \ ,
\end{equation}
which effectively measures the effect of incorrect mixing in estimated latent variables.
\Cref{tab:lscalei-linear-soft} shows that by using perfect scores from as few as $n_{\rm s}=5000$ samples, we meet the theoretical identifiability guarantees, implied by zero values of SHD and incorrect mixing norm $\ell_{\rm pa}$. Similar to the case of hard interventions, we observe that increasing the number of samples improves the performance under noisy scores. For instance, given $n_{\rm s}=50000$ samples, the average SHD between the transitive closures $\hat \mcG_{\rm tc}$ and $\mcG_{\rm tc}$ is approximately $0.5$ for $n=8$, where graph density being $0.5$ implies that the expected number of edges is $14$. Also, the latent variable recovery becomes near perfect, indicated by MCC of $0.98$ and near-zero incorrect mixing norm $\ell_{\rm pa}$.

\begin{table}[t]
    \centering
        \footnotesize
        \caption{LSCALE-I for a linear causal model with  \textbf{one soft} intervention per node.}
        \label{tab:lscalei-linear-soft}
        \begin{tabular}{ccc|ccc|ccc}
            \toprule
              & & & \multicolumn{3}{c|}{\bf perfect scores} & \multicolumn{3}{c}{\bf noisy scores} \\
             $n$ & $d$ & $n_{\rm s}$ & MCC & $ \ell_{\rm pa}$  & ${\rm SHD}(\mcG_{\rm tc},\hat \mcG_{\rm tc}) $ & MCC & $ \ell_{\rm pa}$ & ${\rm SHD}(\mcG_{\rm tc},\hat \mcG_{\rm tc}) $ \\
             \midrule
            $5$ & $100$ & $5000$ & $0.98 \pm 0.00$ & $0.00 \pm 0.00$ & $0.01 \pm 0.00$ & $0.98 \pm 0.00$ & $0.04 \pm 0.00$ & $0.59 \pm 0.11$  \\
            $5$ & $100$ & $10000$ & $0.98 \pm 0.00$ & $0.00 \pm 0.00$ & $0.00 \pm 0.00$ & $0.98 \pm 0.00$ & $0.03 \pm 0.00$ & $0.36 \pm 0.08$  \\
            $5$ & $100$ & $50000$ & $0.98 \pm 0.00$ & $0.00 \pm 0.00$ & $0.00 \pm 0.00$ & $0.98 \pm 0.00$ & $0.01 \pm 0.00$ & $0.28 \pm 0.06$  \\
            \midrule
            $8$ & $100$ & $5000$ & $0.98 \pm 0.00$ & $0.00 \pm 0.00$ & $0.00 \pm 0.00$ & $0.98 \pm 0.00$ & $0.07 \pm 0.00$ & $3.84 \pm 0.36$  \\
            $8$ & $100$ & $10000$ & $0.98 \pm 0.00$ & $0.00 \pm 0.00$ & $0.00 \pm 0.00$ & $0.98 \pm 0.00$ & $0.05 \pm 0.00$ & $1.23 \pm 0.20$  \\
            $8$ & $100$ & $50000$ & $0.98 \pm 0.00$ & $0.00 \pm 0.00$ & $0.00 \pm 0.00$ & $0.98 \pm 0.00$ & $0.02 \pm 0.00$ & $0.49 \pm 0.10$  \\
            \bottomrule
        \end{tabular}
\end{table}

Next, we consider quadratic causal models. In this case, \Cref{assumption:full-rank} is satisfied, and \Cref{th:linear-soft-full-rank} ensures the perfect recovery of the latent DAG and the recovery of the latent variables up to surrounding variables. Hence, we report SHD between true $\mcG$ and estimate $\hat G$, and the incorrect mixing norm for this setting given by
\begin{equation}\label{eq:norm_sur_error}
    \ell_{\rm sur} \triangleq \| \bH \cdot \bG \odot  (\boldsymbol{1}_{n \times n} - \bL_{\rm sur}) \|_2 \ ,
\end{equation}
where $\bL_{\rm sur}$ is defined with entries $[\bL_{\rm pa}]_{i,j} = \mathds{1}\{j \in \overline{\sur}(i)\}$.
\Cref{tab:lscalei-quadratic-soft-n5} shows that when using perfect scores, LSCALE-I performs nearly perfectly, verifying the results of \Cref{th:linear-soft-full-rank}. Similarly to the case of hard interventions, the performance of LSCALE-I suffers under noisy score estimations, while it remains reasonable, e.g., an SHD of $2.79$ with respect to the true graph when using only soft interventions.

\begin{table}[t]
    \centering
        \small
        \caption{LSCALE-I for a quadratic causal model with  \textbf{one soft} intervention per node ($n_{\rm s}=50000$). }
        \label{tab:lscalei-quadratic-soft-n5}
        \begin{tabular}{cc|ccc|ccc}
            \toprule
              & &  \multicolumn{3}{c|}{\bf perfect scores} & \multicolumn{3}{c}{\bf noisy scores} \\
             $n$ & $d$ & MCC & $\ell_{\rm sur}$ & ${\rm SHD}(\mcG,\hat \mcG) $ & MCC &  $\ell_{\rm sur}$ & ${\rm SHD}(\mcG,\hat \mcG) $ \\
            \midrule
             $5$ & $100$ & $0.87 \pm 0.01$ & $0.34 \pm 0.04$ & $0.53 \pm 0.09$ & $0.60 \pm 0.01$ & $ 0.51 \pm 0.03$ & $2.79 \pm 0.18$ \\
             \bottomrule
        \end{tabular}
\end{table}

\subsection{GSCALE-I Algorithm for General Transformations}\label{sec:experiments-general}

Next, we focus on nonlinear transformations to showcase the settings for which the existing interventional CRL literature lacks provably correct algorithms and provides only identifiability results under nonlinear transformations.

\paragraph{Choice of nonlinearity.} In this section, we consider M-layer perceptrons with $\tanh$ activation as our class of nonlinear transformations. Specifically, we consider functions of the form
\begin{equation}\label{eq:transf_exp-2}
    \bX = g(\bZ) = (\tanh \circ \bA^{M} \circ \cdots \tanh \circ \bA^{1}) (\bZ) \ ,
\end{equation}
in which $\tanh$ is applied element-wise, and parameters $\{\bA^{1}, \dots, \bA^{M}\}$ are compatible, randomly sampled \emph{full column rank} matrices. We consider two settings of increasing complexity: 1-layer MLP, and 3-layer MLP (for nonparametric transformations, see experiments on images in \Cref{sec:experiments-image}).

\subsubsection{Single-layer MLP}

First, consider the single-layer perceptron as the nonlinear transformation $g$, given by
\begin{equation}\label{eq:transf_exp-1}
    \bX = g(\bZ) = \tanh (\bG \cdot \bZ) \ ,
\end{equation}
where $\bG \in \R^{d \times n}$.
Because it is a relatively simple nonlinear transformation, this setting enables us to evaluate the performance of GSCALE-I more effectively. Specifically, leveraging \eqref{eq:transf_exp-1}, we parameterize valid encoders $h$ with parameter $\bH \in \R^{n \times d}$, which gives
\begin{align}\label{eq:tanh-GLM-parameterization}
    \hat \bZ(\bX;h) &= h(\bX) = \bH \cdot \arctanh (\bX) \ , \\
    \hat \bX &= h^{-1}(\hat \bZ(\bX;h)) = \tanh (\bH^{\dag} \cdot \hat \bZ(\bX;h)) \ , \\
     \hat \bZ &=  \bH \cdot \bG \cdot \bZ \ .
\end{align}
This is equivalent to reducing the original nonparametric functional estimation problem to a finite-dimensional parameter estimation problem. This enables stable training and large-scale testing. Furthermore, this parametrization enables us to compute the ground truth score functions of the observed variables, allowing us to directly assess the effect of score estimation errors on \text{GSCALE-I} performance.

\paragraph{Data generation.} In this setting, we focus on quadratic causal models. To generate $\mcG$ we use the Erd{\H o}s-R{\' e}nyi model with density $0.5$ and $n \in \{5,8\}$ nodes. For observational causal mechanisms, we use Equation \eqref{eq:quadratic-model--general-experiments} and the corresponding parameterization. For the two hard interventions on node $i$, $Z_i$ is set to $N_{q,i} \sim \mcN(0,\sigma_{q,i}^2)$ and $N_{\tilde q,i} \sim \mcN(0,\sigma_{\tilde q,i}^2)$, where we set $\sigma_{q,i}^2 = \sigma_{i}^2 / 4$ and $\sigma_{\tilde q,i}^2 = 4 \cdot \sigma_{i}^2$. Similarly to LSCALE-I experiments, we consider a target dimension of $d=100$ and generate 20 latent graphs, with $n_{\rm s} = 30000$ samples per environment for each graph. 

\paragraph{Results.} Recall that for general transformations, we can guarantee $\hat Z_i = \phi_i(Z_i)$ for a diffeomorphism $\phi_i$. However, MCC$(Z_i, \phi(Z_i))$ can largely deviate from 1 for a nonlinear $\phi_i$. For instance, for $Z_i \sim \mcN(0,1)$ and $\phi(Z_i) = Z_i^3 + 0.1 Z_i$, we have MCC$(Z_i, \phi(Z_i)) \approx 0.786$. On the other hand, when we use the parameterization in \eqref{eq:tanh-GLM-parameterization},  the only element-wise diffeomorphism between $\bZ$ and $\hat \bZ$ is an element-wise scaling. Therefore, MCC remains a perfectly informative metric. \Cref{tab:gscalei-quadratic} shows that we can almost perfectly recover the latent variables (indicated by MCC of $1$) latent DAG for $n=5$ nodes by using perfect scores. Furthermore, increasing the latent dimension to $n=8$ has no significant impact on performance.

\begin{table}[t]
    \centering
        \caption{GSCALE-I for a quadratic causal model with  \textbf{two coupled hard} interventions per node. Noisy scores are obtained using SSM-VR with $n_{\rm score}=30000$ samples.}
        \label{tab:gscalei-quadratic}
        \begin{tabular}{cccc|cc|cc}
            \toprule
              & &  & expected num. & \multicolumn{2}{c|}{\bf perfect scores} & \multicolumn{2}{c}{\bf noisy scores} \\
             $n$ & $d$ & $n_{\rm s}$  & edges in $\mcG$  & MCC  & ${\rm SHD}(\mcG,\hat \mcG) $ & MCC  & ${\rm SHD}(\mcG,\hat \mcG) $ \\
            \midrule
            $5$ & $100$  & $200$ & $5$ & $1.00 \pm 0.00$ & $0.00 \pm 0.00$ & $0.85 \pm 0.02$  & $4.50 \pm 0.38$  \\
            \midrule
            $8$ & $100$ & $500$ & $14$ &$0.95 \pm 0.01$ & $1.50 \pm 0.27$ & $0.75 \pm 0.02$ & $12.9 \pm 0.44$ \\
            \bottomrule
        \end{tabular}
\end{table}

\paragraph{The effect of the quality of score estimation.} When using noisy scores, the performance of GSCALE-I degrades significantly. For instance, MCC goes down to approximately $0.85$ for $n=5$ and $0.75$ for $n=8$. This degradation is similar to what happens when using LSCALE-I on quadratic causal models in \Cref{tab:lscalei-quadratic-soft-n5}. It is noteworthy that the transition from perfect to noisy scores is remarkably smoother when using LSCALE-I on linear causal models (\Cref{tab:lscalei-linear-hard} and \Cref{tab:lscalei-linear-soft}). This discrepancy is attributed to the distinct score estimation procedures adopted in the two experimental settings. Specifically, linear Gaussian latent models allow us to directly estimate the parameters of the closed-form score function $\bss_{\bX}$. However, when using a quadratic latent model, we rely on a nonparametric score estimation via SSM-VR~\citep{song2020sliced}. This comparison between two experiment settings and results underscores that the performance gap between the theoretical guarantees and practical results can be significantly mitigated through advances in general score estimation techniques. In \Cref{sec:experiments-sensitivity}, we provide a further empirical evaluation of how the quality of the score estimation affects the final performance.

\subsubsection{Multi-layer nonlinearity}

Next, we consider general MLPs by setting the number of layers to 3, which is in line with the MLP depths considered in CRL literature~\citep{liang2023causal}. In this setting, we do not use any knowledge of the transformation parametrization and aim to learn a generic NN-based encoder-decoder pair $h, h^{-1}$ parameterized by $\theta_{\rm enc}, \theta_{\rm dec}$. 
In essence,  this enables us to test our algorithm's performance on a standard procedure of function approximation via neural networks.

\paragraph{Data generation.}
In this setting, we focus on linear causal models. To generate $\mcG$, we use the Erd{\H o}s-R{\' e}nyi model with density $0.5$ and $n=5$ nodes. For observational causal mechanisms, we use Equation \eqref{eq:linear-SEM--experiments} and its associated parameterization. For the two hard interventions on node $i$, $Z_i$ is set to $N_{q,i} \sim \mcN(0,\sigma_{q,i}^2)$ and $N_{\tilde q,i} \sim \mcN(0,\sigma_{\tilde q,i}^2)$, where we set $\sigma_{q,i}^2 = \sigma_{i}^2/4$ and $\sigma_{\tilde q,i}^2 = \sigma_{i}^2 \cdot 4$. Due to the increased computational complexity of training, we consider target dimension value $d=5$, and we generate 5 latent graphs and $n_{\rm s} = 10000$ samples per graph.

\paragraph{Results.} GSCALE-I algorithm ensures latent variables recovery up to element-wise transformations. As a proxy, we report its linear analogue, i.e., MCC. We summarize the results of these experiments in \Cref{tab:mlp-mlp-mcc-results}.

\begin{table}
    \centering
    \caption{MCC across different runs in MLP transform + MLP causal model setting}
    \label{tab:mlp-mlp-mcc-results}
    \begin{tabular}{ccccc|c}
        \toprule
        \multicolumn{5}{c}{Runs} & Mean (std. error) \\
        \midrule
        0.64 & 0.54 & 0.43 & 0.58 & 0.54 & $0.54 \pm 0.03$ \\
        \bottomrule
    \end{tabular}
\end{table}
In this setting, we obtain a smaller average MCC rate ($0.54$) compared to the 1-layer nonlinearity experiments in \Cref{tab:gscalei-quadratic}. Performance degradation can be attributed to two factors: First, making the transformation more complex and making it more difficult to construct \emph{any} kind of autoencoder. In contrast to the parametric single-layer MLP setting in the previous section, fitting a generic NN has considerably higher sample complexity. Second, our methodology requires the estimation of score functions with relatively good accuracy. However, complex distributions can push forward distributions into very complex and, similarly, difficult-to-estimate landscapes, which can limit the accuracy of our algorithms.

\subsection{Comparison with the Existing CRL Studies} \label{sec:experiments:comparisons}

In this section, we compare the performance of LSCALE-I with those of the approaches designed for comparable settings in~ \citep{squires2023linear,zhang2023identifiability}.
\subsubsection{Comparison with linear causal models on synthetic data} \label{sec:experiments:comparisons:squires}
A closely related study in the linear transformation setting is \citep{squires2023linear}. As discussed in \Cref{sec:linear}, \citep{squires2023linear} presents identifiability guarantees and provides algorithms for linear transformations with linear causal models. For a fair comparison, we perform comparisons for linear Gaussian SEMs under the parameterization detailed in Section~7.1. Experiments with synthetic linear causal models in \citep{squires2023linear} mostly focus on recovering the parameters of the true encoder $\bG^\dag$. Hence, in \Cref{tab:lscalei-comparison-squires}, we report performance comparisons in terms of the latent variable recovery metric $\ell_{\rm scale}$. \Cref{tab:lscalei-comparison-squires} shows that LSCALE-I achieves close to perfect accuracy with as few as $n_s=5000$ samples, whereas the algorithm of \citet{squires2023linear} requires significantly more samples (more than $n_s=50,000$) to achieve a reasonable performance.
\begin{table}[t]
    \centering
        \caption{Comparison of LSCALE-I with the algorithm of \citet{squires2023linear} for a linear causal model with \textbf{one hard} intervention per node.}
        \label{tab:lscalei-comparison-squires}
        \begin{tabular}{ccc|cc|cc}
            \toprule
              & & & \multicolumn{2}{c|}{\bf LSCALE-I } & \multicolumn{2}{c}{\bf \citep{squires2023linear}} \\
             $n$ & $d$ & $n_{\rm s}$ & mean $\ell_{\rm scale}$ & fraction of $ \ell_{\rm scale} < 0.1$  & mean $\ell_{\rm scale}$ & fraction of $ \ell_{\rm scale} < 0.1$ \\
             \midrule
            $5$ & $100$ & $5000$ & $0.06$ & $0.92 $ & $0.66$ & $0.67$  \\
            $5$ & $100$ & $10000$  & $0.04$ & $0.96 $ & $0.51$ & $0.74$  \\
            $5$ & $100$ & $50000$  & $0.03$ & $0.94 $ & $0.24$ & $0.88$  \\
            \midrule
            $8$ & $100$ & $5000$  & $0.08$ & $0.86 $ & $1.28$ & $0.36$  \\
            $8$ & $100$ & $10000$  & $0.06$ & $0.95 $ & $1.09$ & $0.45$  \\
            $8$ & $100$ & $50000$  & $0.03$ & $0.96 $ & $0.68$ & $0.66$  \\
            \bottomrule
        \end{tabular}
\end{table}

\subsubsection{Comparison with DiscrepancyVAE on synthetic data} \label{sec:experiments:comparisons:discrepancy-vae-synth-data}

In this section, we compare LSCALE-I with the DiscrepancyVAE algorithm of~\citep{zhang2023identifiability} on the setting described in \Cref{sec:experiments-linear}. We consider 10 runs of the quadratic causal model setting with soft interventions, latent dimension $n = 5$, observed dimension $d = 100$, and $50000$ samples per environment. We observe that the average SHD of DiscrepancyVAE on this dataset is $3.1 \pm 0.7$, which is comparable to LSCALE-I performance of $2.79 \pm 0.18$ reported in \Cref{tab:lscalei-quadratic-soft-n5}.

To provide a visual comparison, we consider a single run and plot the ground truth latent graph (\Cref{fig:discrepancy-vae-graph-n5:gt}), the estimate generated by LSCALE-I (\Cref{fig:discrepancy-vae-graph-n5:lscalei}), and the estimate generated by DiscrepancyVAE (\Cref{fig:discrepancy-vae-graph-n5:discvae}). The differences between the latter and the ground truth graph are shown in these figures by red (added) and dashed (deleted) edges. We refer further comparison between these methods to \Cref{sec:discrvae-interv-enc-problem}.
\begin{figure}
    \small
    \centering
    \begin{subfigure}{0.3\textwidth}
        \centering
        \includegraphics[width=0.8\linewidth]{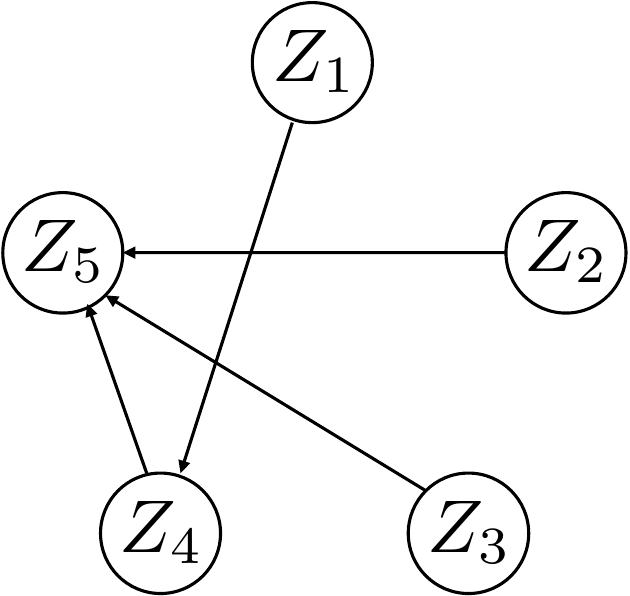}
        \caption{Ground truth}
        \label{fig:discrepancy-vae-graph-n5:gt}
    \end{subfigure}
    \begin{subfigure}{0.3\textwidth}
        \centering
        \includegraphics[width=0.8\linewidth]{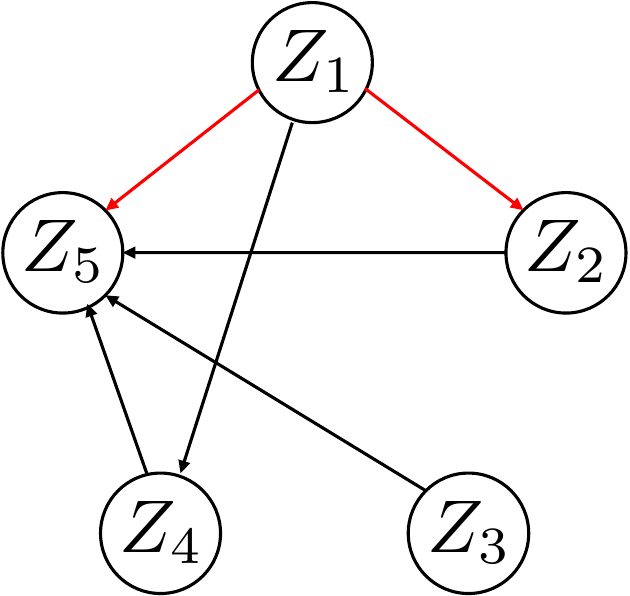}
        \caption{LSCALE-I}
        \label{fig:discrepancy-vae-graph-n5:lscalei}
    \end{subfigure}
    \begin{subfigure}{0.3\textwidth}
        \centering
        \includegraphics[width=0.8\linewidth]{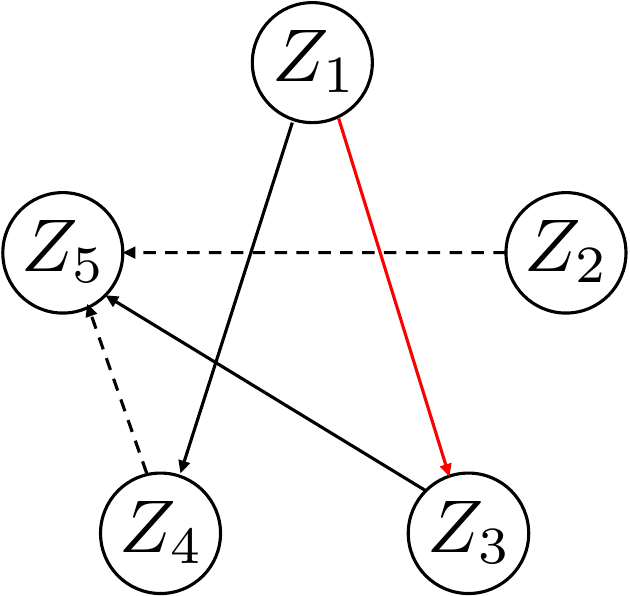}
        \caption{DiscrepancyVAE}
        \label{fig:discrepancy-vae-graph-n5:discvae}
    \end{subfigure}
    \caption{Latent graph recovery comparison. Extra edges are marked in red, and dashed lines mark the missing edges.}
    \label{fig:discrepancy-vae-graph-n5}
\end{figure}

\subsubsection{Comparison with DiscrepancyVAE on biological data} \label{sec:experiments:comparisons:perturb-seq-graph}
In this section, we apply our algorithms to the Perturb-seq dataset of \citet{norman2019exploring}, which is also used in \citet{zhang2023identifiability}. Following the same pre-processing steps, there are 8,907 unperturbed and 99,590 perturbed cells. Each cell (sample) is denoted by a sparse vector of dimension 5000, which represents the expression levels of select genes. The perturbed cells are generated by CRISPR activation~\citep{gilbert2014genome} on one or two target genes out of 105 genes. Hence, this is modeled by a 5000-dimensional observable variable $\bX$ and a 105-dimensional latent variable $\bZ$ representing the perturbed genes. In this dataset, the ground truth latent graph and variables are unknown, with the exception of limited gene interactions that have been experimentally verified~\citep{norman2019exploring}. For graph estimation, instead of working with a latent dimension of 105, \citet{zhang2023identifiability} first learns groups of perturbation targets subject to a regularity constraint and then learns a latent graph among these groups. Since the methodology of LSCALE-I does not naturally extend to learning such groups of latent nodes, we use the perturbation groupings reported by \citet{zhang2023identifiability} as our super-nodes and run LSCALE-I to learn a latent graph over them. We list representative perturbation targets from each super-node in ~\Cref{tab:perturb-seq-super-nodes}.

The most important observation is that LSCALE-I and DiscrepancyVAE both estimate a causal edge from DUSP9 to the MAPK1/ETS2 group, which is a gene activation relation that is demonstrated experimentally~\citep{norman2019exploring}. Due to the lack of ground truth for other graph edges, comparing the algorithms for recovering other edges is not informative. Nevertheless, we provide the estimated latent graph from LSCALE-I (\Cref{fig:perturb-seq-graph-est:lscalei}) and DiscrepancyVAE (\Cref{fig:perturb-seq-graph-est:discrvae}) for completeness.
This verifies that our score-based methodology can recover causal genomics relations.

\begin{figure}
    \centering
    \begin{minipage}{0.28\linewidth}
        \begin{table}[H]
            \centering
            \caption{Representative target genes from super nodes.}
            \label{tab:perturb-seq-super-nodes}
    \scalebox{0.8}{
            \begin{tabular}{cc}
                \toprule
                Super node & Representative \\ index & gene(s) \\
                \midrule
                 0 & OSR2 \\
                 1 & TBX2 \\
                 2 & DUSP9 \\
                 3 & MAPK1, ETS2 \\
                 4 & COL2A1 \\
                 5 & SET \\
                 6 & KLF1 \\
                 \bottomrule
            \end{tabular}
    }
        \end{table}
    \end{minipage}
    \hfill
    \begin{minipage}{0.68\linewidth}
        \begin{figure}[H]
            \centering
            \begin{subfigure}{0.45\linewidth}
                \includegraphics[width=0.9\linewidth]{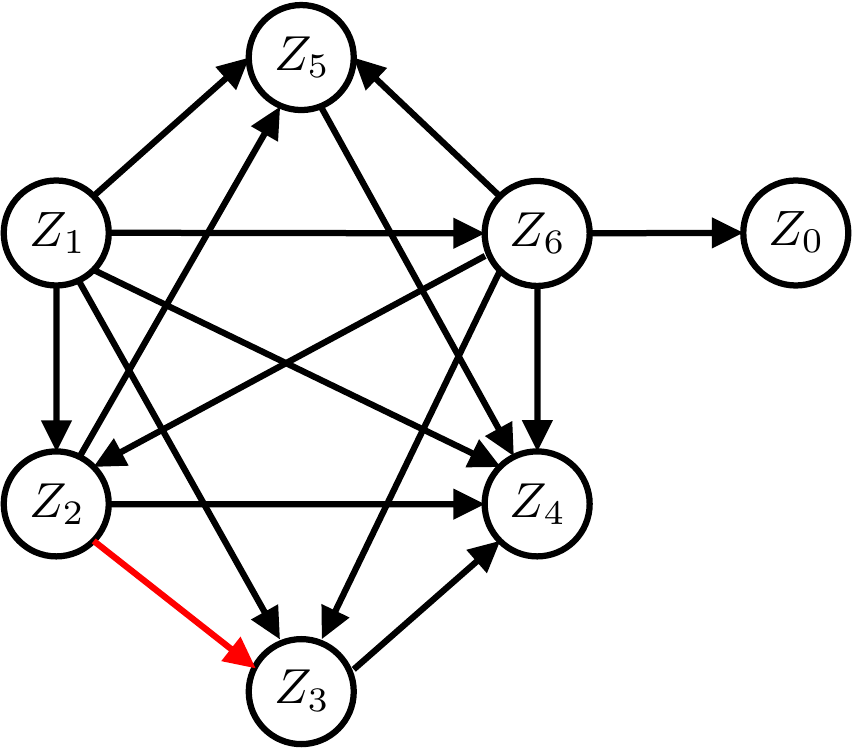}
                \caption{LSCALE-I}
                \label{fig:perturb-seq-graph-est:lscalei}
            \end{subfigure}
            \begin{subfigure}{0.45\linewidth}
                \includegraphics[width=0.9\linewidth]{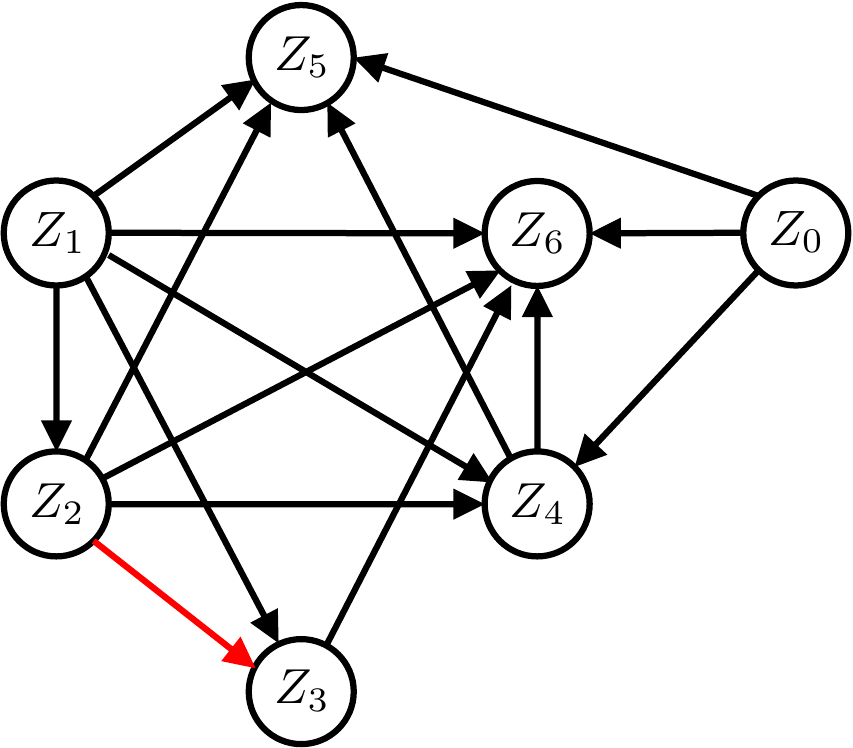}
                \caption{DiscrepancyVAE}
                \label{fig:perturb-seq-graph-est:discrvae}
            \end{subfigure}
            \caption{Graph estimate for Perturb-seq dataset. The edge from DUSP9 to MAPK1/ETS2 is marked in red.}
            \label{fig:perturb-seq-graph-est}
        \end{figure}
    \end{minipage}
\end{figure}

\subsection{Intervention Extrapolation on Biological Data} \label{sec:experiments:interv-extrp-perturbseq}

In \Cref{sec:intervention-extrapolation}, we have demonstrated that it is possible to sample from unseen combinations of given interventions via only learning the score functions of observed variables. We verify this on the Perturb-seq data discussed in \Cref{sec:experiments:comparisons} through the following steps:
\begin{enumerate}[itemsep=0.5pt,leftmargin=2em]
    \item Collect observed variables' data from observational distribution $p_\bX$,  two single-node interventions $p_\bX^1$ and $p_\bX^2$, and double-node intervention $p_\bx^{\{1,2\}}$. 
    \item Compute score functions $\bss_{\bX}$, $\bss_{\bX}^1$, and $\bss_{\bX}^2$.
    \item Perform score function extrapolation for double-node intervention via \eqref{eq:dsx-extrapolation}: $\tilde \bss_{\bX}^{\{1,2\}} = \bss_{\bX}^1 + \bss_{\bX}^2 - \bss_{\bX}$.
    \item Single-node setting: Generate samples from $\bss_{\bX}^1$ and $\bss_{\bX}^2$ using Langevin dynamics~\citep{welling2011bayesian}, compare them to the original samples from $p_\bX^1$ and $p_\bX^2$.
    \item Double-node intervention extrapolation: Generate samples from $\tilde \bss_{\bX}^{\{1,2\}}$ using Langevin dynamics, compare them to the original samples from $p_\bX^{\{1,2\}}$.
\end{enumerate}
We note that \citet{zhang2023identifiability} also report experiments on the same dataset for the single-node data generation and double-node intervention extrapolation settings. However, as discussed in \Cref{sec:intervention-extrapolation}, they perform these tasks using the VAE model for CRL, whereas our approach shows that learning the latent causal representations is unnecessary. Similarly to \citet{zhang2023identifiability}, we use the maximum mean discrepancy (MMD)~\citep{gretton2012kernel} to measure performance quantitatively. In \Cref{tab:perturb-seq-mmd-comparison}, we show that our approach performs at least as well as DiscrepancyVAE on the Perturb-seq dataset. Finally, in \Cref{sec:umap}, we illustrate the simulated interventional distributions via UMAP clustering, which visually supports the performance implied by \Cref{tab:perturb-seq-mmd-comparison}. These results validate the theoretical analysis that we do not need to learn latent variables to perform intervention extrapolation.
\begin{table}[htbp]
    \centering
    \caption{MMD evaluation for single-node and double-node intervention extrapolation of our score-based approach and DiscrepancyVAE~\citep{zhang2023identifiability} on Perturb-seq dataset.}
    \label{tab:perturb-seq-mmd-comparison}
    \begin{tabular}{lc|c}
        \toprule
        & Single-node & Double-node \\
        \midrule
        Score-based & $0.057 \pm 0.013$ & $0.208 \pm 0.036$ \\
        DiscrepancyVAE & $\geq 0.15$ & $\geq 0.2$ \\
        \bottomrule
    \end{tabular}
\end{table}

\subsection{Experiments on Image Data}\label{sec:experiments-image}

In this section, we perform experiments on CRL where the general transformation is image rendering, a highly nonlinear transformation, by applying the GSCALE-I algorithm to synthetic image data.
\paragraph{Image data generation.} For image-based experiments, we follow the setup of the closely related studies in~\citep{ahuja2023interventional,buchholz2023learning}. Specifically, we consider images of the form in \Cref{fig:image-reconstr-samples}, which are generated as follows. The pairs of latent variables $(Z_{2i-1},Z_{2i})$ describe the coordinates of the $i$-th ball's center in a $64 \times 64 \times 3$ RGB image. We use two and three balls in our experiments, which corresponds to $n \in \{4, 6\}$ latent variables. We sample the latent graph $\mcG$ from Erd{\H o}s--R{\' e}nyi model for $n$ nodes and set the expected number of edges to $2n$. Given this graph $\mcG$, we adopt a \emph{truncated} linear Gaussian latent causal model. The details of the linear model are the same as the \Cref{sec:experiments-linear}, with the addition of a second set of interventions represented by the causal mechanisms $\tilde{q}_{i}$, by setting $Z_{i} = \tilde{N}_{i}$, where $\tilde{N}_{i} \sim \mcN(0, \frac{\sigma_{i}^{2}}{4})$. Latent variables $\bZ$ are sampled from the distribution defined by this linear model, but samples with any coordinate absolute value greater than $1$ are discarded to truncate the distribution in the box $[-1, 1]^{n}$. For each latent sample $\bZ$, we render an image of the balls centered at the specified coordinates (shifted and scaled to fit entirely within rendered image boundaries) with a radius of 8 pixels and use these as our observed data $\bX$. Balls are color-coded, filled with three different colors, and the background is white. For both two and three balls, we generate 5 graphs and $10000$ samples from each graph under each environment.
\begin{figure}
    \centering
    \includegraphics[width=0.7\linewidth]{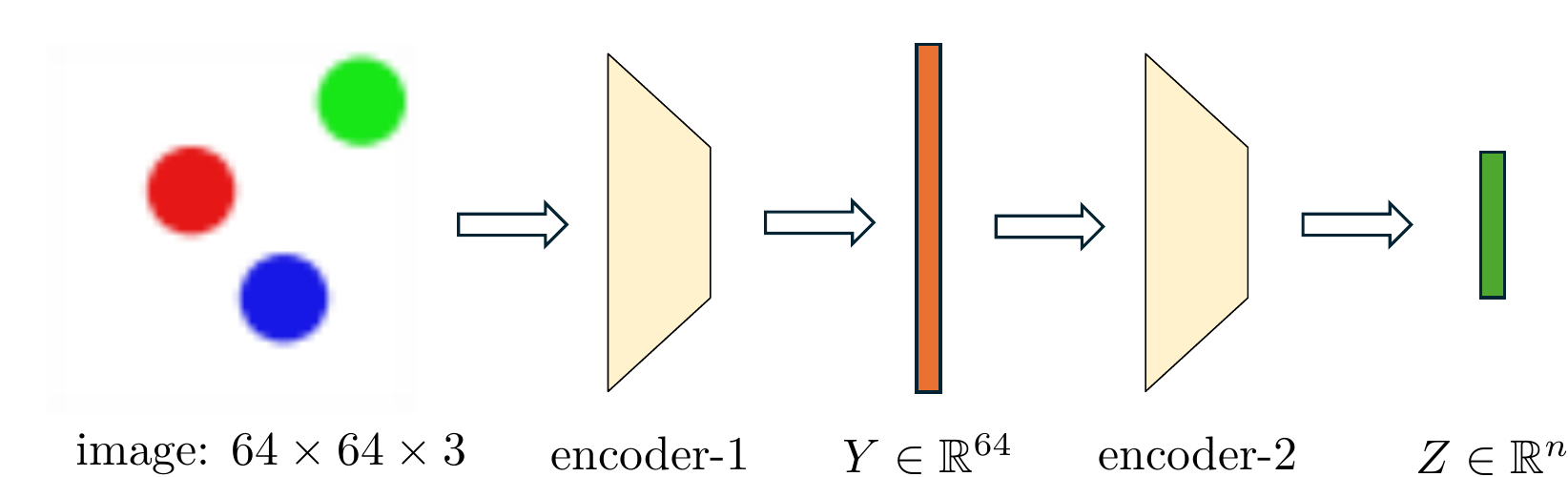}
    \caption{Learning the latent variables from images. Autoencoder-1 is trained to minimize only the reconstruction loss. Autoencoder-2 is trained to minimize the score-based loss in~\eqref{eq:general-OPT1-procedure} while using the reconstruction loss as regularization.}
    \label{fig:image-architecture}
\end{figure}
\paragraph{Candidate encoder and training.}
For learning the latent variables, we construct a two-step autoencoder, depicted in \Cref{fig:image-architecture}. Specifically, in step one, we train an autoencoder on the observational image dataset with a bottleneck dimension of 64 using only a reconstruction objective. In step two, we train another autoencoder on the 64-dimensional output of the first encoder with the bottleneck dimension of $n$ (the latent dimension) using both reconstruction loss and the score-based loss described earlier in \Cref{sec:experiments-general}. Architecture details and training schedule are given in \Cref{tab:autoencoder-architecture}. Finally, we repeat the training of the second autoencoder 5 times for each dataset.
\paragraph{Score difference estimation.}
The main loss function for the second autoencoder of our setup is $\norm{\bD_{\rm t}(h)-\bI_{n \times n}}_{1,1}$, which is equivalent to the score-based loss in~\eqref{eq:general-OPT1-procedure}. For computing $\bD_{\rm t}(h)$ for an encoder $h$, we need to compute the score differences of image datasets. To this end, we adopt a binary classifier-based log density ratio (LDR) estimator and use the gradients of these learned LDRs as our score difference functions, as described earlier. Specifically, we parameterize the LDR at any image through a CNN-based model, details of which are given in \Cref{tab:ldr-architecture}. The output of this model is used to compute class probabilities, which are used to minimize cross-entropy to train the model. We train one LDR model for each pair of hard interventions on the same node and one for the observational-interventional environment pairs for one set of hard interventions.
\begin{figure}[t]
    \centering
    \includegraphics[width=0.7\linewidth]{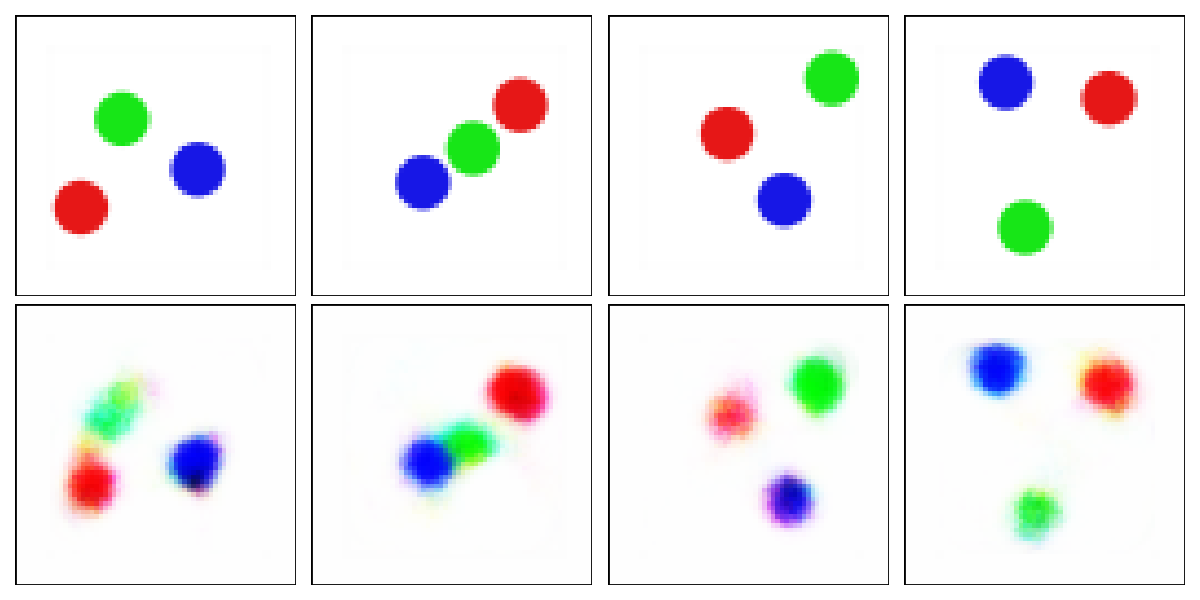}
    \caption{Sample images (top row) versus their reconstructions (bottom row).}
    \label{fig:image-reconstr-samples}
\end{figure}
\paragraph{Results.} GSCALE-I ensures perfect graph recovery and latent variables recovery up to permutation and element-wise scaling. Similarly to related work~\citep{ahuja2023interventional,buchholz2023learning}, we report MCC between $\hat \bZ$ and $\bZ$ as a metric of latent variables recovery, shown in \Cref{tab:image-balls-mcc-comparison}.
\begin{table}
    \centering
    \small
    \caption{MCC comparison in image experiments (over 5 runs).}
    \label{tab:image-balls-mcc-comparison}
    \begin{tabular}{cccc|cccccc|c}
        \toprule
        Algorithm & SCM & \# balls & \# int. / node & int. type & mean (std. error) \\
        \midrule
        GSCALE-I & linear & 2 & 2 & hard & $0.80 \pm 0.03$ \\
        GSCALE-I & linear & 3 & 2 & hard & $0.76 \pm 0.08$ \\
        GSCALE-I & nonlinear & 2 & 2 & hard & $0.93 \pm 0.02$ \\
        \midrule
        GSCALE-I & linear & 2 & 1 & hard & $0.79 \pm 0.03$ \\
        GSCALE-I & nonlinear & 2 & 1 & hard & $0.92 \pm 0.02$  \\
        \midrule
        \citet{ahuja2023interventional} & linear & 2 & 1 & do & $0.13 \pm 0.03$ \\
        \citet{ahuja2023interventional} & linear & 2 & 3 & do & $0.73 \pm 0.03$ \\
        \citet{ahuja2023interventional} & linear & 2 & 5 & do & $0.83 \pm 0.03$ \\
        \midrule
        \citet{buchholz2023learning} & linear & 2 & 1 & hard & $0.87 \pm 0.03$ \\
        \citet{buchholz2023learning} & linear & 5 & 1 & hard & $0.94 \pm 0.01$ \\
        \bottomrule
    \end{tabular}
\end{table}
During training, we observed that MCC highly correlates with the reconstruction performance, i.e., when reconstruction errors in both autoencoders converge to small values, latent variable recovery is also successful. 
We note that the reconstruction does not have to be perfect. For instance, the samples in \Cref{fig:image-reconstr-samples} are from a run with 3 balls with an MCC of 0.98. Conversely, for the runs with non-convergent reconstruction loss in either of the autoencoder training steps, latent variable recovery also struggled.
Hence, as long as we can ensure reasonable reconstruction performance when we are minimizing the score-based loss, our algorithm manages to disentangle the true causal variables embedded in the latent domain.
We report the results from related work in \Cref{tab:image-balls-mcc-comparison} and discuss them as follows. 
Overall, we achieve a better performance than that of \citet{ahuja2023interventional}, and slightly weaker than that of \citet{buchholz2023learning} for linear SCMs. This can perhaps be explained by the versatility of our approach, which works for general SCMs, whereas the algorithm of \citet{buchholz2023learning} is designed to work exclusively for linear SCMs, both theoretically and due to its loss function design. We demonstrate in \Cref{tab:image-balls-mcc-comparison} that our algorithm performs even better when the latent SCM is nonlinear (specifically, quadratic) than when it is linear, thereby verifying its versatility. Finally, we also test our algorithm when using one intervention per node, using the score differences between interventional and observational environments instead of two interventional environments.
While two interventions per node are required for theoretical guarantees, our experiments show that using one intervention per node generally leads to competitive performance, as illustrated in \Cref{tab:image-balls-mcc-comparison}.

\paragraph{Gap between the theory and practice in CRL.} 
We highlight that the nature of the results presented in this paper is theoretical. We have established unknown identifiability guarantees in several settings. Furthermore, we have designed algorithms by defining a differentiable loss function whose global optima achieve identifiability for general CRL and established that the algorithms generate provably correct representations. The experiments presented on various synthesized and real datasets demonstrate the potential of the score-based framework for complex nonlinear transforms, such as image rendering. We achieve this goal by showing reasonable performance in latent variable recovery. That being said, the considered image dataset and other synthetic datasets are still relatively simple compared to real-world problems, and a gap remains between theoretical guarantees and practical applications. In other words, the theory developed is only a \emph{necessary} step for guaranteed practical performance. From our perspective, closing this gap and obtaining \emph{sufficient} conditions for practical performance requires improvements in two aspects. First, using better score difference estimators (which we empirically investigate in \Cref{sec:experiments-sensitivity}) would enable a more effective application of our score-based CRL framework. Second, designing more suitable architectures for leveraging the invariance properties of causal models can help scale up CRL applications.

\subsection{Sensitivity to Score Estimation Noise}
\label{sec:experiments-sensitivity}

As discussed earlier, score estimators can be modularly incorporated into our algorithm.
For cases where $p_{\bX}$ is not amenable to parameter estimation, we can use the noisy score estimates generated by SSM-VR as achievable baselines, as presented in this section. In this subsection, we evaluate the potential performance improvement that can be achieved as the score estimates become more accurate. To this end, we test the LSCALE-I algorithm under varying levels of score estimation noise. Specifically, we run the LSCALE-I algorithm using the scores generated by the following model:
\begin{equation}
    \hat \bss_{\bX}(\bx; \sigma^2) = \bss_{\bX}(\bx) \cdot \big( 1 + \Xi \big) \ , \quad \mbox{where} \quad \Xi \sim \mcN(0, \sigma^2 \cdot \bI_{d \times d}) \ .
\end{equation}
We consider hard interventions on graph size of $n = 5$, set $d = 25$, and vary the value of $\sigma^2$ within $[10^{-4}, 10^{-2}]$. We repeat the experiments 100 times with $n_{\rm s}=10000$ samples from each environment. We plot mean normalized $\bZ$ error, defined as  
\begin{equation}\label{eq:normalized_loss}
    \ell_{\rm norm}(\bZ,\hat \bZ) \triangleq \frac{\|\bZ-\hat \bZ\|_2}{\norm{\bZ}_2}\ .
\end{equation}   
and mean SHD in \Cref{fig:vary-noise-n-5} with respect to the signal-to-noise ratio (SNR). It is clear that when the score estimation error is small, indicated by a high SNR value, the LSCALE-I algorithm demonstrates a strong performance in recovering both latent causal variables and the latent graph. We also note that the baseline results with noisy scores computed via SSM-VR in \Cref{tab:lscalei-quadratic-hard-n5} yield similar success at graph recovery at the SNR of approximately $25$ dB and latent recovery at the SRM of approximately $8$ dB. The curves also confirm our observations in \Cref{sec:experiments-linear} and \Cref{sec:experiments-general} that graph recovery is more sensitive to score estimation errors. The trend of the curves in \Cref{fig:vary-noise-n-5} indicates that our algorithm would greatly benefit from a better score estimator.

\begin{figure}[t]
    \centering
    \begin{subfigure}[t]{0.45\textwidth}
        \centering
        \includegraphics[width=\linewidth]{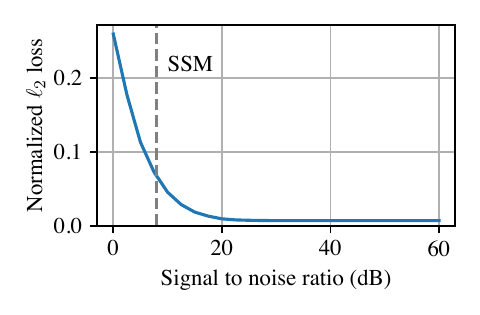}
        \caption{Normalized $\ell_2$ loss versus SNR}
        \label{fig:norm-z-err-vs-noise-n-5}
    \end{subfigure}
    \begin{subfigure}[t]{0.45\textwidth}
        \centering
        \includegraphics[width=\linewidth]{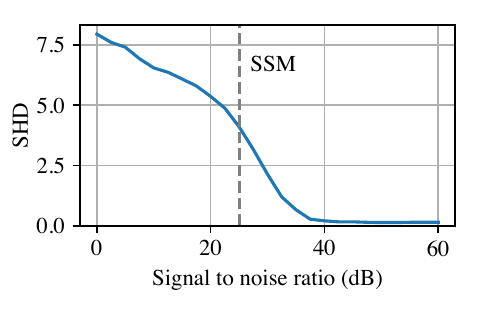}
        \caption{SHD versus SNR}
        \label{fig:shd-h-vs-noise-n-5}
    \end{subfigure}
    \caption{The performance of LSCALE-I under noisy scores with varying SNR for $n = 5$ and $d = 25$. The dashed vertical lines correspond to SNR values that correspond to the performance attained by SSM-VR.}
    \label{fig:vary-noise-n-5}
\end{figure}

\section{Discussion and Concluding Remarks}\label{sec:conclusion}

In this paper, we have proposed a score function-based CRL framework that uses stochastic interventions to learn latent causal representations and the latent causal graph underlying them. In this framework, by uncovering novel connections between score functions and CRL, we have established identifiability results for linear and general transformations without restricting the latent causal models, and designed LSCALE-I and GSCALE-I algorithms that achieve these identifiability guarantees. There are several exciting directions for future work.

As discussed in \Cref{sec:objectives} after defining the identifiability objectives, and emphasized when presenting the corresponding results through the paper, our results are tight \emph{given a complete set of atomic interventions.} A natural direction for future work is relaxing the atomic intervention requirement. In this aspect, the paper \citep{varici2024linear} extends the score-based framework for linear transformations into the setting of unknown multi-node interventions by using combinations of multi-node score differences. Given a sufficiently diverse multi-node intervention set, similar identifiability results to those in \Cref{sec:linear} are shown. That being said, establishing the \emph{necessary} conditions for multi-node interventions and the case of general transformations remains an open problem. 
Next, we note that our LSCALE-I algorithm for \emph{linear transformations} is agnostic to the intervention type and latent causal model, meaning that it can be used for both soft and hard interventions and different causal models. Designing similar universal CRL algorithms that can handle general transformations under different sizes and types of interventional environments is the ultimate goal for studying the identifiability of CRL from interventions. A missing component of existing CRL literature is the finite-sample analysis. Probabilistic identifiability results for a given number of interventional data samples can be useful, especially in applications where performing interventions is costly. In this direction, the recent paper \citep{acarturk2024sample} establishes finite sample guarantees for CRL under linear transformations. Extending such analysis to general transformations can provide insights for bridging the gap between theory and practice in CRL.

\acks{
This work was supported in part by IBM through the IBM-Rensselaer Future of Computing Research and the National Science Foundation under Award ECCS-193310.
}

\newpage
\appendix
\part{Appendix}
\parttoc
\begin{table}[H]\caption{{Notation.}} 
    {\small
       \begin{tabular}{l r c p{12cm} }
            \toprule
            \multicolumn{3}{c}{}\\
           & $[n]$ & : & $\{1, \dots, n\}$ \\
           & $\mathds{1}$ & : & indicator function \\
           ground-truth &    $\mcG$ &: & latent causal graph over $\bZ$ \\
           variables &  $\mcG_{\rm tc}$ & : & transitive closure of $\mcG$ \\
           &  $\mcG_{\rm tr}$ & : & transitive reduction of $\mcG$ \\
           &  $\Pa(i)$ & : & parents of node $i$ in $\mcG$ \\
           &  $\Ch(i)$ & : & children of node $i$ in $\mcG$ \\
           &  $\An(i)$ & : & ancestors of node $i$ in $\mcG$ \\
           &  $\De(i)$ & : & descendants of node $i$ in $\mcG$ \\
            & $\bX$ & : & $[X_1,\dots,X_d]^{\top}$ \;\; observed random variables \\
            & $\bZ$ & : & $[Z_1,\dots,Z_n]^{\top}$ \;\; latent random variables \\
             &   $\bZ_{\Pa(i)}$ & : & vector formed by $Z_i$ for all $i\in \Pa(i)$ \\
            & $g$ & : & true decoder\\
            & $h$ & : & a valid encoder\\
            & $\mcH$ & : & the set of valid encoders $h$ \\
            &  $\bG$ & : & true linear decoder\\
            & $\bH$ & : & a valid linear encoder \\ 
            intervention  
            &  $\mcE^0$ &: & observational environment \\
           notations &  $\mcE$ &: & $(\mcE^1,\dots,\mcE^n)$ \; first interventional environments \\
           &  $\tilde \mcE$ &: & $(\tilde \mcE^1,\dots,\tilde \mcE^n)$ \; second interventional environments \\
           &  $I^m$ & : & the intervened node(s) in $\mcE^m$ \\
           &  $\tilde I^m$ & : & the intervened node(s) in $\tilde \mcE^m$ \\
           &  $\mcI$ & : & the set of intervened nodes $(I^1,\dots,I^n)$\\
           &  $\tilde \mcI$ & : & the set of intervened nodes $(\tilde I^1,\dots,\tilde I^n)$  \\
           statistical 
           &  $\hat \bZ(\bX)$ & : & generic estimator of $\bZ$ given $\bX$\\
           models &  $\hat \bZ(\bX;h)$ & : & an auxiliary estimator of $\bZ$ given $\bX$ and encoder $h$\\
           &  $\hat \mcG$ & : & estimate of $\mcG$\\
           &  $p, p^{m}, \tilde p^{m}$ &: & pdfs of $\bZ$ in $\mcE^0$, $\mcE^m$, and $\tilde \mcE^m$ \\
           &  $p_{\bX}, p_{\bX}^{m}, \tilde p_{\bX}^{m}$ &: & pdfs of $\bX$ in $\mcE^0$, $\mcE^m$, and $\tilde \mcE^m$ \\     
           &  $\bss, \bss^{m}, \tilde \bss^{m}$ &: & score functions of $\bZ$ in $\mcE^0$, $\mcE^m$, and $\tilde \mcE^m$ \\
           &  $\bss_{\bX}, \bss_{\bX}^{m}, \tilde \bss_{\bX}^{m}$ &: & score functions of $\bX$ in $\mcE^0$, $\mcE^m$, and $\tilde \mcE^m$ \\
           &  $\bss_{\hat \bZ}, \bss_{\hat \bZ}^{m}, \tilde \bss_{\hat \bZ}^{m}$ & : & score functions of $\hat \bZ$ in $\mcE^0$, $\mcE^m$, and $\tilde \mcE^m$ for encoder $h$ \\
            &  $\hat \Pa(i)$ & : & parents of node $i$ in $\hat \mcG$ \\
           matrix 
            & $\bA^{\dag}$ & : & Pseudo-inverse of matrix $\bA$\\
            notations & $\bA_{i}$ &: & row $i$ of matrix $\bA$\\
            &  $\bA_{i,j}$ &: & entry of matrix $\bA$ at row  $i$ and column $j$ \\
           & $\bP_\pi$ & : & Permutation matrix associated with permutation $\pi$ of $[n]$ \\
           &  $\bD, \tilde \bD, \bD_{\rm t}$ & : & True score change matrices \\
           &  $\bD(h), \tilde \bD(h), \bD_{\rm t}(h)$ & : & Score change matrices under encoder $h$ \\
            \multicolumn{3}{c}{}\\
            \bottomrule
        \end{tabular}
        \label{tab:TableOfNotation}
    }
\end{table}

\newpage

\section{Proofs of Score Function Properties and Transformations}\label{sec:proofs-score}

We start by providing the following facts that will be used repeatedly in the proofs.
\begin{proposition}\label{prop:continuity-argument}
    Consider two continuous functions $f,g: \R^n \to \R$ with full support. Then, for any $\alpha > 0$,
    \begin{equation}
        \exists \bz \in \R^n \;\; f(\bz) \neq g(\bz) \quad \iff \quad \E\Big[\big|f(\bZ)-g(\bZ)\big|^{\alpha}\Big] \neq 0 \ .
    \end{equation}
    Specifically, for $\alpha = 1$, we have
    \begin{equation}
        \exists \bz \in \R^n \;\; f(\bz) \neq g(\bz) \quad \iff \quad \E\Big[\big|f(\bZ)-g(\bZ)\big|\Big] \neq 0 \ .
    \end{equation}
\end{proposition}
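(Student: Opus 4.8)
The plan is to prove the two implications separately, with the reverse direction being immediate and the forward direction resting on a standard continuity-plus-positivity argument. Throughout, I read ``full support'' as referring to the law against which the expectation is taken: the standing assumption is that $\bZ$ has a density $p$ with $p(\bz)\neq 0$ for all $\bz\in\R^n$, so that $\Pr[\bZ\in U]>0$ for every nonempty open set $U\subseteq\R^n$. (The continuity hypothesis is used only for $f$ and $g$.)

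For the ($\Leftarrow$) direction I would argue by contraposition. If $f(\bz)=g(\bz)$ for every $\bz\in\R^n$, then $|f(\bZ)-g(\bZ)|^{\alpha}=0$ almost surely, hence $\E\big[|f(\bZ)-g(\bZ)|^{\alpha}\big]=0$. This requires no continuity and holds for every $\alpha>0$.

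For the ($\Rightarrow$) direction, suppose there exists $\bz_0$ with $f(\bz_0)\neq g(\bz_0)$, and set $\delta\triangleq |f(\bz_0)-g(\bz_0)|>0$. Since $f$ and $g$ are continuous, so is $f-g$, and therefore there is a radius $r>0$ such that $|f(\bz)-g(\bz)|\geq \delta/2$ for all $\bz$ in the open ball $B(\bz_0,r)$. I would then lower-bound the expectation by restricting the integral to this ball:
\[
    \E\big[|f(\bZ)-g(\bZ)|^{\alpha}\big]
    \;\geq\; \int_{B(\bz_0,r)} |f(\bz)-g(\bz)|^{\alpha}\, p(\bz)\,\dd\bz
    \;\geq\; \Big(\tfrac{\delta}{2}\Big)^{\alpha} \int_{B(\bz_0,r)} p(\bz)\,\dd\bz
    \;>\; 0 \ ,
\]
where the final strict inequality is precisely where full support of $\bZ$ enters, since the density integrates to a positive value over any nonempty open set. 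Specializing to $\alpha=1$ yields the second displayed equivalence in the statement.

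The argument is elementary, and no genuine obstacle is anticipated; the only points worth stating carefully are (i) the interpretation of ``full support'' as a property of the distribution of $\bZ$ rather than of $f$ or $g$ individually, and (ii) the measurability and nonnegativity of $|f-g|^{\alpha}$, which is what legitimizes the restricted-integral lower bound above.
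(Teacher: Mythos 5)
Your proof is correct and follows essentially the same route as the paper's: continuity of $f-g$ guarantees a positive-measure (open) set on which the difference is bounded away from zero, full support of the density of $\bZ$ makes the expectation strictly positive, and the converse is immediate since $f\equiv g$ forces the integrand to vanish. Your version merely spells out the ball and the explicit lower bound that the paper leaves implicit.
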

\proof If there exists $\bz \in \R^n$ such that $f(\bz)\neq g(\bz)$, then $f(\bz)-g(\bz)$ is nonzero over a nonzero-measure set due to continuity. Then, $\E\big[|f(\bZ)-g(\bZ)|^{\alpha}\big] \neq 0$ since $p$ (pdf of $\bZ$) has full support. On the other direction, if $f(\bz)=g(\bz)$ for all $\bz \in \R^n$, then $\E\big[|f(\bZ)-g(\bZ)|^{\alpha}\big]=0$. This means that $\E\big[|f(\bZ)-g(\bZ)|^{\alpha}\big]\neq 0$ implies that there exists $\bz \in \R^n$ such that $f(\bz) \neq g(\bz)$.

\subsection{Proof of Lemma~\ref{lm:parent_change_comprehensive}}\label{sec:proof-parent-change}

Our score-based methodology builds on the changes in score functions under interventions. For proving \Cref{lm:parent_change_comprehensive}, we start by showing that $ \E\Big[\big|\bss(\bZ) - \bss^{m}(\bZ)\big|_i\Big] \neq 0 \implies i \in \Paplus(I^m) $, which holds true regardless of the causal model and the intervention type.

\paragraph{Proof of $ \E\Big[\big|\bss(\bZ) - \bss^{m}(\bZ)\big|_i \Big] \neq 0 \implies i \in \Paplus(I^m) $:} Let $\ell$ denote the node intervened in $\mcE^m$, i.e.,  $I^m = \ell $. Recalling \eqref{eq:s_z_decompose_obs} and \eqref{eq:s_z_decompose_int}, the latent scores $\bss(\bz)$ and $\bss^{m}(\bz)$ are decomposed as 
\begin{align}
    \bss(\bz) &= \nabla_{\bz} \log p_{\ell}(z_{\ell} \med \bz_{\Pa(\ell)}) + \sum_{i \neq \ell} \nabla_{\bz} \log p_i(z_i \med \bz_{\Pa(i)}) \ , \label{eq:s_z_decompose_obs_appendix} \\
    \mbox{and} \quad \bss^{m}(\bz) &= \nabla_{\bz} \log q_{\ell}(z_{\ell} \med \bz_{\Pa(\ell)}) + \sum_{i \neq \ell} \nabla_{\bz} \log p_i(z_i \med \bz_{\Pa(i)}) \ . \label{eq:s_z_decompose_int_appendix}
\end{align}
Hence, $\bss(\bz)$ and $\bss^{m}(\bz)$ differ in only the causal mechanism of node $\ell$. Next, we check the derivatives of $p_{\ell}(z_{\ell} \med \bz_{\Pa(\ell)})$ and $q_{\ell}(z_{\ell} \med \bz_{\Pa(\ell)})$ in their $i$-th coordinates. Note that these two depend on $\bZ$ only through $\{Z_j : \; j \in \Paplus(\ell) \}$. Therefore, if $i \notin \Paplus(\ell)$,
\begin{align}
    \pdv{z_i} \log p_{\ell}(z_{\ell} \med \bz_{\Pa(\ell)}) = \pdv{z_i} \log q_{\ell}(z_{\ell} \med \bz_{\Pa(\ell)}) = 0 \ , \label{eq:s_z_zero_indices}
\end{align}
which indicates that if $i \notin \Paplus(\ell)$, then $[\bss(\bz)]_i = [\bss^{m}(\bz)]_i$ for all $\bz$. This, equivalently, means that if $\E\big[|\bss(\bZ) - \bss^{m}(\bZ)|_i \big] \neq 0$, then $i \in \Paplus({\ell})$.  \endproof

For the reverse direction, we will use the following intermediate result which formalizes the weakest possible requirement for a meaningful intervention and shows that it is a property of (i) hard interventions under any causal model, and (ii) additive noise model under either soft or hard interventions.
\begin{lemma}[Interventional Regularity]\label{lm:pq-parent-dependence-new}
    Causal mechanisms $p_i$ and $q_i$ of node $i$ are said to satisfy \emph{interventional regularity} if
    \begin{align}\label{eq:int-regularity}
       \exists \bz \in \R^n \;\; \mbox{such that} \quad \frac{\partial }{\partial z_k}\frac{q_i(z_i\mid \bz_{\Pa(i)})}{p_i(z_i\mid \bz_{\Pa(i)})} \neq 0 \ , \qquad \forall k\in \Pa(i)\ . 
    \end{align}
    Then, $p_i$ and $q_i$ satisfy interventional regularity if at least one of the following conditions is true:
    \begin{enumerate}[leftmargin=2em,itemsep=-0.2em]
        \item The intervention is hard, i.e., $q_i(z_i \med \bz_{\Pa(i)})=q_i(z_i)$.
        \item The causal model is an additive noise model in which the pdfs of the noise variables are analytic.
    \end{enumerate}
\end{lemma}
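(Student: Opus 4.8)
The plan is to handle the two cases separately, since they have genuinely different mechanics. For the hard intervention case (condition 1), the argument is essentially immediate: if $q_i(z_i \med \bz_{\Pa(i)}) = q_i(z_i)$ has no dependence on $\bz_{\Pa(i)}$, then for any $k \in \Pa(i)$,
\begin{align*}
\frac{\partial}{\partial z_k} \frac{q_i(z_i)}{p_i(z_i \med \bz_{\Pa(i)})} = q_i(z_i) \cdot \frac{\partial}{\partial z_k} \frac{1}{p_i(z_i \med \bz_{\Pa(i)})} \ .
\end{align*}
Since $k$ is a genuine parent of $i$, the conditional $p_i(z_i \med \bz_{\Pa(i)})$ depends non-trivially on $z_k$; more precisely, being a genuine parent edge means $p_i(z_i \med \bz_{\Pa(i)})$ is not a constant function of $z_k$ at fixed values of the other parents, so its reciprocal is not constant either (using positivity of the densities, which is assumed in \eqref{eq:pz_factorized} and the surrounding text). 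Hence the partial derivative is nonzero at some $\bz$. One subtlety worth spelling out: the definition of ``parent'' must be interpreted so that an edge $k \to i$ present in $\mcG$ forces actual functional dependence — otherwise the claim is false for degenerate edges. I would state this explicitly as the working definition of a parent edge, consistent with the factorization \eqref{eq:pz_factorized} being minimal.

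For the additive noise case (condition 2), write $Z_i = f_i(\bZ_{\Pa(i)}) + N_i$ with $N_i$ having analytic density $\rho_i$, so $p_i(z_i \med \bz_{\Pa(i)}) = \rho_i(z_i - f_i(\bz_{\Pa(i)}))$, and similarly the intervened mechanism (if soft) is $q_i(z_i \med \bz_{\Pa(i)}) = \bar\rho_i(z_i - \bar f_i(\bz_{\Pa(i)}))$ for the post-intervention noise density $\bar\rho_i$ and function $\bar f_i$ (the hard case is already covered, or can be viewed as $\bar f_i \equiv 0$). The ratio is
\begin{align*}
\frac{q_i(z_i \med \bz_{\Pa(i)})}{p_i(z_i \med \bz_{\Pa(i)})} = \frac{\bar\rho_i(z_i - \bar f_i(\bz_{\Pa(i)}))}{\rho_i(z_i - f_i(\bz_{\Pa(i)}))} \ ,
\end{align*}
and I would compute $\partial / \partial z_k$ of this for $k \in \Pa(i)$, which by the chain rule equals $-\partial_k \bar f_i$ times a $\bar\rho_i'/\rho_i$-type term minus $-\partial_k f_i$ times a $\bar\rho_i \rho_i'/\rho_i^2$-type term. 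The claim to establish is that this cannot vanish identically in $\bz$. Suppose it did; then for every fixed $\bz_{\Pa(i)}$ it vanishes identically in $z_i$, giving an ODE relating $\bar\rho_i$ and $\rho_i$ along shifted arguments. Here is where analyticity of the noise densities does the work: an identity of analytic functions that holds on an open set extends everywhere, and a vanishing-derivative identity of the form $\bar\rho_i'(u - c_1)/\bar\rho_i(u-c_1) = \rho_i'(u - c_2)/\rho_i(u - c_2)$ for all $u$ (with $c_1, c_2$ depending on $\bz_{\Pa(i)}$) forces, after analytic continuation and matching, either $\partial_k f_i \equiv \partial_k \bar f_i$ and $\bar\rho_i = c\,\rho_i$ — i.e., the intervention does nothing along direction $k$ — contradicting either that $k$ is a real parent or that $q_i \ne p_i$ is a genuine intervention. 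I would organize this as: reduce to a one-variable analytic identity, invoke the identity theorem, and derive the degeneracy.

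I expect the additive-noise case to be the main obstacle, specifically pinning down exactly which genericity is needed. The clean route is probably: if $\partial_k [\log q_i / p_i] \equiv 0$ in $z_k$ at fixed other coordinates, then $\log q_i - \log p_i$ is independent of $z_k$; combined with the additive structure this says $\log \bar\rho_i(z_i - \bar f_i(\bz)) - \log \rho_i(z_i - f_i(\bz))$ has zero $z_k$-partial, and differentiating in $z_i$ instead and using analyticity to propagate, one concludes $\bar f_i - f_i$ is constant in $z_k$ and the log-densities' curvatures match, which — since $\bar f_i$ arises from an intervention that by definition alters the mechanism (Definition~\ref{assumption:int-discrepancy} or the soft-intervention definition requiring $q_i \ne p_i$) — must fail for at least one $k$ unless $f_i$ genuinely depends on $z_k$, which it does by the parent assumption. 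I would present the argument so that it simultaneously covers the hard case (where $\bar f_i \equiv 0$ and the contradiction is with $k \in \Pa(i)$, i.e.\ $f_i$ depends on $z_k$) and flag that for soft interventions we additionally rely on the noise density being analytic so that local vanishing forces global vanishing. The remaining care is purely bookkeeping: making sure ``node $i$ has $k$ as a parent'' is used in its strong (functional-dependence) sense throughout.
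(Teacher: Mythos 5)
Your hard-intervention argument and your overall skeleton for the additive-noise case (write the ratio via the noise densities, suppose the $z_k$-derivative vanishes identically, reduce to a one-variable identity between the noise score functions at shifted arguments, and use analyticity) are essentially the paper's proof. The hard case in particular is correct as written, and your remark that ``$k\in\Pa(i)$'' must be read as genuine functional dependence of $p_i$ on $z_k$ is exactly the (implicit) hypothesis the paper uses.

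The gap is in how you close the additive-noise case. After rearranging and differentiating in the noise variable, the identity one obtains is $r_p'(n)/r_p(n) = r_q'(n+\delta(\bz_{\Pa(i)}))/r_q(n+\delta(\bz_{\Pa(i)}))$ on a product of open sets, where $r_p,r_q$ are the noise score functions and $\delta = f_i - \bar f_i$. You assert this ``forces $\partial_k f_i \equiv \partial_k \bar f_i$ and $\bar\rho_i = c\,\rho_i$,'' i.e., that the only escape is a trivial intervention. That is not the dichotomy, and the nontrivial branch is precisely the one you have not argued. The correct split is on whether $\delta$ is constant. If $\delta\equiv\delta^*$, then indeed $r_p(n)=r_q(n+\delta^*)$ on an interval, analyticity extends it globally, and normalization of the pdfs gives $p_i=q_i$, contradicting that the intervention changes the mechanism. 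But if $\delta$ is \emph{not} constant, its image contains an open interval; fixing $n=n^*$ and sweeping the shift $u=\delta(\bz_{\Pa(i)})$ over that interval forces $(\log r_q)'$ to be constant on an interval, hence (by analyticity) everywhere, so $r_q(u)=k_1 e^{\alpha u}$ and the noise density would have the form $q_N(u)=k_2\exp((k_1/\alpha)e^{\alpha u})$ --- which is not integrable over $\R$ and therefore not a valid pdf. The contradiction in this branch is non-normalizability of the noise density, not triviality of the intervention, and it is the step that actually rules out the nondegenerate solutions. ``Analytic continuation and matching'' does not substitute for it: without the exponential-score argument you cannot exclude, say, a non-constant $\delta$ paired with a cleverly chosen $q_N$. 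You also need to justify (as the paper does) that you may restrict to an open set where $\partial f_i/\partial z_k\neq 0$ and $r_p\neq 0$ before dividing; both follow from continuity, non-constancy in $z_k$, and the fact that an analytic score function of a full-support pdf cannot vanish identically on an interval without vanishing everywhere.
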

\proof See \Cref{proof:pq-parent-dependence-new}.

\paragraph{Case (i) and Case (ii).} $ \E\Big[\big|\bss(\bZ) - \bss^{m}(\bZ)\big|_i\Big] \neq 0 \implies i \in \Paplus(I^m)$ is already shown above. For the reverse direction, we give the proof for soft interventions on additive noise models, Case (ii). We will use interventional regularity since \Cref{lm:pq-parent-dependence-new} shows that it is satisfied for additive noise models. The proof for hard interventions, Case (i), follows from similar arguments since interventional regularity is also satisfied for hard interventions by \Cref{lm:pq-parent-dependence-new}.

\paragraph{Proof of $i \in \Paplus(I^m) \implies \E\Big[\big|\bss(\bZ)-\bss^{m}(\bZ)\big|_i\Big] \neq 0 $:} Note that the two score functions $s$ and $s^m$ are equal in their coordinate $i\in \Paplus(\ell)$ only if
\begin{align}
    0 &= \frac{\partial \log q_{\ell}(z_{\ell}  \mid \bz_{\Pa(\ell)})}{\partial z_i} -
    \frac{\partial \log p_{\ell}(z_{\ell}  \mid \bz_{\Pa(\ell)})}{\partial z_i} = \frac{\partial}{\partial z_i} \log \frac{q_{\ell}(z_{\ell} \mid \bz_{\Pa(i)})}{p_{\ell}(z_{\ell} \mid \bz_{\Pa(\ell)})} \ . \label{eq:derivate-pq-parent-step}
\end{align} 
However, \eqref{eq:derivate-pq-parent-step} contradicts with interventional regularity. Therefore, if $i \in \Paplus(\ell)$, $[\bss(\bz)_i]$ and $[\bss^{m}(\bz)]_i$ are not identical and by \Cref{prop:continuity-argument}, $\E\big[|\bss(\bZ)-\bss^{m}(\bZ)|_i \big] \neq 0$.

\paragraph{Case (iii) Coupled environments.} Suppose that $I^m=\tilde I^m= \ell$. Following \eqref{eq:pz_m_factorized_hard}, we have
\begin{align}
    \bss^{m}(\bz) &= \nabla_{\bz} \log q_{\ell}(z_{\ell}) +  \sum_{i \neq \ell} \nabla_{\bz} \log p_i(z_i \med \bz_{\Pa(i)}) \ , \label{eq:sz_m_decompose_hard_case_1} \\
    \mbox{and} \quad \tilde \bss^{m}(\bz) &= \nabla_{\bz} \log \tilde q_{\ell}(z_{\ell}) +  \sum_{i \neq \ell} \nabla_{\bz} \log p_i(z_i \med \bz_{\Pa(i)})\ . \label{eq:sz_m_decompose_hard_tilde_case_1}
\end{align}
Then, subtracting \eqref{eq:sz_m_decompose_hard_tilde_case_1} from \eqref{eq:sz_m_decompose_hard_case_1} and looking at $i$-th coordinate, we have
\begin{align}\label{eq:coupled-hard-sz-sz-tilde}
    \big[\bss^{m}(\bz) - \tilde \bss^{m}(\bz)\big]_i &= \dfrac{\partial \log q_{\ell}(z_{\ell})}{\partial z_i} -  \dfrac{\partial \log \tilde q_{\ell}(z_{\ell})}{\partial z_i} \ .
\end{align}
If $i\neq \ell$, the right-hand side is zero and we have $\big[\bss^{m}(\bz) - \tilde \bss^{m}(\bz)\big]_i=0$ for all $\bz$. On the other hand, if $i=\ell$, since $q_{\ell}(z_{\ell})$ and $\tilde q_{\ell}(z_{\ell})$ are distinct, there exists $\bz \in \R^n$ such that $q_{\ell}(z_{\ell})\neq \tilde q_{\ell}(z_{\ell})$. Subsequently, by \Cref{prop:continuity-argument}, we have $\E\big[|\bss^{m}(\bZ)-\tilde \bss^{m}(\bZ)|_i\big] \neq 0$.

\paragraph{Case (iv) Uncoupled environments.} 
Suppose that $I^m = \ell$ and $\tilde I^m= j$, and $\ell \neq j$. Following \eqref{eq:pz_m_factorized_hard}, we have 
\begin{align}
    \bss^{m}(\bz) &= \nabla_{\bz} \log q_{\ell}(z_{\ell}) +  \nabla_{\bz} \log p_j(z_j \med \bz_{\Pa(j)}) + \sum_{k \in [n] \setminus \{\ell,j\}} \nabla_{\bz} \log p_k(z_k \med \bz_{\Pa(k}) \ , \label{eq:sz_m_decompose_hard_proof} \\
    \mbox{and} \quad \tilde \bss^{m}(\bz) &= \nabla_{\bz} \log q_j(z_j) +  \nabla_{\bz} \log p_{\ell}(z_{\ell} \med \bz_{\Pa(\ell)}) + \sum_{k \in [n] \setminus \{\ell,j\}} \nabla_{\bz} \log p_k(z_k \med \bz_{\Pa(k)})\ . \label{eq:sz_m_decompose_hard_tilde_proof}
\end{align}
Then, subtracting \eqref{eq:sz_m_decompose_hard_tilde_proof} from \eqref{eq:sz_m_decompose_hard_proof} we have
\begin{align}
    \bss^{m}(\bz) - \tilde \bss^{m}(\bz) &= \;  \nabla_{\bz} \log q_{\ell}(z_{\ell}) +  \nabla_{\bz} \log p_j(z_j \med \bz_{\Pa(j)})  - \nabla_{\bz} \log q_j(z_j) -  \nabla_{\bz} \log p_{\ell}(z_{\ell} \med \bz_{\Pa(\ell)}) \ . \label{eq:score-difference-proof-1} 
\end{align}
Scrutinizing the $i$-th coordinate, we have
\begin{align}
    \big[\bss^{m}(\bz) - \tilde \bss^{m}(\bz)\big]_i &= \dfrac{\partial \log q_{\ell}(z_{\ell})}{\partial z_i} + \dfrac{\partial \log p_j(z_j \med \bz_{\Pa(j)}) }{\partial z_i} - \dfrac{\partial \log q_j(z_j)}{\partial z_i} - \dfrac{\partial \log p_{\ell}(z_{\ell} \med \bz_{\Pa(\ell)})}{\partial z_i}  \ . \label{eq:score-difference-proof-k}
\end{align}

\paragraph{Proof of $\E\Big[\big|\bss^{m}(\bZ) - \tilde \bss^{m}(\bZ)\big|_i\Big] \neq 0\ \implies i\in \Paplus(\ell,j)$:} Suppose that $i \notin \Paplus({\ell},j)$. Then, none of the terms in the RHS of \eqref{eq:score-difference-proof-k} is a function of $z_i$. Therefore, all the terms in the RHS of \eqref{eq:score-difference-proof-k} are zero, and we have $\big[\bss^{m}(\bz) - \tilde \bss^{m}(\bz)\big]_i = 0$ for all $\bz$. By \Cref{prop:continuity-argument}, $\E\big[|\bss^{m}(\bZ)-\tilde \bss^{m}(\bZ)|_i\big]=0$. This, equivalently, means that if $\E\big[|\bss^{m}(\bZ)-\tilde \bss^{m}(\bZ)|_i\big] \neq 0$, then $i \in \Paplus({\ell},j)$. 

\paragraph{Proof of $\E\Big[\big|\bss^{m}(\bZ)-\tilde \bss^{m}(\bZ)\big|_i\Big] \neq 0 \impliedby i\in \Paplus(\ell,j)$:} We prove it by contradiction. Assume that $\big[\bss^{m}(\bz) - \tilde \bss^{m}(\bz)\big]_i = 0$ for all $\bz$. Without loss of generality, let $\ell \notin \Paplus(j)$. 
\begin{enumerate}[leftmargin=*]
    \item \textbf{If $i=\ell$.} In this case, \eqref{eq:score-difference-proof-k} is simplified to
    \begin{align}
         0 = \big[\bss^{m}(\bz) - \tilde \bss^{m}(\bz)\big]_{\ell} = \dfrac{\partial \log q_{\ell}(z_{\ell})}{\partial z_{\ell}} - \dfrac{\partial \log p_{\ell}(z_{\ell} \med \bz_{\Pa(\ell)})}{\partial z_{\ell}}  \ .  \label{eq:proof-k-equals-i}
    \end{align}
    If $\ell$ is a root node, i.e., $\Pa(\ell)=\emptyset$, \eqref{eq:proof-k-equals-i} implies that $(\log q_{\ell})' (z_{\ell}) = (\log p_{\ell})'(z_{\ell})$ for all $z_{\ell}$. Integrating, we get $p_{\ell}(z_{\ell}) = \alpha q_{\ell}(z_{\ell})$ for some constant $\alpha$. Since both $p_{\ell}$ and $q_{\ell}$ are pdfs, they both integrate to one, implying $\alpha = 1$ and $p_{\ell}(z_{\ell}) = q_{\ell}(z_{\ell})$, which contradicts the premise that observational and interventional mechanisms are distinct. If $\ell$ is not a root node, consider some $k \in \Pa(\ell)$. Then, taking the derivative of \eqref{eq:proof-k-equals-i} with respect to $z_k$, we have
    \begin{align}\label{eq:proof-pi-by-zi-zl}
        0 = \dfrac{\partial^2 \log p_{\ell}(z_{\ell} \med \bz_{\Pa(\ell)})}{\partial z_{\ell} \partial z_k} \ . 
    \end{align}
    Recall the equation $Z_{\ell} = f_{\ell}(\bZ_{\Pa(\ell)}) + N_{\ell}$ for additive noise models specified in \eqref{eq:additive-SCM}. Denote the pdf of the noise term $N_{\ell}$ by $p_N$. Then, the conditional pdf $p_{\ell}(z_{\ell} \med \bz_{\Pa(\ell)})$ is given by $p_{\ell}(z_{\ell} \med \bz_{\Pa(\ell)}) = p_N(z_{\ell} - f_{\ell}(\bz_{\Pa(\ell)}))$. Denoting the score function of $p_N$ by $r_p$,
    \begin{align}\label{eq:score-of-additive-noise}
        r_p(u) \triangleq \dfrac{\rm d}{{\rm d} u} \log p_N(u) \ ,
    \end{align}
    we have
    \begin{align}\label{eq:additive-component-score}
        \dfrac{\partial \log p_{\ell}(z_{\ell} \med \bz_{\Pa(\ell)})}{\partial z_{\ell}} =  \dfrac{\partial \log p_{N}(z_{\ell} - f_{\ell}(\bz_{\Pa(\ell)}))}{\partial z_{\ell}} = r_p(z_{\ell} - f_{\ell}(\bz_{\Pa(\ell)})) \ .
    \end{align}
    Substituting this into \eqref{eq:proof-pi-by-zi-zl}, we obtain
    \begin{align}
        0 &= \dfrac{\partial r_p\big(z_{\ell} - f_{\ell}(\bz_{\Pa(\ell)})\big)}{\partial z_k} = - \dfrac{\partial f_{\ell}(\bz_{\Pa(\ell)})}{\partial z_k} \cdot r_p'\big(z_{\ell}-f_{\ell}(\bz_{\Pa(\ell)})\big) \ , \quad \forall \bz \in \R^n \ . \label{eq:proof-pi-by-zizl-additive}
    \end{align}
    Since $k$ is a parent of $\ell$, there exists a fixed $\bZ_{\Pa(\ell)} = \bz_{\Pa(\ell)}^*$ realization for which $\partial f_{\ell}(\bz_{\Pa(\ell)}^*)/\partial z_k$ is nonzero. Otherwise, $f_{\ell}(\bz_{\Pa(\ell)})$ would not be sensitive to $z_k$ which is contradictory to $k$ being a parent of $\ell$. Note that $Z_{\ell}$ can vary freely after fixing $\bZ_{\Pa(\ell)}$. Therefore, for \eqref{eq:proof-pi-by-zizl-additive} to hold, the derivative of $r_p$ must always be zero. However, the score function of a valid pdf with full support cannot be constant. Therefore, $\big[\bss^{m}(\bz)_i - \tilde \bss^{m}(\bz)\big]_i$ is not always zero, and we have $\E\big[|\bss^{m}(\bZ) - \tilde \bss^{m}(\bZ)|_i\big] \neq 0$.   

    \item \textbf{If $i\neq \ell$.} In this case, \eqref{eq:score-difference-proof-k} is simplified to
    \begin{align}
         0 = \big[\bss^{m}(\bz) - \tilde \bss^{m}(\bz)\big]_i = \dfrac{\partial \log p_j(z_j \med \bz_{\Pa(j)}) }{\partial z_i} - \dfrac{\partial \log q_j(z_j)}{\partial z_i} - \dfrac{\partial \log p_{\ell}(z_{\ell} \med \bz_{\Pa(\ell)})}{\partial z_i}  \ . \label{eq:proof-k-notequals-i-1}
    \end{align}
    We investigate case by case and reach a contradiction for each case. First, suppose that $i \notin \Pa({\ell})$. Then, we have $i \in \Paplus(j)$, and \eqref{eq:proof-k-notequals-i-1} becomes
    \begin{align}
         0 = \big[\bss^{m}(\bz) - \tilde \bss^{m}(\bz)\big]_i = \dfrac{\partial \log p_j(z_j \med \bz_{\Pa(j)}) }{\partial z_i} - \dfrac{\partial \log q_j(z_j)}{\partial z_i}  \ . \label{eq:proof-k-notequals-i-2} 
    \end{align}
    If $i = j$, the impossibility of \eqref{eq:proof-k-notequals-i-2} directly follows from the impossibility of \eqref{eq:proof-k-equals-i}. The remaining case is $i \in \Pa(j)$. In this case, taking the derivative of the right-hand side of \eqref{eq:proof-k-notequals-i-2} with respect to $z_j$, we obtain
    \begin{align}
           0 = \dfrac{\partial^2 \log p_j(z_j \med \bz_{\Pa(j)})}{\partial z_i \partial z_j} \ ,
    \end{align}    
    which is a realization of \eqref{eq:proof-pi-by-zi-zl} for $i \in \Pa(j)$ and $j$ in place of $k \in \Pa(\ell)$ and $\ell$, which we proved to be impossible in $i=\ell$ case. Therefore, $i \notin \Pa(\ell)$ is not viable. Finally, suppose that $i \in \Pa({\ell})$. Then, taking the derivative of the right-hand side of \eqref{eq:proof-k-notequals-i-1} with respect to $z_{\ell}$, we obtain 
    \begin{align}\label{eq:proof-pi-by-zi-zk}
           0 = \dfrac{\partial^2 \log p_{\ell}(z_{\ell} \med \bz_{\Pa(\ell)})}{\partial z_i \partial z_{\ell}} \ ,
    \end{align}
    which is again a realization of \eqref{eq:proof-pi-by-zi-zl} for $k = i$, which we proved to be impossible.
\end{enumerate}
Hence, we showed that $\big[\bss^{m}(\bz) - \tilde \bss^{m}(\bz)\big]_i$ cannot be zero for all $\bz$ values. Then, by \Cref{prop:continuity-argument} we have $\E\big[|\bss^{m}(\bZ) - \tilde \bss^{m}(\bZ)|_i\big] \neq 0$, and the proof is concluded. 

\subsection{Proof of Lemma~\ref{lm:score-difference-transform-general}}\label{sec:score-diff-lm-proof}

Let us recall the setting. Consider random vectors $\bY_1,\bY_2\in\R^r$ and $\bW_1$, $\bW_2 \in \R^s$ that are related through $\bY_1=f(\bW_1)$ and $\bY_2=f(\bW_2)$ such that $r \geq s$, probability measures of $\bW_1,\bW_2$ are absolutely continuous with respect to the $s$-dimensional Lebesgue measure and $f: \R^s \to \R^r$ is an injective and continuously differentiable function.

In this setting, the realizations of $\bW_1$ and $\bY_1$, and that of $\bW_2$ and $\bY_2$, are related through $\by = f(\bw)$. Since $f$ is injective and continuously differentiable, volume element ${\rm d}\bw$ in $\R^{s}$ gets mapped to $\left|\det([J_{f}(\bw)]^{\top} \cdot J_{f}(\bw))\right|^{1/2}\ {\rm d}\bw$ on $\image(f)$. Since $\bW_1$ has density $p_{\bW_1}$ absolutely continuous with respect to the $s$-dimensional Lebesgue measure, using the area formula~\citep{boothby2003introduction}, we can define a density for $\bY_1$, denoted by $p_{\bY_1}$, supported only on manifold $\mcM \triangleq \image(f)$ which is absolutely continuous with respect to the $s$-dimensional Hausdorff measure:
\begin{equation}
    p_{\bY_1}(\by) = p_{\bW_1}(\bw) \cdot \left|\det([J_{f}(\bw)]^{\top} \cdot J_{f}(\bw))\right|^{-1/2} \ , \quad \mbox{where} \quad \by = f(\bw) \ . \label{eq:px-py-via-g-jacobian}
\end{equation}
Densities $p_{\bY_2}$ and $p_{\bW_2}$ of $\bY_2$ and $\bW_2$ are related similarly. Subsequently, score functions of $\{\bW_1,\bW_2\}$ and $\{\bY_1,\bY_2\}$ are specified similarly to \eqref{eq:sz-def} and \eqref{eq:sx-def}, respectively. Denote the Jacobian matrix of $f$ at point $\bw \in \R^{s}$ by $J_{f}(\bw)$, which is an $r \times s$ matrix with entries given by
\begin{equation}
    \big[J_{f}(\bw)\big]_{i,j} = \pdv{\big[f(\bw)\big]_i}{w_j}(x) = \pdv{y_i}{w_j} \ , \quad \forall i \in [r] \, , j \in [s] \ . \label{eq:g-jacobian-defn}
\end{equation}
Next, consider a function $\phi \colon \mcM \to \R$. Since the domain of $\phi$ is a manifold, its differential, denoted by $D\phi$, is defined according to \eqref{eq:Df-defn}. By noting $\by = f(\bw)$, we can also differentiate $\phi$ with respect to $\bw \in \R^{s}$ as \citep[p.~57]{simon2014introduction}
\begin{equation}
    \nabla_{w} \phi(\by) = \nabla_{w} (\phi \circ f)(\bw) = \big[J_{f}(\bw)\big]^{\top} \cdot D \phi(\by) \ . \label{eq:nabla-x-fy}
\end{equation}
Next, given the identities in \eqref{eq:px-py-via-g-jacobian} and \eqref{eq:nabla-x-fy}, we find the relationship between score functions of $\bW_1$ and $\bY_1$ as follows.
\begin{align}
    \bss_{\bW_1}(\bw)
    &= \nabla_{\bw} \log p_{\bW_1}(\bw) \label{eq:score-transform-proof-tmp1} \\
    \overset{\eqref{eq:px-py-via-g-jacobian}}&{=} \nabla_{\bw} \log p_{\bY_1}(\by) + \nabla_{w} \log \left|\det([J_{f}(\bw)]^{\top} \cdot J_{f}(\bw))\right|^{1/2}  \\
    \overset{\eqref{eq:nabla-x-fy}}&{=} \big[J_{f}(\bw)\big]^{\top} \cdot D \log p_{\bY_1}(\by) + \nabla_{\bw} \log \left|\det([J_{f}(\bw)]^{\top} \cdot J_{f}(\bw))\right|^{1/2} \\
    &= \big[J_{f}(\bw)\big]^{\top} \cdot \bss_{\bY_1}(\by) + \nabla_{w} \log \left|\det([J_{f}(\bw)]^{\top} \cdot J_{f}(\bw))\right|^{1/2} \ . \label{eq:sx-sz-for-g}
\end{align}
Following the similar steps that led to \eqref{eq:sx-sz-for-g} for $\bW_2$ and $\bY_2$, we obtain
\begin{align}
    \bss_{\bW_2}(\bw) &= \big[J_{f}(\bw)\big]^{\top} \cdot \bss_{\bY_2}(\by) + \nabla_{\bw} \log \left|\det([J_{f}(\bw)]^{\top} \cdot J_{f}(\bw))\right|^{1/2} \ . \label{eq:sxm-szm-for-g} 
\end{align}
Subtracting \eqref{eq:sxm-szm-for-g} from \eqref{eq:sx-sz-for-g}, we obtain the desired result
\begin{align}
    \bss_{\bW_1}(\bw)-\bss_{\bW_2}(\bw) = \big[J_f(\bw)\big]^{\top} \cdot \big[\bss_{\bY_1}(\by) - \bss_{\bY_2}(\by)\big] \ . \label{eq:sw-from-sy-in-proof}
\end{align}

\paragraph{Proof of the reverse direction.}


Multiplying \eqref{eq:sw-from-sy-in-proof} from left with $\big[[J_f(\bw)]^{\dag}\big]^{\top}$, we obtain
\begin{equation}
    \Big[\big[J_f(\bw)\big]^{\dag}\Big]^{\top} \cdot \big[\bss_{\bW_1}(\bw)-\bss_{\bW_2}(\bw)\big] = \Big[\big[J_f(\bw)\big]^{\dag}\Big]^{\top} \cdot \big[J_f(\bw)\big]^{\top} \cdot \big[\bss_{\bY_1}(\by) - \bss_{\bY_2}(\by)\big] \ . \label{eq:jac-rel-left-multiplied}
\end{equation}
Note that
\begin{equation}
    \Big[\big[J_f(\bw)\big]^{\dag}\Big]^{\top} \cdot \big[J_f(\bw)\big]^{\top} = J_f(\bw) \cdot \big[J_f(\bw)\big]^{\dag} \ . \label{eq:jjw-col-proj-eq}
\end{equation}
By properties of the Moore-Penrose inverse, for any matrix $\bA$, we have $\bA \cdot \bA^{\dag} \cdot \bA = \bA$. This means that $\bA \cdot \bA^{\dag}$ acts as a left identity for vectors in the column space of $\bA$. By definition, $\bss_{\bY_1}$ and $\bss_{\bY_2}$ have values in $T_{\bw} \image(f)$, the tangent space of the image manifold $f$ at point $\bw$. This space is equal to the column space of the matrix $J_f(\bw)$. Therefore, $J_f(\bw) \cdot [J_f(\bw)]^{\dag}$ acts as a left identity for $\bss_{\bY_1}(\by)$ and $\bss_{\bY_2}(\by)$, and we have
\begin{equation}
    J_f(\bw) \cdot \big[J_f(\bw)\big]^{\dag} \cdot \big[\bss_{\bY_1}(\by) - \bss_{\bY_2}(\by)\big] = \bss_{\bY_1}(\by) - \bss_{\bY_2}(\by) \ . \label{eq:sy-in-col-jfw}
\end{equation}
Substituting \eqref{eq:jjw-col-proj-eq} and \eqref{eq:sy-in-col-jfw} into \eqref{eq:jac-rel-left-multiplied} completes the proof.

\paragraph{Proof of \Cref{corollary:score-difference-transform-linear}}
For a given linear transform $\bF$, we have $J_f(\bw)=\bF$, which is independent of $\bw$. Then, in the proof of \Cref{lm:score-difference-transform-general}, \eqref{eq:sx-sz-for-g} reduces to $\bss_{\bW}(\bw) = \bF^{\top} \cdot \bss_{\bY}(\by)$. Finally, we note that the score difference of $\bY$ can be similarly written in terms of the score difference of $\bW$.

\subsection{Proof of Lemma~\ref{lm:pq-parent-dependence-new}}\label{proof:pq-parent-dependence-new}

We will prove that causal mechanisms $p_i$ and $q_i$ satisfy interventional regularity for (i) hard interventions and (ii) additive noise models. To this end, we first define
\begin{align}
     \psi(z_i, \bz_{\Pa(i)}) \triangleq \frac{q_i(z_i\mid \bz_{\Pa(i)})}{p_i(z_i\mid \bz_{\Pa(i)})} \ . \label{eq:h-pq-parent-dependence}
\end{align}
We start by showing that $\psi(z_i, \bz_{\Pa(i)})$ varies with $z_i$. We prove it by contradiction. Assume the contrary, i.e., let $\psi(z_i, \bz_{\Pa(i)}) = \psi(\bz_{\Pa(i)})$. By rearranging \eqref{eq:h-pq-parent-dependence} we have
\begin{align}\label{eq:misc1}
    q_i(z_i \mid \bz_{\Pa(i)}) = \psi(\bz_{\Pa(i)}) \, p_i(z_i \mid \bz_{\Pa(i)}) \ .
\end{align}
Fix a realization of $\bz_{\Pa(i)}^*$ and integrate both sides of \eqref{eq:misc1} with respect to $z_i$. Since both $p_i$ and $q_i$ are pdfs, we have
\begin{align}
    1 &= \int_{\R}  q_i(z_i \mid \bz_{\Pa(i)}^*) \, \dd z_i
    = \int_{\R} \psi(\bz_{\Pa(i)}^*) \, p_i(z_i \mid \bz_{\Pa(i)}^*) \mathrm{d}{z_i} \dd z_i \\
    &= \psi(\bz_{\Pa(i)}^*) \int_{\R} p_i(z_i \mid \bz_{\Pa(i)}^*) \dd z_i \\
    &= \psi(\bz_{\Pa(i)}^*)\ .
\end{align}
This identity implies that $p_i(z_i \mid \bz_{\Pa(i)}^*) = q_i(z_i \mid \bz_{\Pa(i)}^*)$ for any arbitrary realization $\bz_{\Pa(i)}^*$. This contradicts the premise that observational and interventional distributions are distinct. As a result, to check if a model satisfies interventional regularity for node $i$, it suffices to investigate whether the function $\psi$ is not invariant with respect to $z_k$ for $k\in\Pa(i)$. To this end, from \eqref{eq:h-pq-parent-dependence} we know that $\psi(z_i, \bz_{\Pa(i)})$ varies with $z_k$ if and only if
\begin{align}\label{eq:int-regularity-suff}
    \frac{\partial}{\partial z_i} \log \psi(z_i, \bz_{\Pa(i)}) = 
   \frac{\partial q_i(z_i \mid \bz_{\Pa(i)})}{\partial z_i} \cdot  \frac{1}{q_i(z_i \mid \bz_{\Pa(i)})}  -
    \frac{\partial p_i(z_i \mid \bz_{\Pa(i)})}{\partial z_i} \cdot \frac{1}{p_i(z_i \mid \bz_{\Pa(i)})}  \neq 0 \ .
\end{align}
Next, we investigate the sufficient conditions listed in \Cref{lm:pq-parent-dependence-new}.

\subsubsection{Hard Interventions}\label{proof:pq-parent-dependence-hard}
Under hard interventions, note that for any $k\in\Pa(i)$,
\begin{align}
    \frac{\partial}{\partial z_k} p_i(z_i \mid \bz_{\Pa(i)}) \neq 0 \ , 
    \quad \mbox{and} \quad
    \frac{\partial}{\partial z_k} q_i(z_i) = 0\ .
\end{align}
Then, it follows directly from \eqref{eq:int-regularity-suff} that
\begin{align}
    \frac{\partial}{\partial z_k} \log \psi(z_i, \bz_{\Pa(i)})
    &= \underset{= \, 0}{\underbrace{\frac{\partial q_i(z_i)}{\partial z_k}}} \cdot \frac{1}{q_i(z_i)} - \underset{\neq \, 0}{\underbrace{\frac{\partial p_i(z_i \mid \bz_{\Pa(i)})}{\partial z_k}}} \cdot \frac{1}{p_i(z_i \mid \bz_{\Pa(i)})}
    \neq 0\ . 
\end{align}
Thus, hard interventions on any latent causal model satisfy interventional regularity.

\subsubsection{Additive Noise Models}\label{proof:pq-parent-dependence-additive}
The additive noise model for node $i$ is given by
\begin{equation}
    Z_i = f_i(\bZ_{\Pa(i)}) + N_i\ ,  \label{eq:additive-model-intreg-proof-obs}
\end{equation}
as specified in \eqref{eq:additive-SCM}. When node $i$ is soft intervened, $Z_i$ is generated according to
\begin{equation}
    Z_i = \bar f_i(\bZ_{\Pa(i)}) + \bar N_i\ ,  \label{eq:additive-model-intreg-proof-int}
\end{equation}
in which $\bar f_i$ and $\bar N_i$ specify the interventional mechanism for node $i$. Then, denoting the pdfs of $N_i$ and $\bar N_i$ by $p_N$ and $q_N$, respectively, \eqref{eq:additive-model-intreg-proof-obs} and \eqref{eq:additive-model-intreg-proof-int} imply that
\begin{align}
    p_i(z_i \mid \bz_{\Pa(i)}) = p_N\big(z_i - f_i(\bz_{\Pa(i)})\big)\ , \quad \mbox{and} \quad 
    q_i(z_i \mid \bz_{\Pa(i)}) = q_N\big(z_i - \bar f_i(\bz_{\Pa(i)})\big)\ . \label{eq:additive-model-pq}
\end{align}
Denote the score functions associated with $p_N$ and $q_N$ by
\begin{align}\label{eq:def-r-additive}
    r_p(u) \triangleq \frac{{\rm d}}{{\rm d} u} \log p_N(u) = \frac{p_N'(u)}{p_N(u)}\ , \quad \mbox{and} \quad
    r_q(u) \triangleq \frac{{\rm d}}{{\rm d} u} \log q_N(u) = \frac{q_N'(u)}{q_N(u)}\ .
\end{align}
We will prove that \eqref{eq:int-regularity-suff} holds by contradiction. Assume the contrary and let
\begin{align}\label{eq:int-regularity-additive-contrary}
    \frac{\partial q_i(z_i \mid \bz_{\Pa(i)})}{\partial z_k} \cdot \frac{1}{q_i(z_i \mid \bz_{\Pa(i)})}
    =
    \frac{\partial p_i(z_i \mid \bz_{\Pa(i)})}{\partial z_k} \cdot \frac{1}{p_i(z_i \mid \bz_{\Pa(i)})}\ .
\end{align}
From \eqref{eq:additive-model-pq} and \eqref{eq:def-r-additive}, for the numerators in \eqref{eq:int-regularity-additive-contrary} we have,
\begin{align}
    \frac{\partial p_i(z_i \mid \bz_{\Pa(i)})}{\partial z_k}
    &= - \frac{\partial f_i(\bz_{\Pa(i)})}{\partial z_k} \cdot p_N'\big(z_i - f_i(\bz_{\Pa(i)})\big) \ , \\
    \mbox{and} \qquad   \frac{\partial q_i(z_i \mid \bz_{\Pa(i)})}{\partial z_k}
    &= - \frac{\partial \bar f_i(\bz_{\Pa(i)})}{\partial z_k} \cdot q_N'\big(z_i - \bar f_i(\bz_{\Pa(i)})\big) \ .
\end{align}
Hence, the identity in \eqref{eq:int-regularity-additive-contrary} can be written as
\begin{equation}
    \frac{\partial f_i(\bz_{\Pa(i)})}{\partial z_k} \cdot r_p\big(z_i - f_i(\bz_{\Pa(i)})\big) =
    \frac{\partial \bar f_i(\bz_{\Pa(i)})}{\partial z_k} \cdot r_q\big(z_i - \bar f_i(\bz_{\Pa(i)})\big) \ . \label{eq:pre-rearrange-additive-proof}
\end{equation}
Define $n_i$ and $\bar n_i$ as the realizations of $N_i$ and $\bar N_i$ when $Z_i=z_i$ and $\bZ_{\Pa(i)}=\bz_{\Pa(i)}$. By defining $\delta(\bz_{\Pa(i)}) \triangleq f_i(\bz_{\Pa(i)}) - \bar f_i(\bz_{\Pa(i)})$, we have $\bar n_i = n_i + \delta(\bz_{\Pa(i)})$. Then, \eqref{eq:pre-rearrange-additive-proof} can rewritten as
\begin{equation}\label{eq:pre-rearrange-additive-proof2}
    \frac{\partial f_i(\bz_{\Pa(i)})}{\partial z_k} \cdot r_p(n_i) =
    \frac{\partial \bar f_i(\bz_{\Pa(i)})}{\partial z_k} \cdot r_q(n_i + \delta(\bz_{\Pa(i)})) \ .     
\end{equation}
Note that $\frac{\partial f_i(\bz_{\Pa(i)})}{\partial z_k}$ is a nonzero continuous function. Hence, there exists an interval $\Phi\subseteq\R^{|\Pa(i)|}$ over which $\frac{\partial f_i(\bz_{\Pa(i)})}{\partial z_k} \neq 0$. Likewise, $r_p(n_i)$ cannot be constantly zero over all possible intervals $\Omega \subseteq \R$. This is because otherwise, it would have to necessarily be a constant zero function (since it is analytic), which is an invalid score function. Hence, there exists an open interval $\Omega \subseteq \mathbb{R}$ over which $r_p(n_i)$ is nonzero for all $n_i \in \Omega$. Then, we can rearrange \eqref{eq:pre-rearrange-additive-proof2} as
\begin{align}\label{eq:rp-rq-equals-d-fq-d-fp}
    \frac{r_p(n_i)}{r_q(n_i+\delta(\bz_{\Pa(i)}))} &=
    \frac{\frac{\partial \bar f_i(\bz_{\Pa(i)})}{\partial z_k}}{\frac{\partial f_i(\bz_{\Pa(i)})}{\partial z_k}} \ , \quad \forall (n_i,\bz_{\Pa(i)})\in \Omega \times \Phi \ .
\end{align}
Note that the RHS of \eqref{eq:rp-rq-equals-d-fq-d-fp} is not a function of $n_i$. Then, taking the derivative of both sides with respect to $n_i$, we get
\begin{equation}\label{eq:d-log-rp-d-log-rq-with-delta}
    \frac{r_p'(n_i)}{r_p(n_i)} =
    \frac{r_q'(n_i + \delta(\bz_{\Pa(i)}))}{r_q(n_i + \delta(\bz_{\Pa(i)}))} \ .
\end{equation}
In the next step, we show that $\delta$ is not a constant function. We prove this by contradiction. Suppose that $\delta(\bz_{\Pa(i)}) = \delta^*$ is a constant function. Then, the gradients of $f_i$ and $\bar f_i$ are equal. From \eqref{eq:rp-rq-equals-d-fq-d-fp}, this implies that
\begin{equation}
    r_p(n_i) = r_q(n_i + \delta^*) \ , \quad \forall n_i \in \Omega \ .
\end{equation}
Since $r_p(n_i)$ and $r_q(n_i + \delta^*)$ are analytic functions that agree on an open interval of $\R$, they are equal for all $n_i \in \R$. This implies that $p_N(n_i) = \eta \cdot q_N(n_i+\delta^*)$ for some constant $\eta \in \R$. Since $p_N$ and $q_N$ are pdfs, $\eta=1$ is the only choice that maintains $p_N$ and $q_N$ are pdfs. Therefore, $p_N(n_i)=q_N(n_i+\delta^*)$. However, using \eqref{eq:additive-model-pq}, this implies that $p_i(z_i \mid \bz_{\Pa(i)}) = q_i(z_i \mid \bz_{\Pa(i)})$, which contradicts the premise that an intervention changes the causal mechanism of target node $i$. Therefore, $\delta$ is a continuous, non-constant function, and its image over $\bz_{\Pa(i)}\in\Phi$ includes an open interval $\Theta\subseteq\R$. With this result in mind, we return to \eqref{eq:d-log-rp-d-log-rq-with-delta}. Consider a fixed realization $n_i = n_i^*$ and denote the value of the left-hand side (LHS) for $n_i^*$ by $\alpha$. By defining $u \triangleq \delta(\bz_{\Pa(i)})$, we get
\begin{align}
    \alpha &= \frac{r_q'(n_i^* + u)}{r_q(n_i^* + u)} \ , \qquad \forall u\in\Theta \ .
\end{align}
This is only possible if $r_q$ is an exponential function, i.e., $r_q(u) = k_1 \exp(\alpha u)$ over interval $u \in \Theta$. Since $r_q$ is an analytic function, it is, therefore, exponential over entire $\R$. Then, the associated pdf must have the form $q_N(u) = k_2 \exp((k_1/\alpha) \exp(\alpha u))$. However, the integral of this function over the entire domain $\R$ diverges. Hence, it is not a valid pdf, rendering a contradiction. Hence, the additive noise model satisfies interventional regularity.

\section{Proofs of the Results for Linear Transformations}\label{sec:proofs-identifiability-linear}

\subsection{Auxiliary Results}\label{sec:auxiliary-lemmas}
First, we provide a linear algebraic property, which will be used in the proofs.

\begin{lemma}\label{lm:binary-matrices} Consider the latent causal graph $\mcG$, and a matrix $\bL \in \R^{n \times n}$.
    \begin{enumerate}
        \item Let $\bL_{\Pa}$ and $\bL_{\An}$ be binary matrices that denote the parental and ancestral relationships in $\mcG$, respectively, i.e., 
        \begin{align}
            [\bL_{\Pa}]_{i,j} \triangleq \begin{cases}
                1 \ , &\textnormal{if} \quad j \in \Paplus(i)  \\ 
                0 \ , &\textnormal{otherwise}
            \end{cases} \ , \;\; \mbox{and} \quad
            [\bL_{\An}]_{i,j} \triangleq \begin{cases}
                1 \ , &\textnormal{if} \quad j \in \Anplus(i)  \\ 
                0 \ , &\textnormal{otherwise}
            \end{cases} \ . \label{eq:binary-parent}
        \end{align}
        Then, if $\bI_{n \times n} \preccurlyeq \mathds{1}\{\bL\} \preccurlyeq \bL_{\Pa}$, we also have $\bI_{n \times n} \preccurlyeq \mathds{1}\{\bL^{-1}\} \preccurlyeq \bL_{\An}$. 

        \item Let $\bL_{\sur}$ be a binary matrix that denotes the surrounding relationships in $\mcG$, i.e., 
        \begin{align}
            [\bL_{\sur}]_{i,j} &\triangleq \begin{cases}
                1 \ , &\textnormal{if} \quad \Chplus(i) \subseteq \Chplus(j) \ , \\
                0 \ , &\textnormal{otherwise}
            \end{cases} \ . \label{eq:binary-surrounding}
        \end{align}
        Then, if $\bI_{n \times n} \preccurlyeq \mathds{1}\{\bL\} \preccurlyeq \bL_{\sur}$, we also have $\bI_{n \times n} \preccurlyeq \mathds{1}\{\bL^{-1}\} \preccurlyeq \bL_{\sur}$. 
    \end{enumerate}
\end{lemma}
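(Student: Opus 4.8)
The plan is to treat both parts with a single Neumann-series device. First I would record that, under the fixed valid causal order $(1,\dots,n)$, the relation ``$j\in\Paplus(i)$'' is lower triangular, since $j\in\Pa(i)$ forces $j<i$; hence $\mathds{1}\{\bL\}\preccurlyeq\bL_{\Pa}$ makes $\bL$ lower triangular, while $\bI_{n\times n}\preccurlyeq\mathds{1}\{\bL\}$ makes all its diagonal entries nonzero, so $\bL$ is invertible. Writing $\bL=\bD(\bI_{n\times n}-\bN)$ with $\bD$ the (invertible) diagonal part of $\bL$ and $\bN=\bI_{n\times n}-\bD^{-1}\bL$ strictly lower triangular (thus $\bN^{n}=\bzero_{n\times n}$), one obtains $\bL^{-1}=\big(\sum_{k=0}^{n-1}\bN^{k}\big)\bD^{-1}$, and $\mathds{1}\{\bN\}\preccurlyeq\bL_{\Pa}-\bI_{n\times n}$. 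Since right-multiplication by the diagonal $\bD^{-1}$ only rescales columns by nonzero scalars, it suffices to locate the support of $\sum_{k=0}^{n-1}\bN^{k}$.

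For Part 1, I would argue that a nonzero entry $[\bN^{k}]_{i,j}$ forces a chain $i=i_0,i_1,\dots,i_k=j$ with $i_{\ell+1}\in\Pa(i_\ell)$ for each $\ell$; reading these as edges $i_{\ell+1}\to i_\ell$ exhibits a directed path from $j$ to $i$ in $\mcG$, whence $j\in\Anplus(i)$. So $\mathds{1}\{\bN^{k}\}\preccurlyeq\bL_{\An}$ for every $k\ge0$ (the $k=0$ case being $\mathds{1}\{\bI_{n\times n}\}\preccurlyeq\bL_{\An}$), and summing plus rescaling columns gives $\mathds{1}\{\bL^{-1}\}\preccurlyeq\bL_{\An}$. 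For the lower bound, $\bN$ strictly lower triangular forces $[\bN^{k}]_{i,i}=0$ for $k\ge1$, so the diagonal of $\sum_k\bN^k$ equals $\bI_{n\times n}$ and $[\bL^{-1}]_{i,i}=1/[\bL]_{i,i}\neq0$; thus $\bI_{n\times n}\preccurlyeq\mathds{1}\{\bL^{-1}\}$.

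For Part 2, the crucial extra input is that the surrounding relation $\Chplus(i)\subseteq\Chplus(j)$ is a \emph{partial order}: it is reflexive, it is transitive (nested inclusions compose), and it is antisymmetric, because $\Chplus(i)=\Chplus(j)$ with $i\neq j$ would give $i\in\Ch(j)$ and $j\in\Ch(i)$, a $2$-cycle, contradicting acyclicity of $\mcG$. In particular $\bL_{\sur}$ has unit diagonal, and since $\Chplus(i)\subseteq\Chplus(j)$ already implies $i\in\Chplus(j)$, hence $j\in\Paplus(i)$, we get $\bL_{\sur}\preccurlyeq\bL_{\Pa}$, so the setup paragraph applies verbatim with $\bL_{\sur}-\bI_{n\times n}$ now bounding $\mathds{1}\{\bN\}$. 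The only change is the path-chasing step: a nonzero $[\bN^{k}]_{i,j}$ now yields a chain $i=i_0,\dots,i_k=j$ with $\Chplus(i_\ell)\subseteq\Chplus(i_{\ell+1})$ at each step, and transitivity collapses this to $\Chplus(i)\subseteq\Chplus(j)$, i.e. $\mathds{1}\{\bN^{k}\}\preccurlyeq\bL_{\sur}$ for all $k\ge0$. Summing, rescaling, and reusing the diagonal argument yields $\bI_{n\times n}\preccurlyeq\mathds{1}\{\bL^{-1}\}\preccurlyeq\bL_{\sur}$.

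I do not expect a serious obstacle; the content is bookkeeping. The one point needing care is that $\bN^{k}$ may have internal cancellations, so a path in $\mcG$ need not produce a nonzero entry of $\bN^{k}$ — but this only ever helps, since both claims are containments of the \emph{support} of $\bL^{-1}$ in a prescribed pattern, and for the diagonal (reverse) containment the strictly triangular structure of $\bN$ rules out cancellation on the diagonal. Conceptually, iterating the parent relation enlarges it to its transitive closure (the ancestor relation, which is itself transitively closed), whereas the surrounding relation is \emph{already} transitively closed, so iteration leaves it unchanged.
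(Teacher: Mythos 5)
Your proof is correct and follows essentially the same route as the paper's: factor out the diagonal, expand the inverse of the resulting unipotent lower-triangular factor as a finite Neumann series, and bound the support of each power by chasing chains of nonzero entries --- directed paths for the parent relation (yielding ancestors), and nested child-set inclusions for the surrounding relation, which is already transitively closed. Your explicit handling of the diagonal lower bound and of possible cancellations in the powers of the nilpotent part is slightly more careful than the paper's write-up, but the substance is identical.
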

\proof 
First, note that $\bL_{\sur} \preccurlyeq \bL_{\Pa} \preccurlyeq \bL_{\An}$. Hence, we start with considering a generic lower triangular matrix $\bL$ such that $\bI_{n \times n} \preccurlyeq \mathds{1}\{\bL\} \preccurlyeq \bL_{\Pa}$. Matrix $\bL$ can be decomposed as
\begin{equation}
    \bL = \bE \cdot (\bI_{n \times n} + \bLambda) \ ,
\end{equation}
in which $\bE$ is a diagonal matrix and $\bLambda$ is a strictly lower triangular matrix that satisfies $\mathds{1}\{\bLambda\}\preccurlyeq \bL_{\Pa}$. Subsequently, since $\bE$ is a diagonal matrix, we have
\begin{equation}
    \bL^{-1} = (\bI_{n \times n} + \bLambda)^{-1} \cdot \bE^{-1} \ , \quad \mbox{and} \quad \mathds{1}\{\bL^{-1}\} = \mathds{1}\big\{(\bI_{n \times n}+\bLambda)^{-1}\big\} \ .
\end{equation}
Note that $\bLambda$ is a strictly lower triangular $n \times n$ matrix, which implies that $\bLambda^{n}$ is a zero matrix. Therefore, the inverse of $(\bI_{n \times n} + \bLambda)$ can be expanded as
\begin{equation}
    (\bI_{n \times n}+\bLambda)^{-1} = \bI_{n \times n} - \bLambda + \bLambda^{2} - \dots + (-1)^{n-1}\bLambda^{n-1} \ . \label{eq:inverse-expansion}
\end{equation}
If $\big[\bI_{n \times n}+\bLambda\big]^{-1}_{i,j} \neq 0$ for some $i, j \in [n]$ with $j < i$, by \eqref{eq:inverse-expansion} we have $[\bLambda^{k}]_{i,j} \neq 0$ for some $k \in [n-1]$. Expanding matrix $\bLambda^{k}$ yields that $[\bLambda^{k}]_{i,j}$ is equal to the sum of the products with $k$ terms, i.e.,
\begin{equation}
   [\bLambda^{k}]_{i,j} = \sum_{i > a_1 > \dots > a_{k-1} > j} [\bLambda]_{i,a_1} [\bLambda]_{a_1,a_2} \dots [\bLambda]_{a_{k-1},j} \ .
\end{equation}
Therefore, if $[\bLambda^{k}]_{i,j} \neq 0$, there exists a sequence of entries $\big([\bLambda]_{i,a_1},[\bLambda]_{a_1,a_2},\dots,[\bLambda]_{a_{k-1},j}\big)$ in which all terms are nonzero. Next, we prove the two cases as follows.

\paragraph{Case 1:} 
Since $\mathds{1}\{\bLambda\}\preccurlyeq \bL_{\Pa}$, we have $[\bL_{\Pa}]_{i,a_1}=[\bL_{\Pa}]_{a_1,a_2}=\dots=[\bL_{\Pa}]_{a_{k-1},j}=1$. By the definition of $\bL_{\Pa}$, this means that there is a path $j \rightarrow a_{k-1} \rightarrow \dots \rightarrow a_1 \rightarrow i$ in $\mcG$, which implies that $j \in \An(i)$ and $[\bL_{\An}]_{i,j}=1$. 
Therefore, we conclude that
\begin{equation}
    \big[(\bI_{n \times n}+\bLambda)^{-1}\big]_{i,j}\neq 0 \quad \implies \quad [\bL_{\An}]_{i,j}=1 \ .
\end{equation}
Hence, $\bI_{n \times n} \preccurlyeq \mathds{1}\{\bL^{-1}\} \preccurlyeq \bL_{\An}$. 

\paragraph{Case 2:}
Since $\mathds{1}\{\bLambda\}\preccurlyeq \bL_{\sur}$, we have $[\bL_{\sur}]_{i,a_1}=[\bL_{\sur}]_{a_1,a_2}=\dots=[\bL_{\sur}]_{a_{k-1},j}=1$. By the definition of $\bL_{\sur}$, this means that 
\begin{equation}
    \Chplus(i) \subseteq \Ch(a_1) \subset \dots \subset \Ch(a_{k-1}) \subset \Ch(j) \ ,
\end{equation} 
which implies $[\bL_{\sur}]_{i,j}=1$ since $\Chplus(i) \subseteq \Chplus(j)$. Therefore, we conclude that if 
\begin{equation}
    \big[(\bI_{n \times n}+\bLambda)^{-1}\big]_{i,j}\neq 0 \quad \implies \quad [\bL_{\sur}]_{i,j}=1 \ .
\end{equation}
Hence, $\bI_{n \times n} \preccurlyeq \mathds{1}\{\bL^{-1}\} \preccurlyeq \bL_{\sur}$.

\subsection{Proof of Theorem~\ref{th:linear-hard}}\label{proof:linear-hard}

\paragraph{Proof of the scaling consistency.}
First, by \Cref{th:linear-soft}, encoder estimate $\hat \bH$ at the end of Stage~L1 of \Cref{alg:linear} satisfies 
\begin{align}\label{eq:before-unmixing}
    \hat \bH \cdot \bG = \bP_{\mcI} \cdot \bL \ ,
\end{align}
where $\bP_{\mcI}$ is the permutation matrix of the intervention order $(I^1,\dots,I^n)$, $\bL$ has nonzero diagonal entries and satisfies $\bL_{i,j} = 0$ for all $j \notin \Paplus(i)$. We will show that the output of Stage~L3 of \Cref{alg:linear} for hard interventions satisfies that
\begin{align}\label{eq:unmixing_goal_node}
    \mathds{1}\big([\hat \bH \cdot \bG]_m \big) = \be_{I^m}^{\top} \ , \quad \forall m \in [n] \ ,
\end{align}
which will imply scaling consistency as
\begin{equation}
    \hat \bZ(\bX; \hat \bH) = \hat \bH \cdot \bX = \hat \bH \cdot \bG \cdot \bZ  = \bP_{\mcI} \cdot \bC_{\rm s} \cdot \bZ \ ,
\end{equation}
where $\bC_{\rm s}$ is a constant diagonal matrix with nonzero diagonal entries. We prove this as follows. 

First, consider the topological order $\pi$ of $\hat \mcG$. By \Cref{th:linear-soft}, transitive closures of $\hat \mcG$ and $\mcG$ are the same under relabeling of the nodes with permutation $(1,\dots,n) \rightarrow (I^1,\dots,I^n)$. Then, since $(1,\dots,n)$ is assumed to be topological order of $\mcG$, $\rho \triangleq (I^{\pi_1},\dots,I^{\pi_n})$ is also a topological order of~$\mcG$. According to this notation, in the $\pi_t$-th environment, node $\rho_t$ is intervened.

Next, since $\rho_1$ has no parent in $\mcG$, by \Cref{th:linear-soft} we already know that $\hat Z_{\pi_1} = b_1 \cdot Z_{\rho_1}$ for some constant $b_1 \in \R_+$. Consider $m = \pi_k$ step of the algorithm, and assume that for all $t \in [k-1]$, the updated encoder rows satisfy
\begin{equation}\label{eq:unmixing-induction-step}
    \hat Z_{\pi_t} = \hat \bH_{\pi_t} \cdot \bG \cdot \bZ = b_t \cdot Z_{\rho_t} \ , 
\end{equation}
for some constant $b_t \in \R_+$. We will show that the update in step $m = \pi_k$ will ensure that $\hat Z_{\pi_k}$ also satisfies \eqref{eq:unmixing-induction-step}. Before the update, by \Cref{th:linear-soft} we know that $\hat Z_{\pi_k}$ is a linear function of $Z_{\rho_k}$ and $\bZ_{\Pa(\rho_k)}$. Then, using \eqref{eq:unmixing-induction-step}, we have
\begin{equation}
    \hat Z_{\pi_k} = c_0 \cdot Z_{\rho_k} + \bc \cdot \hat \bZ_{\hat \Pa(\pi_k)} \ ,
\end{equation}
for $|\hat \Pa(\pi_k)|$-dimensional vector $\bc$ and nonzero constant $c_0 \in \R_+$. We will use \Cref{fact:hard-indep} in environment $\mcE^{\pi_k}$. For brevity, using $m = \pi_k$ and letting $\bV \triangleq \hat \bZ^{m}_{\hat \Pa(\pi_k)}$, 
\begin{equation}
    \hat Z^{m}_{m} = c_0 \cdot Z^{m}_{\rho_k} + \bc \cdot \bV \ .
\end{equation}
Since node $\rho_k$ is intervened in environment $\mcE^m$, by \Cref{fact:hard-indep}, we know that $Z^{m}_{\rho_k} \ci \bV$, which implies that
\begin{equation}
    \Cov(Z^{m}_{\rho_k}, \bV) = \boldsymbol{0}_{1 \times |\bV|} \ .
\end{equation}
Consider a $|\bV|$-dimensional row vector $\bu$ and let 
\begin{equation}
    Y = \big(\hat \bH_{m} - \bu \cdot \hat \bH_{\hat \Pa(m)}\big) \cdot \bX^{m} = \hat Z^{m}_{m} - \bu \cdot \bV \ .
\end{equation}
Note that $\bu = \bc$ would yield that $Y = c_0 \cdot Z^{m}_{\rho_k}$, and subsequently $Y \ci \bV$. On the other hand, $Y \ci \bV$ implies that 
\begin{equation}
    \boldsymbol{0}_{1 \times |\bV|} = \Cov(Y, \bV) = \Cov(\hat Z^{m}_{m} - \bu \cdot \bV, \bV) \ ,
\end{equation}
which has a unique solution
\begin{equation}
    \bu = \Cov(\hat Z^{m}_{m}, \bV) \cdot \big(\Cov(\bV)\big)^{-1} \ .
\end{equation}
i.e., linear minimum mean square error (LMMSE) estimator. Finally, $\Cov(\bV)$ is invertible since the causal relationships among the entries in $\hat Z^{m}_{\hat \Pa(\pi_k)}$ are not deterministic. Using this row vector $\bu$, we update
\begin{equation}
    \hat \bH_m \gets \bH_m - \bu \cdot \bH_{\hat \Pa(\pi_k)} \ ,
\end{equation}
and achieve $\hat Z_m = c_0 \cdot Z_{\rho_k}$. Therefore, \eqref{eq:unmixing-induction-step} holds for $t= k$ as well, and by induction we obtain
\begin{equation}\label{eq:z-hat-z-unmixed}
    \hat \bZ = \hat \bH \cdot \bG \cdot \bZ = \bP_{\mcI} \cdot \bC_{\rm s} \cdot \bZ \ ,
\end{equation}
for a diagonal matrix $\bC_{\rm s}$ with nonzero diagonal entries. \endproof

\paragraph{Proof of the perfect DAG recovery.} 
We show that the graph construction process in \eqref{eq:linear-hard-parent-set} achieves perfect DAG recovery as follows. First, using \Cref{lm:score-difference-transform-general} and \eqref{eq:z-hat-z-unmixed}, we have
\begin{align}
    \bsd_{\hat \bZ}^{m}(\hat \bZ; \hat \bH) = (\bP_{\mcI} \cdot \bC_{\rm s})^{-\top} \cdot \bsd_{\bZ}^{m}(\bZ) = \bP_{\mcI} \cdot \bC_{\rm s}^{-\top} \cdot \bsd_{\bZ}^{m}(\bZ) \ .
\end{align}
Subsequently, using the fact that $\bC_{\rm s}$ is a diagonal matrix, for all $i, m \in [n]$ we obtain
\begin{align}
    \big[\bsd_{\hat \bZ}^{m}(\hat \bZ; \hat \bH)\big]_i \neq 0 \; \; \iff \;\; \big[\bsd_{\bZ}^{m}(\bZ)\big]_{I^i} \neq 0 \ .
\end{align}
Then, \Cref{lm:parent_change_comprehensive} and \eqref{eq:linear-hard-parent-set}, we have
\begin{align}
    I^i \in \Paplus(I^m) \; &\iff \; \E\Big[\big|\bsd_{\bZ}^{m}(\bZ)\big|\Big]_{I^i} \neq 0 \; \iff \; \E\Big[\big|\bsd_{\hat \bZ}^{m}(\hat \bZ; \hat \bH)\big|_i \Big] \; \iff \; i \in \hat \Paplus(m) \ .
\end{align}
This concludes the proof that $\hat \mcG$ and $\mcG$ are related through a graph isomorphism by permutation $\mcI$, which denotes the intervention order. \endproof

\subsection{Proof of Theorem~\ref{th:linear-soft-full-rank}}\label{proof:linear-soft-full-rank}

\paragraph{Latent graph estimation.}
We continue from the intuitions provided in \Cref{sec:linear-full-rank} and prove the perfect recovery of the latent graph as follows. First, recall that output $\hat \mcG$ of \Cref{alg:linear} for soft interventions satisfies that transitive closures of $\hat \mcG$ and $\mcG$ are the same under permutation $(I^1,\dots,I^n)$. Since $(1,\dots,n)$ is assumed to be a topological order of $\mcG$,  $\rho \triangleq (I^{\pi_1},\dots,I^{\pi_n})$ is also a topological order of $\mcG$. Then, for any $k = \pi_u$ and $t = \pi_v \in \{\pi_1,\dots,\pi_{u-1}\}$, \eqref{eq:dx-col-intersection-dim} becomes
\begin{equation}\label{eq:full-rank-parent-dim}
    \dim\big( \colspace(\bR_{\bX}^{t}) \cap \colspace(\bR_{\bX}^{k})\big) = |\Pa(\rho_u) \cap \Pa(\rho_v)| + \mathds{1}\big\{\rho_v \in \Pa(\rho_u) \big\}
    \ . 
\end{equation}
By induction, we will prove that the parent sets generated by \Cref{alg:linear-full-rank} satisfy
\begin{equation}\label{eq:full-rank-parent-condition}
    \pi_u \in \hat \Pa(\pi_v) \; \iff \; \rho_u \in \Pa(\rho_v) \ .
\end{equation}
\textbf{At the base case}, consider $k=\pi_2$. The only possible parent is $t=\pi_1$. Since $\rho_1$ is a root node in $\mcG$, \eqref{eq:full-rank-parent-dim} implies
\begin{equation}
     \dim\big( \colspace(\bR_{\bX}^{\pi_1}) \cap \colspace(\bR_{\bX}^{\pi_2})\big) = \mathds{1}\{ \rho_1 \in \Pa(\rho_2) \} \ .
\end{equation}
Since $\hat \Pa(\pi_1) = \hat \Pa(\pi_2) = \emptyset$ at this step, the algorithm adds $\pi_1$ to $\hat \Pa(\pi_2)$ if and only if $\rho_1 \in \Pa(\rho_2)$, and \eqref{eq:full-rank-parent-condition} is satisfied for $k=\pi_2$. Next, \textbf{as the induction step}, assume that the algorithm output $\hat \Pa(k)$ satisfies \eqref{eq:full-rank-parent-condition} for $k \in \{\pi_1,\dots,\pi_{u-1}\}$. We prove that $k = \pi_u$ also satisfies the condition by induction again. At the base case, consider $t = \pi_1$. Similar to the previous base case, \eqref{eq:full-rank-parent-dim} implies
\begin{equation}
     \dim\big( \colspace(\bR_{\bX}^{\pi_1}) \cap \colspace(\bR_{\bX}^{\pi_u})\big) = \mathds{1}\{ \rho_1 \in \Pa(\rho_u) \} \ ,
\end{equation}
and since $\hat \Pa(\pi_u)$ does not contain $\pi_1$ yet, the algorithm adds $\pi_1$ to $\hat \Pa(\pi_u)$ if and only if $\rho_1 \in \Pa(\rho_u)$. Next, as the induction step, assume that for $\pi_j \in \{\pi_1,\dots,\pi_{v-1}\}$ where $v < u$, the algorithm correctly identified the parents, i.e., $\pi_j \in \hat \Pa(\pi_u)$ if and only if $\rho_j \in \Pa(\rho_u)$. Consider $k = \pi_u$ and $t = \pi_v$, for which \eqref{eq:full-rank-parent-dim} implies
\begin{equation}
     \dim\big( \colspace(\bR_{\bX}^{\pi_v}) \cap \colspace(\bR_{\bX}^{\pi_u})\big) = |\Pa(\rho_u) \cap \Pa(\rho_v)| + \mathds{1}\{ \rho_v \in \Pa(\rho_u)\} \ .
\end{equation}
By the induction hypothesis, we know that $|\hat \Pa(\pi_u) \cap \hat \Pa(\pi_v)| = |\Pa(\rho_u) \cap \Pa(\rho_v)|$. Therefore, the algorithm adds $\pi_v$ to $\hat \Pa(\pi_v)$ if and only if $\rho_v \in \Pa(\rho_u)$. Therefore, by the inner induction, $\hat \Pa(k)$ satisfies \eqref{eq:full-rank-parent-condition}. Then, by the outer induction, \eqref{eq:full-rank-parent-condition} is satisfied for all parent set estimates, which concludes the proof of perfect graph recovery. \endproof

\paragraph{Markov property of consistency up to mixing with surrounding parents.}

Let $\hat \bZ$ denote $\hat \bZ(\bX; \hat \bH)$. We further investigate the properties of consistency up to mixing with surrounding parents to prove that $\hat \bZ$ is Markov with respect to $\hat \mcG$. Using the results of the encoder estimation step, we have
\begin{equation}\label{eq:hat-z-surrounding}
    \hat \bZ = \bP_{\mcI} \cdot \bC_{\rm sur} \cdot \bZ \ ,
\end{equation}
where $\bC_{\rm sur}$ has nonzero diagonal entries and satisfies $[\bC_{\rm sur}]_{i,j} = 0$ for all $j \notin \overline{\sur}(i)$. Recall the following SCM specified in \eqref{eq:general-SCM}
\begin{equation}
    Z_i = f_i(\bZ_{\Pa(i)},N_i) \ .
\end{equation}
Let $\tau$ be the permutation that maps $\{1,\dots,n\}$ to $\mcI$, i.e., $I^{\tau_i}=i$ for all $i\in[n]$. Since $\sur(i) \subseteq \Pa(i)$, \eqref{eq:hat-z-surrounding} implies
\begin{align}
    \hat Z_{\tau_i} &= [\bC_{\sur}]_{i,i} \cdot Z_i + \sum_{j \in \sur(i)}  [\bC_{\sur}]_{i,j} \cdot Z_j \label{eq:rho-next-3} \\
    &= [\bC_{\sur}]_{i,i} \cdot f_i(\bZ_{\Pa(i)},N_i) + \sum_{j \in \sur(i)} [\bC_{\sur}]_{i,j} \cdot Z_j \ , \label{eq:rho-next-33}
\end{align}
which is a function of $\bZ_{\Pa(i)}$ and $N_i$. To prove Markov property, we need to specify $Z_{\tau_i}$ in terms of $\hat \bZ_{\hat \Pa(\tau_i)}$. To this end, it suffices to show that for any $k \in \Pa(i)$, $Z_k$ is a function of $\hat \bZ_{\hat \Pa}(\tau_i)$. Using \eqref{eq:hat-z-surrounding}, we have
\begin{equation}
    \bZ = \bC_{\sur}^{-1} \cdot \bP_{\tau} \cdot \hat \bZ \ . \label{eq:z-from-hat-z-pre}
\end{equation}
By \Cref{lm:binary-matrices}, we know that $[\bC_{\sur}^{-1}]_{i,j}=0$ if $j \notin \sur(i)$ for distinct $i$ and $j$. Subsequently, as counterpart of \eqref{eq:rho-next-3}, we have
\begin{equation}\label{eq:rho-next-4}
    Z_k = [\bC_{\sur}^{-1}]_{k,k} \cdot \hat Z_{\tau_k} + \sum_{j \in \sur(k)} [\bC_{\sur}^{-1}]_{k,j} \cdot \hat Z_{\tau_j} \ .
\end{equation}
Since $k \in \Pa(i)$, we have $\tau_k \in \Pa(\tau_i)$ and $\hat Z_{\tau_k}$ is in $\hat \bZ_{\hat \Pa(\tau_i)}$. Note that if $j \in \sur(k)$, $j$ is also in $\Pa(i)$. Therefore, every term in the RHS of \eqref{eq:rho-next-4} belongs to $\hat \bZ_{\hat \Pa(\tau_i)}$ and $Z_k$ is a function of $\hat \bZ_{\hat \Pa(\tau_i)}$. Then, using \eqref{eq:rho-next-33}, we know that $\hat Z_{\tau_i}$ is a function of only $\hat \bZ_{\hat \Pa(\tau_i)}$ and $N_i$, which concludes the proof that $\hat \bZ$ is Markov with respect to $\hat \mcG$.

\section{Proofs of the Results for General Transformations}\label{sec:proofs-identifiability-general}

The proof of \Cref{th:general-coupled} is already given in \Cref{sec:general-coupled}. In this section, we first prove \Cref{th:general-coupled-no-observational}, that is, identifiability for coupled interventions without observational environment. Then, we prove the results in \Cref{sec:general-uncoupled} related to the uncoupled interventions.

\subsection{Proof of Theorem~\ref{th:general-coupled-no-observational}}\label{proof:general-coupled-no-observational}

We will show that if $p(\bZ)$ is adjacency-faithful to $\mcG$ and the latent causal model is an additive noise model, then we can recover $\mcG$ without having access to observational environment $\mcE^0$. 
Let $\tau$ be the permutation that maps $\{1,\dots,n\}$ to $\mcI$, i.e., $I^{\tau_i}=i$ for all $i\in[n]$ and $\bP_{\tau}$ to denote the permutation matrix that corresponds to $\tau$, i.e.,
\begin{align}\label{eq:intervention-order}
    [\bP_{\tau}]_{i,m} = \begin{cases}
        1  \ , &m = \rho_i \ , \\
        0 \ , & \text{else} \ .
    \end{cases}
\end{align}
By \Cref{lm:parent_change_comprehensive}(iv), true latent score changes across $\{\mcE^{\tau_i},\tilde \mcE^{\tau_j}\}$ gives us $\Paplus(i,j)$ for $i\neq j$. First, we use the perfect latent recovery result to show that \Cref{lm:parent_change_comprehensive}(iv) also applies to estimated latent score changes. Denote $f = \hat h \circ g$. Using score transform between $\bZ$ and $\hat \bZ$ in $\eqref{eq:score-difference-z-zhat}$, and recalling $\mathds{1}\{J_{f}^{-1}\}=\bP_{\tau}$, we have
\begin{align}
    \big[\bss_{\hat \bZ}^{\tau_i}(\hat \bz;\hat h) - \tilde \bss_{\hat \bZ}^{\tau_j}(\hat \bz; \hat h)\big]_{\tau_k}
    &= \big[J_{f}^{-\top}(\bz)\big]_{\tau_k} \cdot \big[\bss^{\tau_i}(\bz) - \tilde \bss^{\tau_j}(\bz)\big] \\
    &=  \big[J_{f}^{-\top}(\bz)\big]_{\tau_k,k} \cdot \big[\bss^{\tau_i}(\bz) - \tilde \bss^{\tau_j}(\bz)\big]_k \ . 
\end{align}
Using $\big[J_{f}^{-\top}(\bz)\big]_{\tau_k,k} \neq 0$ for all $\bz \in \R^n$, we have
\begin{align}\label{eq:graph-iso-step-1}
    \E\Big[\big|\bss_{\hat \bZ}^{\tau_i}(\hat \bZ;\hat h) - \tilde \bss_{\hat \bZ}^{\tau_j}(\hat \bZ; \hat h)\big|_{\tau_k}\Big] \neq 0 \quad \iff \quad \E\Big[\big|\bss^{\tau_i}(\bZ) - \tilde \bss^{\tau_j}(\bZ)\big|_{\tau_k}\Big] \neq 0 \ .
\end{align}
Hence, by \Cref{lm:parent_change_comprehensive}(iv),
\begin{align}\label{eq:graph-iso-step-2}
     \E\Big[\big|\bss_{\hat \bZ}^{\tau_i}(\hat \bZ;\hat h) - \tilde \bss_{\hat \bZ}^{\tau_j}(\hat \bZ; \hat h)\big|_{\tau_k}\Big] \neq 0 \quad \iff \quad k \in \Paplus(i,j)\ .
\end{align}
Let us denote the graph $\mcG_{\tau}$ that is related to $\mcG$ by permutation $\tau$, i.e., $i\in\Pa(j)$ if and only if $\tau_i \in \Pa_{\tau}(\tau_j)$ for which $\Pa_{\tau}(\tau_j)$ denotes the parents of node $\tau_j$ in $\mcG_{\tau}$. Using \eqref{eq:graph-iso-step-2}, we have
\begin{align}
   \E\Big[\big|\bss_{\hat \bZ}^{\tau_i}(\hat \bZ;\hat h) - \tilde \bss_{\hat \bZ}^{\tau_j}(\hat \bZ; \hat h)\big|_{\tau_k}\Big] \neq 0 \quad \iff \quad \tau_k \in \Paplus_{\tau}(\tau_i,\tau_j)\ .
\end{align}
In the rest of the proof, we will show how to obtain $\{\Pa_{\tau}(i) : i \in [n]\}$ using $\{\Paplus_{\tau}(i,j) : i,j \in [n], \; i \neq j\}$. Since $\mcG_{\tau}$ is a graph isomorphism of $\mcG$, this problem is equivalent to obtaining $\{\Pa(i) : i \in [n]\}$ using $\{\Paplus(i,j) : i,j \in [n], \; i \neq j\}$. Note that $\hat Z_i$ (which corresponds to node $i$ in $\mcG_{\tau}$) is intervened in environments $\mcE^i$ and $\tilde \mcE^i$. We denote the set of root nodes by
\begin{equation}
    \mcK \triangleq \{i\in[n] : \Pa(i) = \emptyset \} \ ,
\end{equation}
and also define
\begin{equation}
    \mcB_i \triangleq \, \bigcap_{j \neq i} \Paplus(i,j) \ , \quad \forall i \in [n] \ , \quad \mbox{and} \quad \mcB \triangleq \{i : |\mcB_i|=1\} \ .
\end{equation}
Note that $\Paplus(i) \subseteq \mcB_i$. Hence, $|\mcB_i|=1$ implies that $i$ is a root node. We investigate the graph recovery in three cases.
\begin{enumerate}[leftmargin=*]
    \item $|\mcB|\geq 3$: For any node $i\in[n]$, we have
    \begin{align}
    \Paplus(i) \subseteq \mcB_i \subseteq \bigcap_{j \in \mcK \setminus \{i\}} \Paplus(i,j) = \Paplus(i) \cup \Big\{\bigcap_{j \in \mcK \setminus \{i\}} \{ j \} \Big\} = \Paplus(i) \ .
    \end{align}
    Note that the last equality is due to $\textstyle\bigcap_{j \in \mcK \setminus \{i\}} \{ j \} = \emptyset$ since there are at least two root nodes excluding $i$. Then, $\mcB_i = \Paplus(i)$ for all $i \in [n]$ and we are done.
    
    \item $|\mcB|=2$: The two nodes in $\mcB$ are root nodes. If there were at least three root nodes, we would have at least three nodes in $\mcB$. Hence, the two nodes in $\mcB$ are the only root nodes. Subsequently, every $i \notin \mcB$ is also not in $\mcK$ and we have 
    \begin{align}
        \Paplus(i) \subseteq \mcB_i \subseteq \bigcap_{j \in \mcK} \Paplus(i,j) = \Paplus(i) \cup \Big\{\bigcap_{j \in \mcK}\{j\}\Big\} = \Paplus(i) \ .
    \end{align}
    Hence, $\mcB_i = \Paplus(i)$ for every non-root node $i$ and we already have the two root nodes in $\mcB$, which completes the graph recovery.
    
    \item $|\mcB|\leq 1$: First, consider all $(i,j)$ pairs such that $|\Paplus(i,j)|=2$. For such an $(i,j)$ pair, at least one of the nodes is a root node; otherwise $\Paplus(i,j)$ would contain a third node. Using these pairs, we identify all root nodes as follows. Note that a hard intervention on node $i$ makes $Z_i$ independent of all of its non-descendants, and all conditional independence relations are preserved under componentwise diffeomorphisms such as $f$. Then, using the adjacency-faithfulness assumption, we infer that
    \begin{itemize}
        \item if $\hat Z_i \ci \hat Z_j$ in $\mcE^i$ and $\hat Z_i \ci \hat Z_j$ in $\tilde \mcE^j$, then both $i$ and $j$ are root nodes.
        \item if $\hat Z_i \notci \hat Z_j$ in $\mcE^i$, then $i \rightarrow j$ and $i$ is a root node.
        \item if $\hat Z_i \notci \hat Z_j$ in $\tilde \mcE^j$, then $j \rightarrow i$ and $j$ is a root node.
    \end{itemize}
    This implies that we can determine whether $i$ and $j$ nodes are root nodes by using at most two independence tests. Hence, we identify all root nodes by using at most $n$ independence tests. We also know that there are at most two root nodes. If we have two root nodes, then $\mcB_i = \Paplus(i)$ for all non-root nodes, and the graph is recovered. If we have only one root node $i$, then for any $j \neq i$ we have
    \begin{align}
        \Paplus(j) \subseteq \mcB_j \subseteq \Paplus(i,j) = \Paplus(j) \cup \{i\} \ .
    \end{align}
    Finally, if $\hat Z_j \ci \hat Z_i \mid \{\hat Z_\ell : \ell \in \mcB_j \setminus \{i\}\}$ in $\tilde \mcE^j$, we have $i \notin \Paplus(j)$ due to adjacency-faithfulness. Otherwise, we conclude that $i \in \Paplus(j)$. Hence, an additional $(n-1)$ conditional independence tests ensure the recovery of all $\Paplus(j)$ sets, and the graph recovery is complete. 
\end{enumerate}

\subsection{Proof of Lemma~\ref{lm:uncoupled-existence}}\label{proof:uncoupled-existence}

Let us start by scrutinizing the constraints. For the true encoder $g^{-1}$, \Cref{lm:parent_change_comprehensive} gives us
\begin{align}
    [\bD(g^{-1})]_{i,m} \neq 0 \;\; &\iff \;\; i \in \Paplus(I^m) \ , \\
    \mbox{and} \quad [\tilde \bD(g^{-1})]_{i,m} \neq 0 \;\; &\iff \;\; i \in \Paplus(\tilde I^m) \ .
\end{align}
Therefore, using $\tilde \mcI = \sigma \circ \mcI$, we have $\mathds{1}\{\tilde \bD(g^{-1})\} = \mathds{1}\{\bD(g^{-1})\} \cdot \bP_{\sigma}$. Note that acyclicity of graph $\mcG$ also implies that $\mathds{1}\{\bD(g^{-1})\} \odot \mathds{1}\{\bD(g^{-1})^{\top}\} = \bI_{n \times n}$.

Next, note that $\bD_{\rm t}(g^{-1};\sigma)$ is equal to the true score change matrix $\bD_{\rm t}(g^{-1})$ defined for the coupled interventions, which satisfies $\mathds{1}\{\bD_{\rm t}(g^{-1};\sigma)\} = \bP_{\mcI}^{\top}$. 
Let $h^* = \big[\bD_{\rm t}(g^{-1};\sigma)\big]^{\top} \cdot g^{-1}$. Since $\bD_{\rm t}(g^{-1};\sigma)$ is a scaled permutation matrix, using \eqref{eq:score-multiply-encoder} we obtain
\begin{align}
    \bD_{\rm t}(h^*;\sigma) &= [\bD_{\rm t}(g^{-1};\sigma)]^{-1} \cdot \bD_{\rm t}(g^{-1};\sigma) = \bI_{n \times n} \ , \\
    \bD(h^*) &= [\bD_{\rm t}(g^{-1};\sigma)]^{-1} \cdot \bD(g^{-1}) \\
    \tilde \bD(h^*) &= [\bD_{\rm t}(g^{-1};\sigma)]^{-1} \cdot \tilde \bD(g^{-1}) \ .    
\end{align}
Thus, using $\mathds{1}\{\tilde \bD(g^{-1})\} =  \mathds{1}\{\bD(g^{-1})\} \cdot \bP_{\sigma}$, we have
\begin{align}
    \mathds{1}\{\tilde \bD(h^*)\} = \mathds{1}\{\bD(h^*)\} \cdot \bP_{\sigma} \ ,     
\end{align}
which satisfies the first constraint. Also, using $\mathds{1}\{\bD(h)\}= \bP_{\sigma} \cdot \mathds{1}\{\tilde \bD(h)\}$,
\begin{equation}
    \mathds{1}\{\bD(h^*)\} = \bP_{\mcI} \cdot \mathds{1}\{\bD(g^{-1})\} \ .
\end{equation}
Therefore, we have $\mathds{1}\{\bD(h^*)\} \odot \mathds{1}\{\bD(h^*)^{\top}\} = \bI_{n \times n}$. Finally, $(h^*)^{-1} \triangleq g \cdot \big[\bD_{\rm t}(g^{-1};\sigma)\big]^{-\top}$ satisfies $(h^*)^{-1}\circ h(\bX)=\bX$. Hence, $(h^*, \sigma)$ minimizes the objective in~\eqref{eq:general-OPT2} at value zero.
\endproof

\subsection{Proof of Lemma~\ref{lm:uncoupled-feasibility}}\label{proof:uncoupled-feasibility}

We will prove the desired result by contradiction. Suppose that $(h^*, \pi)$ is a solution to the optimization problem specified in \eqref{eq:general-OPT2} and let $\pi \neq \sigma$. Denote $f = h^* \circ g$, so we have $\hat \bZ = f(\bZ)$. If $\pi_m = \sigma_m$ for some $m \in [n]$, this means $I^m = \tilde I^{\pi_m} = \ell$ for some node $\ell \in [n]$. We follow the same approach in the proof of \Cref{thm:general-partial-identifiability}. Specifically, using score difference transformation property in \eqref{eq:score-difference-z-zhat} and one-sparse property of $[\bss^{m}(\bZ) - \tilde \bss^{\pi_m}(\bZ)]$ via \Cref{lm:parent_change_comprehensive}(iii), we have
\begin{align}
    [\bD_{\rm t}(h^*)]_{i,m} &= \E \Big[\big| \bss_{\hat \bZ}^{m}(\hat \bZ) - \tilde \bss_{\hat \bZ}^{\pi_m}(\hat \bZ) \big|_i \Big] \\
    &= \E \bigg[ \Big| \big[J_{f}^{-\top}(\bZ)\big]_i \cdot \big[\bss^{m}(\bZ) - \tilde \bss^{\pi_m}(\bZ)\big] \Big| \bigg] \\ 
    &= \E \bigg[ \Big[J_{f}^{-1}(\bZ)\big]_{\ell,i} \cdot \big[\bss^{m}(\bZ) - \tilde \bss^{\pi_m}(\bZ)\big]_{\ell} \Big] \ .
    \end{align}
By definition of $h^*$, $[\bD_{\rm t}(h^*)]_{i,m} = 0$ for all $i \neq m$. Also, interventional discrepancy between $q_\ell(z_\ell)$ and $\tilde q_\ell(z_\ell)$ implies that $\big[\bss^{m}(\bz) - \tilde \bss^{\pi_m}(\bz)\big]_{\ell} \neq 0$ except for a null set. Then, $[\bD_{\rm t}(h^*)]_{i,m} = 0$ implies that $\big[J_{f}^{-1}(\bz)\big]_{\ell,i} = 0$ except for a null set. Since $J_{f}^{-1}$ is a continuous function, this implies that $\big[J_{f}^{-1}(\bz)\big]_{\ell,i} = 0$ for all $\bz \in \R^n$. Furthermore, since $J_{f}^{-1}$ is invertible for all $\bz$, none of its columns can be a zero vector. Hence, for all $\bz \in \R^n$, $\big[J_{f}^{-1}(\bz)\big]_{\ell,m}  \neq 0$. To summarize, if $\pi_m = \sigma_m$, then
\begin{align}\label{eq:matching-node-column}
    \forall \bz\in \R^n \;\; \big[J_f^{-1}(\bz)\big]_{I^m,i} \neq 0 \quad \iff \quad i = m   \ .
\end{align}
Next, consider the set of \emph{mismatched nodes}
\begin{align}
    \mcA \triangleq \{I^m  : I^m \neq \tilde I^{\pi_m}\} \ .
\end{align}
Let $I^a \in \mcA$ be a non-descendant of all the other nodes in $\mcA$. There exist nodes $I^b, I^c \in \mcA$, not necessarily distinct, such that 
\begin{equation}\label{eq:rho-abc}
    I^a = \tilde I^{\pi_b} \ , \quad \mbox{and} \quad I^c = \tilde I^{\pi_a} \ .
\end{equation} 
In four steps, we will show that $[\bD(h^*)]_{a,b}\neq 0$ and $[\bD(h^*)]_{b,a}\neq 0$, which violates the constraint $\mathds{1}\{\bD(h^*)\} \odot \mathds{1}\{\bD^{\top}(h^*)\} = \bI_{n \times n}$ and will conclude the proof by contradiction. Before giving the steps, we provide the following argument which we repeatedly use in the rest of the proof. For any continuous function $f:\R^n \to \R$, we have
\begin{align}
    \E\Big[\big|f(\bZ)\big|\Big] \neq 0 \;\; &\iff \;\; \E\Big[\big|f(\bZ) \cdot \big[\bss(\bZ) - \bss^{a}(\bZ)\big]_{I^a} \big|\Big] \neq 0  \ , \label{eq:continuous-argument-2} \\
    \mbox{and} \quad \E\Big[\big|f(\bZ)\big|\Big] \neq 0  \;\; &\iff \;\; \E\Big[\big|f(\bZ) \cdot \big[\bss(\bZ) - \tilde \bss^{\pi_b}(\bZ)\big]_{I^a} \big|\Big] \neq 0  \ . \label{eq:continuous-argument-3}
\end{align}
First, suppose that $\E\big[|f(\bZ)|\big] \neq 0$. Then, there exists an open set $\Psi \subseteq \R^n$ for which $f(\bz)\neq 0$ for all $z\in \Psi$. Due to the interventional discrepancy condition, there exists an open set within $\Psi$ for which $[\bss(\bZ) - \bss^{a}(\bZ)]_{I^a} \neq 0$. This implies that
\begin{equation}
    \E\Big[\big|f(\bZ) \cdot \big[\bss(\bZ) - \bss^{a}(\bZ)\big]_{I^a} \big|\Big] \neq 0 
\end{equation}
For the other direction, suppose that $\E\Big[\big|f(\bZ) \cdot \big[\bss(\bZ) - \bss^{a}(\bZ)\big]_{I^a} \big|\Big] \neq 0 $, which implies that there exists an open set $\Psi$ for which both $f(\bz)$ and $[\bss(\bz) - \bss^{a}(\bz)]_{I^a}$ are nonzero. Then, $\E\big[|f(\bZ)|\big] \neq 0$, and we have \eqref{eq:continuous-argument-2}. Similarly, due to $\tilde I^{\pi_b} = I^a$ and interventional discrepancy, we obtain \eqref{eq:continuous-argument-3}. 
\paragraph{Step 1: Show that $\E\Big[\big|[J_f^{-1}(\bZ)]_{I^a,a}\big|\Big] \neq 0$.} First, using \eqref{eq:score-difference-z-zhat} and \Cref{lm:parent_change_comprehensive}(i), we have
\begin{align}
    \big[\bD(h^*)\big]_{a,a} &= \E\bigg[\Big|\big[J_f^{-\top}(\bZ)\big]_{a} \cdot \big[\bss(\bZ)-\bss^{a}(\bZ)\big]\Big|\bigg] \\ 
    &= \E\bigg[\Big|\sum_{I^j \in \Paplus(I^a)} \big[J_f^{-1}(\bZ)\big]_{I^j,a} \cdot \big[\bss(\bZ)-\bss^{a}(\bZ)\big]_{I^j}\Big|\bigg] \ . \label{eq:no-solution-step1}
\end{align}
Note that $\Paplus(I^a) \cap \mcA = \{I^a\}$ since $I^a$ is non-descendant of the other nodes in $\mcA$. Consider $I^j\in\Pa(I^a)$, which implies that $I^j \notin \mcA$ and $I^j=\tilde I^{\pi_j}$. By \eqref{eq:matching-node-column}, we have $[J_f^{-1}(\bZ)]_{I^j,a} = 0$. Then, \eqref{eq:no-solution-step1} becomes
\begin{equation}
    \big[\bD(h^*)\big]_{a,a} = \E\bigg[\Big|\big[J_f^{-1}(\bZ)\big]_{I^a,a} \cdot \big[\bss(\bZ)-\bss^{a}(\bZ)\big]_{I^a}\Big|\bigg] \neq 0 \ ,
\end{equation} 
since diagonal entries of $\bD(h^*)$ are nonzero due to the last constraint in \eqref{eq:general-OPT2}. Then, \eqref{eq:continuous-argument-2} implies that $\E\Big[\big|[J_f^{-1}(\bZ)]_{I^a,a}\big|\Big] \neq 0$.
\paragraph{Step 2: Show that $\big[\tilde \bD(h^*)\big]_{a,b}\neq 0$.} Next, we use $I^a=\tilde I^{\pi_b}$ and \Cref{lm:parent_change_comprehensive}(i) to obtain
\begin{align}
    \big[\tilde \bD(h^*)\big]_{a,b} &=  \E\bigg[\Big|\big[J_f^{-\top}(\bZ)\big]_{a} \cdot \big[\bss(\bZ)-\tilde \bss^{\pi_b}(\bZ)\big]\Big|\bigg]\\
    &= \E\bigg[\Big|\sum_{I^j \in \Paplus(I^a)} \big[J_f^{-1}(\bZ)\big]_{I^j,a} \cdot \big[\bss(\bZ) - \tilde \bss^{\pi_b}(\bZ)\big]_{I^j} \Big|\bigg] \\
    &= \E\bigg[\Big|\big[J_f^{-1}(\bZ)\big]_{I^a,a} \cdot \big[\bss(\bZ)-\tilde \bss^{\pi_b}(\bZ)\big]_{I^a} \Big|\bigg] \ .
\end{align}
Using \eqref{eq:continuous-argument-3} and Step~1 result, we have $\big[\tilde \bD(h^*)\big]_{a,b}\neq 0$.
\paragraph{Step 3: Show that $\E\Big[\big|[J_f^{-1}(\bZ)]_{I^a,b}\big|\Big] \neq 0$.} Using \eqref{eq:score-difference-z-zhat} and \Cref{lm:parent_change_comprehensive}(i), we have
\begin{align}
    \big[\tilde\bD(h^*)\big]_{b,b} &= \E\bigg[\Big|\big[J_f^{-1}(\bZ)\big]_{b} \cdot \big[\bss(\bZ)-\tilde \bss^{\pi_b}(\bZ)\big]\Big|\bigg] \\ 
    &= \E\bigg[\Big|\sum_{I^j \in \Paplus(I^a)} \big[J_f^{-1}(\bZ)\big]_{I^j,b} \cdot \big[\bss(\bZ)-\tilde s^{\pi_b}(\bZ)\big]_{I^j}\Big|\bigg] \\ 
    &= \E\bigg[\Big|\big[J_f^{-1}(\bZ)]_{I^a,b} \cdot \big[\bss(\bZ)-\tilde s^{\rho_c}(\bZ)\big]_{I^a} \Big|\bigg] \ .
\end{align}
Since $\mathds{1}\{\bD(h^*)\}=\mathds{1}\{\tilde \bD(h^*)\}$, the diagonal entry $\big[\tilde \bD(h^*)\big]_{b,b}$ is nonzero. Then, using \eqref{eq:continuous-argument-3} we have $\E\Big[\big|[J_f^{-1}(\bZ)]_{I^a,b}\big|\Big] \neq 0$.
\paragraph{Step 4: Show that $\big[\bD(h^*)\big]_{b,a}\neq 0$.} Next, we use $I^a=\tilde I^{\pi_b}$ and \Cref{lm:parent_change_comprehensive}(i) to obtain
\begin{align}
    \big[\bD(h^*)\big]_{b,a} &=  \E\bigg[\Big|\big[J_f^{-\top}(\bZ)\big]_{b} \cdot \big[\bss(\bZ)-\bss^{a}(\bZ)\big]\Big|\bigg]\\
    &= \E\bigg[\Big|\sum_{I^j \in \Paplus(I^a)} \big[J_f^{-1}(\bZ)\big]_{I^j,b} \cdot \big[\bss(\bZ) - \bss^{a}(\bZ)\big]_{I^j} \Big|\bigg] \\
    &= \E\bigg[\Big|\big[J_f^{-1}(\bZ)\big]_{I^a,b} \cdot \big[\bss(\bZ)- \bss^{a}(\bZ)\big]_{I^a} \Big|\bigg] \ .
\end{align}
Using \eqref{eq:continuous-argument-2} and Step~3 result, we have $[\bD(h^*)]_{b,a}\neq 0$.

Finally, using the constraint $\mathds{1}\{\bD(h^*)\}=\mathds{1}\{\tilde \bD(h^*)\}$, Step~2 implies that  $[\bD(h^*)]_{a,b}\neq 0$. Combining with Step~4 result, we have $[\bD(h^*) \odot \bD^{\top}(h^*)]_{a,b} \neq 0$, which violates the last constraint in \eqref{eq:general-OPT2}. Therefore, if $(h^*,\pi)$ is a minimizer of \eqref{eq:general-OPT2}, then $\pi$ must be the correct coupling $\sigma$.

\section{Analysis of the Assumptions}\label{appendix:assumptions}

\subsection{Analysis of Assumption~\ref{assumption:rank-two}}\label{sec:discuss-assumption-rank-two}

In this subsection, we prove \Cref{lm:rank-two-additive-hard} statement, i.e., \Cref{assumption:rank-two} is satisfied for additive noise models under hard interventions, as follows.
Consider a hard interventional environment $\mcE^m$ and let $I^m = i$ be the intervened node. Recall \eqref{eq:def-score-diff-z-int-obs-single}, which becomes
\begin{equation}
   [\bsd_{\bZ}^{m}(\bz)]_k =  \big[\bss(\bz) - \bss^{m}(\bz)\big]_k = \frac{\partial}{\partial z_k} \log p_{i}(z_i \mid \bz_{\Pa(i)})  \ ,
\end{equation}
for the hard intervention on node $i$ and parent node $k \in \Pa(i)$. Then, the ratio in \Cref{assumption:rank-two} is given by 
\begin{equation}\label{eq:ratio-SN-2}
     \frac{[\nabla_{\bz} \log p(z_i \mid \bz_{\Pa(i)})]_k }{[\nabla_{\bz} \log p(z_i \mid \bz_{\Pa(i)})]_i - [\nabla_{\bz} \log q(z_i)]_i } = \frac{[\bsd_{\bZ}^{m}(\bz)]_k}{[\bsd_{\bZ}^{m}(\bz)]_i} \ .
\end{equation}
We will prove the desired result by contradiction. Assume that there exists a nonzero constant $c \in \R$ such that $[\bsd_{\bZ}^{m}(\bz)]_k = c \cdot [\bsd_{\bZ}^{m}(\bz)]_i$ for all $\bz \in \R^n$.

\paragraph{Additive noise model.} The additive noise model for node $i$ is given by $Z_i = f_i(\bZ_{\Pa(i)}) + N_i$ as specified in \eqref{eq:additive-SCM}. When node $i$ is hard intervened, $Z_i$ is generated according to $Z_i = \bar N_i$ in which $\bar N_i$ denotes the exogenous noise term for the interventional model. Then, denoting the pdfs of $N_i$ and $\bar N_i$ by $p_N$ and $q_N$, respectively, we have 
\begin{align}\label{eq:additive-pdf-connection}
    p_i(z_i \mid \bz_{\Pa(i)}) = p_N(z_i - f_i(\bz_{\Pa(i)}))\ , \quad \mbox{and} \quad 
    q_i(z_i) = q_N(z_i)\ .
\end{align}
Denote the score functions associated with $p_N$ and $q_N$ by
\begin{align}\label{eq:def-noise-scores}
    r_p(u) \triangleq \frac{{\rm d}}{{\rm d} u} \log p_N(u) = \frac{p_N'(u)}{p_N(u)}\ , \quad \mbox{and} \quad
    r_q(u) \triangleq \frac{{\rm d}}{{\rm d} u} \log q_N(u) = \frac{q_N'(u)}{q_N(u)}\ .
\end{align}
Define $n_i$ and $\bar n_i$ as the realizations of $N_i$ and $\bar N_i$ when $Z_i=z_i$ and $\bZ_{\Pa(i)}=\bz_{\Pa(i)}$. Then, we have $\bar n_i = n_i + f_i(\bz_{\Pa(i)})$. Using \eqref{eq:additive-pdf-connection} and \eqref{eq:def-noise-scores}, we can express \eqref{eq:ratio-SN-2} as 
\begin{align}\label{eq:rank2-proof-additive-score-diff}
\frac{[\bsd_{\bZ}^{m}(\bz)]_k}{[\bsd_{\bZ}^{m}(\bz)]_i}  = \frac{-\pdv{f_i(\bz_{\Pa(i)})}{z_k} \cdot r_p(n_i) }{r_p(n_i) - r_q(n_i + f_i(\bz_{\Pa(i)}))} \ . 
\end{align} 
Then, $[\bsd_{\bZ}^{m}(\bz)]_k = c \cdot [\bsd_{\bZ}^{m}(\bz)]_i$ for all $\bz \in \R^n$ becomes
\begin{equation}\label{eq:additive-rank2-step2}
    c \cdot \frac{\partial f_i(\bz_{\Pa(i)})}{\partial z_k} + 1 = \frac{r_q(n_i + f_i(\bz_{\Pa(i)}))}{r_p(n_i)} \ .
\end{equation}
We scrutinize \eqref{eq:additive-rank2-step2} in two cases.
\paragraph{Case 1: $\frac{\partial f_i(\bz_{\Pa(i)})}{\partial z_k}$ is constant.} In this case, let $\eta = \frac{\partial f_i(\bz_{\Pa(i)})}{\partial z_k}$ for some $\eta \in \R$ for all $\bz_{\Pa(i)} \in \R^{|\Pa(i)|}$. Then, the RHS of \eqref{eq:additive-rank2-step2} is also constant for all $\bz_{\Pa(i)} \in \R^{|\Pa(i)|}$ and $n_i \in \R$. Fix a realization $n_i = n_i^*$ and note that $f_i(\bz_{\Pa(i)})$ needs to be constant, denoted by $\delta^*$, for all $\bz_{\Pa(i)} \in \R^{|\Pa(i)|}$. However, this implies that $\frac{\partial f_i(\bz_{\Pa(i)})}{\partial z_k} = 0$, and we have $r_q(n_i + \delta^*) = r_p(n_i)$ for all $n_i \in \R$. This implies that $p_N(n_i) = \eta^* q_N(n_i+\delta^*)$ for some constant $\eta^*$. Since $p_N$ and $q_N$ are pdfs, the only choice is $\eta^*=1$ and $p_N(n_i)=q_N(n_i+\delta^*)$, then $p_i(z_i \mid \bz_{\Pa(i)}) =  q_i(z_i)$, which contradicts the premise that an intervention changes the causal mechanism of the target node $i$.

\paragraph{Case 2: $\frac{\partial f_i(\bz_{\Pa(i)})}{\partial z_k}$ is not constant.} In this case, note that LHS of \eqref{eq:additive-rank2-step2} is not a function of $n_i$. Then, taking the derivative of both sides with respect to $n_i$ and rearranging, we obtain
\begin{equation}\label{eq:additive-rank2-ratio}
    \frac{r_p'(n_i)}{r_p(n_i)} =
    \frac{r_q'(n_i + f_i(\bz_{\Pa(i)}))}{r_q(n_i + f_i(\bz_{\Pa(i)}))} \ , \quad \forall (n_i,\bz_{\Pa(i)})\in \R \times \R^{|\Pa(i)|}  \ .
\end{equation}
Next, consider a fixed realization $n_i=n_i^*$, and denote the value of LHS by $\alpha$. Since $f_i(\bz_{\Pa(i)})$ is continuous and not constant, its image contains an open interval $\Theta \subseteq \mathbb{R}$. Denoting $u\triangleq f_i(\bz_{\Pa(i)})$, we have
\begin{equation}
    \alpha = \frac{r_q'(n_i^* + u)}{r_q(n_i^* + u)} \ , \quad \forall u \in \Theta \ .
\end{equation}
The only solution to this equality is that $r_q(n_i^* + u)$ is an exponential function, $r_q(u) = k_1\exp(\alpha u)$ over interval $u \in \Theta$. Since $r_q$ is an analytic function that equals to an exponential function over an interval, it is exponential over entire $\mathbb{R}$. This implies that the pdf $q_N(u)$ is of the form $q_N(u) = k_2 \exp((k_1/\alpha) \exp(\alpha u))$. However, this cannot be a valid pdf since its integral over $\R$ diverges. Then, $[\bsd_{\bZ}^{m}(\bz)]_k = c \cdot [\bsd_{\bZ}^{m}(\bz)]_i$ cannot be true, which concludes the proof.

\subsection{Analysis of Assumption~\ref{assumption:full-rank}}\label{appendix:analyze-assumption-full-rank}
In this section, we establish the necessary and sufficient conditions under which \Cref{assumption:full-rank} holds for additive models. Furthermore, we show that a large class of nonlinear models in the latent space satisfy these conditions, including the two-layer neural networks. We have focused on such NNs since they effectively approximate continuous functions \citep{cybenko1989approximation}. Readily, the necessary and sufficient conditions can be investigated for other choices of nonlinear functions.

\subsubsection{Interpreting Assumption~\ref{assumption:full-rank}}\label{appendix:interpret-assumption-full-rank}
The implication of \Cref{assumption:full-rank} is that the effect of an intervention is not lost in any linear combination of the varying coordinates of the scores. To formalize this, for each node $i\in[n]$, we define 
\begin{align}
    \mcC_i \triangleq \{\bc\in\R^n\;:\; \exists j \in \Paplus(i) \;\; \mbox{such that }\; c_j\neq 0 \}\ , \label{eq:C}
\end{align}
Let $I^m = i$. Note that $\bR_{\bZ}^{m}$ is positive semi-definite and $[\bR_{\bZ}^{m}]_{j,k} = 0$ for all $(j,k) \notin \Paplus(i) \times \Paplus(i)$. Therefore, for any $\bc \in \mcC_{i}$, 
\begin{equation}
    \bc^{\top} \cdot \bR_{\bZ}^{m} \cdot \bc = \bE\big[ (\bc^{\top} \cdot \bsd_{\bZ}^m) \cdot (\bc^{\top} \cdot \bsd_{\bZ}^m)^{\top}  \big] \neq 0 \ ,
\end{equation}
since $\bR_{\bZ}^{m}$ has rank $|\Paplus(i)|$. The reverse direction also holds true, i.e., if $\bR_{\bZ}^{m}$ has rank less than $|\Paplus(i)|$, then there exists $\bc \mcC_{i}$ which makes $\bc^{\top} \cdot \bR_{\bZ}^{m} \cdot \bc = 0$. Therefore, \Cref{assumption:full-rank} is equivalent to the following statement:
\begin{equation}\label{eq:assumption-full-rank-original}
    \bE\big[|\bc^{\top} \cdot \bsd_{\bZ}^{m}|\big] \neq 0 \ , \quad \forall \bc \in \mcC_{i} \ . 
\end{equation}

\paragraph{Necessary and sufficient conditions.} Consider the additive noise model for node $i$
\begin{align}
    Z_i &= f_i(\bZ_{\Pa(i)}) + N_i\ ,  \label{eq:additive-model-discussion-proof-obs}
\end{align}
as specified in \eqref{eq:additive-SCM}. When node $i$ is soft intervened, $Z_i$ is generated according to
\begin{align}
    Z_i &= \bar f_i(\bZ_{\Pa(i)}) + \bar N_i\ ,  \label{eq:additive-model-discussion-proof-int}
\end{align}
in which $\bar f_i$ and $\bar N_i$ specify the interventional mechanism for node $i$. The following lemma characterizes the necessary and sufficient conditions under which \eqref{eq:assumption-full-rank-original} (equivalently, \Cref{assumption:full-rank}) is satisfied. In this subsection, we use $\varphi$ as the shorthand for $\bz_{\Pa(i)}$. 
\begin{lemma}\label{lm:condition-assumption}
For each node $i\in[n]$ consider the following two set of equations for $c\in\R^n$:
\begin{align}\label{eq:NS_equations}    
   \left\{
   \begin{array}{l}
         c_i - \bc^{\top} \cdot \nabla_{\bz} f_i(\varphi)  = 0  \\
         \\ 
         c_i - \bc^{\top} \cdot \nabla_{\bz} \bar f_i(\varphi)  = 0
   \end{array}
   \right.\ , \qquad \qquad \forall \varphi \in\R^{|\Pa(i)|}\ .
\end{align}
\Cref{assumption:full-rank} holds if and only if the only all solutions $\bc$ to \eqref{eq:NS_equations} satisfy $\bc \notin \mcC_i$, or based on~\eqref{eq:C}, equivalently $c_j =0$ for all $j \in \Paplus(i)$.
\end{lemma}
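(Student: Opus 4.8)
The plan is to start from the reformulation of \Cref{assumption:full-rank} recorded in \eqref{eq:assumption-full-rank-original}: for the intervened node $i=I^m$, the assumption holds if and only if $\E\big[|\bc^{\top}\bsd_{\bZ}^{m}(\bZ)|\big]\neq 0$ for every $\bc\in\mcC_i$. By \Cref{prop:continuity-argument}, this fails exactly when some $\bc\in\mcC_i$ makes $\bc^{\top}\bsd_{\bZ}^{m}(\bz)$ vanish identically in $\bz$, so the first step is to write this contraction in closed form using the additive structure. Letting $\varphi=\bz_{\Pa(i)}$, $\delta(\varphi)\triangleq f_i(\varphi)-\bar f_i(\varphi)$, $n_i\triangleq z_i-f_i(\varphi)$ (which ranges over all of $\R$ once $\varphi$ is fixed), and $r_p,r_q$ the score functions of the noise densities $p_N,q_N$ of the observational and interventional mechanisms, the decomposition behind \eqref{eq:additive-component-score} gives
\[
\bc^{\top}\bsd_{\bZ}^{m}(\bz)=B(\varphi)\,r_q\big(n_i+\delta(\varphi)\big)-A(\varphi)\,r_p(n_i)\ ,
\]
where $A(\varphi)\triangleq c_i-\bc^{\top}\nabla_{\bz}f_i(\varphi)$ and $B(\varphi)\triangleq c_i-\bc^{\top}\nabla_{\bz}\bar f_i(\varphi)$. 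Since \eqref{eq:NS_equations} is exactly the pair of conditions $A\equiv 0$ and $B\equiv 0$, the lemma reduces to the single equivalence: $\bc^{\top}\bsd_{\bZ}^{m}\equiv 0$ if and only if $A\equiv 0$ and $B\equiv 0$.

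\textbf{Easy direction.}
If $A\equiv 0$ and $B\equiv 0$, the display above is identically zero, so $\bc^{\top}\bsd_{\bZ}^{m}\equiv 0$; contrapositively, \Cref{assumption:full-rank} forces every solution of \eqref{eq:NS_equations} to lie outside $\mcC_i$. Note also that any $\bc$ with $c_j=0$ for all $j\in\Paplus(i)$ trivially solves \eqref{eq:NS_equations}, which is why only $\mcC_i$-solutions carry content.

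\textbf{Hard direction.}
Conversely, assume $B(\varphi)\,r_q(n_i+\delta(\varphi))=A(\varphi)\,r_p(n_i)$ for all $n_i\in\R$ and all $\varphi$. Suppose toward a contradiction that $A(\varphi_0)\neq 0$ for some $\varphi_0$. If $B(\varphi_0)=0$ then $r_p\equiv 0$, impossible for a score of a full-support pdf; hence $B(\varphi_0)\neq 0$ and $r_p(u)=\lambda_0\,r_q(u+\delta_0)$ with $\lambda_0\neq 0$ and $\delta_0=\delta(\varphi_0)$. Feeding this back in shows that each $\varphi$ with $A(\varphi)\neq 0$ yields a relation $r_q(v)=\kappa(\varphi)\,r_q\big(v+(\delta_0-\delta(\varphi))\big)$ for all $v$, with $\kappa(\varphi)\neq 0$. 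If the shift $\delta_0-\delta(\varphi)$ were nonzero, iterating this quasi-periodicity relation would make $q_N$ fail integrability over $\R$ for any value of $|\kappa(\varphi)|$ --- the same mechanism used in the proof of \Cref{lm:pq-parent-dependence-new} to exclude exponential and periodic noise scores. Hence $\delta\equiv\delta_0$ on the open set $U\triangleq\{\varphi:A(\varphi)\neq 0\}$ and $\kappa\equiv\lambda_0^{-1}$, so the main identity on $U$ gives $B=\lambda_0 A$; but $\delta$ being constant on $U$ forces $\nabla_{\bz}f_i=\nabla_{\bz}\bar f_i$ there, hence $B=A$ on $U$, so $\lambda_0=1$. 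Then $r_p=r_q(\cdot+\delta_0)$, whence $p_N(\cdot)=q_N(\cdot+\delta_0)$ by normalization, which gives $p_i(\cdot\mid\varphi)=q_i(\cdot\mid\varphi)$ for $\varphi\in U$. Upgrading this to $f_i\equiv\bar f_i$ everywhere --- using $C^1$-regularity of $f_i,\bar f_i$ together with the connectedness of $\R^{|\Pa(i)|}$ and a boundary-matching argument across $\partial U$ --- contradicts the premise that the intervention changes the mechanism of node $i$. Therefore $A\equiv 0$; the main identity then reads $B(\varphi)\,r_q(n_i+\delta(\varphi))=0$, and since $r_q\not\equiv 0$ we conclude $B\equiv 0$, completing the equivalence and hence the lemma.

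\textbf{Main obstacle.}
The algebraic reduction and the easy direction are routine. The delicate step is the final part of the hard direction: when $A$ vanishes on a proper (and possibly geometrically irregular) subset rather than identically, one must carefully promote ``$p_i=q_i$ on an open set of parent configurations'' to a genuine contradiction. I expect this to require the full-support/analyticity properties of the noise already exploited in \Cref{lm:pq-parent-dependence-new}, together with a careful propagation of the equality $f_i=\bar f_i$ off $U$ using continuity of $\nabla f_i$ and $\nabla\bar f_i$; handling this cleanly when $|\Pa(i)|>1$ is the main technical point.
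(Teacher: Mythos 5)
Your reduction to the identity $A(\varphi)\,r_p(n_i)=B(\varphi)\,r_q(n_i+\delta(\varphi))$ with $A=c_i-\bc^{\top}\nabla_{\bz}f_i$ and $B=c_i-\bc^{\top}\nabla_{\bz}\bar f_i$, and your easy direction, coincide exactly with the paper's proof (its equations (eq:equivalent\_statement4)--(eq:ratio)). In the hard direction, however, you take a genuinely different route, and that is where the gaps are. The paper's engine is to differentiate the ratio identity in $n_i$, which yields $r_p'/r_p=r_q'(n_i+\delta(\varphi))/r_q(n_i+\delta(\varphi))$ on $\Omega\times\Phi$; the \emph{continuum} of values taken by $\delta$ on $\Phi$ (its Step~1 rules out $\delta$ being constant there) then forces $r_q'/r_q$ to be constant on an interval, hence $r_q$ exponential by analyticity, hence $q_N$ non-normalizable. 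You instead extract, for each fixed $\varphi$, a \emph{single} quasi-periodicity $r_q(v)=\kappa(\varphi)\,r_q(v+s(\varphi))$ and assert that $s(\varphi)\neq 0$ is impossible because "iterating" it breaks integrability, citing the mechanism of \Cref{lm:pq-parent-dependence-new}. That citation is inexact: the mechanism there needs the continuum of shifts, and mere iteration does not give a contradiction (for $|\kappa|>1$ iterating forward only makes $r_q$ decay). The claim is in fact true, but it needs its own argument: for $\kappa=1$ the score is periodic, so $q_N$ is either periodic or geometrically growing in one direction; for $|\kappa|\neq 1$ one writes $r_q(v)=e^{av}P(v)$ with $P$ continuous periodic and $a\neq 0$, so $r_q$ is integrable on one tail, $\log q_N$ has a finite limit there, and $q_N$ is bounded away from zero on a half-line; the case $\kappa=-1$ reduces to period $2s$ with zero mean. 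As written, this step is a hole.

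The second, more serious issue is that your route funnels the entire contradiction into the weakest branch. By forcing $\delta\equiv\delta_0$ on $U=\{A\neq 0\}$ you land in the case where all you can conclude is $p_i(\cdot\mid\varphi)=q_i(\cdot\mid\varphi)$ \emph{for $\varphi\in U$}, and you correctly observe that upgrading this to a contradiction with "the mechanisms are distinct" requires propagating $\delta\equiv\delta_0$ off $U$ — which you do not do. This is not a cosmetic point: since the paper assumes only that the $f_i$ are general ($C^1$) functions, one can arrange $A$ to vanish identically on the region where $\delta\neq\delta_0$ (e.g., $\bc^{\top}\nabla f_i\equiv c_i$ there) while $A\not\equiv 0$ elsewhere, and no continuity or "boundary-matching" argument across $\partial U$ will rescue the step; closing it would require real-analyticity of $f_i,\bar f_i$ (so that $\delta=\delta_0$ on a nonempty open set implies $\delta\equiv\delta_0$), an assumption the paper does not make. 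The paper's proof touches the same weak point only in its Step~1 (where it likewise concludes "the mechanisms coincide" from coincidence on $\Phi$), but its main load-bearing argument is the exponential-score contradiction of Step~2, which your proof bypasses entirely. So your proposal is not a complete proof: the quasi-periodicity claim must be proven rather than asserted, and the final propagation step is a genuine missing idea, not a routine technicality.
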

\proof See \Cref{proof:condition-assumption}. \\

To provide some intuition about the conditions in \Cref{lm:condition-assumption}, we consider a node $i \in [n]$ and discuss the conditions in the context of a few examples. Note that by sweeping $\varphi\in \R^{|\Pa(i)|}$ we generate a continuum of linear equations of the form: 
\begin{align}\label{eq:NS_equations2}
c_i - \bc^{\top} \cdot \nabla_{\bz} f_i(\varphi)  = 0\ , \qquad \mbox{and} \qquad c_i - \bc^{\top} \cdot \nabla_{\bz} \bar f_i(\varphi)  = 0  \ .
\end{align}
Note that for all $j\not\in \Pa(i)$ we have $[\nabla_{\bz} f_i(\varphi)]_j=[\nabla_{\bz} \bar f_i(\varphi)]_j=0$. Hence, in finding the solutions to~\eqref{eq:NS_equations2} only the coordinates $\{j\in \Paplus(i)\}$ of $\bc$ are relevant. Let us define 
\begin{align} \label{eq:varphi}
  u(\varphi)\triangleq \nabla_{\varphi} f_i(\varphi) \ , \quad \mbox{and} \quad 
  \bar u(\varphi)\triangleq \nabla_{\varphi} \bar f_i(\varphi) \ ,
\end{align}
which are the gradients of $f_i$ and $f_i$ by considering only the coordinates of $\bz$ in $\{j\in {\Pa}(i)\}$. Accordingly, we also define $\bb$ by concatenating only the coordinates of $\bc$ with their indices in $\{j\in {\Pa}(i)\}$. Next, consider $w$ distinct choices of $\varphi$ and denote them by $\{\varphi^t\in\R^{|\Pa(i)|}:t\in[w]\}$. By concatenating the two equations in~\eqref{eq:varphi} specialized to these realizations, we get the following linear system with $2w$ equations and $|\Pa(i)|+1$ unknown variables.
\begin{align}\label{eq:discussion-full-rank}
    \underset{\triangleq \, \bV \, \in \, \R^{2w\times (|\Pa(i)|+1)}}{\underbrace{
    \begin{bmatrix}
        [u(\varphi^1)]^{\top} & -1\\
        [\bar u(\varphi^1)]^{\top} & -1\\
        \vdots & \vdots\\
        [u(\varphi^w)]^{\top} & -1\\
        [\bar u(\varphi^w)]^{\top} & -1\\
    \end{bmatrix}}}
    \begin{bmatrix}
        \bb \\ c_i
    \end{bmatrix}
    =\boldsymbol{0}_{2w}\ .
\end{align}
When $\bV$ is full-rank, i.e., $\rank(\bV)=|\Pa(i)|+1$, the system has only the trivial solutions $c_i=0$ and $\bb=\boldsymbol{0}$. Then, we make the following observations.

\begin{enumerate}[leftmargin=*]
    \item If $f_i$ and $\bar f_i$ are linear functions, the vector spaces generated by $u$ and $\bar u$ have dimensions 1. Subsequently, we always have $\rank(\bV)\leq 2$, rendering an underdetermined system when $|\Pa(i)|\geq 2$. Hence, when the maximum degree of $\mcG$ is at least 2, a linear causal model does not satisfy \Cref{assumption:full-rank}. 
    
    \item If $f_i$ and $\bar f_i$ are quadratic with full-rank matrices, i.e., $f_i(\varphi) = \varphi^{\top} \bA \varphi$ and $\bar f_i(\varphi) = \varphi^{\top} \bar \bA \varphi$ where $\rank(\bA)=\rank(\bar \bA)=|\Pa(i)|$, there is a choice of $w \geq |\Pa(i)| + 1$ and realizations $\{\varphi^t \in \R^{|\Pa(i)|} : t \in [w] \}$ for which $\rank(\bV)=|\Pa(i)|+1$ and the system in~\eqref{eq:NS_equations2} admits only the trivial solutions $a_i=0$ and $\bb=\boldsymbol{0}$. Hence, quadratic causal models satisfy \Cref{assumption:full-rank}.
    
    \item If $f_i$ and $\bar f_i$ are two-layer NNs with a sufficiently large number of hidden neurons, they also render a fully determined system, and as a result, they satisfy \Cref{assumption:full-rank}. 
\end{enumerate}
We investigate the last example in detail as follows. Assume that $f_i$ and $\bar f_i$ are two-layer NNs with $|\Pa(i)|$ inputs, $w_i$ and $\bar w_i$ hidden nodes, respectively, and with sigmoid activation functions. Denote the weight matrices between input and hidden layers in $f_i$ and $\bar f_i$ by $\bW^i \in \R^{w_i \times |\Pa(i)|}$ and $\bar \bW^i \in \R^{\bar w_i \times |\Pa(i)|}$, respectively. Furthermore, define $\bnu \in \R^{w_i}$ and $\bar \bnu \in \R^{w_i}$ as the weights between the hidden layer and output in $f_i$ and $\bar f_i$, respectively. Finally, define $\nu_0$ and $\bar \nu_0$ as the bias terms. Hence, we have
\begin{align}
    f_i(\varphi) & = \bnu^{\top} \cdot \sigma(\bW^i \cdot \varphi) + \nu_0  = \sum_{j=1}^{w_i} \nu_j \cdot \sigma(\bW^i_j \cdot \varphi) + \nu_0  \ , \label{eq:nn-expression} \\
    \mbox{and} \quad \bar f_i(\varphi) & = \bar \bnu^{\top} \cdot \sigma(\bar \bW^i \cdot \varphi) + \nu_0  = \sum_{j=1}^{\bar w_i} \bar \nu_j \cdot \sigma(\bar \bW^i_j \cdot \varphi) + \bar \nu_0  \ , \label{eq:nn-expression2}
\end{align}
in which activation function $\sigma$ is applied element-wise.

\paragraph{\Cref{prop:nn}}
Consider NNs $f_i$ and $\bar f_i$ specified in~\eqref{eq:nn-expression} and \eqref{eq:nn-expression2}. If for all $i \in[n]$ we have
   \begin{align}
       \max\{\rank(\bW^i)\; , \; \rank(\bar \bW^i)\} = |\Pa(i)|\ ,
   \end{align}
then \Cref{assumption:full-rank} holds. \\

\proof See \Cref{proof:nn}.

\subsubsection{Proof of Lemma~\ref{lm:condition-assumption}}\label{proof:condition-assumption}
We show that for node $I^m = i$, the condition in \eqref{eq:assumption-full-rank-original}
\begin{align}
    \E\Big[\big| \bc^{\top} \cdot \bsd_{\bZ}^{m} \big|\Big] \neq 0  \ , \;\; \forall c \in \mcC_i\ , 
    \label{eq:assumption_sc:node_level}
\end{align}
holds if and only if the following continuum of equations admit their solutions $\bc$ in $\R^n \setminus \mcC_i$.
\begin{align}
  \left\{
   \begin{array}{l}
         c_i - \bc^{\top} \cdot \nabla_{\bz} f_i(\varphi)  = 0  \\
         \\ 
         c_i - \bc^{\top} \cdot \nabla_{\bz} \bar f_i(\varphi)  = 0
   \end{array}
   \right.\ , \qquad \qquad \forall \varphi \in\R^{|\Pa(i)|}\ , \label{eq:assumption_sc:condition_node_level}
\end{align}
in which shorthand $\varphi$ is used for $\bz_{\Pa(i)}$. We first note that the condition in \eqref{eq:assumption_sc:node_level} can be equivalently stated using \eqref{eq:s_z_decompose_obs} and \eqref{eq:s_z_decompose_int} as 
\begin{align}\label{eq:equivalent_statement2-pre}
    \E\bigg[\Big| \bc^{\top}\cdot \big[\nabla_{\bz} \log p_i(Z_i \mid \bZ_{\Pa(i)}) - \nabla_{\bz} \log q_i(Z_i \mid \bZ_{\Pa(i)}) \big]\Big|\bigg] \neq 0 \ , \quad \forall \bc \in \mcC_{i} \ .
\end{align}
Also, using \Cref{prop:continuity-argument}, \eqref{eq:equivalent_statement2-pre} is equivalent to 
\begin{align}\label{eq:equivalent_statement2}
    \forall \bc \in \mcC_i, \;\; \exists \bz : \quad \bc^{\top}\cdot \nabla_{\bz} \log p_i(z_i \mid \bz_{\Pa(i)}) \neq \bc^{\top}\cdot \nabla_{\bz} \log q_i(z_i \mid \bz_{\Pa(i)}) \ . 
\end{align}
The additive noise model for node $i$ is given by
\begin{align}
    Z_i &= f_i(\bZ_{\Pa(i)}) + N_i\ ,  \label{eq:additive-model-aux-proof-obs}
\end{align}
as specified in \eqref{eq:additive-SCM}. When node $i$ is soft intervened, $Z_i$ is generated according to
\begin{align}
    Z_i &= \bar f_i(\bZ_{\Pa(i)}) + \bar N_i\ ,  \label{eq:additive-model-aux-proof-int}
\end{align}
in which $\bar f_i$ and $\bar N_i$ specify the interventional mechanism for node $i$. Then, denoting the pdfs of $N_i$ and $\bar N_i$ by $p_N$ and $q_N$, respectively, \eqref{eq:additive-model-aux-proof-obs} and \eqref{eq:additive-model-aux-proof-int} imply that
\begin{align}\label{eq:p-h-connection}
    p_i(z_i \mid \bz_{\Pa(i)}) = p_N\big(z_i - f_i(\bz_{\Pa(i)})\big)\ , \quad \mbox{and} \quad 
    q_i(z_i \mid \bz_{\Pa(i)}) = q_N\big(z_i - \bar f_i(\bz_{\Pa(i)})\big)\ . 
\end{align}
Denote the score functions associated with $p_N$ and $q_N$ by
\begin{align}\label{eq:def-r}
    r_p(u) \triangleq \frac{{\rm d}}{{\rm d} u} \log p_N(u) = \frac{p_N'(u)}{p_N(u)}\ , \quad \mbox{and} \quad
    r_q(u) \triangleq \frac{{\rm d}}{{\rm d} u} \log q_N(u) = \frac{q_N'(u)}{q_N(u)}\ .
\end{align}
Define $n_i$ and $\bar n_i$ as the realizations of $N_i$ and $\bar N_i$ when $Z_i=z_i$ and $\bZ_{\Pa(i)}=\bz_{\Pa(i)}$. By defining $\delta(\bz_{\Pa(i)}) \triangleq f_i(\bz_{\Pa(i)}) - \bar f_i(\bz_{\Pa(i)})$, we have $\bar n_i = n_i + \delta(\bz_{\Pa(i)})$. Using \eqref{eq:def-r} and \eqref{eq:p-h-connection}, we can express the relevant entries of $\nabla_{\bz} \log p_i(z_i \mid \bz_{\Pa(i)})$ and $\nabla_{\bz} \log q_i(z_i \mid \bz_{\Pa(i)})$ as
\begin{align}
    \big[\nabla_{\bz} \log p(z_i \mid \bz_{\Pa(i)})\big]_j
    &= \begin{cases} 
        r_p(n_i)  \ , &j = i \ , \\
        -\pdv{f_i(\bz_{\Pa(i)})}{z_j} r_p(n_i)  \ , &j \in \Pa(i) \ , \\
        0 \ , &j \notin  \Paplus(i)  \ ,
    \end{cases} \label{eq:a2_score_obs}  \\
    \mbox{and} \quad 
    \big[\nabla_{\bz} \log q_i(z_i \mid \bz_{\Pa(i)})\big]_j
    &= \begin{cases} 
        r_q(\bar n_i)  \ , &j = i \ , \\
        -\pdv{\bar f_i(\bz_{\Pa(i)})}{z_j}  r_q(\bar n_i)  \ , &j \in \Pa(i) \  , \\
        0 \ , &j \notin  \Paplus(i) \ .
    \end{cases} \label{eq:a2_score_int}
\end{align}
By substituting \eqref{eq:a2_score_obs}--\eqref{eq:a2_score_int} in \eqref{eq:equivalent_statement2} and rearranging the terms, the statement in~\eqref{eq:equivalent_statement2} becomes equivalent to following statement. For all $\bc \in \mcC_i$, there exist $n_i \in \R$ and $\bz_{\Pa(i)}\in\R^{|\Pa(i)|}$ such that 
\begin{align} \label{eq:equivalent_statement3} 
    r_p(n_i) \cdot \bigg(c_i- \sum_{j \in {\Pa}(i)} c_j \cdot \pdv{f_i(\bz_{\Pa(i)})}{z_j}\bigg) \neq r_q(n_i+\delta(\bz_{\Pa(i)})) \cdot \bigg(c_i- \sum_{j \in {\Pa}(i)} c_j\cdot \pdv{\bar f_i(\bz_{\Pa(i)})}{z_j}\bigg)\ ,   
\end{align}
which by using the shorthand $\varphi$ for $\bz_{\Pa(i)}$ can be compactly presented as follows. For all $\bc\in \mcC_i$, there exists $n_i \in \R$ and $\varphi \in \R^{|\Pa(i)|}$ such that
\begin{align}\label{eq:equivalent_statement4}
    r_p(n_i) \cdot \left[c_i - \bc^{\top} \cdot \nabla_{\bz} f_i(\varphi)\right] \neq r_q\big(n_i+\delta(\varphi)\big) \cdot \left[c_i - \bc^{\top} \cdot \nabla_{\bz}  \bar f_i(\varphi)\right] \ .
\end{align}
Hence, \Cref{assumption:full-rank} is equivalent to the statement in~\eqref{eq:equivalent_statement4}, which we use for the rest of the proof.

\paragraph{Sufficient condition.} We show that if~\eqref{eq:assumption_sc:condition_node_level} admit solutions $\bc$ only in $\R^n\setminus \mcC_i$, then the statement in~\eqref{eq:equivalent_statement4} holds. By contradiction, assume that there exists $\bc^*\in\mcC_i$ such that for all $n_i \in\R$ and $\varphi \in \R^{|\Pa(i)|}$,
\begin{align}\label{eq:equivalent_statement5}
    r_p(n_i) \cdot \left[c^*_i-(\bc^*)^{\top} \cdot \nabla_{\bz} f_i(\varphi)\right] = r_q\big(n_i+\delta(\varphi)\big)  \cdot \left[c^*_i-(\bc^*)^{\top} \cdot \nabla_{\bz} \bar f_i(\varphi)\right] \  .
\end{align}
We show that $\bc^*\in\mcC_i$ is also a solution to~\eqref{eq:assumption_sc:condition_node_level}, contradicting the premise. In order to show that $\bc^*\in\mcC_i$ is also a solution to~\eqref{eq:assumption_sc:condition_node_level}, suppose, by contradiction, that \eqref{eq:equivalent_statement5} holds, and there exists $\varphi^*\in\R^{|\Pa(i)|}$ corresponding to which
\begin{align}\label{eq:contradiction_premise}
    (\bc^*)^{\top}\cdot \nabla_{\bz} \bar f_i(\varphi^*)- c^*_i \neq 0\ .
\end{align}
Note that $f_i$ is a continuously differentiable function and also a function of $z_j$ for all $j \in \Pa(i)$. Hence, there exists an open set $\Phi \subseteq \mathbb{R}^{|\Pa(i)|}$ for $\varphi$ for which $(\bc^*)^{\top}\cdot \nabla_{\bz} f_i(\varphi^*)- c^*_i$ is nonzero everywhere in $\Phi$. Likewise, $r_p$ cannot constantly be zero over all possible intervals $\Omega$. This is because otherwise, it would have to necessarily be a constant zero function (since it is analytic), which is an invalid score function. Hence, there exists an open interval $\Omega \subseteq \mathbb{R}$ over which $r_p(n_i)$ is nonzero for all $n_i \in\Omega$. Subsequently, the left-hand side of \eqref{eq:equivalent_statement5} is nonzero over the Cartesian product $(n_i, \varphi) \in \Omega \times \Phi$. This means that if \eqref{eq:equivalent_statement5} is true, then both functions on its right-hand side must also be nonzero over $\Omega \times \Phi$. Hence, by rearranging the terms in~\eqref{eq:equivalent_statement5} we have
\begin{align}
   \frac{r_q\big(n_i+\delta(\varphi)\big)}{r_p(n_i)} = \frac{c^*_i - (\bc^*)^{\top}\cdot \nabla_{\bz}  f_i(\varphi)}{ c^*_i - (\bc^*)^{\top}\cdot \nabla_{\bz} \bar f_i(\varphi)}\ , \qquad \forall (n_i, \varphi) \in \Omega \times \Phi \ . \label{eq:ratio}
\end{align}
In two steps, we show that \eqref{eq:ratio} cannot be valid.

\noindent \textbf{Step 1.} First, we show that function $\delta$ cannot be a constant over $\Phi$ (interval specified above). Suppose the contrary and assume that $\delta(\varphi)= \delta^*$ for all $\varphi \in \Phi$. Hence, the gradient of $\delta$ is zero. Using the definition of $\delta$, this implies that
\begin{align}
    \nabla_{\bz} f_i(\varphi) = \nabla_{\bz} \bar f_i(\varphi) \ , \quad \forall \varphi \in \Phi \ .
\end{align}
Then, by leveraging \eqref{eq:ratio}, we conclude that $r_p(n_i) = r_q(n_i + \delta^*)$ for all $n_i \in \Omega$. Since $r_p(n_i)$ and $r_q(n_i + \delta^*)$ are analytic functions that agree on an open interval of $\mathbb{R}$, they are equal for all $n_i \in \mathbb{R}$ as well. This implies that $p_N(n_i) = \eta \cdot q_N(n_i+\delta^*)$ for some constant $\eta$. Since $p_N$ and $q_N$ are pdfs, the only choice is $\eta=1$ and $p_N(n_i)=q_N(n_i+\delta^*)$, then $p_i(z_i \mid \bz_{\Pa(i)}) =  q_i(z_i \mid \bz_{\Pa(i)})$, which contradicts the premise that observational and interventional mechanisms are distinct.

\noindent \textbf{Step 2.} Finally, we will show that \eqref{eq:ratio} cannot be true when $\delta$ is not a constant function over $\Phi$. Note that the right-hand side of \eqref{eq:ratio} is not a function of $n_i$. Then, taking the derivative of both sides with respect to $n_i$ and rearranging, we obtain
\begin{align}\label{eq:ratio-derivatives}
    \frac{r_p'(n_i)}{r_p(n_i)} &=
    \frac{r_q'(n_i + \delta(\varphi))}{r_q(n_i + \delta(\varphi))} \ , \quad \forall (n_i,\varphi)\in \Omega \times \Phi  \ .
\end{align}
Next, consider a fixed realization $n_i=n_i^*$, and denote the value of LHS by $\alpha$. Since $\delta$ is continuous and not constant over $\Phi$, its image contains an open interval $\Theta \subseteq \mathbb{R}$. Denoting $u\triangleq \delta(\varphi)$, we get
\begin{align}
    \alpha = \frac{r_q'(n_i^* + u)}{r_q(n_i^* + u)} \ , \quad \forall u \in \Theta \ .
\end{align}
The only solution to this equality is that $r_q(n_i^* + u)$ is an exponential function, $r_q(u) = k_1\exp(\alpha u)$ over interval $u \in \Theta$. Since $r_q$ is an analytic function that equals to an exponential function over an interval, it is exponential over entire $\mathbb{R}$. This implies that the pdf $q_N(u)$ is of the form $q_N(u) = k_2 \exp((k_1/\alpha) \exp(\alpha u))$. However, this cannot be a valid pdf since its integral over $\R$ diverges. Hence, \eqref{eq:ratio} is not true, and the premise that there exists $\bc^* \in \mcC_i$ and $\varphi\in\R^{|\Pa(i)|}$ corresponding to which \eqref{eq:contradiction_premise} holds is invalid, concluding that for all $\varphi\in\R^{|\Pa(i)|}$ we have $\bc^{\top}\cdot \nabla_{\bz} \bar f_i(\varphi) = c_i$. Proving the counterpart identity $\bc^{\top}\cdot  \nabla_{\bz}  f_i(\varphi) = c_i$ follows similarly. Therefore, \eqref{eq:equivalent_statement5} implies
\begin{align}
  \left\{
   \begin{array}{l}
         c^*_i - (\bc^*)^{\top} \cdot \nabla_{\bz} f_i(\varphi)  = 0  \\
         \\ 
         c^*_i - (\bc^*)^{\top} \cdot \nabla_{\bz} \bar f_i(\varphi)  = 0
   \end{array}
   \right.\ , \qquad \qquad \forall \varphi \in\R^{|\Pa(i)|}\ ,
\end{align}
which means that we have found a solution to~\eqref{eq:assumption_sc:condition_node_level} that is not in $\R^n\setminus \mcC_i$, contradicting the premise. 

\paragraph{Necessary condition.} Assume that \Cref{assumption:full-rank}, and equivalently, the statement in~\eqref{eq:equivalent_statement4} holds but \eqref{eq:assumption_sc:condition_node_level} has a solution $\bc^*$ in $\mcC_i$. Then,
\begin{align}
     c^*_i - (\bc^*)^{\top} \cdot \nabla_{\bz} f_i(\varphi)  =  c^*_i - (\bc^*)^{\top} \cdot \nabla_{\bz} \bar f_i(\varphi)= 0 \ , \quad \forall \varphi \in \R^{|\Pa(i)|} \ .
\end{align}
Multiplying the left side by $r_p(n_i)$ and the right side by $r_q(n_i + \delta(\varphi))$, we obtain that for all $n_i \in \R$ and $ \varphi\in\R^{|\Pa(i)|}$
\begin{align}
      r_p(n_i) \cdot \left[c^*_i-(\bc^*)^{\top} \cdot \nabla_{\bz} f_i(\varphi)\right] = r_q(n_i+\delta(\varphi))  \cdot \left[c^*_i-(\bc^*)^{\top} \cdot \nabla_{\bz}  \bar f_i(\varphi)\right] =  0 \ , 
\end{align}
which implies that for all $\bc \in \mcC_i$, $n_i \in \R$ and $ \varphi\in\R^{|\Pa(i)|}$
\begin{align}
   r_p(n_i) \cdot \left[c_i-\bc^{\top} \cdot \nabla_{\bz} f_i(\varphi)\right] = r_q(n_i+\delta(\varphi))  \cdot \left[c_i-\bc^{\top} \cdot \nabla_{\bz} \bar f_i(\varphi)\right] \ .
\end{align}
This contradicts \eqref{eq:equivalent_statement4}, and equivalently \Cref{assumption:full-rank}. Hence, the proof is complete.  
\subsubsection{Proof of Lemma~\ref{prop:nn}}\label{proof:nn}

\paragraph{Approach.}
We will use the same argument as at the beginning of the proof of \Cref{lm:condition-assumption}. Specifically, we will show that for any node $i$ and two-layer NNs $f_i$ and $\bar f_i$ with weight matrices $\bW^i $ and $\bar \bW^i$, the following continuum of equations admit their solutions $\bc$ in $\R^n\setminus \mcC_i$.
\begin{align}
    \left\{
   \begin{array}{l}
         c_i -\bc^{\top} \cdot \nabla_{\bz} \bar f_i(\varphi)  = 0  \\
         \\ 
         c_i -\bc^{\top} \cdot \nabla_{\bz} f_i(\varphi)  = 0
   \end{array}
   \right.\ , \qquad \qquad \forall \varphi \in\R^{|\Pa(i)|}\ ,
\end{align}
in which shorthand $\varphi$ is used for $\bz_{\Pa(i)}$. Hence, by invoking \Cref{lm:condition-assumption}, \Cref{assumption:full-rank} holds. 

\paragraph{Definitions.} Define $d_i \triangleq |\Pa(i)|$. Since $\max\{\rank(\bW^i), \rank(\bar \bW^i)\}=d_i$, without loss of generality, suppose that $\bW^i \in \R^{w_i \times d_i}$ has rank $d_i$. The rest of the proof follows similarly for the case of $\rank(\bar \bW^i)=d_i$. We use shorthand $\{\bW, w\}$ for $\{\bW^i, w_i\}$ when it is obvious from context.

\paragraph{Parameterization.}
Note that $f_i$ can be represented by different parameterizations, some containing more hidden nodes than others. Without loss of generality, let $w$ be the fewest number of nodes that can represent $f_i$. This implies that the entries of $\bnu$ are nonzero. Otherwise, if $\nu_i=0$, we can remove $i$-th hidden node and still have the same $f_i$. Similarly, the rows of $\bW$ are distinct. Otherwise, if there exist rows $\bW_i = \bW_j$ for distinct $i,j \in [w]$, removing $j$-th hidden node and using $(\nu_i + \nu_j)$ in place of $\nu_i$ results the same function as $f_i$ with $(w-1)$ hidden nodes. Similarly, we have $\bW_i \neq \boldsymbol{0}$ for all $i \in [w]$. Otherwise, we have 
\begin{equation}
    \nu_i \cdot \sigma(\bW_i \cdot \varphi) + \nu_0 = (\nu_0 + \frac{\nu_i}{2}) \ ,
\end{equation}
and by removing $i$-th hidden node and using $(\nu_0 + \frac{\nu_i}{2})$ instead of $\nu_0$, we reach the same function as $f_i$ with $(w-1)$ hidden nodes. Finally, we have $\bW_i + \bW_j \neq \boldsymbol{0}$. Otherwise, we have
\begin{equation}
    \nu_i \cdot \sigma(\bW_i \varphi) + \nu_j \sigma(\bW_j \cdot \varphi) + \nu_0 = (\nu_i - \nu_j) \cdot \sigma(\bW_i \cdot \varphi) + (\nu_0+\nu_j) \ ,
\end{equation}
and by removing $j$-th hidden node and using $(\nu_i-\nu_j)$ instead of $\nu_i$ and $(\nu_0+\nu_j)$ instead of $\nu_0$, we reach the same function as $f_i$ with $(w-1)$ hidden nodes. In summary, we have
\begin{align}
    \bW_i \neq \boldsymbol{0} \ , \;\; \nu_i \neq 0 \ ,  \quad  \forall i \in [w] \ , \quad \mbox{and}  \quad    \bW_i \pm \bW_j \neq \boldsymbol{0} \ , \quad \forall i, j \in [w] : i \neq j \ . \label{eq:W_construction}
\end{align}
We will show that there does not exist $\bc \in \mcC_i$ such that $\bc^{\top} \cdot \nabla_{\bz} f_i(\varphi) = c_i$ for all $\varphi \in \R^{d_i}$. 
Assume the contrary, and assume there exist $\bc^* \in \mcC_i$ such that
\begin{align}
    (\bc^*)^{\top} \cdot \nabla_{\bz} f_i(\varphi) = c^*_i \ , \quad \forall \varphi \in \R^{d_i} \ .
    \label{eq:nn-contrary}
\end{align}
This is equivalent to showing that there exists nonzero $\bb^* \in \R^{d_i}$ such that 
\begin{align}
    (\bb^*)^{\top} \cdot \nabla_{\varphi} f_i(\phi) = c_i^{*} \ , \quad \forall \varphi \in \R^{d_i} \ .
\end{align}
Based on \eqref{eq:nn-expression}, the gradient of $f_i(\varphi)$ is
\begin{align}
    \nabla_{\varphi} f_i(\varphi) = \bW^{\top} \cdot \diag(\bnu) \cdot \sdot(W \cdot \varphi) \ , \label{eq:u-expression}
\end{align}
where $\diag(\bnu)$ is the diagonal matrix with $\bnu$ as its diagonal elements, and $\sdot$ is the derivative of the sigmoid function, applied element-wise to its argument. Hence, based on \eqref{eq:nn-contrary}, the contradiction premise is equivalent to having a nonzero $\bb^* \in \R^{d_i}$ such that
\begin{align}
    \big[\sdot(\bW \cdot \varphi)\big]^{\top} \cdot \diag(\bnu) \cdot \bW \cdot \bb^* = c_i \ , \quad \forall \varphi \in \R^{d_i} \ .  \label{eq:nn-constant-requirement}
\end{align}
We note that since $\bW$ is full-rank and $\nu_i$ is nonzero for all $i \in [w]$, the matrix $\diag(\bnu) \cdot \bW$ is full-rank as well and it has a trivial null space. Subsequently, $\bb^* \in \R^{d_i}$ is nonzero if and only if $(\diag(\bnu) \cdot \bW \cdot \bb^*) \in \R^{w}$ is nonzero. We will use the following lemma to show that \eqref{eq:nn-constant-requirement} cannot be true. This establishes that the contradiction premise is not correct, and completes the proof.
\begin{lemma}\label{lm:aux-nn}
    Let $\bu \in \R^p$ have nonzero entries with distinct absolute values, i.e., $u_i \neq 0$ and $|u_i| \neq |u_j|$ for all $i \neq j$, and $\alpha \in \R$ be a constant. Then, for every nonzero vector $\bc \in \R^p$, there exists $\alpha \in \R$ such that $[\sdot(\alpha \bu)]^{\top} \cdot \bc \neq a$.
\end{lemma}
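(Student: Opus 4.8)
The plan is to prove Lemma~\ref{lm:aux-nn} directly by analyzing the function $\alpha \mapsto [\sdot(\alpha \bu)]^{\top} \cdot \bc$ and showing it cannot be the constant function unless $\bc = \boldsymbol{0}$. Recall that $\sdot$ is the derivative of the sigmoid $\sigma(t) = 1/(1+e^{-t})$, so $\sdot(t) = \sigma(t)(1-\sigma(t)) = e^{-t}/(1+e^{-t})^2$, which is an even, analytic, strictly positive function on $\R$ that decays like $e^{-|t|}$ as $|t| \to \infty$. Define $F(\alpha) \triangleq \sum_{j=1}^{p} c_j \, \sdot(u_j \alpha)$. Since each $\sdot(u_j \alpha)$ is analytic in $\alpha$, $F$ is analytic on $\R$; hence if $F \equiv a$ on any interval it is the constant function $a$ on all of $\R$. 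Note that $\sdot(u_j \alpha) = \sdot(|u_j| \alpha)$ by evenness, so without loss of generality we may take $u_1, \dots, u_p > 0$ and all distinct.

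The key step is to examine the decay of $F(\alpha)$ as $\alpha \to +\infty$. Each term satisfies $\sdot(u_j \alpha) = e^{-u_j \alpha}/(1+e^{-u_j\alpha})^2 \sim e^{-u_j \alpha}$ to leading order. Order the indices so that $u_{(1)} < u_{(2)} < \cdots < u_{(p)}$; then the term with the smallest $u_{(1)}$ dominates, and $F(\alpha) = c_{(1)} e^{-u_{(1)}\alpha}(1 + o(1))$ as $\alpha \to \infty$ if $c_{(1)} \neq 0$. Since this tends to $0$, the only possible constant value is $a = 0$. If $c_{(1)} \neq 0$, then $F(\alpha)$ is nonzero for all sufficiently large $\alpha$ (its leading-order behavior is a nonzero exponential), so $F$ is not identically $0$; hence there exists $\alpha$ with $F(\alpha) \neq 0 = a$, which is the desired conclusion. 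If $c_{(1)} = 0$, drop that term and repeat the argument with the next-smallest exponent $u_{(2)}$: the first index $k$ with $c_{(k)} \neq 0$ gives $F(\alpha) \sim c_{(k)} e^{-u_{(k)}\alpha}$, which is again a nonzero exponential, so $F \not\equiv 0$. Since $\bc \neq \boldsymbol{0}$, such a first nonzero coefficient exists, so in every case $F$ is not identically equal to $a$, establishing the claim.

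To make the asymptotic argument rigorous without invoking big-$O$ informally, I would instead argue via linear independence of the functions $\{\sdot(u_j \alpha) : j \in [p]\}$ together with the constant function $1$. Concretely, suppose $\sum_j c_j \sdot(u_j \alpha) = a$ for all $\alpha \in \R$. Multiply through by $e^{u_{(1)}\alpha}$ and let $\alpha \to +\infty$: the left side tends to $c_{(1)}$ (all other terms vanish since $e^{u_{(1)}\alpha} \sdot(u_{(j)}\alpha) \to 0$ for $j$ with $u_{(j)} > u_{(1)}$, and $e^{u_{(1)}\alpha}\sdot(u_{(1)}\alpha) \to 1$), while the right side $a \, e^{u_{(1)}\alpha}$ tends to $0$ if $a = 0$ and to $+\infty$ otherwise. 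Hence $a = 0$ and $c_{(1)} = 0$; inducting on the remaining coefficients forces all $c_j = 0$, contradicting $\bc \neq \boldsymbol{0}$. The main obstacle is purely bookkeeping: carefully handling the even symmetry of $\sdot$ (so that $u_j$ and $-u_j$ contribute identically, which is precisely why the hypothesis requires \emph{distinct absolute values} rather than distinct values — otherwise two terms could collapse and one could have $c_i \sdot(u_i\alpha) + c_j \sdot(-u_i\alpha) \equiv 0$ with $\bc \neq \boldsymbol{0}$), and justifying the limit interchange, which is elementary since each exponential ratio converges monotonically for large $\alpha$. Everything else is a standard analyticity/linear-independence argument.
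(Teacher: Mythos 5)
Your proof is correct, but it takes a genuinely different route from the paper's. The paper argues at $\alpha = 0$: it expands $g_u(\alpha) = \bc^{\top}\sdot(\alpha\bu)$ as a Taylor series, uses the (cited, nontrivial) fact that all even Taylor coefficients of $\sdot$ are nonzero to reduce the problem to the moment equations $\sum_j c_j u_j^{2i} = 0$ for all $i \geq 1$, and then kills these with a Vandermonde determinant in the $u_j^2$ (this is where distinctness of $|u_j|$ enters, as distinctness of $u_j^2$). You instead argue at $\alpha \to +\infty$: after using evenness of $\sdot$ to replace each $u_j$ by $|u_j|$, the terms $\sdot(u_j\alpha) \sim e^{-u_j\alpha}$ decay at $p$ distinct exponential rates, so multiplying the identity $\bc^{\top}\sdot(\alpha\bu) = a$ by $e^{u_{(1)}\alpha}$ and letting $\alpha \to \infty$ forces $a = 0$ and $c_{(1)} = 0$, and peeling off terms inductively forces $\bc = \boldsymbol{0}$. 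Your version is more elementary and self-contained: it needs only the leading-order behavior $\sdot(t) = e^{-t}(1+o(1))$ as $t \to +\infty$, and entirely avoids both the nonvanishing-Taylor-coefficient fact (which the paper must import from a reference and which ultimately rests on nonvanishing of Bernoulli numbers) and the Vandermonde computation. Both proofs use the evenness of $\sdot$ and the distinct-absolute-values hypothesis in exactly the same essential way, and you correctly flag why distinct values alone would not suffice. The limit interchange you worry about is a non-issue since the sum is finite; your argument is rigorous as written in your second paragraph.
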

\proof See \Cref{sec:proof:lm:aux-nn}.

Let us define $\xi \triangleq \bW \cdot \varphi$. We will show that there exists $\varphi^* \in \R^{d_i}$ such that $\xi = \bW \cdot \varphi^*$ satisfies the conditions in \Cref{lm:aux-nn}. Then, using \Cref{lm:aux-nn} with the choice of $\bu=\bW \cdot \varphi^*$, $d=\diag(\bnu) \cdot \bW \cdot \bb^*$, and $a=c_i$, we find that there exists $\alpha \in \R$ such that 
\begin{align}\label{eq:nn-constant-requirement-contrary}
     \big[\sdot(\alpha \cdot \bW  \cdot \varphi^*)\big]^{\top} \cdot \diag(\bnu) \cdot \bW \cdot \bb^* \neq c_i \ .
\end{align}
Hence, \eqref{eq:nn-constant-requirement} is false since it is violated for $\varphi=\alpha \varphi^*$ and the proof is completed. We show the existence of such $\varphi^*$ as follows. We first construct the set of $\varphi$ values for which conditions of \Cref{lm:aux-nn} on $w$ are \emph{not} satisfied. The set in question is the union of the following cases: (i) $\xi_i = \bW_i \cdot \varphi = 0$ for some $i \in [w]$, (ii) $|\xi_i| = |\xi_j|$ for some distinct $i,j \in [w]$, or equivalently, $(\bW_i \pm \bW_j) \varphi = 0$. Note that $\bW_i \neq \boldsymbol{0}$ and $\bW_i \pm \bW_j \neq \boldsymbol{0}$ by \eqref{eq:W_construction}. For a nonzero vector $\by \in \R^{d_i}$, the set $\{\varphi \in \R^{d_i} \ : \ \by^{\top} \varphi =  0\}$ is a $(d_i-1)$-dimensional subspace of $\R^{d_i}$. Then, there is $w$ number of $(d_i-1)$-dimensional subspaces that fall under case (i), and $w(w-1)$ number of $(d_i-1)$-dimensional subspaces that fall under case (ii). Therefore, there are $w^2$ lower-dimensional subspaces for which the conditions of \Cref{lm:aux-nn} do not hold. However, $\R^{d_i}$ cannot be covered by a finite number of lower-dimensional subspaces of itself. Therefore, there exists $\varphi^* \in \R^{d_i}$ such that $\xi = \bW \cdot \varphi^*$ satisfies the conditions of \Cref{lm:aux-nn}, and the proof is completed. 

\subsubsection{Proof of Lemma~\ref{lm:aux-nn}}\label{sec:proof:lm:aux-nn}
Assume the contrary and suppose that there exists a nonzero $\bc$ and $\alpha$ for a given $\bu$. Define the function $g_u(\alpha) \triangleq\bc^{\top} \cdot \sdot(\alpha \bu)$. Note that 
\begin{align}
    \sdot(x) = \frac{1}{1+e^{-x}+e^{x}}
\end{align}
is an even analytic function, and its Taylor series expansion at $0$ has the domain of convergence $\{x \in \R : |x| < \frac{\pi}{2}\}$. Thus, for all $\alpha \in (-\frac{\pi}{2 \max_{i} |u_{i}|}, \frac{\pi}{2 \max_{i} |u_{i}|})$,
\begin{align}
    a = g_u(\alpha) 
    &= \sum_{j=1}^{p} c_j \cdot \sdot(\alpha u_j) = \sum_{i=0}^{\infty} \gamma_i \sum_{j=1}^{p} c_j (\alpha u_j)^{2i} = \sum_{i=0}^{\infty} (\gamma_i \sum_{j=1}^{p} c_j u_j^{2i}) \alpha^{2i} \ .
\end{align}
Note that $g_u(\alpha)$ is a constant function of $\alpha \in (-\frac{\pi}{2 \max_{i} |u_{i}|}, \frac{\pi}{2 \max_{i} |u_{i}|})$. Thus, its Taylor coefficients, i.e., $(\gamma_i \sum_{j=1}^{p} c_j u_j^{2i})$, are zero for all $i \in \N^+$. However, the coefficients of even powers in Taylor expansion of $\sdot$, i.e., $\gamma_i$'s, are nonzero \citep{weisstein2002sigmoid}. Therefore, we have $\sum_{i=1}^{p}(c_j u_j^{2i}) = 0$ for all $i \in \N^+$. Next, construct the following system of linear equations,
\begin{align}
    \begin{bmatrix}
    u_1^2 & u_1^4 & \dots & u_1^{2w} \\
    \vdots & \vdots & \dots & \vdots \\
    u_p^2 & u_p^4 & \dots & u_p^{2w}
    \end{bmatrix} 
    \begin{bmatrix}
        c_1 \\ \vdots \\  c_p 
    \end{bmatrix}
    = \boldsymbol{0} \ .
\end{align}
This is equivalent to
\begin{align}
    \diag\big([u_1^2,\dots,u_p^2]^{\top}\big) \cdot  
    \underset{{\rm Vandermonde}}{\underbrace{
    \begin{bmatrix}
    1 & u_1^2 & \dots & u_1^{2(p-1)} \\
    \vdots & \vdots & \dots & \vdots \\
    1 & u_p^2 & \dots & u_p^{2(p-1)}
    \end{bmatrix}}}
    \cdot
    \begin{bmatrix}
        c_1 \\ \vdots \\  c_p 
    \end{bmatrix}
    =  \boldsymbol{0} \ . \label{eq:vandermonde}
\end{align}
Note that the Vandermonde matrix in \eqref{eq:vandermonde} has determinant $\prod_{1 \leq i \leq j < p}(u_i^2 - u_j^2)$, which is nonzero since $|u_i|\neq |u_j|$ for $i\neq j$. Multiplying an invertible matrix with a diagonal invertible matrix generates another invertible matrix, and $\bc$ must be a zero vector which is a contradiction. Hence, there does not exist such $\bc$ for which $\bc^{\top} \cdot \sdot(\alpha \bu)$ is constant for every $\alpha \in \R$.

\subsection{Assumptions of \texorpdfstring{\citep{zhang2023identifiability}}{Zhang et al. (2023)}}
\label{appendix:zhang2023-nonlinearity}
We clarify that Assumption 2 of \citet{zhang2023identifiability}, which is referred to as ``linear interventional faithfulness'', \emph{requires} nonlinear SCMs. Subsequently, both of their main results (Theorem 1 and Theorem 2) require this assumption. First, note that \citet[p.6]{zhang2023identifiability} implicitly explain that their results are for nonlinear SCMs: \textsl{``In general, we show in Appendix B that a large class of nonlinear SCMs and soft interventions satisfy this assumption.''} Next, we show that this assumption is violated for linear additive noise models as follows. First, we quote their assumption:
\paragraph{Linear interventional faithfulness}~\citep[Assumption 2]{zhang2023identifiability}: Intervention $I$ on node $i$ satisfies linear interventional faithfulness if for every $j \in \Ch(i) \cup \{i\}$ such that $\Pa(j) \cap \De(i) = \emptyset$, it holds that $\mathbb{P}(Z_j + \bc^{\top} \bZ_S) \neq \mathbb{P}^I(Z_j + \bc^{\top} \bZ_S)$ for all constant vectors $\bc \in \R^{|S|}$ where $S = [n] \setminus \De(i)$.
\paragraph{Linear SCMs violate linear interventional faithfulness.} Next, we show that linear SCMs violate the above linear interventional faithfulness assumption. Consider an intervention on node $i$, and let $j \in \Ch(i)$ such that $\Pa(i) \cap \De(i) = \emptyset$. Let 
\begin{equation}
    Z_j = \bw^\top \bZ_{\Pa(j)} + N_j \ ,
\end{equation}
where $\bw \in \R^{|\Pa(j)|}$ denotes the weight vector. Note that we have $\Pa(j) \subseteq S = [n] \setminus \De(i)$ since $\Pa(i) \cap \De(i) = \emptyset$. Consider vector $\bc \in \R^{|S|}$ such that $\bc_{\Pa(j)} = - \bW$ and $\bc_{S \setminus \Pa(j)} = \boldsymbol{0}$ where $\bc_{\Pa(j)}$ denotes the entries corresponding to nodes in $\Pa(j)$. Then, we have 
\begin{equation}
    Z_j + \bc^{\top}\bZ_S = N_j \ ,
\end{equation} 
which remains invariant after an intervention on node $i$ and yields $\mathbb{P}(Z_j + \bc^{\top} \bZ_S) = \mathbb{P}^I(Z_j + \bc^{\top} \bZ_S)$. Therefore, linear SCMs violate linear interventional faithfulness.
\section{Empirical Evaluations: Details and Additional Results}\label{appendix:experiments}

\subsection{Implementation Details of LSCALE-I Algorithm}\label{appendix:experiments-linear}

\paragraph{Preprocessing: Dimensionality reduction.}

Since $\bX = \bG \cdot \bZ$, we can compute $\image(\bG)$ using $n$ random samples of $\bX$ almost surely. Specifically, $\image(\bG)$ equals the column space of the sample covariance matrix of $n$ samples of $\bX$. In LSCALE-I, as a preprocessing step, we compute this subspace and express samples of $\bX$ and $\bs_\bX$ in this basis. This procedure effectively reduces the dimension of $\bX$ and $\bs_\bX$ from $d$ to $n$. Then, we perform steps of LSCALE-I using this $n$ dimensional observed data. 

\paragraph{Implementation details of \Cref{alg:linear}.} In Stage~L1 of \Cref{alg:linear}, choosing any $\bh \in \colspace(\bR_\bX^m)$ suffices. To make our algorithm deterministic, we choose the top eigenvector of $\colspace(\bR_\bX^m)$ that corresponds to the largest eigenvalue. In Stage~L2, we use a nonzero threshold $\lambda_{\mcG}$, i.e.,
\begin{equation}\label{eq:linear-hard-parent-set-empirical}
   \hat \Pa(m) \triangleq \Big\{ i \neq m :  \E\Big[\big| \bsd_{\hat \bZ}^{m}(\hat \bZ; \hat \bH)\big|_i \Big] \geq \lambda_{\mcG} \Big\} \ .
\end{equation}
In \Cref{tab:edge-param}, we list all $\lambda_{\mcG}$ values used in the experiments.

\begin{table}[t]
    \centering
    \caption{Choice of thresholding parameter $\lambda_{\mcG}$ for latent graph estimation.}
    \label{tab:edge-param}
    \begin{tabular}{ccc}
        \toprule
         Experiment & perfect scores & noisy scores \\
         \midrule
         \Cref{tab:lscalei-linear-hard} & 0.001 &  0.1 \\
         \Cref{tab:lscalei-mlp-hard-n5} & 0.001 & 0.05 \\
         \Cref{tab:lscalei-linear-soft} & 0.0001 & 0.001 \\
         \Cref{tab:lscalei-quadratic-soft-n5} & 0.001 & 0.1 \\
         \Cref{tab:gscalei-quadratic} & 0.01 & 0.5 \\
         \bottomrule
    \end{tabular}
\end{table}

\paragraph{Implementation details of \Cref{alg:linear-full-rank}.}

The latent graph estimation part of \Cref{alg:linear-full-rank} involves rank tests, i.e., determining the dimension of the intersection of column spaces. Therefore, we also need a threshold, denoted by $\lambda_{\rm eigv}$ to determine the numerical ranks in practice. In experiments reported in \Cref{tab:lscalei-quadratic-soft-n5}, we set $\lambda_{\rm eigv}=0.01$.

\paragraph{Using the algorithms of related work.} For the comparisons in \Cref{sec:experiments:comparisons}, we used the shared sources codes of \citet{squires2023linear}\footnote{ \url{https://github.com/csquires/linear-causal-disentanglement-via-interventions}} and \citet{zhang2023identifiability}\footnote{\url{https://github.com/uhlerlab/discrepancy_vae}}.

\subsection{Implementation Details of GSCALE-I Algorithm}\label{appendix:experiments-general}

We perform experiments for the coupled interventions setting, and solve the following optimization problem in Step~G2 of GSCALE-I,
\begin{equation}\label{eq:gscale-i-l1}
    \min_{h}  \big\|\bD_{\rm t}(h) - \bI_{n \times n}\big\|_{1,1} + \lambda \E\Big[ \big\| h^{-1}(h(\bX)) - \bX \big\|_2^2\Big] \ .
\end{equation}
where $\lambda > 0 $ is a regularization parameter to ensure injectivity. In the following, first, we show why this problem is equivalent to solving \eqref{eq:general-OPT1-procedure}. Then, we describe the computation of the ground truth score differences for $\bX$ and discuss other implementation details.

\paragraph{Implementation steps.}
We use $n_{\rm s}$ samples from the observational environment to compute empirical expectations. Since encoder $h$ is parameterized by $\bH$, we use gradient descent to learn this matrix. To do so, we minimize the loss described above in \eqref{eq:gscale-i-l1}. We denote the final parameter estimate by $\bH^*$ and the encoder parameterized by $\bH^*$ by $h^*$.

In the simulation results reported in \Cref{sec:experiments-general}, we set $\lambda = 1$ and solve \eqref{eq:gscale-i-l1} using RMSprop optimizer with learning rate $10^{-3}$ for $3 \times 10^{4}$ steps for $n=5$ and $4 \times 10^{4}$ steps for $n=8$. We also use early stopping when the training converges before the maximum number of steps. Recall that the latent graph estimate $\hat \mcG$ is constructed using $\mathds{1}\{\bD(h^*)\}$. Similarly to \eqref{eq:linear-hard-parent-set-empirical}, We use a threshold $\lambda_{\mcG}$ to obtain the graph from the upper triangular part of $\bD(h^*)$.

\paragraph{Architecture details and hyperparameter selection for image experiments.}

In \Cref{tab:autoencoder-architecture} and \Cref{tab:ldr-architecture}, we list the architecture details and training hyperparameters used in the image experiments in \Cref{sec:experiments-image}. Specifically, \Cref{tab:autoencoder-architecture} specifies the autoencoder architecture we used to estimate the latent variables, and \Cref{tab:ldr-architecture} details the classification-based score difference network we trained. Finally, \Cref{tab:autoencoder-architecture-cnn} lists the first level CNN-based autoencoder we used for image experiments with 2 balls.

\begin{table}[tbp]
    \centering
    \small
    \caption{2-step autoencoder architecture for learning latent representations of images.}
    \begin{tabular}{ll}
        \toprule
        \textbf{Step} & \textbf{Details} \\
        \midrule
        Step 1 Encoder & Input: Image $\bX \in \mathbb{R}^{64 \times 64 \times 3}$ \\
                       & Flatten \\ 
                       & FC(256), ReLU, LayerNorm \\
                       & FC(64) \Comment{Intermediate representation $\bY$}
        \\
        \midrule
        Step 1 Decoder & FC(256), ReLU, LayerNorm \\
                       & FC($64 \times 64 \times 3$), Sigmoid \Comment{Reconstructed image $\hat \bX_{\rm c}$}
        \\
        \midrule
        Autoencoder-1 &  Training: 100 epochs, Adam optimizer, batch size:16 \\
                       & Learning rate: $10^{-3}$, weight decay: 0.01 \\ 
                       & Minimize $\E\big[\| \bX-\hat \bX_{\rm c} \|^2\big]$ \Comment{recons. loss}
        \\
        \midrule
        Step 2 Encoder & Input: $\bY \in \R^{64}$ \\
                       & FC(256), ReLU, LayerNorm \\
                       & FC(6) \Comment{Latent causal variables $\hat \bZ$}
        \\
        \midrule
        Step 2 Decoder & FC(256), ReLU, LayerNorm \\
                       & FC($64$) \Comment{Reconstructed $\hat \bY$}
        \\
        \midrule
        Autoencoder-2 &  Training: 100 epochs, Adam optimizer, batch size: 16, weight decay: 0.01 \\
                       & Learning rate: 0.95 decay/epoch for 75 epochs, reset to $10^{-3}$ for 25 epochs 
                       \\ 
                       & Minimize $\norm{\bD_{\rm t}(h)-\bI_{n \times n}}_{1,1} + \lambda \E\big[\| \bY -\hat \bY \|^2\big]$ \Comment{score loss + recons. loss}
        \\
        \bottomrule
    \end{tabular}
    \label{tab:autoencoder-architecture}
\end{table}
\begin{table}[t]
    \centering
    \small
    \caption{Architecture for LDR (log-density-ratio) model for score-difference estimation.}
    \begin{tabular}{ll}
        \toprule
        Layers:
        & Input: Image $\bX \in \mathbb{R}^{64 \times 64 \times 3}$, class (intervention) label $y \in \{0,1\}$ \\
        & Conv(3, 32, kernel size=3, stride=1, padding=1), ReLU, BatchNorm, Dropout(0.1) \\
        & MaxPool(kernel size=2, stride=2) \\
        & Conv(32, 64, kernel size=3, stride=1, padding=1), ReLU, BatchNorm, Dropout(0.1) \\
        & MaxPool(kernel size=2, stride=2) \\
        & Flatten, FC(128), ReLU \\
        & FC(1) \\
        \midrule
        Training:
        & 10 epochs, Adam optimizer, weight decay: 0.01, batch size: 16 \\
        & Learning rate: $10^{-5}$ \\
        & Minimize binary cross entropy between sigmoid of negation of output and labels \\
        \bottomrule
    \end{tabular}
    \label{tab:ldr-architecture}
\end{table}
\begin{table}[tbp]
    \centering
    \small
    \caption{Autoencoder architecture for learning latent representations of images - Step 1, CNN variant.}
    \scalebox{0.8}{
    \begin{tabular}{ll}
        \toprule
        \textbf{Step} & \textbf{Details} \\
        \midrule
        Encoder:
        & Input: Image $\bX \in \mathbb{R}^{64 \times 64 \times 3}$ \\
        & Conv(3, 64, kernel size=3, stride=1, padding=1), BatchNorm, ReLU, MaxPool(kernel size=2, stride=2) \\
        & Conv(64, 128, kernel size=3, stride=1, padding=1), BatchNorm, ReLU, MaxPool(kernel size=2, stride=2) \\
        & Conv(128, 256, kernel size=3, stride=1, padding=1), BatchNorm, ReLU, MaxPool(kernel size=2, stride=2) \\
        & Conv(256, 512, kernel size=3, stride=1, padding=1), BatchNorm, ReLU, MaxPool(kernel size=2, stride=2) \\
        & Flatten, FC($n$) \\
        \midrule
        Decoder:
        & FC($n, 512 \cdot 4 \cdot 4$), Unflatten to (512, 4, 4) \\
        & Upsample(scale=2), ConvTranspose(512, 256, kernel size=3, stride=1, padding=1), BatchNorm, ReLU \\
        & Upsample(scale=2), ConvTranspose(256, 128, kernel size=3, stride=1, padding=1), BatchNorm, ReLU \\
        & Upsample(scale=2), ConvTranspose(128, 64, kernel size=3, stride=1, padding=1), BatchNorm, ReLU \\
        & Upsample(scale=2), ConvTranspose(64, 3, kernel size=3, stride=1, padding=1), BatchNorm, ReLU \\
        & Sigmoid \\
        \midrule
        Training:
        & 100 epochs, Adam optimizer, Cosine annealing with warm restart at 1/5th of epochs, batch size: 256 \\
        & Learning rate: $10^{-3}$, weight decay: 0.01 \\ 
        & Minimize $\E\big[\| \bX-\hat \bX_{\rm c} \|^2\big]$ \Comment{recons. loss} \\
        \bottomrule
    \end{tabular}}
    \label{tab:autoencoder-architecture-cnn}
\end{table}

\subsection{Experimental Methodology Comparison with DiscrepancyVAE}
\label{sec:discrvae-interv-enc-problem}

One major difference between the LSCALE-I algorithm and the publicly available DiscrepancyVAE implementation is that DiscrepancyVAE tries to learn the intervention effects as an encoding of the intervention labels. Specifically, for each intervention target, the VAE model learns a vector of latent intervention targets (a neural network with softmax activation) to denote which latent node is intervened on.  In practice, learning an intervention encoding runs the risk of learning encodings where different interventions affect the same latent node, consequently, having nodes that are not affected by any intervention encoding. We demonstrate this phenomenon for DiscrepancyVAE in \Cref{fig:discrepancy-vae-interv-enc-n5}, plotting the learned intervention embeddings for each single-node intervention. We observe that despite having exhaustive single-node interventions, the intervention embedding network fails to learn exhaustive intervention targets/effects. Therefore, the learned graph is not immediately interpretable as a DAG on $n$ causal latent variables since the learned intervention targets (i.e., node labels) are not surjective.

In contrast, in LSCALE-I, the effect of an intervention is estimated using score function differences. While the distinction may seem small, this approach ensures that the estimated graph's nodes are always interpretable since score differences between observational and interventional environments directly correspond to the intervention targets.
\begin{figure}
    \centering
    \includegraphics[width=0.3\linewidth]{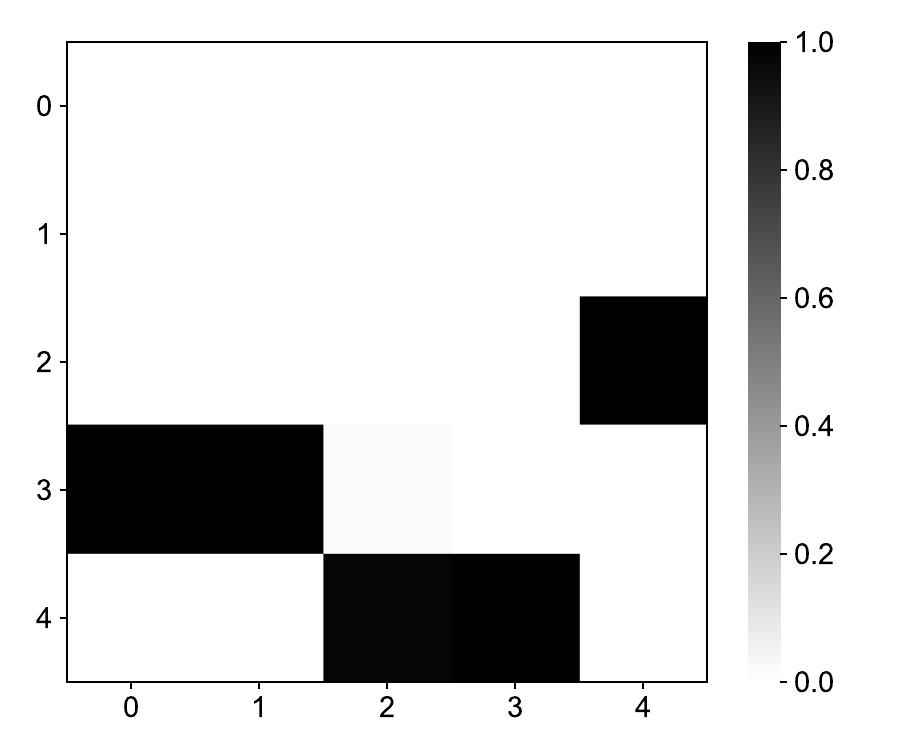}
    \caption{Intervention encoding learned in the sample run in \Cref{fig:discrepancy-vae-graph-n5}.}
    \label{fig:discrepancy-vae-interv-enc-n5}
\end{figure}

\subsection{Visualizations for Intervention Extrapolation on Biological Data} \label{sec:umap}

In Figure~\ref{fig:umaps}, we plot certain double-node intervention samples, both actual and sampled using our score-difference Langevin method in a UMAP~\citep{mcinnes2018umap} plot. We plot the double-node interventions that \citet{zhang2023identifiability} plots in their double-node intervention extrapolation results that have more than $1000$ samples for both single and double-node environments.{sec:discrvae-invert-enc-problem}
\begin{figure}
    \centering
    \includegraphics[width=0.18\linewidth]{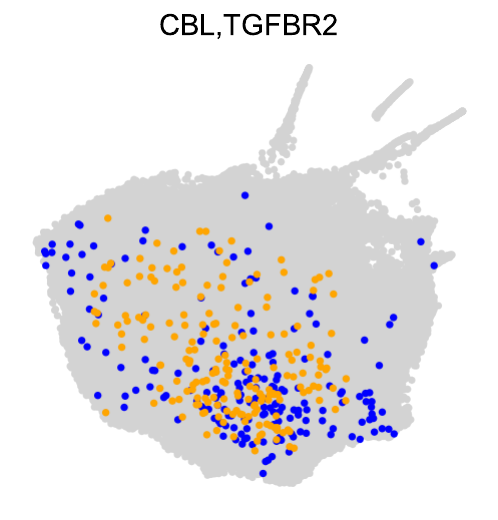} \ 
    \includegraphics[width=0.18\linewidth]{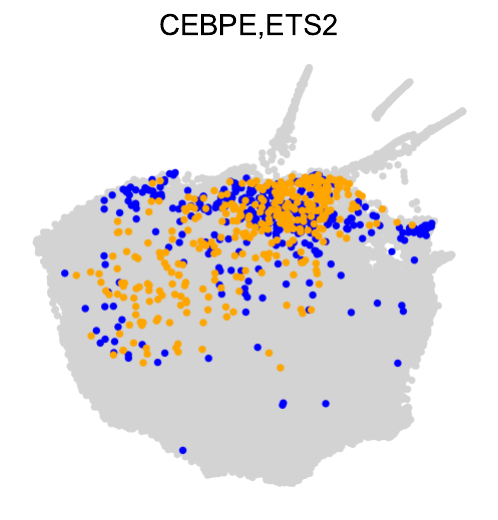} \ 
    \includegraphics[width=0.18\linewidth]{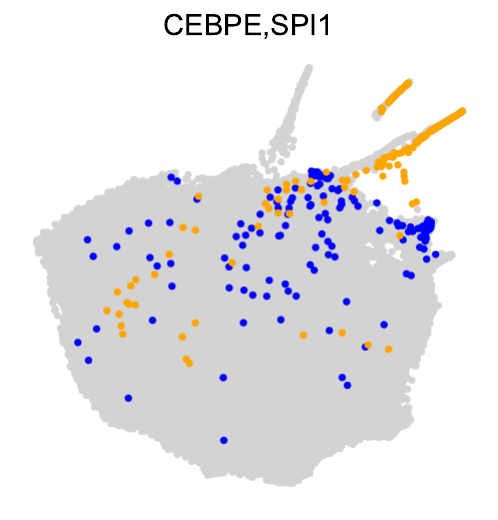} \ 
    \includegraphics[width=0.18\linewidth]{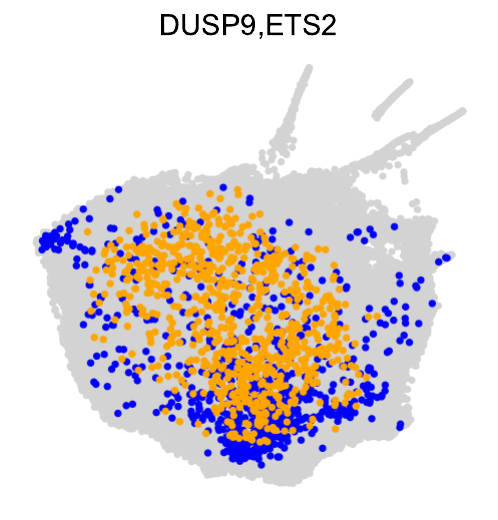} \\
    \includegraphics[width=0.18\linewidth]{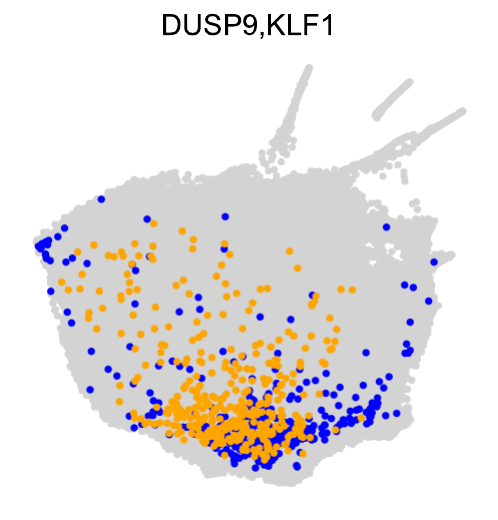} \ 
    \includegraphics[width=0.18\linewidth]{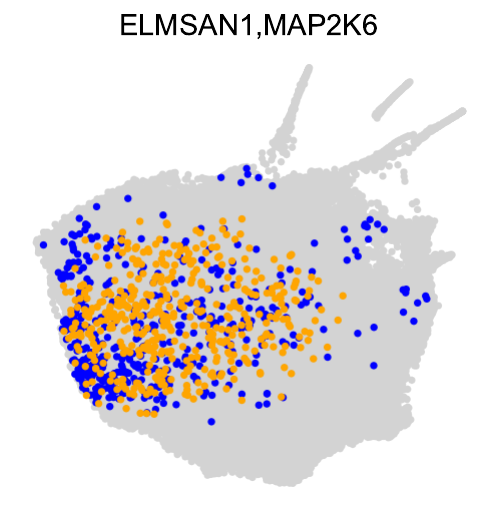} \ 
    \includegraphics[width=0.18\linewidth]{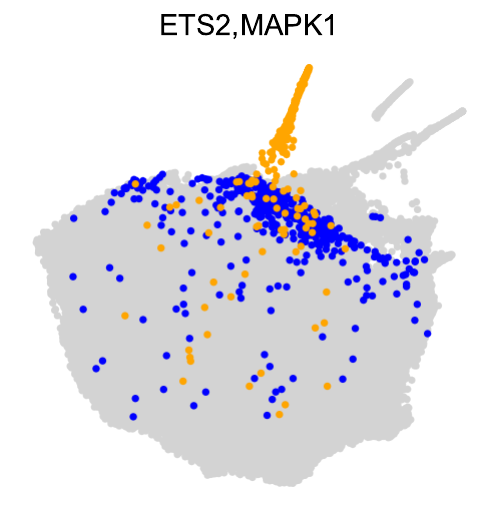} \ 
    \includegraphics[width=0.18\linewidth]{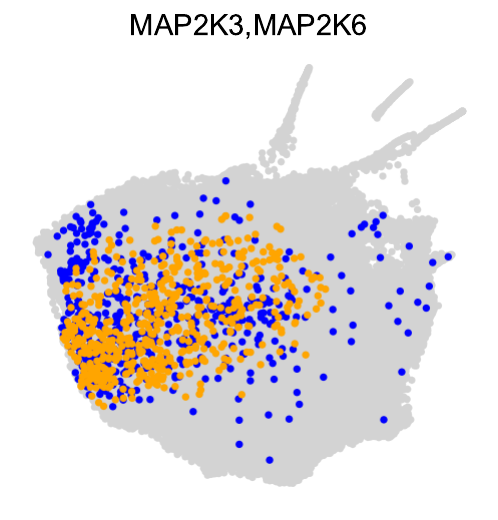} \
    \includegraphics[width=0.18\linewidth]{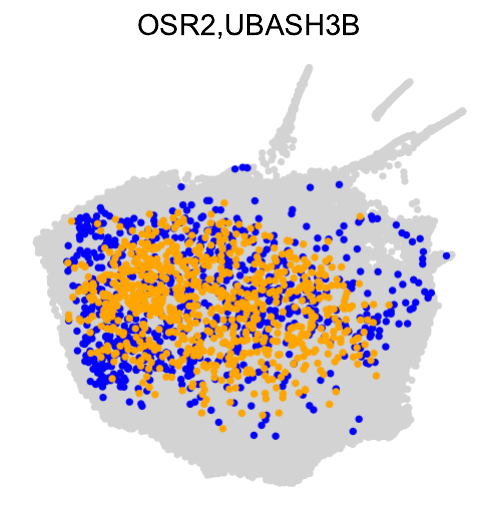}
    \caption{UMAP visualizations for double-node intervention extrapolation. Actual samples are in blue, generated samples are in orange.}
    \label{fig:umaps}
\end{figure}
We see across intervention pairs, score-based intervention extrapolation method leads to a sampling pattern that is relatively compliant with the actual double node interventional data. However, we observe that in some pairs, intervention effects are overestimated (particularly for CEBPE+SPI and ETS2+MAPK1).

\subsection{Details of Score Functions Used in the Experiments}\label{appendix:experiment-score-details}

\paragraph{Score function of the linear Gaussian model.}
For a Gaussian random vector $\bY \sim \mcN(\mu, \Sigma)$, the score function of $\bY$ is given by
\begin{equation}
    \bs_\bY(\by) = - \Sigma^{-1} \cdot (\by - \mu) \ .
\end{equation}
Using the linear model specified in \eqref{eq:linear-SEM--experiments}, in environment $\mcE^0$,
\begin{equation}
    \bZ = (\bI_{n \times n} - \bA)^{-1} \cdot N_i \ .
\end{equation}
Since $N_i$ is a zero-mean Gaussian random vector and $(\bI_{n \times n} - \bA)^{-1}$ is a full rank matrix, $\bZ$ is a zero-mean Gaussian random vector. Hence,
\begin{equation}
    \bs(\bz) = - \big[\Cov(\bZ)\big]^{-1} \cdot \bz \ .
\end{equation}
Note that the covariance of $\bZ$ has the following form
\begin{equation}
    \Cov(\bZ) = (\bI_{n \times n} - \bA)^{-1} \cdot \diag\big([\sigma_{1}^2,\dots,\sigma_{n}^2]^{\top}\big) \cdot \big((\bI_{n \times n} - \bA)^{-1}\big)^{\top} \ .
\end{equation}
Similarly, the score function of $\bZ$ in $\mcE^m$ is given by
\begin{equation}
    \bs^m(z) = - \big[\Cov(\bZ^m)\big]^{-1} \cdot \bz \ , \qquad \forall m \in [n] \ ,
\end{equation}
in which $\bZ^m$ denotes the latent variables $\bZ$ in environment $\mcE^m$. Finally, by setting $f=\bG$, \Cref{corollary:score-difference-transform-linear} specifies the score differences of $\bX$ in terms of $\bZ$ under different environment pairs as
\begin{equation}
    \bs_\bX(\bx) - \bs_\bX^m(\bx) = \big(\bG^{\dag}\big)^{\top} \cdot \big[\bs(\bz) - \bs^m(\bz)\big] \ .
\end{equation}

\paragraph{Score estimation for a linear Gaussian model.}
For all environments $\mcE^m \in \mcE$, given $n_{\rm s}$ i.i.d.\ samples of $\bX$, we first compute the sample covariance matrix denoted by $\hat \Sigma^m$. Then, we compute the sample precision matrix as
\begin{equation}
    \hat \Theta^m \triangleq (\hat \Sigma^m)^{\dag} \ ,
\end{equation}
which leads to the score function estimate given by
\begin{equation}
    \hat \bs_\bX^m(\bx) \triangleq - \hat \Theta^m \cdot \bx \ .
\end{equation}

\paragraph{Score function of the quadratic model.}
Following \eqref{eq:pz_m_factorized_hard}, score functions $s^m$ and $\tilde s^m$ are decomposed as follows.
\begin{align}
    \bs^{m}(\bz) &= \nabla_\bz \log q_{\ell}(z_{\ell}) +  \sum_{i \neq \ell} \nabla_\bz \log p_i(z_i \med \bz_{\Pa(i)}) \ , \label{eq:gscalei_sz_m_decompose_hard} \\
    \mbox{and} \quad \tilde \bs^{m}(\bz) &= \nabla_\bz \log \tilde q_{\ell}(z_{\ell}) +  \sum_{i \neq \ell} \nabla_\bz \log p_i(z_i \med \bz_{\Pa(i)})\ . \label{eq:gscalei_sz_m_decompose_hard_tilde}
\end{align}
For additive noise models, the terms in \eqref{eq:gscalei_sz_m_decompose_hard} and \eqref{eq:gscalei_sz_m_decompose_hard_tilde} have closed-form expressions. Specifically, using \eqref{eq:additive-component-score} and denoting the score functions of the noise terms $\{N_i : i \in [n]\}$ by $\{r_i : i \in [n]\}$, we have
\begin{equation}
    [\bs(\bz)]_i = r_i(n_i)-\sum_{j \in \Ch(i)} \frac{\partial f_i(\bz_{\Pa(j)})}{\partial z_i} \cdot r_j(n_j) \ .
\end{equation}
Recall that we consider a quadratic latent model with
\begin{equation}
    f_i(\bz_{\Pa(i)}) = \sqrt{\bz_{\Pa(i)}^{\top} \cdot \bQ_i \cdot \bz_{\Pa(i)}} \ ,
    \quad \mbox{and} \quad
    N_i \sim \mcN(0, \sigma_i^{2}) \ ,
\end{equation}
which implies 
\begin{equation}
  \frac{\partial f_j(\bz_{\Pa(j)})}{\partial z_i} = \frac{[\bQ_j]_i \cdot \bz_{\Pa(j)}}{\sqrt{\bz_{\Pa(j)}^{\top} \cdot \bQ_j \cdot \bz_{\Pa(j)}}} \ , \quad \mbox{and} \quad r_i(n_i) = -\frac{n_i}{\sigma_i^2} \ , \quad \forall i \in [n] \ .
\end{equation}
Components of the score functions $\bs^m$ and $\tilde \bs^m$ can be computed similarly. Subsequently, using \Cref{corollary:score-difference-transform-linear} of \Cref{lm:score-difference-transform-general}, we can compute the score differences of observed variables as follows.
\begin{align}
    \bs_\bX(\bx) - \bs_\bX^m(\bx) &= \Big[\big[J_g(\bz)\big]^{\dag}\Big]^{\top} \cdot \big[\bs(\bz) - \bs^m(\bz)\big] \ , \\
    \bs_\bX(\bx) - \tilde \bs_\bX^m(\bx) &= \Big[\big[J_g(\bz)\big]^{\dag}\Big]^{\top} \cdot \big[\bs(\bz) - \tilde \bs^m(\bz)\big] \ , \\   
    \bs_\bX^m(\bx) - \tilde \bs_\bX^m(\bx) &= \Big[\big[J_g(\bz)\big]^{\dag}\Big]^{\top} \cdot \big[\bs^m(\bz) - \tilde \bs^m(\bz)\big] \ .   
\end{align}

\bibliography{references}

\end{document}